\documentclass{article}

\usepackage[preprint]{neurips_2026}

\usepackage[utf8]{inputenc} 
\usepackage[T1]{fontenc}    
\usepackage{hyperref}       
\usepackage{url}            
\usepackage{booktabs}       
\usepackage{amsfonts}       
\usepackage{nicefrac}       
\usepackage{microtype}      
\usepackage{xcolor}         

\usepackage{graphicx}
\usepackage{paralist}
\usepackage{amsmath}
\usepackage{amssymb}
\usepackage{mathtools}
\usepackage{amsthm}
\usepackage[capitalize,noabbrev]{cleveref}

\PassOptionsToPackage{noend}{algorithmic}
\usepackage{algorithm}
\usepackage{algorithmic}
\usepackage{caption}

\usepackage{comment}
\usepackage{dirtytalk}
\usepackage{xfrac}
\usepackage{makecell}
\usepackage{float}
\usepackage{wrapfig}

\theoremstyle{plain}
\newtheorem{theorem}{Theorem}[section]
\newtheorem{proposition}[theorem]{Proposition}
\newtheorem{lemma}[theorem]{Lemma}
\newtheorem{corollary}[theorem]{Corollary}
\newtheorem{condition}[theorem]{Condition}

\theoremstyle{definition}
\newtheorem{definition}[theorem]{Definition}
\newtheorem{assumption}[theorem]{Assumption}

\theoremstyle{remark}
\newtheorem{remark}[theorem]{Remark}

\newenvironment{sketch}[1][Proof Sketch]
  {\begin{proof}[#1]}
  {\end{proof}}

\DeclareMathOperator*{\argmax}{arg\,max}

\newcommand{\Regval}{\mathcal R}
\newcommand{\UCBV}{\overline U}
\newcommand{\LCBV}{\underline U}
\newcommand{\UCBu}{\overline u}
\newcommand{\LCBu}{\underline u}
\newcommand{\epsf}{\epsilon^{f}}

\newcommand{\Gen}{\mathsf{Gen}}
\newcommand{\epsfT}{\overline{\epsilon}^f}
\newcommand{\epsU}{\epsilon^U}
\newcommand{\epsUm}{\epsilon^U_m}
\newcommand{\Rsusp}{\mathcal{R}^{+}} 
\newcommand{\Ktmax}{N_T}        
\newcommand{\AlgoName}{\textsc{GSR}}        

\newcommand{\Lbar}{\overline L}     
\DeclareMathOperator{\logit}{logit}

\newcommand{\sigmoid}{\sigma} 
\newcommand{\epsclip}{\varepsilon_{\mathrm{clip}}}
\newcommand{\clip}[3]{\mathrm{clip}_{[#2,#3]}\!\left(#1\right)}

\usepackage{enumitem}  

\title{Open-Ended Task Discovery via\\ Bayesian Optimization}

\author{%
  Masaki Adachi, Yuta Suzuki\\
  Lattice Lab\\
  Toyota Motor Corporation\\
  \texttt{adachi\_masaki@mail.toyota.co.jp} \\
  \And
  Juliusz Ziomek\\
  Machine Learning Research Group\\
  University of Oxford\\
  \\
}

\begin{document}

\maketitle

\begin{abstract}
  When applying Bayesian optimization (BO) to scientific workflow, a major yet often overlooked source of uncertainty is the \emph{task} itself—namely, \emph{what} to optimize and \emph{how} to evaluate it—which can evolve as evidence accumulates.
  We introduce \emph{Generate-Select-Refine} (GSR), a open-ended BO framework that alternates between task generation and task optimization. Starting from a user-provided seed task, GSR generates new tasks in a coarse-to-fine manner while a task-acquisition function schedules optimization.
  Asymptotically, it concentrates evaluations on the best task, incurring only logarithmic regret overhead relative to single-task BO.
  We apply GSR to new product development, chemical synthesis scaling, algorithm analysis, and patent repurposing, where it outperforms existing LLM-based optimizers.
\end{abstract}

\section{Introduction}\label{sec:intro}
\begin{wrapfigure}[23]{r}{0.44\textwidth}
    \vspace{-3em}
    \centering
    \includegraphics[width=0.43\textwidth]{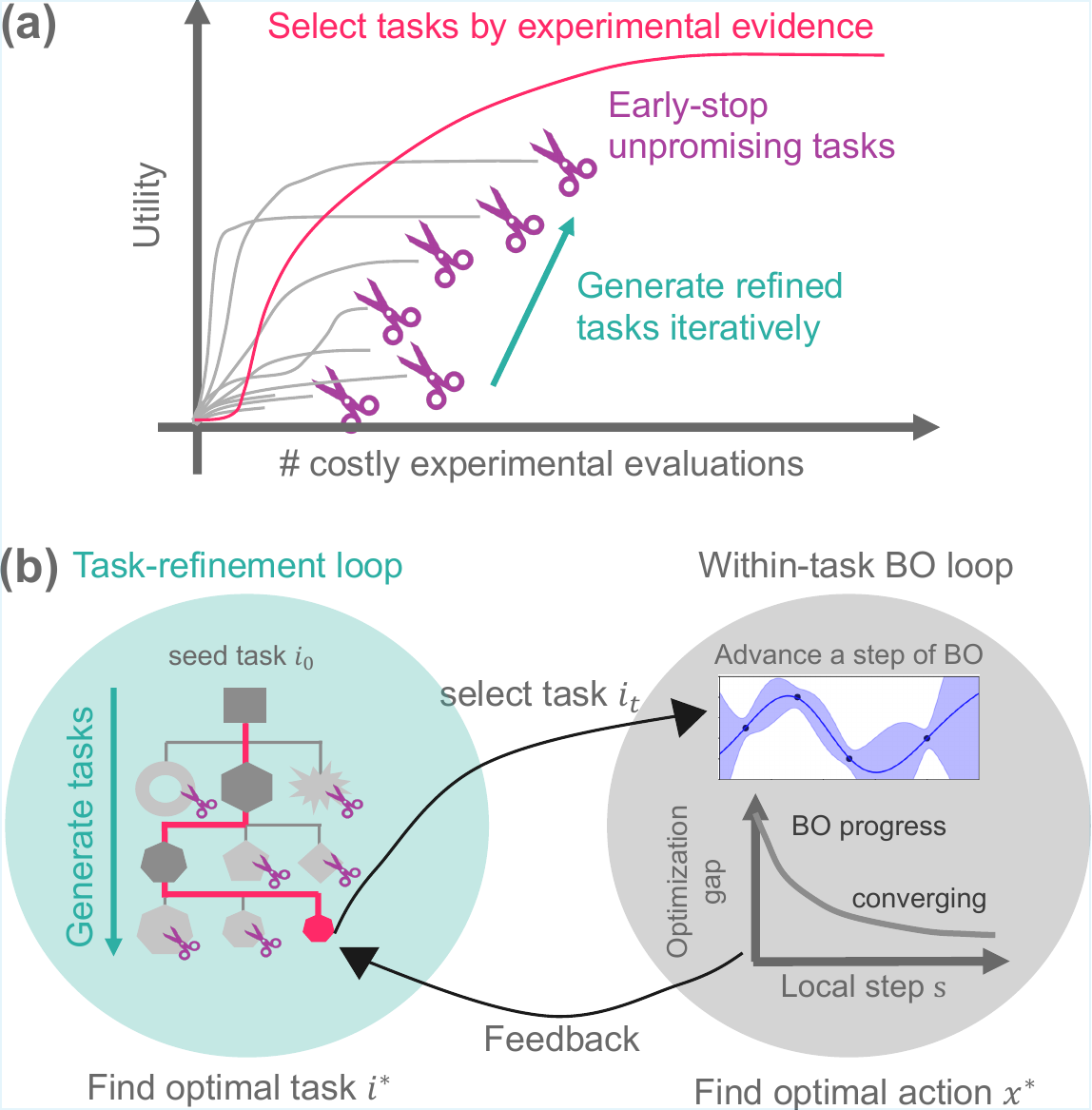}
    \caption{
    \textbf{Conceptual overview}: Starting from a seed task $i_0$, the new tasks are iteratively generated toward the $\epsilon$-optimal task $i^\star$ and identifies its optimum $x^\star$.
    (a) Experimental progress: task generation and elimination over time.
    (b) Two-loop structure: feedback from within-task BO informs task refinement via a coarse-to-fine scheduler.
    }
    \label{fig:overview}
\end{wrapfigure}
Bayesian optimization (BO; \citep{garnett2023bayesian}) is a sample-efficient black-box optimizer, widely used in domains such as drug discovery \citep{gomez2018automatic, pyzer2018bayesian} and materials science \citep{liang2021benchmarking, adachi2024adaptive}. Its practical impact has grown further with self-driving laboratories \citep{abolhasani2023rise}, where BO proposes experiments and robotic platforms execute them autonomously.
A natural next frontier is planning: deciding \emph{which} optimization problem to pursue and \emph{how} to formulate it. 
A canonical example is \emph{inverse optimization}: rather than solving a given task, the goal is to identify the task under which a given solution is most valuable. This setting is common in science and engineering: a newly discovered material or algorithm may be promising, yet its best application is often unknown. For instance, around 12,000 new crystals are discovered each year~\citep{hellenbrandt2004inorganic}, but only a small fraction achieve industrial application, partly because discovering the right application is difficult. 

The main difficulty comes from a finite experimental budget $T$. While recent advances in large language models (LLMs; \citep{achiam2023gpt, team2023gemini}) make automatic task generation plausible \citep{stanley2015greatness, du2025accelerating}, we must judge \emph{achievability} of the generated tasks within the budget. High-reward tasks may be too ambitious to complete, whereas easy tasks may offer limited value. Moreover, achievability is only revealed through trial and error, evaluating tasks itself consumes experimental resources. Naively exploring many tasks can therefore waste budget on infeasible directions.

We study the problem of generating the most rewarding task that is achievable within budget $T$. We call such a task \emph{$\epsilon$-optimal} and formalize this setting as \emph{open-ended BO}: tasks are generated online from past experience, budget is allocated to tasks that appear promising and achievable, and standard BO is run within selected tasks to optimize $x^\star$. Unlike standard BO over a predefined task vector, tasks here may be structured, are not enumerated in advance, and reveal their value only indirectly through partial BO trajectories.
To solve this problem, we propose \emph{Generate-Select-Refine} (GSR; Fig.~\ref{fig:overview}), which treats task discovery itself as an generative-optimization problem and uses regret bounds as a principled \emph{task manager} for validating generated tasks. We prove that GSR incurs only a logarithmic regret overhead compared with running BO on a single fixed task.

\textbf{Our contributions.}
\vspace{-0.3em}
\begin{compactenum}
    \item We formalize \emph{open-ended BO}: sample-efficient online task generation and task optimization.
    \item We introduce GSR, a confidence-gated framework that provably finds $\epsilon$-optimal task and its solution $x^\star$ with logarithmic regret overhead relative to single-task BO.
    \item We provide a broad empirical study, including 10 real-world scenarios and 6 synthetic tests.
\end{compactenum}
\vspace{-0.5em}
\subsection{Related Work}\label{subsec:related}
\vspace{-0.3em}
To our knowledge, no prior work studies our setting. Existing directions such as meta-BO and bi-level BO assume predefined and parameterized task spaces, whereas we consider \emph{open-ended} task optimization: tasks are generated on the fly, may be free-form rather than parameterized, and are evaluated only through partial observations.
\textbf{Meta BO} either transfers across related tasks \citep{multi2013swersky, Volpp2020Meta-Learning, dai2022provably, maraval2023end, osselin2025natural} or adapts hyperparameters online \citep{wang2018regret, berkenkamp2019no, ziomek2024bayesian, ziomek2025time}. Instead, our GSR is the \emph{generative optimization} of the task itself.
\textbf{LLMs + BO} are studied as surrogates \citep{liu2024large, james2024llm, gupta2025llms}, collaborative agents \citep{adachi2024looping, xu2024principled} or kernel generators \citep{suwandi2025adaptive} but under a fixed task. \textbf{Bi-level BO}~\citep{fu2024convergence, chew2025bilbo, dogan2023bilevel} studies hierarchical optimization where the lower-level problem depends on the upper-level one, such as GP hyperparameter tuning for BO. However, it still assumes a fixed, bounded, continuous search space, unlike our setting with unbounded and free-form tasks.
\textbf{BO from partial observations}~\citep{swersky2014freeze, klein2017fast, nguyen2020bayesian, adriaensen2023efficient, rakotoarison2024in} infer long-term utility from partial learning curves, which select among a predefined set of tasks, whereas we must generate new tasks online.
\textbf{BO for Task optimization} optimizes restricted task parameters, such as unknown domain bounds \citep{ha2019bayesian, gupta2020sub} or objective weights from preference feedback \citep{abdolshah2019multi, lin2022preference}. These are special cases of our setting with fixed parameterized task spaces.
\textbf{Evolutionary LLMs}~\citep{zelikman2024self, novikov2025alphaevolve, lange2025shinkaevolve, zhang2025darwin} perform generative optimization with direct feedback, whereas we must infer long-term value from partial lower-level optimization trajectories.

\vspace{-0.75em}
\section{Problem Setting}\label{sec:problem}
\vspace{-0.25em}
\subsection{Tasks, evaluations, and incumbents}\label{sec:tasks_rounds}
\vspace{-0.3em}
\textbf{Tasks.}
There is an unbounded collection of tasks indexed by $i\in\mathbb N$.
Task $i$ is specified by a compact domain $\mathcal X^{(i)}\subset\mathbb R^{d^{(i)}}$ and an unknown objective
$f^{(i)}:\mathcal X^{(i)}\to\mathbb R$.\\
\textbf{Global vs.\ local time.}
We index global rounds by $t\in[T]$.
At each $t$, the task-optimizer selects a task $i_t$ and evaluates one design $x_t^{(i_t)}\in\mathcal X^{(i_t)}$,
observing the noisy oracle
$y_t^{(i_t)} = f^{(i_t)}\!\bigl(x_t^{(i_t)}\bigr) + \xi_t^f$,
where $\xi_t^f \sim \mathrm{SubGauss}(\sigma_f^2)$.
Let $s_t^{(i)}$ be the local counter that task $i$ has been selected until $t$ and $\mathcal I_T$ be the set of tasks instantiated by $T$. For brevity, we write $x_t^{(i)} := x_{s_t^{(i)}}^{(i)}$ (and similarly for other quantities); the notations are summarized in Table~\ref{tab:notation} in the Appendix.\\
\textbf{Incumbent} is best-so-far observed value\footnote{In  analysis, this is noiseless $\overline{y}_s^{(i)} :=  \max_{1\le r\le s} f^{(i)}(x_r^{(i)})$.} after $s$ local evaluations on task $i$: 
$\overline{y}_s^{(i)} := \max_{1\le r\le s} y_r^{(i)}$.\\
\textbf{Optimization gap.}
Each task runs a standard GP-UCB~\citep{srinivas2009gaussian}. For readers unfamiliar with GP-UCB, see Appendix~\ref{app:gp_ucb}.
We only use an \emph{optimization gap bound}: w.p. $\geq 1 - \delta_f^{(i)}$, after $s$ evaluations,
\vspace{-0.5em}
\begin{equation}
\label{eq:epsf_def_main}
f^{\star(i)}-\overline{y}_s^{(i)}
\ \le\
\epsf_s
:= 2\sqrt{C_\lambda\,\beta^{(i)}_s\,\gamma^{(i)}_s /s},
\qquad
f^{\star(i)} := \max_{x\in\mathcal X^{(i)}} f^{(i)}(x),
\end{equation}
where $\beta^{(i)}_s$ and $\gamma^{(i)}_s$ are exploration and information-gain terms and $C_\lambda > 0$ is a constant.
The optimization gap $\epsf_s$ upper-bounds the worst-case error of the global optimum estimate, and satisfies $\epsf_s\to 0$ as $s\to\infty$, i.e., GP-UCB asymptotically identifies the global optimum $f^{\star(i)}$ (no-regret) for commonly used kernels (see \citep{lee2025consequences}).
$\epsf_s$ is computable without access to $f^{\star(i)}$, so it provides a principled cross-task progress metric.
For uniformity over instantiated tasks, define $\epsfT_s:=\max_{i\in\mathcal I_T}\epsf_s$, so $\epsf_s\le \epsfT_s$ for all $i\in\mathcal I_T$ and $s\in[T]$ (Lemma~\ref{lem:uniform_gp_confidence_tasks}).

\vspace{-0.5em}
\subsection{Latent utility}\label{sec:utility}
\vspace{-0.3em}
Since the scale of $f^{(i)}$ can vary across tasks, we introduce a scale-invariant utility function $u^{(i)}:\mathbb R\to[0,1]$ applied to the incumbent $\overline{y}_s^{(i)}$. Any rankable function can be utility; e.g., Bayesian surprise \citep{agarwal2026autodiscovery} or human preference \citep{bradley1952rank, adachi2025bayesian}.
In other words, incomparable tasks are not our scope.
\begin{definition}[Utility regularity]\label{ass:mono_lip}
$u^{(i)}$ is monotone nondecreasing and $L^{(i)}$-Lipschitz:
$|u^{(i)}(y)-u^{(i)}(y')|\le L^{(i)}|y-y'|$ for all $y,y'\in\mathbb R$.
Let $L^\star:=\sup_i L^{(i)}<\infty$.
\end{definition}
\vspace{-0.3em}
Monotonicity matches the optimization intent (better designs should not be judged worse).
Lipschitzness is a standard regularity condition in optimization.\\
\textbf{Noisy utility calls.}
For task $i_t$ selected at $t$, after observing $y^{(i_t)}_t$, we query the noisy utility oracle
$
\tilde{u}^{(i_t)}_t
=
u^{(i_t)}\!\bigl(\overline{y}^{(i_t)}_t\bigr) + \xi^u_t$,
where 
$\xi^u_t \sim \mathrm{SubGauss}\!\Bigl(\sigma^2_u\Bigr)$.

\subsection{Long-run value and BO-achievable optimality}\label{sec:bo_resolution_oracle}
\textbf{Long-run value.}
A task $i$ has an unknown value
$U^{(i)} := u^{(i)}\!\bigl(f^{\star(i)}\bigr)\in[0,1]$, where
$U^\star:=\sup_i U^{(i)}$.\\
\textbf{Resolution.}
With a finite $T$, the optimization gap is nonzero $\epsf_T > 0$. Combining Lipschitzness with \eqref{eq:epsf_def_main} gives, $\forall i, s$
$
0\le U^{(i)}-u^{(i)}(\overline{y}_s^{(i)}) \le L^{(i)}\bigl(f^{\star(i)}-\overline{y}_s^{(i)}\bigr)\le L^\star\,\epsf_s,
$
Consequently, even if all budget $T$ is allocated to a single task, $L^\star \epsf_T$ is the finest utility accuracy that can be reliably achieved. We therefore define an \emph{achievable} utility tolerance (the “resolution”).

\begin{definition}[$\epsilon^U$-optimal tasks]\label{def:gamma_oracle}
For any $\epsilon^U>0$, define
$\mathcal I^\star(\epsilon^U) := \bigl\{i\in\mathbb N:\ U^\star - U^{(i)} \le \epsilon^U\bigr\}$.
\end{definition}
\vspace{-0.5em}
\begin{condition}[BO-comparable target resolution]\label{cond:bo_achievable}
Fix $\epsUm :=\epsU_0 2^{-m}$ for $m \in \mathbb{Z}_{\geq0}$.
Let $\Lambda_T:=\log(eT)$.
Choose a max depth $\overline{m}_T\in\mathbb{Z}_{\ge 0}$ and assume there exists an $m^\star \in \{0,1,\dots,\overline{m}_T\}$ and a constant $c_\epsilon>0$ such that
\[
\epsU_{m^\star}\ \le\ c_\epsilon\,\Lambda_T\,L^\star\,\epsfT_T .
\]
\end{condition}
\vspace{-0.5em}
This ensures that the $\epsU_{m^\star}$ is \emph{achievable} within $T$\footnote{up to the logarithmic factor needed by the
coarse-to-fine reachability schedule in Lemma~\ref{lem:max_depth_reachability}}; otherwise, any finer resolution $\epsU_{\overline{m}_T}<\epsU_{m^\star}$ would exceed what BO can reliably achieve. 
As $T$ increases, $\epsfT_T$ shrinks and $\epsU_{m^\star}$ becomes finer.\\
\textbf{Goal}
is to find and optimize any $\epsU_{m^\star}$-optimal task and we employ the task regret as the metric:
\begin{equation}
\label{prob:goal}
i^\star_{\epsU} \in \mathcal I^\star(\epsU_{m^\star}),
\qquad 
x^{\star(i^\star_{\epsU})} \in \argmax_{x\in\mathcal X^{(i^\star_{\epsU})}} f^{(i^\star_{\epsU})}(x),
\qquad
\Regval_T := \sum_{t=1}^T \Bigl(U^\star - u^{(i_t)}(\overline{y}^{(i_t)}_t)\Bigr).
\end{equation}
\vspace{-1.5em}

\vspace{-0em}
\section{Generate-Select-Refine (GSR)}
\begin{wrapfigure}[12]{r}{0.54\textwidth}
\centering
\vspace{-4em}
\begin{minipage}{0.53\textwidth}
\captionsetup{type=algorithm,aboveskip=2pt,belowskip=2pt}
\hrule
\caption{{\color{teal}Generate}-{\color{purple}Selection}-{\color{blue}Refine} (GSR)}
\hrule
\label{alg:task-ucb}
\raggedright
\begin{algorithmic}[1]
\STATE \textbf{Input:} confidence $(\delta_f,\delta_u,\delta_-)$, Lipschitz bound $\Lbar$,
seed task $i_0$, batch size $J$, $T$, $c_g$, $\overline{m}_T$.
\STATE \textbf{Init:} $m \gets 0$, $\epsU_0 \gets 1$, $\epsU_m \gets \epsU_0$, $\mathcal I_1 \gets \{i_0\}$,
{\color{teal}$\mathcal B_0 \gets \Gen(i_0,0,J)$, $\mathcal I_1 \gets \mathcal I_1 \cup \mathcal B_0$}.
\FOR{$t \in [T]$}
    {\color{purple}\STATE select the next task $i_t \in \argmax_{j\in\mathcal I_t}\UCBV^{(j)}_t$
    \label{alg_line:meta_ucb_select}}
    \STATE $x_t^{(i_t)}, y_t^{(i_t)}, \overline{y}^{(i_t)}_t \gets$ RunOneStepGP-UCB($i_t$).
    \STATE \vspace{-2.3ex} update envelopes for task $i_t$ by Theorem~\ref{thm:value_envelopes_summary}.
    {\color{purple}\STATE select $a_t$, $w_t$ by Eq.~\eqref{eq:anchor}, and $\mathcal I_{t+1} \gets \mathcal I_t$.}
    \IF{$m<\overline{m}_T$ \textbf{and} {\color{blue}$w_t\le c_{g}\epsU_m$}}
    \label{alg_line:scheduler}
        {\color{blue}\STATE step up $m\gets m+1$; $\epsU_m\gets \epsU_0 2^{-m}$.}
        {\color{teal}\STATE draw $\mathcal B_m\gets \Gen(a_t,m,J)$; $\mathcal I_{t+1}\gets \mathcal I_{t+1}\cup \mathcal B_m$.}
    \ENDIF
\ENDFOR
\end{algorithmic}
\hrule
\end{minipage}
\vspace{-0.5em}
\end{wrapfigure}
We propose GSR (Alg.~\ref{alg:task-ucb}), a open-ended-BO framework for \emph{online task discovery} (generate), \emph{budget allocation} (select), and \emph{coarse-to-fine scheduler} (refine). This section is intentionally \emph{algorithm-first}: we define the quantities GSR maintains and explain the control flow.  Concrete LLM instantiations are given in §\ref{sec:llm}. All formal guarantees are collected in §\ref{sec:theory}.

\vspace{-0.3em}
\subsection{Two loops: GP-UCB and task-UCB}
Each task $i$ runs a standard GP-based BO procedure (GP-UCB), maintaining:
(i) an incumbent $\overline{y}_s^{(i)}$ after $s$ local evaluations, and
(ii) a computable optimization gap $\epsf_s$ from Eq.\eqref{eq:epsf_def_main}.
The task-optimizer decides (a) which task to evaluate next (select) and (b) when to generate tasks (refine).

\textbf{Task Selection.}
Core idea is the confidence intervals (CIs):
\begin{gather*}
    U^{(i)} \in [\underline{U}^{(i)}_s, \overline{U}^{(i)}_s],
    \qquad
    \overline{U}_s^{(i)} = \underbrace{\overline{u}_s^{(i)}}_\text{Utility UCB} + \underbrace{\overline{L}\epsf_s}_\text{optimization gap},
\end{gather*}
where $\overline{L} \geq L^\star$ bounds the Lipschitz constant. The UCB naturally decomposes into uncertainty from finite utility samples and an optimization gap due to limited evaluation budget. We select the next task $i_t$ via a task-UCB policy (Line~\ref{alg_line:meta_ucb_select} in Alg.~\ref{alg:task-ucb}; see Fig.\ref{fig:task-ucb}(a))\footnote{Ties are broken in favor of the task with fewer evaluations.}, concentrating budget on the 
$\epsU$-optimal task while early-stopping suboptimal ones.

\begin{wrapfigure}[14]{r}{0.45\textwidth}
    \vspace{-1.5em}
    \centering
    \includegraphics[width=0.44\textwidth]{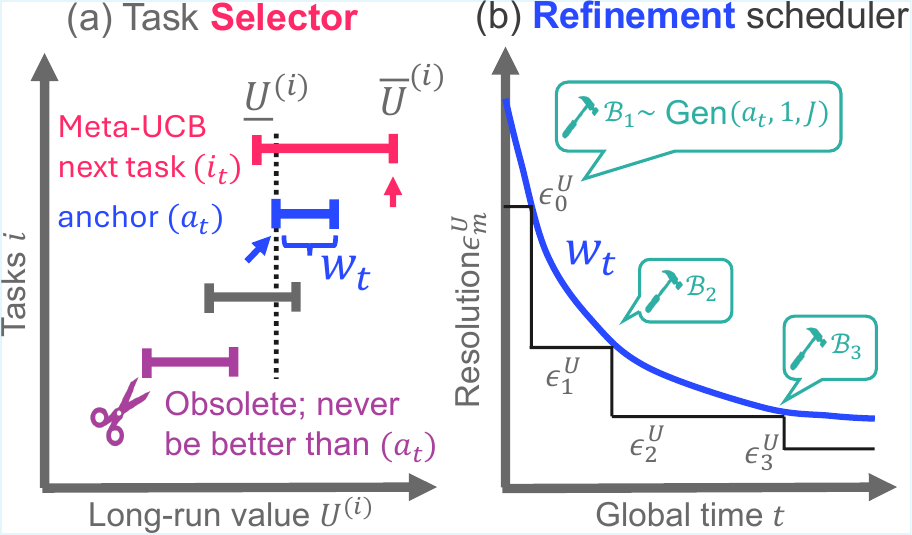}
    \caption{Alg.~\ref{alg:task-ucb}:
    (a) {\color{purple}\textbf{Select}} chooses the next task $i_t$ and anchor task $a_t$. 
    (b) {\color{blue}\textbf{Refine}} advances the resolution levels $\epsU_m$ and {\color{teal}generates} $J$ mutations when anchor width $w_t \leq c_g\epsU_m$.
    }
    \label{fig:task-ucb}
\end{wrapfigure}
\textbf{Task Refinement.}
We generate new tasks by mutating and iteratively refining a seed task. A key step is selecting an \emph{anchor} task to mutate. The anchor must be both promising and sufficiently resolved. We therefore choose $a_t$ as the most promising pessimistic task—ranked by $\LCBV^{(i)}_t$ rather than $\UCBV^{(i)}_t$—whose uncertainty is below a target threshold:
\begin{equation}
\label{eq:anchor}
    a_t \in \argmax_{j\in\mathcal I_t}\LCBV^{(j)}_t \,\,\text{s.t.}\,\, w^{(j)}_t \le \overline{\epsilon}^U_t,
\end{equation}
where $w_t^{(i)} := \overline{U}^{(i)}_t - \underline{U}^{(i)}_t$ is the CI width (uncertainty), and $\overline{\epsilon}^U_t := \max\{c_g \epsU_m,\ \underline{w}_t\}$ is the resolution threshold, with $\underline{w}_t := \min_{j\in\mathcal I_t} w^{(j)}_t$. 
See Fig.\ref{fig:task-ucb}(a) for intuition.
This rule selects an anchor that is promising under $\LCBV$ and sufficiently certain to advance to the next level $m$ (cf. Condition~\ref{cond:bo_achievable}).

\textbf{Generation scheduler.}
We generate new tasks only once the current anchor is sufficiently certain (Line~\ref{alg_line:scheduler}; Fig.\ref{fig:task-ucb}(b)). We then advance to $m{+}1$ and generate a new batch of child tasks at a finer mutation scale $\epsU_m$. See Figure~\ref{fig:scheduler} for ablation study on generation schedulers.

\subsection{LLM-powered openended-BO}\label{sec:llm}
\begin{figure*}
    \centering
    \includegraphics[width=0.9\linewidth]{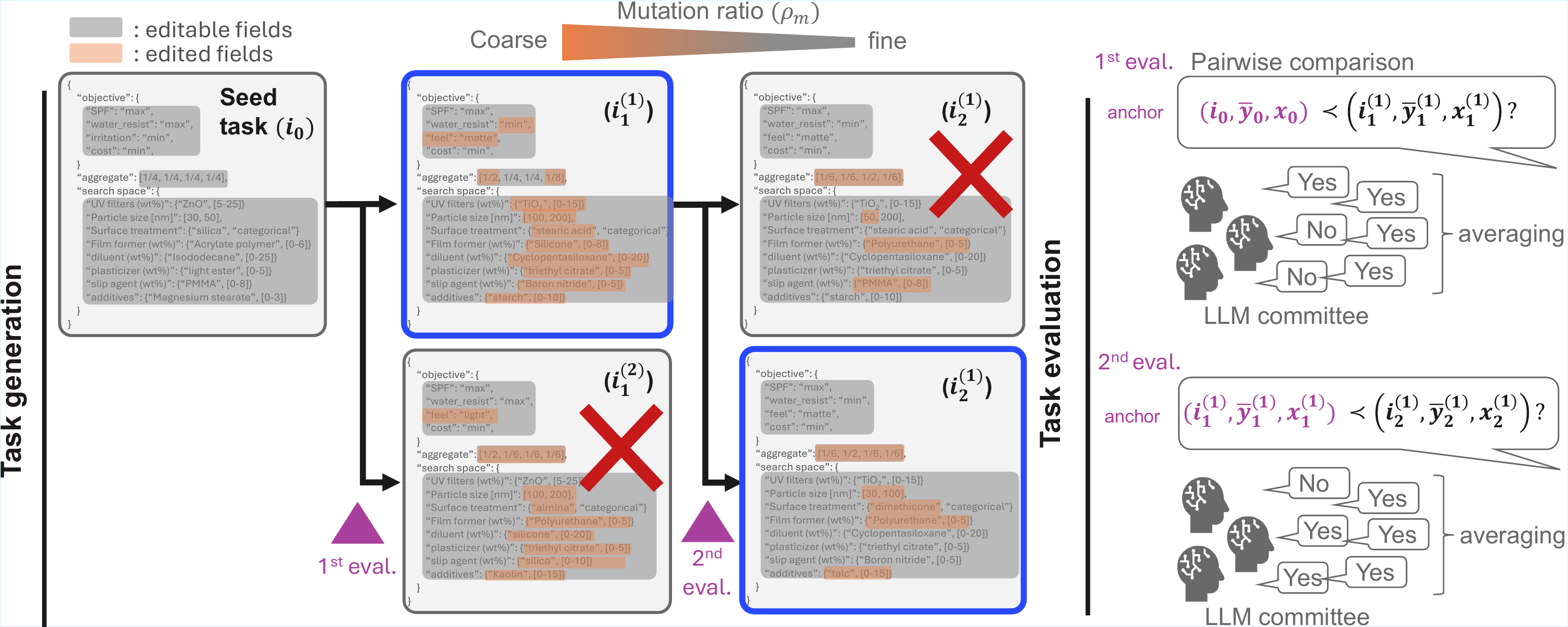}
    \caption{LLM components in GSR: (\textbf{Left}) Coarse-to-fine task generation: we mutate a parent task with target mutation ratio $\rho_m$, producing refined tasks. (\textbf{Right}) Committee-based evaluation: committee returns binary votes if the current task--incumbent pair $(i,\overline{y}_s^{(i)})$ is preferred over an anchor $(a_t, \overline{y}_s^{(a_t)})$. By averaging them, we estimate a confidence interval for the utility $u^{(i)}(\overline{y}_s^{(i)})$.}
    \label{fig:llm}
    \vspace{-1em}
\end{figure*}
We adopt empirically successful paradigms (Fig.\ref{fig:llm}): evolutionary LLMs for task generation \citep{novikov2025alphaevolve, lange2025shinkaevolve}.
We present only the essential details here and refer readers to Appendix~\ref{app:evaluator} for background.

\vspace{-0.5em}
\subsection{Generate tasks by controlled mutations}\label{sec:openended_lb}
We represent each task as a structured specification (e.g., JSON) with editable fields (e.g., objective/domain).
Given an anchor task $a_t$ and level $m$, the generator $\Gen(a_t,m,J)$ prompts an LLM to propose mutated child tasks $\mathcal{B}_m$.

\textbf{Mutation ratio.}
To operationalize the resolution levels $\epsU_m$, we control the mutation ratio:
\[
\rho(i,a_t)
:= \frac{\#\text{modified editable fields}}{\#\text{editable fields}}
\in[0,1].
\]
We target a decreasing schedule $\rho_m := \rho_0 2^{-m}$, so higher $m$ corresponds to smaller edits.

\textbf{Scope of the empirical generator.}
The theory treats $\Gen$ as an abstract open-ended generator. In the experiments, however, we use domain-specific JSON schemas and mutate editable fields. Therefore the present empirical instantiation tests \emph{schema-constrained} task discovery: it can discover new task combinations and refinements within the supplied schema, but it is not claimed to invent arbitrary new functional forms unless those structural degrees of freedom are included in the schema.

\vspace{-0.3em}
\subsection{Utility elicitation via an LLM committee}\label{sec:evaluator} 
When tasks admit an objective comparison, that is ideal. For inverse optimization, for example, we can measure competitiveness by comparing the performance gap between selected method against the best-known baseline. When such comparison is difficult, we can \textbf{optionally} use preference-based LLM judges \citep{rafailov2023direct, fujisawa2025scalable}. Recent LLM-driven discovery work \citep{agarwal2026autodiscovery, bradley2024qualitydiversity, zhang2024omni, yang2026verbalizing} suggests that LLM-based evaluation can support meaningful discovery, and in our experiments LLM judgments correlate strongly with human judgments (Appendix~\ref{app:alignment}).
Following standard LLM-as-a-judge practice \citep{chen2024humans, lee2025correctly}, an LLM committee compares task pairs and returns binary votes (Fig.\ref{fig:llm}). We adopt the Bradley–Terry model \citep{bradley1952rank} to infer latent utilities from these comparisons, defining the utility $u^{(i)}(\cdot)$ as the win rate of task $i_t$ against the seed task $i_0$, i.e., $\Pr(i_t \succ i_0 \mid \overline{y}_t^{(i_t)}, \overline{y}_0^{(i_0)}, x^{(i_t)}_t, x^{(i_0)}_0)$. As both the task $i_t$ and its solution $\overline{y}_t^{(i_t)}$ improve, this win rate provides a consistent cross-task utility.
To prevent saturation when $i_t$ is almost always preferred, we instead use an anchor task $a_t$ as an adaptive reference and recover the win rate by chaining pairwise comparisons (e.g., $i_t \succ a_t \succ i_0$).
Averaging $K_t$ preference votes yields a single utility estimate $\tilde{u}_t \in [0,1]$ with $1/(4K_t)$-sub-Gaussian noise, thus it satisfies Definition~\ref{ass:mono_lip} (see Appendix~\ref{app:evaluator}).

\section{Theoretical Analysis}\label{sec:theory}
\subsection{Main regret bound}
\textbf{Confidence envelopes.}
We now establish the regret bound for GSR. As a first step, we derive CIs for the task-UCB; see Appendix~\ref{app:sparse_utility} for the proofs.
\begin{lemma}[Utility CIs]
\label{lem:utility_ci_hoeffding_main}
Let $\delta_t:=\delta_u/(\pi^2 t^2)$ and $\phi_t:=\sqrt{2\sigma_u^2\log(2/\delta_t)}$.
Then w.p.$\geq 1-\delta_u$, for all $t\in[T]$,
\vspace{-0.3em}
\begin{gather*}
    \LCBu_t^{(i_t)} := \text{clip}_{[0,1]}(\tilde{u}^{(i_t)}_t-\phi_t),
    \,\,
    \UCBu_t^{(i_t)} := \text{clip}_{[0,1]}(\tilde{u}^{(i_t)}_t + \phi_t),
\end{gather*}
\end{lemma}
\vspace{-0.3em}
\begin{remark}[Averaged feedback]
Averaging $K_t\ge 1$ independent samples to form $\tilde{u}_t$ is $(\sigma_u^2/K_t)$-sub-Gaussian.
\end{remark}

\begin{theorem}[Value envelopes]
\label{thm:value_envelopes_summary}
Under Lemma~\ref{lem:utility_ci_hoeffding_main} and the GP-UCB events \eqref{eq:epsf_def_main},
if $\Lbar\ge L^\star$, then for every instantiated task $i\in\mathcal I_t$ with local counter $s_t^{(i)}$,
\[
\LCBV^{(i)}_t := \LCBu^{(i)}_t,
\qquad
\UCBV^{(i)}_t
:= \clip{\UCBu^{(i)}_t + \Lbar\,\epsf_{s_t^{(i)}}}{0}{1},
\]
with initialization $[\LCBV^{(i)}_0,\UCBV^{(i)}_0]=[0,1]$. If the utility interval width is controlled as
$\UCBu^{(i)}_t-\LCBu^{(i)}_t \le c_u\,\Lbar\,\epsf_{s_t^{(i)}}$ for $c_u > 0$,
then $w^{(i)}_t :=\UCBV^{(i)}_t-\LCBV^{(i)}_t
\le (c_u+1)\Lbar\,\epsf_{s_t^{(i)}}$.
\end{theorem}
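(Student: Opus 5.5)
The plan is to prove two things in turn: first that the interval $[\LCBV^{(i)}_t,\UCBV^{(i)}_t]$ contains $U^{(i)}$ on the good event, then the width bound. Throughout I work on the intersection of the event of Lemma~\ref{lem:utility_ci_hoeffding_main} with the GP-UCB events \eqref{eq:epsf_def_main}. By Lemma~\ref{lem:uniform_gp_confidence_tasks} and a union bound, this intersection has probability at least $1-\delta_u-\delta_f$. On it, for the task $i$ at its last update, $u^{(i)}(\overline{y}^{(i)}_{s})\in[\LCBu^{(i)}_t,\UCBu^{(i)}_t]$ with $s=s_t^{(i)}$. Clipping to $[0,1]$ does not affect this, because $u^{(i)}$ takes values in $[0,1]$.

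\emph{Lower envelope.} Following the footnote, the incumbent in the analysis is noiseless, so $\overline{y}^{(i)}_s\le f^{\star(i)}$. Monotonicity (Definition~\ref{ass:mono_lip}) then gives
\[
U^{(i)}=u^{(i)}(f^{\star(i)})\ge u^{(i)}(\overline{y}^{(i)}_s)\ge \LCBu^{(i)}_t=\LCBV^{(i)}_t .
\]

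\emph{Upper envelope.} Lipschitzness combined with \eqref{eq:epsf_def_main} and $\Lbar\ge L^\star\ge L^{(i)}$ gives
\[
U^{(i)}\le u^{(i)}(\overline{y}^{(i)}_s)+L^{(i)}\epsf_s\le \UCBu^{(i)}_t+\Lbar\epsf_s .
\]
Since also $U^{(i)}\in[0,1]$, clipping the right-hand side to $[0,1]$ keeps it a valid upper bound.

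For tasks not selected at round $t$, the envelopes are simply carried over from their last update. Because $U^{(i)}$ is a fixed quantity, containment persists. The initialization $[0,1]$ is trivially valid.

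\emph{Width.} Since $\UCBu^{(i)}_t\ge 0$ and $\Lbar\epsf_s\ge 0$, the argument of the clip is nonnegative, so only the upper clip at $1$ can act, and it can only decrease the value. Hence
\[
\UCBV^{(i)}_t\le \UCBu^{(i)}_t+\Lbar\epsf_s ,
\]
and therefore
\[
w^{(i)}_t\le (\UCBu^{(i)}_t-\LCBu^{(i)}_t)+\Lbar\epsf_s\le (c_u+1)\Lbar\epsf_{s_t^{(i)}} .
\]

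The only delicate point I expect is bookkeeping. The utility CI of Lemma~\ref{lem:utility_ci_hoeffding_main} is indexed by the global round of the last query of task $i$, whereas $\epsf$ is indexed by the local counter $s_t^{(i)}$. These must be matched consistently, and the union over all instantiated tasks must be absorbed by Lemma~\ref{lem:uniform_gp_confidence_tasks}, using $\epsfT$ where uniformity is needed.
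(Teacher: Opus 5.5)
Your proof is correct and follows the paper's argument (Lemma~\ref{lem:envelope_valid}): monotonicity together with the noiseless convention $\overline{y}^{(i)}_s\le f^{\star(i)}$ gives the lower envelope, Lipschitzness plus the GP-UCB gap gives the upper envelope, and the width bound comes from dropping the clip and adding the utility-interval width to $\Lbar\,\epsf_s$. The only difference is that the paper writes its proof for a general checkpoint schedule $\mathcal S_u$, which adds a propagation term $\Lbar(\overline{y}^{(i)}_s-\overline{y}^{(i)}_\nu)$ and yields the looser constant $C_v=\sqrt2(c_u+1)+1$; your dense-schedule argument has $\nu=s$, so that term vanishes and you get exactly the stated factor $(c_u+1)$.
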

\vspace{-0.5em}
\begin{condition}[Refinement probability]
\label{cond:stoch_coverage_adaptive}
Under Condition~\ref{cond:bo_achievable}, there exists a nonzero probability $\delta_+\in(0,1]$ such that $\forall m, s, a_t$ that is resolved finer than $\epsU_m$, a single draw $i\sim\Gen(a_t,m,J=1)$ hits an $\epsU_m$-optimal task for $m \geq 1$:
$\Pr\!\Bigl(i\in \mathcal I^\star(\epsU_{m}) \ \Big|\ w_t \le c_g \epsU_{m-1} \Bigr)
\ \ge\ \delta_+$.
\end{condition}
\vspace{-0.5em}
This condition is enforced in Line~\ref{alg_line:scheduler} of Alg.~\ref{alg:task-ucb}. Conditioning on a sufficiently solved anchor enables progressively finer task proposals, down to resolution $\epsU_m \lesssim L^\star \epsfT_T$.

\textbf{Max depth $\overline{m}_T$} For brevity, set $\Psi_T := \max_{i\in\mathcal I_T}\beta_T^{(i)}\gamma_T^{(i)}$. Then, $\overline{m}_T$ is set as:
\begin{lemma}[Reachability]\label{lem:max_depth_reachability}
    Under Theorem~\ref{thm:value_envelopes_summary} and Condition~\ref{cond:stoch_coverage_adaptive}, assume the batch size $J$ is a constant. Set
    \[
    \overline{m}_T := \left\lfloor \frac{1}{2} \log_2 \left( \frac{T}{A_T (\log(e T))^2} \right) \right\rfloor_+,
    \]
    where $A_T:=\left(\frac{2C_v\,\Lbar\sqrt{C_\lambda\,\Psi_T}}{c_g\,\epsU_0}\right)^2$, and $\lfloor x \rfloor_+ = \max\{0, \lfloor x \rfloor\}$. 
    Then, under Condition~\ref{cond:bo_achievable} and  Algorithm~\ref{alg:task-ucb}, the optimal level $m^\star$ is introduced by time $T$.
\end{lemma}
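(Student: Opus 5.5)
The argument has two parts. First, we show that the scheduler in Line~\ref{alg_line:scheduler} advances through all levels $m=0,1,\dots,\overline{m}_T$ within $T$ rounds. Second, we show that every $m^\star\le\overline{m}_T$ allowed by Condition~\ref{cond:bo_achievable} is therefore introduced by time $T$. The key quantity is the waiting time at level $m$: the number of rounds until some instantiated task $j$ has width $w^{(j)}_t\le c_g\epsU_m$. Such a task is eligible as anchor, so $w_t\le c_g\epsU_m$ and the level steps up.

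\textbf{Step 1 (time to resolve one task).} By Theorem~\ref{thm:value_envelopes_summary}, $w^{(j)}_t\le (c_u+1)\Lbar\,\epsf_{s}$ with $s=s^{(j)}_t$; we write $C_v:=c_u+1$. By \eqref{eq:epsf_def_main} and the definition of $\Psi_T$, $\epsf_s\le 2\sqrt{C_\lambda\Psi_T/s}$. Hence $w^{(j)}_t\le c_g\epsU_0 2^{-m}$ as soon as
\[
s\ \ge\ \Bigl(\tfrac{2C_v\Lbar\sqrt{C_\lambda\Psi_T}}{c_g\epsU_0}\Bigr)^2 4^{m}=A_T\,4^m .
\]
Only a single task needs to reach this many evaluations. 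We argue as follows. While no task is resolved at level $m$, task-UCB keeps selecting tasks, and the tie-breaking rule together with $|\mathcal I_t|\le 1+J(m+1)$ bounds the spread of evaluations across tasks. So after about $|\mathcal I_t|\,A_T4^m$ rounds at level $m$, some task has $s\ge A_T4^m$.

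The main obstacle is controlling this factor, i.e.\ ensuring that UCB spreading does not cost more than a $\log(eT)$-type factor per level. We would first try a pigeonhole argument: since $J$ is constant and $m\le\overline{m}_T=O(\log T)$, we have $|\mathcal I_t|=O(\log(eT))$, and this gets absorbed into one factor of $\Lambda_T$.

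\textbf{Step 2 (summing over levels).} Summing the per-level waiting times gives
\[
\sum_{m=0}^{\overline{m}_T} |\mathcal I_t|\,A_T4^m\ \lesssim\ \Lambda_T\,A_T\,\tfrac{4}{3}\,4^{\overline{m}_T}.
\]
By the choice of $\overline{m}_T$ we have $4^{\overline{m}_T}\le T/(A_T\Lambda_T^2)$, so the total is at most about $T/\Lambda_T\le T$. The spare $\Lambda_T$ absorbs the constant $4/3$ and the task-count factor; this is exactly the logarithmic slack mentioned in the footnote to Condition~\ref{cond:bo_achievable}.

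\textbf{Step 3 (conclusion).} Level $\overline{m}_T$ is therefore reached by time $T$. Because levels increase one at a time from $0$, every $m^\star\le\overline{m}_T$ is introduced along the way. Condition~\ref{cond:stoch_coverage_adaptive} then ensures that each batch generated at level $m$ from a resolved anchor contains an $\epsU_m$-optimal task with probability at least $1-(1-\delta_+)^J$, which places an $\epsU_{m^\star}$-optimal task in $\mathcal I_T$. All statements hold on the joint high-probability event of Lemma~\ref{lem:utility_ci_hoeffding_main} and the GP-UCB bounds.
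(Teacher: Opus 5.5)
Your proposal is correct and follows essentially the same route as the paper: a per-level threshold $s_m\le\lceil A_T4^m\rceil$, a pigeonhole bound of at most $(1+J(m+1))\,s_m$ rounds per level (which needs neither task-UCB behavior nor the tie-breaking rule), and a geometric sum in which the $\Lambda_T^{2}$ slack built into $\overline{m}_T$ absorbs the $O(\log T)$ task count and the remaining constants for $T$ large enough. Two small points: the ceiling contributes an extra $O(J\overline{m}_T^{2})$ rounds, which you should state explicitly since it is negligible. Your closing remark about Condition~\ref{cond:stoch_coverage_adaptive} goes beyond what this lemma asserts and holds only with probability $1-(1-\delta_+)^J$ per batch, not deterministically.
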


\vspace{-0.3em}
\textbf{Task regret bound.}
For brevity, we set $\beta_T := \max_{i\in\mathcal I_T}\beta_T^{(i)}$ and $\gamma_T := \max_{i\in\mathcal I_T}\gamma_T^{(i)}$.
\begin{theorem}[Regret bound]
\label{thm:main_regret_adaptive}
Under Theorem~\ref{thm:value_envelopes_summary} and Lemma~\ref{lem:max_depth_reachability}, let
$R^\star_T := L^\star\sum_{s=1}^T \epsfT_s$
be the uniform single-task regret bound. Let $N_T:=|\mathcal I_T|$ be the number of instantiated tasks by time $T$ and let $\Lambda_T:=\log(eT)$.
Then Algorithm~\ref{alg:task-ucb} satisfies, with probability at least
$1-(\delta_{f}+\delta_u+\delta_-)$,
\begin{equation}
\label{eq:main_regret_adaptive}
\Regval_T
\ \le\
C\,\left[
\Lambda_T
+
\Big(1+\sqrt{N_T}\Big)\,\frac{\overline{L}}{L^\star}
\right] R^\star_T,
\end{equation}
for $C>0$.
Since $N_T \le 1 + J\,(\overline{m}_T+1)$
and $\overline{m}_T=\tilde{\mathcal O}(\log T)$ for constant $J$, we obtain a logarithmic up to overhead relative to the single-task BO when $\overline L=\Theta(L^\star)$.
\end{theorem}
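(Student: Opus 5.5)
We work throughout on the good event $\mathcal E$ on which the GP-UCB gap bounds \eqref{eq:epsf_def_main} hold uniformly over $\mathcal I_T$ (Lemma~\ref{lem:uniform_gp_confidence_tasks}, probability $\ge 1-\delta_f$). On the same event the utility intervals of Lemma~\ref{lem:utility_ci_hoeffding_main} hold (probability $\ge 1-\delta_u$), and the generator succeeds at the levels required by Lemma~\ref{lem:max_depth_reachability} (failure probability at most $\delta_-$). A union bound gives the stated confidence.

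On $\mathcal E$, Theorem~\ref{thm:value_envelopes_summary} gives $U^{(i)}\in[\LCBV^{(i)}_t,\UCBV^{(i)}_t]$ for every instantiated task, with width $w^{(i)}_t\le (c_u+1)\Lbar\,\epsf_{s_t^{(i)}}$. The instantaneous regret then splits as
\[
U^\star-u^{(i_t)}(\overline y^{(i_t)}_t)=\bigl(U^\star-U^{\star}_{\mathcal I_t}\bigr)+\bigl(U^{\star}_{\mathcal I_t}-U^{(i_t)}\bigr)+\bigl(U^{(i_t)}-u^{(i_t)}(\overline y^{(i_t)}_t)\bigr),
\]
where $U^\star_{\mathcal I_t}:=\max_{j\in\mathcal I_t}U^{(j)}$.

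\textbf{Third term (optimization error).} It is at most $L^\star\epsf_{s_t^{(i_t)}}$ by Lipschitzness. Summing over $t$ and regrouping by task gives $\sum_{i\in\mathcal I_T}\sum_{s\le n_i}L^\star\epsfT_s$, where $n_i$ is the number of times task $i$ is selected. Since $\epsfT_s$ is decreasing and of order $s^{-1/2}$ up to $\sqrt{\beta\gamma}$ factors, a sum of $\sqrt{n_i}$ under $\sum_i n_i=T$ is maximized by equal splitting. Cauchy--Schwarz then gives $\sum_i\sqrt{n_i}\le\sqrt{N_T T}$, so this term is $\lesssim\sqrt{N_T}R^\star_T$.

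\textbf{Second term (selection error).} By the task-UCB rule, $\UCBV^{(i_t)}_t\ge\UCBV^{(j)}_t\ge U^{(j)}$ for all $j\in\mathcal I_t$. Hence $U^\star_{\mathcal I_t}-U^{(i_t)}\le w^{(i_t)}_t\le (c_u+1)\Lbar\epsf_{s_t^{(i_t)}}$. The same per-task regrouping and Cauchy--Schwarz argument bounds the sum by $(c_u+1)\sqrt{N_T}\,(\Lbar/L^\star)R^\star_T$. An additional $(\Lbar/L^\star)R^\star_T$ absorbs the rounds where the clipping at $1$ or the initialization $[0,1]$ is active. Together these give the $(1+\sqrt{N_T})\Lbar/L^\star$ factor.

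\textbf{First term (discovery error).} This is the main obstacle. Let $\tau_m$ be the time at which level $m$ is introduced. Between $\tau_m$ and $\tau_{m+1}$, the set $\mathcal I_t$ contains, on the generator-success event of Condition~\ref{cond:stoch_coverage_adaptive}, an $\epsU_m$-optimal task, so the gap is at most $\epsU_m$. After $m^\star$ is reached (Lemma~\ref{lem:max_depth_reachability}), Condition~\ref{cond:bo_achievable} gives a gap of at most $c_\epsilon\Lambda_T L^\star\epsfT_T$ per round. Since $\epsfT_T\le\epsfT_s$ for all $s\le T$, this contributes $\le c_\epsilon\Lambda_T L^\star\sum_{s\le T}\epsfT_s=c_\epsilon\Lambda_T R^\star_T$.

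For the levels before $m^\star$, the key is to show $\sum_{m<m^\star}\epsU_m(\tau_{m+1}-\tau_m)\lesssim\Lambda_T R^\star_T$. The scheduler fires once the anchor width satisfies $w_t\le c_g\epsU_m$, and by Theorem~\ref{thm:value_envelopes_summary} this happens once the anchor's local count $s$ satisfies $\Lbar\epsfT_s\lesssim\epsU_m$. During such a stage, one pays about $\epsU_m$ per round over roughly $s_m$ rounds, where $\epsU_m\approx\epsfT_{s_m}$. Each stage's cost is therefore comparable to a dyadic block of $\sum_s\epsfT_s$, and these blocks sum to at most $O(\overline m_T)$ copies of $R^\star_T$ restricted to disjoint ranges. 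The $\log$ depth $\overline m_T=O(\Lambda_T)$ from Lemma~\ref{lem:max_depth_reachability} then gives the $\Lambda_T$ factor.

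I expect the delicate point to be controlling how long the anchor takes to be resolved when budget is split among competing tasks. I would try to handle this via the tie-breaking rule, arguing that under task-UCB the minimal-width task shrinks at the single-task rate, which is why $\overline\epsilon^U_t$ uses $\underline w_t$.

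Collecting the three terms yields \eqref{eq:main_regret_adaptive}. The bound on $N_T$ follows because each of the at most $\overline m_T+1$ level increments adds $J$ tasks to the seed task.
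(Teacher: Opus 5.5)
Your decomposition and your treatment of the selection and within-task terms match the paper's proof. So does the post-$m^\star$ part of the generation term: the $\Lambda_T$ factor comes only from $T\epsU_{m^\star}\le c_\epsilon\Lambda_T L^\star T\epsfT_T\le c_\epsilon\Lambda_T R^\star_T$, as you say. The genuine gap is the pre-$m^\star$ generation term, at exactly the point you flagged. Your dyadic-block argument assumes stage $m$ lasts only about $s_m$ rounds, i.e.\ that some task is evaluated at the single-task rate. That is false in general. With $K_m=1+J(m+1)$ live tasks of similar value, task-UCB spreads evaluations almost round-robin. The tie-break toward \emph{fewer} evaluations pushes further in that direction. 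So every task, including the minimal-width one, accrues only about $t/K_m$ evaluations, and the tie-breaking rule cannot give you a single-task rate. The role of $\underline w_t$ in $\overline\epsilon^U_t$ is only to keep the anchor's feasible set nonempty.

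The missing idea is the pigeonhole step of Lemma~\ref{lem:max_depth_reachability}. Let $s_m:=\min\{s: C_v\Lbar\epsfT_s\le c_g\epsU_m\}\le\lceil A_T4^m\rceil$. Within $K_m s_m$ rounds at level $m$, some task reaches $s_m$ local evaluations, so $\underline w_t\le c_g\epsU_m$. Then $\overline\epsilon^U_t=c_g\epsU_m$, and the anchor satisfies the trigger $w_t\le c_g\epsU_m$. Each stage therefore lasts at most $K_m s_m$ rounds, and you must pay the factor $K_m\le N_T$. Summed naively, this gives an $N_T$ rather than $\sqrt{N_T}$ overhead. The paper recovers $\sqrt{N_T}$ by Cauchy--Schwarz: $\sum_{m<m^\star}K_m s_m\epsU_m\le\bigl(\sum_m K_m s_m\bigr)^{1/2}\bigl(\sum_m K_m s_m(\epsU_m)^2\bigr)^{1/2}$. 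The first factor is at most $\sqrt T$ by the reachability lemma, and the second is $\tilde{\mathcal O}(\sqrt{N_T A_T}\,\epsU_0)$. Together with $R^\star_T\ge 2L^\star\sqrt{C_\lambda\Psi_T T}$, this yields $\tilde{\mathcal O}\bigl(\sqrt{N_T}\,(\Lbar/L^\star)R^\star_T\bigr)$. Note also that $\epsU_m\approx (C_v/c_g)\Lbar\,\epsfT_{s_m}$, not $\epsfT_{s_m}$, so this piece scales with $\Lbar/L^\star$. It belongs in the second bracket term of \eqref{eq:main_regret_adaptive}, not the $\Lambda_T$ term. Moreover, the per-stage costs grow geometrically in $m$ and are dominated by the last stage. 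The logarithm therefore does not come from counting $\overline m_T$ copies of $R^\star_T$.
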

\vspace{-1em}
\begin{sketch}
Let $U_t^{\max}:=\max_{i\in\mathcal I_t}U^{(i)}$. On the high-probability envelope and generation-success events,
the instantaneous regret can be decomposed to:
\begin{align*}
    R^\star_T \leq \underbrace{(U^\star-U_t^{\max})}_{\text{generation}} + \underbrace{(U_t^{\max}-U^{(i_t)})}_{\text{selection}}+ \underbrace{\bigl(U^{(i_t)}-u^{(i_t)}(\overline{y}^{(i_t)}_t)\bigr)}_{\text{within-task}}
\end{align*}
\begin{compactenum}[(a)]
    \item \emph{Generation:} $U^\star\!-\!U_t^{\max}\le \epsU_{m_t\wedge m^\star}$, so the sum is
    $\mathcal O(\Lambda_T R^\star_T)$ under Condition~\ref{cond:bo_achievable}.
    \item \emph{Selection:} task-UCB implies $U_t^{\max}\!-\!U^{(i_t)}\le w^{(i_t)}_{t-1} \lesssim \Lbar\,\epsf_{s_{t-1}^{(i_t)}}$, giving $\tilde{\mathcal O}((\Lbar/L^\star)\sqrt{N_T}\,R^\star_T)$.
    \item \emph{Within-task:} Lipschitzness + GP-UCB bound give $U^{(i)}-u^{(i)}(\overline{y}_s^{(i)})\le L^\star\epsf_s$, giving $\tilde{\mathcal O}(\sqrt{N_T}\,R^\star_T)$.
\end{compactenum}
Combining yields \eqref{eq:main_regret_adaptive}; see Appendix~\ref{proof:max_depth_reachability} and \ref{app:proof_main_regret_adaptive} for the proof. 
\end{sketch}
\vspace{-1em}
\subsection{Discussion of the theoretical results}
\textbf{Q. What if we use an oracle task as the comparator?}
Let $\underline{R}_T := L^\star \sum_{s=1}^T \epsilon^{f,(i^\star_{\epsU})}_s$. For RBF, the extra overhead is
$\kappa_T := R^\star_T/\underline{R}_T = \tilde{\mathcal O}\!\big((\log T)^{\overline{d}-d^{(i^\star_{\epsU})}}\big)$,
where $\overline{d}:=\max_{i\in\mathcal I_T} d^{(i)}$; see Lemma~\ref{lem:kappa_rbf_matern}.

\textbf{Q. Isn’t $\Lbar$ hard to estimate?} We adapt $\Lbar$ online via regret-balancing~\citep{ziomek2024bayesian}; Alg.~\ref{alg:GSR} in Appendix~\ref{app:hyperparameters}, which incurs only a logarithmic overhead; the same idea applies to other hyperparameters. In experiments, GP hyperparameters are fit by marginal likelihood, while remaining constants are fixed ($\epsU_0=1$, $c_g=0.5$; see Appendix~\ref{app:hyperparameters}). See Figure~\ref{fig:scheduler} for ablation study on $\Lbar$ estimators.

\textbf{Q. Isn’t Condition~\ref{cond:stoch_coverage_adaptive} strong?}
This may be considered strong, but without it, the first term in our regret decomposition is irreducible, making sublinear task-regret information-theoretically impossible.
Our assumption makes explicit what is often left implicit in practice: evolutionary LLMs already show strong empirical performance in code and proof generation \citep{novikov2025alphaevolve, lange2025shinkaevolve, assumpccao2025codeevolve}.
Appendix~\ref{app:ICL} further supports this through a Bayesian in-context learning analysis \citep{wu2024how, wakayama2025context}, showing that, conditioned on mutation history, the posterior mass on the correct refinement type concentrates exponentially fast with the number of in-context examples. In particular, Corollary~\ref{cor:delta_large_k_main_text_form} gives
$\delta_+ \gtrsim \rho_m(\epsU_m) -\epsilon_{\text{ICL}}$,
where $\rho_m(\epsU_m)$ is the problem-specific, LLM-independent success probability of the true refinement, and $\epsilon_{\text{ICL}}$ is the Bayes approximation error, controlled by pretraining scale and typically small. This implies a nonzero success probability for \emph{refinable} problems with \emph{informative} mutation histories. Finally, \S\ref{sec:ablation} empirically confirms that $\delta_+$ remains nonzero across levels $m$.

\vspace{-0.3em}
\section{Experiments}\label{sec:experiments}
\vspace{-0.3em}
\begin{figure*}
    \centering
    \includegraphics[width=1\linewidth]{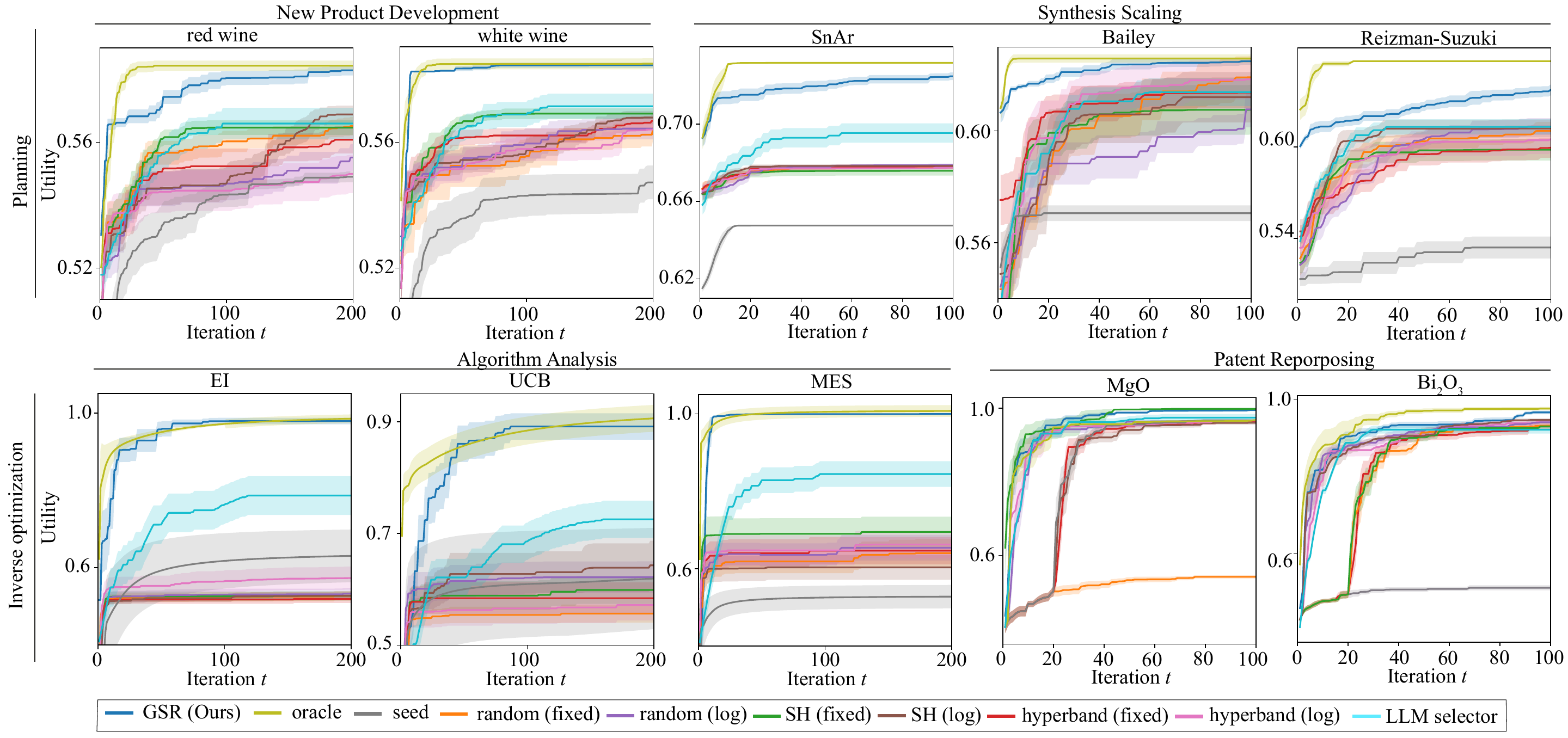}
    \caption{Real-world experiments. Top row: planning tasks—(a) new product development and (b) synthesis scaling. Bottom row: inverse optimization—(c) objective selection. Solid lines and shaded regions indicate the mean ± one standard error over runs. GSR consistently outperforms the baselines.
    }
    \label{fig:real-world}
    \vspace{-1em}
\end{figure*}
We evaluate GSR across diverse settings. Direct comparison to prior methods is nontrivial due to GSR’s open-ended and partial observation nature (see $\S$\ref{subsec:related}).
We therefore adapt ShinkaEvolve~\citep{lange2025shinkaevolve} as a baseline, combining four task selectors—random, successive halving (SH; \citep{jamieson2016non}), and Hyperband~\citep{li2018hyperband}, and LLM selector~\citep{ferreira2026can} that LLM selects tasks and schedule task generation—with two generation schedulers (fixed and log). We additionally include \emph{seed} (single-task BO on the initial task) and \emph{oracle} (single-task BO on the best task in hindsight). The gap to \emph{seed} quantifies improvement over the initial task, while the gap to \emph{oracle} captures suboptimality in task generation and selection.
All methods are implemented in BoTorch/GPyTorch~\citep{balandat2020botorch,gardner2018gpytorch}. In LLM experiments, \textsc{GPT-4o-mini}~\citep{achiam2023gpt} is used only for task generation and committee votes $\tilde{u}_t^{(i)}$; objective evaluations $y_t^{(i)}$ are from the simulator. Full details appear in Appendix~\ref{app:experiments} and the anonymous code.\footnote{\url{https://anonymous.4open.science/r/Generate-Select-Refine-2D48}}.

\vspace{-0.3em}
\subsection{Online Experiment 1: Planning}
\begin{wrapfigure}[19]{r}{0.32\textwidth}
    \vspace{-3.5em}
    \centering
    \includegraphics[width=0.27\textwidth]{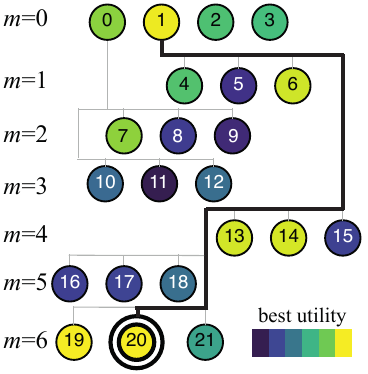}
    \caption{GSR’s task evolution tree shows coarse-to-fine refinement across levels $m$ toward the best task (20). Colors indicate final utility, and evaluation counts vary across tasks; task (1) initially lags behind its children (4–6), improves with more BO evaluations, and is ultimately surpassed by task (20).
    }
    \label{fig:evo_tree}
\end{wrapfigure}
We first apply GSR to \emph{planning}, where one must decide both \emph{what} to optimize and \emph{how} to evaluate it, starting from a vague seed task that is progressively refined (Fig.\ref{fig:real-world}).

\textbf{New product development.}
We consider a product manager at a local winery designing a new wine to attract new customers. This requires jointly identifying a target customer segment (task $i^\star_{\epsU}$) and its optimal wine (solution $x^{(i^\star_{\epsU})}$). 
We use the Wine Quality dataset~\citep{cortez2009modeling}, where inputs are physicochemical properties (e.g., acidity, sugar, alcohol) and outputs are quality scores. Each task objective combines (i) quality maximization and (ii) taste–intent matching, where natural-language descriptions (e.g., fruity, bold) are instantiated as numerical targets $x_\text{target}$ via LLMs, and matching is measured by the Mahalanobis distance between $x_\text{target}$ and $x_t$.
Cross-task utility is defined by majority vote from an LLM committee. We generate diverse consumer personas (e.g., engineers, event hosts) with latent preferences stored as JSON and treat them as new potential customers. For each comparison $(i_t, a_t)$, the committee votes using fixed personas from the pool, which remain hidden from the task generator. An evolutionary LLM then mutates the current best task (Fig.\ref{fig:evo_tree}), generating $J=3$ new tasks per resolution $\epsUm$ by refining taste intent, objective weights (quality vs. taste matching), and the search space.
Across red and white wines, GSR achieves near-oracle performance and discovers persona-aligned styles (e.g., bold\&crisp reds and fruity\&smooth whites). See Appendix~\ref{app:experiments} for full details, including prompts.

\textbf{Synthesis scaling.}
Synthesis routes that perform well in the lab often fail to scale to industrial manufacturing; small-scale trials therefore act as a \emph{planning} phase before committing to costly plant construction.
We use the \textsc{SUMMIT} chemical process simulator~\citep{felton2021summit} with three reactions. Each task is a reaction plan that trades off throughput and sustainability, and cross-task utility is the majority vote of a stakeholder committee (e.g., plant engineers, safety reviewers).
The discovered plan is more conservative than the throughput-optimal solution, reflecting committee preferences for sustainability and safety (see also Appendix~\ref{app:experiments}).

\vspace{-0.5em}
\subsection{Online Experiment 2: Inverse Optimization}\label{sec:io}
\vspace{-0.5em}
\begin{wraptable}[6]{r}{0.5\textwidth}
\centering
\vspace{-1.2em}
\caption{Discovered tasks in the algorithm analysis}
    \label{tab:algo}
    \resizebox{0.48\textwidth}{!}{%
    \begin{tabular}{lllll}
    \toprule
         policy
         & objective
         & dim. $d$
         & bounds
         & noise $\sigma_\text{noise}$\\
         \midrule
         \textsc{EI} & 
 Styblinski-Tang & 5 & $[-5,5]$ & 0.01\\
 \textsc{UCB} 
 & Levy & 3 & $[0,1]$ & 0.1\\
 \textsc{MES}
 & Levy &2 & $[-0.03,3.78]$ & 0.01\\
 \bottomrule
 \end{tabular}
 }
\end{wraptable}
Next, we apply GSR to \emph{inverse optimization} (IO), which seeks a task under which a given solution $x$ is optimal, rather than optimizing $x$ for a fixed task. While classical IO focuses on exact recovery in linear programming~\citep{chan2025inverse}, we extend IO to black-box, free-form task descriptions using BO and LLMs, and solve it approximately by allowing mild refinement of $x$.
Starting from a seed task $i_0$ and solution $x_0$, GSR jointly evolves both until the refined solution $x^\star$ is optimal for the discovered task $i^\star_{\epsU}$.

\textbf{Algorithm analysis.}
We apply IO to acquisition function analysis (EI~\citep{mockus1978application}, GP-UCB~\citep{srinivas2009gaussian}, and MES~\citep{wang2017max}). Which acquisition function performs best depends strongly on the underlying problem setting. 
We study the inverse question: \emph{under what task does each acquisition function perform best?} 
We define utility as the performance gap between the targeted acquisition function and the best-performing alternative, passed through a sigmoid to lie in $[0,1]$. Each task corresponds to standard test functions available in BoTorch, and an evolutionary LLM generates new tasks by mutating problem settings.
The discovered tasks (Table~\ref{tab:algo}) match known behavior: EI performs well on higher-dimensional smooth functions; UCB is more robust under noisy feedback; and MES excels on functions with many low-noise local optima. These findings are consistent with established results: EI is more exploitative~\citep{shahriari2015taking}, GP-UCB admits formal regret guarantees under noisy observations~\citep{srinivas2012information}, and information-theoretic approaches are effective for multimodal objectives \cite{wang2017max}.

\begin{wrapfigure}[23]{r}{0.4\textwidth}
    \vspace{-1em}
    \centering
    \includegraphics[width=0.4\textwidth]{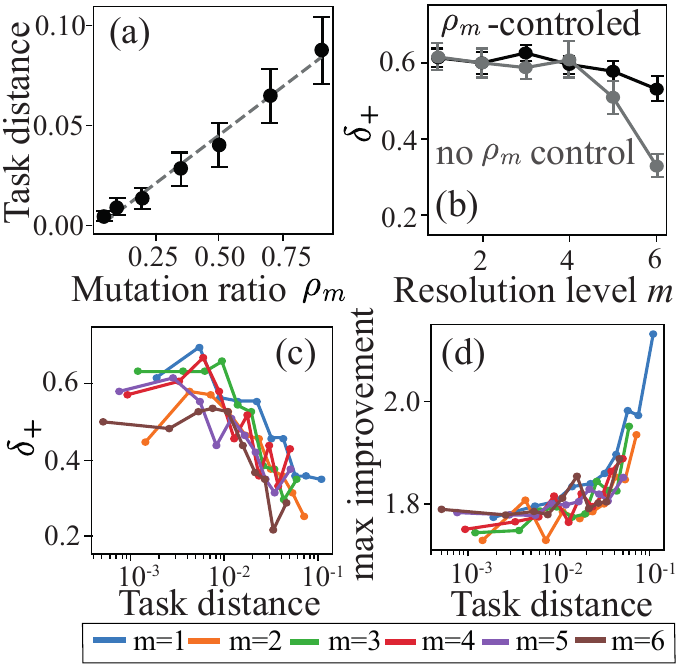}
    \caption{Resolution controllability. (a) Mutation ratio $\rho_m$ correlates with task distance. (b) finer task generation probability $\delta_+$ is non-zero across all resolution levels $m$. (c) Finer mutations yield higher $\delta_+$. (d) Coarser mutations achieve larger improvement. 
    }
    \label{fig:controllability}
\end{wrapfigure}
\textbf{Patent Repurposing.}
In industrial R\&D, materials developed for a target application are often abandoned despite substantial sunk costs; repurposing aims to identify alternative applications where such materials can perform well. We use the \textsc{Materials Project} dataset~\citep{jain2013commentary} and construct input–output pairs, where inputs are crystal structure descriptors encoded by CGCNN~\citep{xie2018crystal} and outputs consist of six material properties.
Each task optimizes two components: (i) surprisal~\citep{levine2009molecular}, defined as the negative log-probability of material properties, computed by fitting a kernel density estimator to each simulated property and linearly aggregating their surprisals as a feedback oracle; and (ii) a regularizer measuring the Euclidean distance between the seed material $x_0$ and the selected material $x^{(i_t)}_t$. 
An evolutionary LLM refines tasks by mutating surprisal aggregation weights, regularization weights, and the search space, while an LLM committee evaluates scientific and product value via pairwise comparisons.
Using MgO and Bi$_2$O$_3$ as seeds, GSR identifies plausible additives—TmCl$_3$ and HgF$_2$ for MgO (shear modulus, band gap), and CuO and PrTe$_2$ for Bi$_2$O$_3$ (Poisson ratio, Fermi energy). Based on these candidates, LLMs further propose repurposed applications, including more durable MRAM~\citep{parkin2004giant}, inorganic photolithography layers, and ion-conducting or active-material coatings for solid oxide fuel cells based on Bi$_2$O$_3$. We find no prior reports for these hypotheses in a targeted keyword search.
Overall, the results illustrate \emph{data-supported hypothesis generation} for materials repurposing, rather than hypotheses derived solely from LLMs.

\subsection{Experimental Analysis, Ablation Study}\label{sec:ablation}
\textbf{Human-LLM evaluation alignment.}
LLM-as-a-judge is powerful, but it risks misalignment with human values. To assess this, we compared its judgments against those of two PhD-level materials scientists on 100 randomly sampled task pairs. Both the human annotators and the LLM were given exactly the same task descriptions and partial optimization histories, and the annotators were blinded to the committee votes. Agreement, measured by Cohen’s kappa, was high: 0.905 for synthesis and 0.937 for repurposing on average. See Appendix~\ref{app:alignment} for the details.

\begin{wrapfigure}[10]{r}{0.4\textwidth}
    \vspace{-1.5em}
    \centering
    \includegraphics[width=0.4\textwidth]{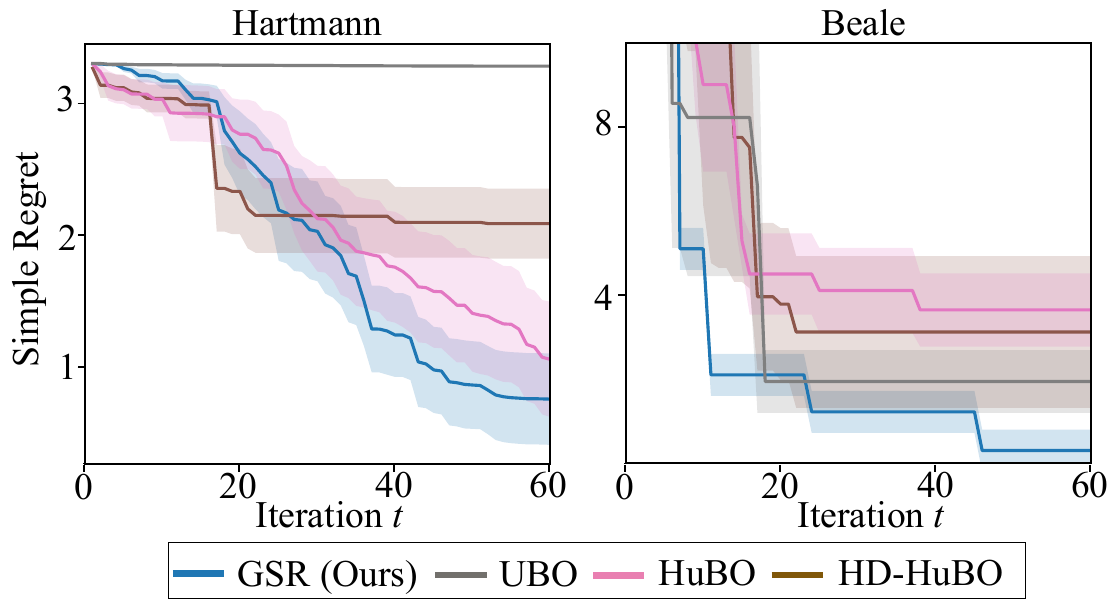}
    \caption{Unknown search space. 
    }
    \label{fig:non-llm}
\end{wrapfigure}
\textbf{Resolution controllability.}
We analyze the LLM experiments using the white wine task. GSR uses mutation to control task resolution. We generate $J=40$ child tasks for 4 anchor tasks across mutation ratios $\rho_m$, resolution levels $m$, and random seeds (Fig.\ref{fig:controllability}).
(a) Task distance—Euclidean distance over numerical task parameters—correlates with $\rho_m$, confirming that JSON mutation controls the effective resolution $\epsU_m$.
(b) The success probability $\delta_+$ (Condition~\ref{cond:stoch_coverage_adaptive}) remains non-zero across all $m$. Without controlling $\rho_m$, $\delta_+$ decreases at higher $m$ (grey), while a coarse-to-fine $\rho_m$ schedule stabilizes $\delta_+$ (black).
(c) $\delta_+$ increases as task distance decreases, further motivating smaller $\rho_m$ at finer resolutions.
(d) Coarser mutations (higher $\rho_m$, lower $m$) yield larger improvements, supporting an exploration-to-exploitation strategy: explore early with larger $\rho_m$, then gradually reduce $\rho_m$ to exploit.

\textbf{Non-LLM task generator.}
Next, we evaluate GSR with a non-LLM generator in the \emph{unknown search space} setting~\cite{ha2019bayesian}, where the true optimum $x^\star$ lies outside the initial domain $\mathcal{X}_0$. The goal is to expand candidate domains to include $x^\star$ while avoiding unnecessary expansion.
This is a single-objective, so tasks (domains) are ranked directly by the objective $f(x)$ without utility. We instantiate the generator by doubling the domain at each resolution level $m$ and compare against UBO~\citep{ha2019bayesian}, HuBO, and HD-HuBO~\citep{gupta2020sub} on two test functions, measuring simple regret $f^\star-\overline{y}_t^{(i_t)}$. Despite being specialized for this setting, these baselines are outperformed by GSR (Fig.~\ref{fig:non-llm}).

\begin{wrapfigure}[10]{r}{0.4\textwidth}
    \vspace{-1em}
    \centering
    \includegraphics[width=0.4\textwidth]{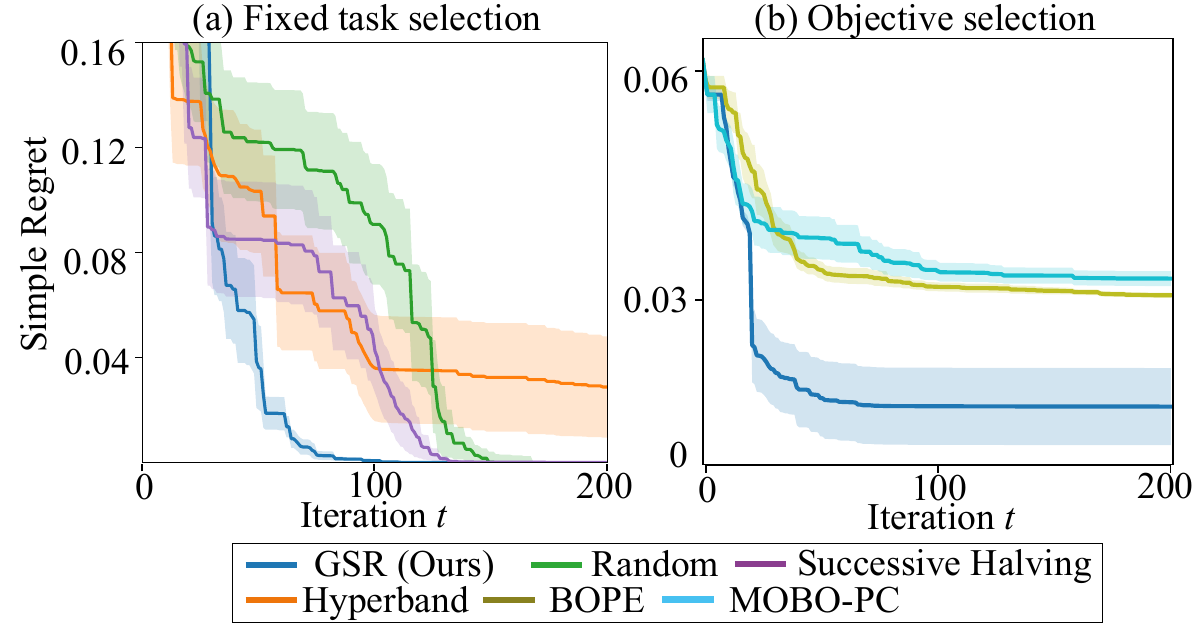}
    \caption{Offline settings: (a) Fixed task selection and (b) objective selection.
    }
    \label{fig:synthetic}
\end{wrapfigure}
\textbf{Offline experiments.}
Finally, we consider an offline setting with no generator: all tasks are fixed a priori, reducing the problem to task selection. Algorithm~\ref{alg:task-ucb} thus simplifies to pure task-UCB (omitting Lines~8–10). We evaluate six BoTorch benchmark functions, defining task utilities via a Gaussian CDF,
$
u^{(i)}(\overline{y}_t) := \Phi\!\Big(\sfrac{\overline{y}^{(i)}_t - \mu^{(i)}}{\sigma^{(i)}}\Big),
$
with task-specific constants $(\mu^{(i)}, \sigma^{(i)})$.
We study two variants: (a) fixed task selection, which allocates budget across independent tasks, and (b) objective selection, where all tasks share a common domain $\mathcal{X}^{(i)}=\mathcal{X} = [0,1]^6$.
Setting (a) relates to classical budget-allocation baselines (random, successive halving, Hyperband), while (b) connects to preference-based multi-objective BO (MOBO-PC~\citep{abdolshah2019multi} and BOPE~\citep{lin2022preference}\footnote{With minor adaptations since only one objective $y_t^{(i)}$ is observed at each step whereas MOBO assumes observe all $\{y_t^{(i)}\}_{i=1}^n$.}. 
Using best-so-far simple regret
$U^\star-\max_{\tau\le t}u^{(i_\tau)}(\overline{y}^{(i_\tau)}_{s_\tau^{(i_\tau)}})$
as the metric, GSR consistently outperforms these baselines (Fig.~\ref{fig:synthetic}).
See Appendix~\ref{app:add_exp} for additional experiments and Appendix~\ref{app:discussion} for extended discussion.

\vspace{-0.5em}
\section{Conclusion and Limitations}
We propose GSR, an open-ended BO framework that refines a vague seed task into an $\epsilon$-optimal task with GP-UCB-style regret guarantees, and apply it to planning and inverse optimization. While the theory is principled and the ablations in \S\ref{sec:ablation} support its assumptions, the current LLM-based task generation remains empirical, a limitation shared with evolutionary LLM methods. A second open challenge is utility specification: although some settings admit objective metrics, such as competitiveness in inverse optimization, and LLM committees show reasonable alignment with human judgments, reliable deployment still requires experimental validation and human oversight. At the same time, this challenge is also an opportunity, since utilities can themselves be learned from data, for example through preference learning with DPO \citep{rafailov2023direct, fujisawa2025scalable} or experimental feedback with GRPO \citep{shao2024deepseekmath}.

\bibliographystyle{plain}
\bibliography{neurips_2026}

\newpage
\appendix
\onecolumn



\newpage
\appendix
\section{Notations}\label{app:notation}
Notations are summarized in Table~\ref{tab:notation}.

\begin{table}
\centering
\caption{Notation (global task-rounds and local within-task BO).}
\label{tab:notation}
\small
\setlength{\tabcolsep}{6pt}
\renewcommand{\arraystretch}{1.15}
\begin{tabular}{@{}l p{0.82\textwidth}@{}}
\toprule
Symbol & Meaning \\
\midrule

\multicolumn{2}{@{}l}{\textbf{Indices, sets, counters, and shorthand}}\\
$t\in[T]$ & Global (meta) round; $T$ is the total evaluation budget.\\
$i\in\mathbb N$ & Task index; $i_0$ is the seed task; $i_t$ is the task selected at round $t$.\\
$\mathcal I_t$ & (Random) set of tasks instantiated and available by round $t$; $\mathcal I_T$ is the set by the horizon.\\
$s_t^{(i)}$ & Local evaluation counter: number of times task $i$ has been selected up to round $t$.\\
$m\in\{0,1,\dots,\overline m_T\}$ & Resolution level; $\overline m_T$ is the maximum level introduced by time $T$.\\
$q_t^{(i)}$ & Shorthand for within-task quantities: $q_t^{(i)} := q_{s_t^{(i)}}^{(i)}$ (used throughout the main text).\\

\addlinespace
\multicolumn{2}{@{}l}{\textbf{Task-level BO and observations}}\\
$\mathcal X^{(i)}\subset\mathbb R^{d^{(i)}}$ & Domain of task $i$; $d^{(i)}$ is its dimension.\\
$f^{(i)}:\mathcal X^{(i)}\to\mathbb R$ & Unknown black-box objective for task $i$.\\
$x_t^{(i_t)}$ & Design evaluated at global round $t$ on task $i_t$ (equivalently $x_{s_t^{(i_t)}}^{(i_t)}$ in local time).\\
$y_t^{(i_t)}$ & Noisy objective observation: $y_t^{(i_t)} = f^{(i_t)}(x_t^{(i_t)}) + \xi_t^f$.\\
$\xi_t^f\sim\mathrm{SubGauss}(\sigma_f^2)$ & Sub-Gaussian objective noise with variance proxy $\sigma_f^2$.\\
$\overline y_s^{(i)}$ & Incumbent after $s$ local evals on task $i$: $\overline y_s^{(i)}:=\max_{1\le r\le s} f^{(i)}(x_r^{(i)})$.\\
$f^{\star(i)}$ & Optimal objective value: $f^{\star(i)}:=\max_{x\in\mathcal X^{(i)}} f^{(i)}(x)$; $x^{\star(i)}\in\argmax_x f^{(i)}(x)$.\\
$\epsf_s$ & GP-UCB optimization gap bound after $s$ local evals (Eq.~\eqref{eq:epsf_def_main}); depends on $i$ via $\beta_s^{(i)},\gamma_s^{(i)}$.\\
$\epsfT_s$ & Uniform gap across instantiated tasks: $\epsfT_s := \max_{i\in\mathcal I_T}\epsf_s$.\\
$\beta_s^{(i)},\gamma_s^{(i)}$ & GP-UCB exploration parameter and information-gain term for task $i$.\\

\addlinespace
\multicolumn{2}{@{}l}{\textbf{Utility, long-run value, and confidence bounds}}\\
$u^{(i)}:\mathbb R\to[0,1]$ & Scale-invariant utility applied to incumbents; monotone and $L^{(i)}$-Lipschitz.\\
$L^{(i)}$ & Lipschitz constant of $u^{(i)}$.\\
$L^\star$ & Global Lipschitz upper bound: $L^\star := \sup_i L^{(i)}$.\\
$\Lbar$ & Supplied Lipschitz bound used by the algorithm, intended to satisfy $\Lbar\ge L^\star$.\\
$\tilde u_t^{(i_t)}$ & Noisy utility call: $\tilde u_t^{(i_t)} = u^{(i_t)}(\overline y_t^{(i_t)}) + \xi_t^u$.\\
$\xi_t^u\sim\mathrm{SubGauss}(\sigma_u^2)$ & Sub-Gaussian utility noise (averaging $K_t$ votes reduces proxy to $\sigma_u^2/K_t$).\\
$U^{(i)}$ & Long-run task value: $U^{(i)} := u^{(i)}(f^{\star(i)})$; and $U^\star := \sup_i U^{(i)}$.\\
$\LCBu_t^{(i)},\UCBu_t^{(i)}$ & Confidence bounds for $u^{(i)}(\overline y_t^{(i)})$.\\
$\LCBV_t^{(i)},\UCBV_t^{(i)}$ & Value envelopes for $U^{(i)}$ (Thm.~\ref{thm:value_envelopes_summary}).\\
$w_t^{(i)}$ & Value interval width: $w_t^{(i)}:=\UCBV_t^{(i)}-\LCBV_t^{(i)}$.\\
$\underline w_t$ & Minimum value-width across tasks: $\underline w_t := \min_{j\in\mathcal I_t} w_t^{(j)}$.\\
$\overline{\epsilon}^U_t$ & Anchor-resolution threshold: $\overline{\epsilon}^U_t := \max\{c_g\,\epsU_m,\ \underline w_t\}$.\\
$w_t$ & Width of the selected anchor: $w_t := w_t^{(a_t)}$.\\
$K_t$ & Number of committee votes averaged to form $\tilde u_t^{(i_t)}$ (if using vote-based feedback).\\

\addlinespace
\multicolumn{2}{@{}l}{\textbf{Resolution, generation, and regret}}\\
$\epsU$ & Utility-scale optimality tolerance (“resolution”).\\
$\epsUm$ & Resolution ladder: $\epsUm := \epsU_0 2^{-m}$; $m^\star$ is the BO-achievable level (Cond.~\ref{cond:bo_achievable}).\\
$\mathcal I^\star(\epsU)$ & $\epsU$-optimal task set: $\{i:U^\star-U^{(i)}\le \epsU\}$.\\
$a_t$ & Anchor task used for mutation at round $t$ (Eq.~\eqref{eq:anchor}).\\
$\mathcal B_m$ & Batch of $J$ mutated tasks at level $m$: $\mathcal B_m\sim\Gen(a_t,m,J)$.\\
$J$ & Number of tasks generated per generation event (batch size).\\
$\delta_+$ & Per-draw probability that a mutation hits an $\epsUm$-optimal task (Cond.~\ref{cond:stoch_coverage_adaptive}).\\
$\rho(i,a_t)$ & Mutation ratio between child task $i$ and parent/anchor $a_t$; schedule $\rho_m=\rho_0 2^{-m}$.\\
$\Regval_T$ & Meta regret: $\Regval_T := \sum_{t=1}^T\bigl(U^\star-u^{(i_t)}(\overline y_t^{(i_t)})\bigr)$.\\
$R_T^\star$ & Uniform single-task oracle scale: $R_T^\star := L^\star\sum_{s=1}^T \epsfT_s$.\\
$N_T$ & Number of instantiated tasks by time $T$: $N_T := |\mathcal I_T|$.\\
$c_g$ & Scheduler/gating constant in Alg.~\ref{alg:task-ucb} (triggers refinement when $w_t\le c_g\epsU_m$).\\

\bottomrule
\end{tabular}
\end{table}

\section{Preliminary}\label{app:gp_ucb}
\subsection{Gaussian process regression background}
\textbf{Gaussian process model.}
Let $f:\mathcal{X}\to\mathbb{R}$ be an unknown function with prior
$f\sim \mathcal{GP}(0,k)$ \citep{stein1999interpolation,williams2006gaussian}.
At each local round $s$, we query $\mathbf{x}_s\in\mathcal{X}$ and observe
$y_s = f(\mathbf{x}_s) + \xi^f_s,$
where $\xi^f_s \sim \text{SubGauss}(\sigma^2_f)$.
In GP regression we model this as Gaussian noise with variance $\lambda>0$.
Let $\mathbf{X}_s=[\mathbf{x}_1,\dots,\mathbf{x}_s]^\top$ and $\mathbf{Y}_s=[y_1,\dots,y_s]^\top$.
Conditioned on the data $\mathbf{D}_s=(\mathbf{X}_s,\mathbf{Y}_s)$, the posterior is
$f_s \mid \mathbf{D}_s \sim \mathcal{GP}(\mu_s,k_s),$
where
\begin{align*}
\mu_s(\mathbf{x})
&= k(\mathbf{x},\mathbf{X}_s)\,(\mathbf{K}_s+\lambda\mathbf{I})^{-1}\mathbf{Y}_s,\\
k_s(\mathbf{x},\mathbf{x}')
&= k(\mathbf{x},\mathbf{x}')-k(\mathbf{x},\mathbf{X}_s)\,(\mathbf{K}_s+\lambda\mathbf{I})^{-1}\,k(\mathbf{X}_s,\mathbf{x}'),
\end{align*}
with $\mathbf{K}_s=[k(\mathbf{x}_r,\mathbf{x}_{r'})]_{r,r'=1}^s$ and $\sigma_s^2(\mathbf{x}) := k_s(\mathbf{x},\mathbf{x})$.

\textbf{GP confidence bounds.}
The $f$ lies in a GP confidence band uniformly over time and space w.h.p.:
\begin{theorem}[Theorem 2 in \citep{chowdhury2017kernelized}]\label{thm:gp-ucb}
Let $f\in\mathcal{H}(k)$ be in the RKHS associated with $k$ with $\|f\|_{k}\le B$.
Let $\lambda>0$ be the noise variance used in GP regression and define
$
\beta_s^{1/2} := B + \sigma_{\mathrm{noise}}\sqrt{2\bigl(\gamma_{s-1}+1+\ln(1/\delta_f)\bigr)},
$
where $\gamma_S$ upper bounds the maximum information gain:
$\tfrac{1}{2}\log\bigl|\mathbf{I}+\lambda^{-1}\mathbf{K}_S\bigr|\le \gamma_S$.
Then, w.p.$\geq1-\delta_f$, for all $\mathbf{x}\in\mathcal{X}$ and all $s\ge 1$,
\vspace{-0.5em}
\[
\bigl|f(\mathbf{x})-\mu_{s-1}(\mathbf{x})\bigr|
\le
\beta^{1/2}_s\,\sigma_{s-1}(\mathbf{x}).
\]
\end{theorem}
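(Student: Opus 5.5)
The statement is the classical kernelized confidence bound of \citep{chowdhury2017kernelized}, so my plan is to reproduce its self-normalized martingale argument rather than use anything specific to GSR. I will work in the feature space of $k$. Write $f(\mathbf{x})=\langle f,\varphi(\mathbf{x})\rangle_k$, and let $\Phi_s$ be the operator stacking $\varphi(\mathbf{x}_1),\dots,\varphi(\mathbf{x}_s)$. Set $V_s:=\Phi_s^\top\Phi_s+\lambda I$. Then $\mu_s(\mathbf{x})=\varphi(\mathbf{x})^\top V_s^{-1}\Phi_s^\top\mathbf{Y}_s$, and by the push-through identity $\sigma_s^2(\mathbf{x})=\lambda\,\varphi(\mathbf{x})^\top V_s^{-1}\varphi(\mathbf{x})$. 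Substituting $\mathbf{Y}_s=\Phi_s f+\boldsymbol{\xi}_s$ splits the error into two terms:
\[
f(\mathbf{x})-\mu_s(\mathbf{x})=\lambda\,\varphi(\mathbf{x})^\top V_s^{-1}f-\varphi(\mathbf{x})^\top V_s^{-1}\Phi_s^\top\boldsymbol{\xi}_s .
\]

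\emph{Bias term.} Apply Cauchy--Schwarz in the $V_s^{-1}$ geometry:
\[
\lambda\,|\varphi(\mathbf{x})^\top V_s^{-1}f|\le \lambda\,\|\varphi(\mathbf{x})\|_{V_s^{-1}}\|f\|_{V_s^{-1}}\le \|\varphi(\mathbf{x})\|_{V_s^{-1}}\sqrt{\lambda}\,B .
\]
This uses $V_s\succeq\lambda I$ and $\|f\|_k\le B$. Since $\|\varphi(\mathbf{x})\|_{V_s^{-1}}=\sigma_s(\mathbf{x})/\sqrt{\lambda}$, the bias contributes exactly $B\,\sigma_s(\mathbf{x})$. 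This is the $B$ summand of $\beta_s^{1/2}$.

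\emph{Noise term.} Cauchy--Schwarz in the same geometry gives the bound $\|\varphi(\mathbf{x})\|_{V_s^{-1}}\,\|\Phi_s^\top\boldsymbol{\xi}_s\|_{V_s^{-1}}$. Controlling $\|\Phi_s^\top\boldsymbol{\xi}_s\|_{V_s^{-1}}$ uniformly over all $s$ is the main step. The noise is conditionally sub-Gaussian given the past, because $\mathbf{x}_s$ is chosen from the history (GP-UCB is predictable). I would therefore use the method of mixtures:
\begin{itemize}
\item For each direction $g$, the process $M_s^g=\exp(\langle g,\Phi_s^\top\boldsymbol{\xi}_s\rangle/\sigma_{\mathrm{noise}}-\tfrac12\|\Phi_s g\|^2)$ is a nonnegative supermartingale.
\item Integrate $g$ against a Gaussian prior with covariance $\lambda^{-1}I$. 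This is done in the finite-dimensional span of the data, or by the paper's regularized trick of inflating $\lambda$ to $1+\eta$.
\item The mixed process is still a supermartingale. Computing it in closed form gives $\exp(\tfrac12\|\cdot\|^2)/\sqrt{\det(I+\lambda^{-1}\mathbf{K}_s)}$ (up to normalization), with the norm taken in the appropriate weighted metric.
\item Ville's maximal inequality, applied with a stopping-time argument, then holds for all $s$ simultaneously with probability at least $1-\delta_f$.
\end{itemize}
The result is
\[
\|\Phi_s^\top\boldsymbol{\xi}_s\|_{V_s^{-1}}^2\lesssim \sigma_{\mathrm{noise}}^2\bigl(2\ln(1/\delta_f)+\ln\det(I+\lambda^{-1}\mathbf{K}_s)\bigr)\le 2\sigma_{\mathrm{noise}}^2\bigl(\gamma_s+\ln(1/\delta_f)\bigr),
\]
where the last step uses the definition of $\gamma_s$. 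The extra $+1$ in $\beta_s$ comes from the $\eta$-regularization slack, which adds a factor such as $\ln(1+\eta)$-type terms bounded by $1$.

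\emph{Combining.} With one-step indexing shifted so the posterior uses $s-1$ points, the two terms combine to
\[
|f(\mathbf{x})-\mu_{s-1}(\mathbf{x})|\le\bigl(B+\sigma_{\mathrm{noise}}\sqrt{2(\gamma_{s-1}+1+\ln(1/\delta_f))}\bigr)\sigma_{s-1}(\mathbf{x}) .
\]
This holds simultaneously for all $\mathbf{x}$, since $\mathbf{x}$ entered only through a deterministic Cauchy--Schwarz step, and for all $s$ by the maximal inequality. I expect the main obstacle to be the mixture step in a possibly infinite-dimensional RKHS. I would handle it by reducing to the finite span of $\varphi(\mathbf{x}_1),\dots,\varphi(\mathbf{x}_s)$, and by tracking the constants $\lambda$ versus $1+\eta$ carefully so the $+1$ appears exactly.
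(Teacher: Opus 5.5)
Your bias/noise decomposition is the one Chowdhury and Gopalan use; the paper gives no proof of its own and only cites their Theorem~2. Your bias bound $B\,\sigma_s(\mathbf x)$ is correct. The combining step, however, silently drops a factor. By your own identity $\|\varphi(\mathbf x)\|_{V_s^{-1}}=\sigma_s(\mathbf x)/\sqrt{\lambda}$, the noise term equals $\lambda^{-1/2}\sigma_s(\mathbf x)\,\|\Phi_s^\top\xi_{1:s}\|_{V_s^{-1}}$. Your self-normalized bound (mixing covariance $\lambda^{-1}I$) has no compensating $\lambda$, so what you actually obtain is $\lambda^{-1/2}\sigma_{\mathrm{noise}}\sqrt{2(\gamma_s+\ln(1/\delta_f))}\,\sigma_s(\mathbf x)$. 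That factor can be discarded only when $\lambda\ge 1$, and for small $\lambda$ the claimed inequality is genuinely false. Take $k(\mathbf x,\mathbf x)=1$ and one observation at $\mathbf x_1$. Then $f(\mathbf x_1)-\mu_1(\mathbf x_1)=(\lambda f(\mathbf x_1)-\xi_1)/(1+\lambda)\to-\xi_1$ as $\lambda\to 0$, which is of order $\sigma_{\mathrm{noise}}$ with constant probability. Meanwhile $\sigma_1(\mathbf x_1)=\sqrt{\lambda/(1+\lambda)}$ and $\gamma_1=\tfrac12\ln(1+1/\lambda)$ is admissible, so the right-hand side at $s=2$ tends to $0$. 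This is exactly why Chowdhury and Gopalan run the regression with $\lambda=1+\eta\ge 1$. You must make that hypothesis explicit (the statement as quoted omits it) or keep $\sigma_{\mathrm{noise}}/\sqrt{\lambda}$ in $\beta_s^{1/2}$.

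The mixture step, which you rightly flag as the crux, also does not work as described. There is no isotropic Gaussian with covariance $\lambda^{-1}I$ on an infinite-dimensional RKHS. Restricting it to $\mathrm{span}\{\varphi(\mathbf x_1),\dots,\varphi(\mathbf x_s)\}$ makes the mixing measure random and $s$-dependent, so the mixed process is no longer a supermartingale and Ville's inequality cannot be applied uniformly in $s$; the mixing measure must be fixed in advance. Chowdhury and Gopalan instead mix over function values: $g_r=h(\mathbf x_r)$ plus an independent $\sqrt{\eta}$-jitter, with $h\sim\mathcal{GP}(0,k)$ independent of the noise. Predictability of $\mathbf x_r$ makes $\mathbb E_h\bigl[\prod_{r\le s}\exp(g_r\xi_r/\sigma_{\mathrm{noise}}-g_r^2/2)\bigr]$ a supermartingale. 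The Gaussian integral equals $\det((1+\eta)I+\mathbf K_s)^{-1/2}\exp\bigl(\|\xi_{1:s}\|^2_{((\mathbf K_s+\eta I)^{-1}+I)^{-1}}/(2\sigma_{\mathrm{noise}}^2)\bigr)$. This dominates the noise term because $\mathbf K_s(\mathbf K_s+(1+\eta)I)^{-1}\preceq(\mathbf K_s+\eta I)(\mathbf K_s+(1+\eta)I)^{-1}$. Alternatively, project $\varphi$ onto a \emph{fixed} finite-dimensional subspace, apply the finite-dimensional bound, and let the dimension grow. Finally, the $+1$ is not a generic bounded slack: $\tfrac12\ln\det((1+\eta)I+\mathbf K_s)=\tfrac12\ln\det(I+(1+\eta)^{-1}\mathbf K_s)+\tfrac{s}{2}\ln(1+\eta)\le\gamma_s+\tfrac{s\eta}{2}$. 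This is at most $\gamma_s+1$ only when $s\eta\le 2$, which is why they take $\eta=2/T$. If your $\eta$-free route is made rigorous, you do not need the $+1$ at all.
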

\vspace{-0.5em}
For $s\ge 1$, define the pointwise confidence bounds
\begin{align*}
    \underline{f}_s(\mathbf{x}) &:= \mu_{s-1}(\mathbf{x})-\beta^{1/2}_s\sigma_{s-1}(\mathbf{x}),\\
    \overline{f}_s(\mathbf{x}) &:= \mu_{s-1}(\mathbf{x})+\beta^{1/2}_s\sigma_{s-1}(\mathbf{x}).
\end{align*}
\vspace{-1em}
\begin{theorem}[Theorem 3 in \citep{chowdhury2017kernelized}]
\label{thm:gpucb-regret}
Assume the setting of \cref{thm:gp-ucb} and run GP-UCB with decision rule
$
\textbf{x}_s \in \argmax_{\textbf{x}\in\mathcal{X}^{(i)}}\ \overline{f}^{(i)}_{s}(\mathbf{x}).
$
With the cumulative regret 
\small
$
R_{\text{cr}}(S):=\sum_{s=1}^{S} \bigl(f^{\star} - f(\textbf{x}_{s})\bigr),
$ 
\normalsize
we have
\[
R_{\text{cr}}(S)\le 2\,\sqrt{C_\lambda\, S \,\beta_S\,\gamma_S} = S\,\epsf_S,
\]
where $C_\lambda := \frac{2}{\log(1+\lambda^{-1})}$ and $\epsf_s := 2\sqrt{\frac{C_\lambda \beta_s \gamma_s}{s}}.$ .
\end{theorem}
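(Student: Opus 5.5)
The statement is the standard GP-UCB cumulative regret bound of \citep{chowdhury2017kernelized}. I would follow their argument, working on the event of \cref{thm:gp-ucb}, which has probability at least $1-\delta_f$. The plan has three steps: bound each instantaneous regret by a multiple of the posterior standard deviation, turn the sum into a sum of squares by Cauchy--Schwarz, and control that sum of squares by the information gain.

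\textbf{Step 1 (instantaneous regret).} Fix a round $s$ and let $r_s:=f^\star-f(\mathbf{x}_s)$. On the confidence event we have $f^\star=f(\mathbf{x}^\star)\le \overline f_s(\mathbf{x}^\star)$. The UCB selection rule then gives $\overline f_s(\mathbf{x}^\star)\le\overline f_s(\mathbf{x}_s)$. Finally, the lower confidence bound gives $f(\mathbf{x}_s)\ge \underline f_s(\mathbf{x}_s)$. Chaining these,
\[
r_s\le \overline f_s(\mathbf{x}_s)-\underline f_s(\mathbf{x}_s)=2\beta_s^{1/2}\sigma_{s-1}(\mathbf{x}_s).
\]
Since $\beta_s$ is nondecreasing in $s$, because $\gamma_{s-1}$ is nondecreasing, we may replace $\beta_s$ by $\beta_S$ for every $s\le S$.

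\textbf{Step 2 (Cauchy--Schwarz).} Summing over $s$ and applying Cauchy--Schwarz gives
\[
R_{\mathrm{cr}}(S)\le 2\beta_S^{1/2}\sum_{s=1}^S\sigma_{s-1}(\mathbf{x}_s)\le 2\beta_S^{1/2}\sqrt{S\sum_{s=1}^S\sigma_{s-1}^2(\mathbf{x}_s)}.
\]

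\textbf{Step 3 (information gain).} This is the main technical step: we need $\sum_{s}\sigma^2_{s-1}(\mathbf{x}_s)\le C_\lambda\gamma_S$.
\begin{itemize}
\item Normalize the kernel so that $k(\mathbf{x},\mathbf{x})\le1$. Then $\lambda^{-1}\sigma_{s-1}^2(\mathbf{x}_s)\in[0,\lambda^{-1}]$.
\item Use the elementary inequality $z\le \frac{\lambda^{-1}}{\log(1+\lambda^{-1})}\log(1+z)$ for $z\in[0,\lambda^{-1}]$, which holds because $\log(1+z)/z$ is decreasing. Applied with $z=\lambda^{-1}\sigma^2_{s-1}(\mathbf{x}_s)$, it gives $\sigma^2_{s-1}(\mathbf{x}_s)\le \frac{1}{\log(1+\lambda^{-1})}\log\bigl(1+\lambda^{-1}\sigma^2_{s-1}(\mathbf{x}_s)\bigr)$.
\item Use the determinant chain rule for the Gaussian posterior:
\[
\tfrac12\sum_{s=1}^S\log\bigl(1+\lambda^{-1}\sigma_{s-1}^2(\mathbf{x}_s)\bigr)=\tfrac12\log|\mathbf{I}+\lambda^{-1}\mathbf{K}_S|\le\gamma_S.
\]
This is proved by a Schur-complement induction on $\mathbf{K}_s$.
\item Combining the last two items yields $\sum_s\sigma^2_{s-1}(\mathbf{x}_s)\le \frac{2}{\log(1+\lambda^{-1})}\gamma_S=C_\lambda\gamma_S$.
\end{itemize}

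\textbf{Conclusion.} Substituting Step 3 into Step 2 gives $R_{\mathrm{cr}}(S)\le 2\sqrt{C_\lambda S\beta_S\gamma_S}$. By the definition of $\epsf_S$, this equals $S\epsf_S$.

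The only delicate points are the monotonicity of $\beta_s$ and the kernel normalization needed for the elementary inequality. If the paper does not assume $k\le1$, I would rescale $\lambda$ accordingly.
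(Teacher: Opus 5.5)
Your proposal is correct and is the standard argument behind the cited result; the paper gives no proof of its own and simply imports the theorem from \citep{chowdhury2017kernelized}. Your steps (the UCB sandwich $r_s\le 2\beta_s^{1/2}\sigma_{s-1}(\mathbf{x}_s)$, Cauchy--Schwarz, and the Srinivas-style bound $\sum_{s\le S}\sigma_{s-1}^2(\mathbf{x}_s)\le C_\lambda\gamma_S$ via the log-determinant chain rule) produce exactly the stated constant $C_\lambda=2/\log(1+\lambda^{-1})$. The caveats you flag, nondecreasing $\gamma_s$ (so $\beta_s\le\beta_S$) and $k(\mathbf{x},\mathbf{x})\le 1$, are the right implicit assumptions; note, however, that under $k(\mathbf{x},\mathbf{x})\le\kappa_0$ your argument gives the constant $2\kappa_0/\log(1+\kappa_0\lambda^{-1})$, so rescaling $\lambda$ alone does not recover $C_\lambda$.
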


\begin{lemma}[Uniform GP confidence over adaptively instantiated tasks]
\label{lem:uniform_gp_confidence_tasks}
For each task $i\in\mathbb N$, run GP regression / GP-UCB using confidence level
\[
\delta_f^{(i)} \ :=\ \frac{\delta_f}{\pi^2 i^2}.
\]
Let $\beta_s^{(i)}$ be defined as in Theorem~\ref{thm:gp-ucb} but with $\delta_f$ replaced by $\delta_f^{(i)}$.
Then with probability at least $1-\delta_f$, simultaneously for all tasks $i\in\mathbb N$, all $s\ge 1$,
and all $\mathbf{x}\in\mathcal X^{(i)}$,
\[
\bigl|f^{(i)}(\mathbf{x})-\mu^{(i)}_{s-1}(\mathbf{x})\bigr|
\le
\bigl(\beta^{(i)}_s\bigr)^{1/2}\,\sigma^{(i)}_{s-1}(\mathbf{x}).
\]
Consequently, all within-task optimization gap bounds $f^{\star(i)}-\overline{y}_s^{(i)}\le \epsf_s$ hold simultaneously for all instantiated tasks.
\end{lemma}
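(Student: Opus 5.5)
The plan is a union bound over tasks, applying Theorem~\ref{thm:gp-ucb} to each task separately. For each fixed task index $i\in\mathbb N$, the GP model for task $i$ is built only from the observations of task $i$, indexed by its local counter $s$. The global interleaving across tasks is chosen adaptively. However, the sequence of queries to task $i$ is still predictable with respect to the filtration generated by all past observations. The self-normalized martingale argument behind Theorem~\ref{thm:gp-ucb} (\citep{chowdhury2017kernelized}) only needs the noise $\xi^f$ to be conditionally sub-Gaussian given the past, and the query point to be measurable with respect to that past. So I would first argue that, with the filtration enlarged to include everything observed before each local query of task $i$, the hypotheses still hold. Defining $E_i$ as the event that the band holds for task $i$ for all $s\ge1$ and all $\mathbf{x}\in\mathcal X^{(i)}$, this gives $\Pr(E_i^c)\le\delta_f^{(i)}=\delta_f/(\pi^2 i^2)$.

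There is one subtlety: a task that is never instantiated, or is instantiated at a random time. I would handle this by a coupling. Imagine that every task $i$ possesses an infinite stream of fresh noise variables that are used in order whenever it is queried. Then $E_i$ is a statement about that stream together with any predictable query rule, and whether the task is instantiated is irrelevant. Note also that task indices are assigned in order of creation, so the index $i$ is deterministic given the ordering. The batch generator $\Gen$ may depend on the past, but it does not affect the noise of future queries.

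Next comes the union bound: $\Pr(\bigcup_i E_i^c)\le\sum_{i\ge1}\delta_f/(\pi^2 i^2)=\delta_f/6\le\delta_f$, since $\sum_i i^{-2}=\pi^2/6$. This yields the simultaneous band. For the consequence, on $\bigcap_i E_i$ I would rerun the GP-UCB analysis of Theorem~\ref{thm:gpucb-regret} per task, with $\beta^{(i)}_s$ in place of $\beta_s$. This bounds the local cumulative regret of task $i$ by $s\,\epsf_s$. The simple-regret gap then follows from
\[
f^{\star(i)}-\overline{y}^{(i)}_s=\min_{r\le s}\bigl(f^{\star(i)}-f^{(i)}(x_r^{(i)})\bigr)\le \frac{1}{s}\sum_{r=1}^s\bigl(f^{\star(i)}-f^{(i)}(x_r^{(i)})\bigr)\le \epsf_s,
\]
using the noiseless incumbent convention. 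This holds simultaneously for every instantiated task.

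The main obstacle is the first step: justifying that the adaptive, cross-task scheduling, including random task creation, preserves the martingale structure needed by Theorem~\ref{thm:gp-ucb}. I would handle it by the fresh-noise-stream construction and by stating explicitly the filtration with respect to which the per-task query sequence is predictable.
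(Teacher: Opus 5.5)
Your proposal is correct and follows the paper's proof. You apply Theorem~\ref{thm:gp-ucb} to each task at level $\delta_f/(\pi^2 i^2)$, union-bound over $i\in\mathbb N$ (failure probability $\delta_f/6\le\delta_f$), and recover the gap bound by the averaging argument of Lemma~\ref{lem:simple_regret}. The paper does not discuss adaptive scheduling or random task instantiation at all, so your filtration and fresh-noise-stream argument adds rigor to a step the paper leaves implicit rather than taking a different route.
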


\begin{proof}
Apply Theorem~\ref{thm:gp-ucb} to each fixed task $i$ with confidence $\delta_f^{(i)}$.
Then, by a union bound over $i\in\mathbb N$,
\[
\Pr\Big(\exists i:\ \text{GP confidence for task $i$ fails}\Big)
\le
\sum_{i=1}^\infty \delta_f^{(i)}
=
\frac{\delta_f}{\pi^2}\sum_{i=1}^\infty \frac{1}{i^2}
\le
\delta_f.
\]
On the complement event, the GP confidence bands hold for all tasks simultaneously, implying the optimization gap envelopes
via the same argument as Lemma~\ref{lem:simple_regret}.
\end{proof}

\section{LLM task generator and evaluator}
\subsection{Evaluator}\label{app:evaluator}

\textbf{Bradley--Terry preference model.}
We model the evaluator's latent preference using the popular Bradley-Terry (BT) model \citep{bradley1952rank}:
\vspace{-0.3em}
\begin{equation}
\label{eq:btl_pairwise_main}
\Pr\!\bigl((i,z)\succ (j,z')\bigr)
\ =\
\sigmoid\!\Big(\theta^{(i)}(z)-\theta^{(j)}(z')\Big),
\end{equation}
where $\sigmoid(x):=\sfrac{1}{1+e^{-x}}$ and $\theta^{(i)}(z)\in\mathbb R$ is the preference score over task-incumbent pairs; $(i,z)$ and $(j, \overline{y}^\prime)$. 

\textbf{Ordinal to cardinal utility.}
However, BT model's $\theta^{(i)}$ is \emph{ordinal}, not cardinal. To convert, we assume a fixed \emph{reference} $r$ (the seed task $i_0$ and its incumbent $\overline{y}^{(i_0)}_0$). We define the utility as a win probability against this fixed reference:
$u^{(i)}(z)
\ :=\
\Pr\!\bigl((i,z)\succ r\bigr)
\in[0,1].$
This yields a bounded \emph{cardinal} utility.
We normalize the reference so that $\theta(r)=0$; i.e., 
$u^{(i)}(z)=\sigmoid(\theta^{(i)}(z))$.

\textbf{Moving-anchor queries.}
\begin{figure}[b!]
  \centering
  \includegraphics[width=0.5\linewidth]{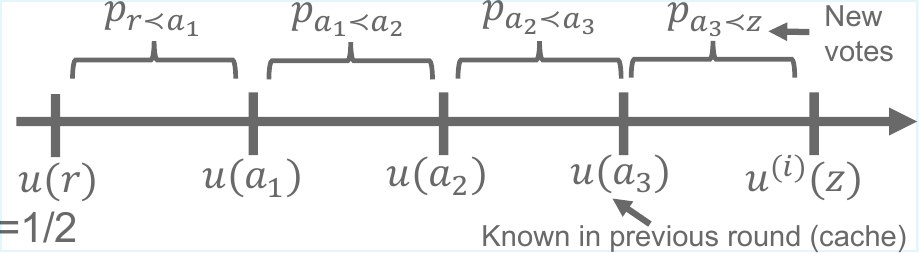}
  \caption{Log-odds chain rule: adding logits and mapping back with sigmoid offers cardinal utility $u^{(i)}(z)$ estimation.}
  \label{fig:anchor}
  \vspace{-1em}
\end{figure}
Comparisons to a fixed reference can saturate (voters always prefer the improved task), making feedback uninformative.
We instead compare against a moving anchor $a_t$ (the current anchor task) and observe
votes $b_{t,k}\in\{0,1\}$, where $b_{t,k}=1$ iff
$(i_t,\overline{y}_t^{(i_t)})\succ(a_t,\overline{y}_t^{(a_t)})$.
Let $p_t:=\Pr(b_{t,k}=1\mid\mathcal F_{t-1})$ be the true win rate and
$\hat p_t:=\frac{1}{K_t}\sum_{k=1}^{K_t} b_{t,k}$ its empirical estimate.
Under the BT model, we have:
\begin{equation}
\label{eq:logodds_transport_main}
\logit u^{(i_t)}\!\bigl(\overline{y}_t^{(i_t)}\bigr)
=
\logit u^{(a_t)}\!\bigl(\overline{y}_t^{(a_t)}\bigr)
+
\logit p_t.
\end{equation}
See Fig.\ref{fig:anchor}; given votes and a cached anchor utility interval,
we can estimate cardinal utilities.

\subsubsection{Preference feedback and sub-Gaussian concentration.}
Our envelope theory only requires the CI interface.
In the main paper we implement utility calls using \emph{pairwise} comparisons calibrated via a global BT scale, where the raw feedback is a collection of bounded votes and thus concentrates.

\begin{lemma}[Bounded utility feedback is conditionally sub-Gaussian]\label{lem:util_bounded_subg}
Fix a utility call $\ell$ and condition on $\mathcal F_{t_\ell-1}$ and $\overline{y}_{s_\ell}^{(i_\ell)}$.
Assume $\tilde{u}_\ell\in[0,1]$ and
$\mathbb E[\tilde{u}_\ell\mid \mathcal F_{t_\ell-1},\overline{y}_{s_\ell}^{(i_\ell)}]=u^{(i_\ell)}(\overline{y}_{s_\ell}^{(i_\ell)})$.
If $\tilde{u}_\ell$ is formed as an average of $K_\ell\ge 1$ conditionally independent bounded samples in $[0,1]$
(with the same conditional mean), then letting
$\xi_\ell:=\tilde{u}_\ell-u^{(i_\ell)}(\overline{y}_{s_\ell}^{(i_\ell)})$ we have for all $\lambda\in\mathbb R$,
\[
\mathbb{E}\!\left[\exp(\lambda \xi_\ell)\mid \mathcal F_{t_\ell-1},\overline{y}_{s_\ell}^{(i_\ell)}\right]
\le
\exp\!\left(\frac{\lambda^2}{8K_\ell}\right).
\]
Equivalently, $\xi_\ell$ is conditionally sub-Gaussian with variance proxy $1/(4K_\ell)$.
In particular, for single-shot bounded feedback we recover the case $K_\ell=1$.
\end{lemma}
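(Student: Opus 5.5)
The plan is the standard Hoeffding/Chernoff moment-generating-function argument for averages of bounded variables, done conditionally. Write $\tilde u_\ell=\frac{1}{K_\ell}\sum_{k=1}^{K_\ell} Z_k$, where the $Z_k\in[0,1]$ are conditionally independent given $\mathcal G:=\sigma(\mathcal F_{t_\ell-1},\overline{y}_{s_\ell}^{(i_\ell)})$. Each has conditional mean $\mu:=u^{(i_\ell)}(\overline{y}_{s_\ell}^{(i_\ell)})$, so $\xi_\ell=\frac{1}{K_\ell}\sum_k (Z_k-\mu)$.

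\textbf{Step 1: one sample.} For a single sample I will invoke Hoeffding's lemma conditionally: for any random variable $Z\in[a,b]$ with conditional mean $\mu$,
\[
\mathbb E[e^{\lambda(Z-\mu)}\mid\mathcal G]\le e^{\lambda^2(b-a)^2/8}.
\]
With $[a,b]=[0,1]$ this gives $e^{\lambda^2/8}$. This is the $K_\ell=1$ case, and it also handles a single-shot $\tilde u_\ell\in[0,1]$ directly.

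The proof of Hoeffding's lemma is routine. Convexity gives $e^{\lambda z}\le (1-z)e^{\lambda\cdot 0}+z e^{\lambda}$ on $[0,1]$. Taking expectations reduces the claim to bounding $\psi(h)=-hp+\log(1-p+pe^{h})$, where $h=\lambda$. Then $\psi(0)=\psi'(0)=0$ and $\psi''\le 1/4$, and Taylor's theorem yields $\psi(h)\le h^2/8$. I will cite this rather than redo it.

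\textbf{Step 2: the average.} Apply the tower property over the $K_\ell$ samples. By conditional independence,
\[
\mathbb E\bigl[e^{\lambda\xi_\ell}\mid\mathcal G\bigr]=\prod_{k=1}^{K_\ell}\mathbb E\bigl[e^{(\lambda/K_\ell)(Z_k-\mu)}\mid\mathcal G\bigr]\le \prod_{k=1}^{K_\ell} e^{\lambda^2/(8K_\ell^2)}=e^{\lambda^2/(8K_\ell)}.
\]
This is exactly the claimed bound. Matching it to the definition $\mathbb E[e^{\lambda\xi}]\le e^{\lambda^2\sigma^2/2}$ gives $\sigma^2=1/(4K_\ell)$.

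\textbf{Main obstacle.} The only delicate point is measure-theoretic: the conditional independence and the common conditional mean must hold given the same $\sigma$-algebra that contains the adaptively chosen $\overline{y}$. Otherwise the factorization in Step 2 and the centering in Step 1 fail. I will state this explicitly as the use of the hypothesis. I would also note that the bound is uniform in the conditioning, so it holds without change for the martingale-style union bound used in Lemma~\ref{lem:utility_ci_hoeffding_main}.
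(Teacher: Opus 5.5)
Your proposal is correct and matches the paper's proof essentially line for line. The paper also applies Hoeffding's lemma conditionally to each centered vote $b_{\ell,k}-\mu$, whose range has width one, to get $e^{\lambda^2/8}$. It then factorizes the moment-generating function of the average using conditional independence with $\lambda$ replaced by $\lambda/K_\ell$, which yields $e^{\lambda^2/(8K_\ell)}$.
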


\begin{proof}
Fix $\ell$ and condition on $\mathcal{F}_{t_\ell-1}$ and $\overline{y}:=\overline{y}_{s_\ell}^{(i_\ell)}$.
Let $\mu:=u^{(i_\ell)}(z)$.
Let $\tilde{u}_\ell=\frac1{K_\ell}\sum_{k=1}^{K_\ell} b_{\ell,k}$ with $b_{\ell,k}\in[0,1]$ conditionally independent and $\mathbb E[b_{\ell,k}\mid\cdot]=\mu$.
Then each $b_{\ell,k}-\mu$ is mean-zero and lies in $[-\mu,1-\mu]\subset[-1,1]$.
By Hoeffding's lemma, for all $\lambda\in\mathbb R$,
\[
\mathbb{E}\!\left[\exp\!\bigl(\lambda(b_{\ell,k}-\mu)\bigr)\mid \mathcal{F}_{t_\ell-1},z\right]
\le \exp\!\left(\frac{\lambda^2}{8}\right).
\]
By conditional independence and $\xi_\ell=\tilde{u}_\ell-\mu=\frac{1}{K_\ell}\sum_{k=1}^{K_\ell}(b_{\ell,k}-\mu)$,
\[
\mathbb{E}\!\left[\exp\!\bigl(\lambda\xi_\ell\bigr)\mid \mathcal{F}_{t_\ell-1},z\right]
=
\prod_{k=1}^{K_\ell}
\mathbb{E}\!\left[\exp\!\Bigl(\frac{\lambda}{K_\ell}(b_{\ell,k}-\mu)\Bigr)\mid \mathcal{F}_{t_\ell-1},z\right]
\le
\exp\!\left(\frac{\lambda^2}{8K_\ell}\right).
\]
\end{proof}

\paragraph{Confidence interval for the pairwise probability.}
Let $\delta_\ell\in(0,1)$ be the per-call failure budget and define the Hoeffding radius
\[
\theta_p(K_\ell,\delta_\ell)
:=
\sqrt{\frac{\log(2/\delta_\ell)}{2K_\ell}}.
\]
Then with probability at least $1-\delta_\ell$ (conditional on the compared incumbents),
\[
p_\ell \in
\bigl[\hat p_\ell-\theta_p(K_\ell,\delta_\ell),\ \hat p_\ell+\theta_p(K_\ell,\delta_\ell)\bigr].
\]
We clip endpoints to $[0,1]$ (and, when actually evaluating logits in code, one may additionally clip
to $[\epsclip,1-\epsclip]$ to avoid overflow):
\[
p_\ell^- := \clip{\hat p_\ell-\theta_p(K_\ell,\delta_\ell)}{0}{1},
\qquad
p_\ell^+ := \clip{\hat p_\ell+\theta_p(K_\ell,\delta_\ell)}{0}{1}.
\]
(For numerical stability in code, one may additionally clip $p_\ell^\pm$ to
$[\epsclip,1-\epsclip]$; see the practical note below.)

\paragraph{Transported utility confidence interval.}
Assume we already maintain a valid interval for the anchor utility
\[
u^{(a_\ell)}(\overline{y}_{s_\ell}^{(a_\ell)}) \in [\LCBu^{(a_\ell)},\UCBu^{(a_\ell)}].
\]
(For the fixed reference artifact $r$, we may take $\LCBu^{(r)}=\UCBu^{(r)}=\tfrac12$ since $u(r)=\Pr(r\succ r)=\tfrac12$.)
Define the transported interval
\begin{equation}
\label{eq:transported_interval_app}
\LCBu^{(i_\ell)}
:=
\sigmoid\!\Big(\logit(\LCBu^{(a_\ell)})+\logit(p_\ell^-)\Big),
\qquad
\UCBu^{(i_\ell)}
:=
\sigmoid\!\Big(\logit(\UCBu^{(a_\ell)})+\logit(p_\ell^+)\Big).
\end{equation}

\begin{lemma}[Validity of the transported utility interval]\label{lem:transported_interval_valid}
Work on the event that (i) the anchor interval is valid and (ii) $p_\ell\in[p_\ell^-,p_\ell^+]$.
Under BT model,
\[
u^{(i_\ell)}(\overline{y}_{s_\ell}^{(i_\ell)}) \in [\LCBu^{(i_\ell)},\UCBu^{(i_\ell)}].
\]
\end{lemma}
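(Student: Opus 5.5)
The argument rests on two facts: the log-odds transport identity \eqref{eq:logodds_transport_main}, and the monotonicity of the maps $\logit$ and $\sigmoid$ (together with monotonicity of addition). Throughout, we work on the event where (i) and (ii) both hold. Write $u_a:=u^{(a_\ell)}(\overline{y}_{s_\ell}^{(a_\ell)})$ and $u_i:=u^{(i_\ell)}(\overline{y}_{s_\ell}^{(i_\ell)})$.

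\emph{Step 1: express $u_i$ through the anchor.} Under the BT model \eqref{eq:btl_pairwise_main}, normalized so that $\theta(r)=0$, we have $\logit u_a=\theta^{(a_\ell)}$ and $\logit u_i=\theta^{(i_\ell)}$. The true win rate satisfies $p_\ell=\sigmoid(\theta^{(i_\ell)}-\theta^{(a_\ell)})$, so $\logit p_\ell=\theta^{(i_\ell)}-\theta^{(a_\ell)}$. Adding gives
\[
u_i=\sigmoid\bigl(\logit u_a+\logit p_\ell\bigr),
\]
which is exactly \eqref{eq:logodds_transport_main}. This is the only place where the BT structure enters.

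\emph{Step 2: the lower bound.} Define $g(a,p):=\sigmoid(\logit a+\logit p)$ on $(0,1)^2$. Since $\logit$ is strictly increasing and $\sigmoid$ is strictly increasing, $g$ is nondecreasing in each argument separately. On the event, $\LCBu^{(a_\ell)}\le u_a$ and $p_\ell^-\le p_\ell$. Applying monotonicity one coordinate at a time gives
\[
\LCBu^{(i_\ell)}=g(\LCBu^{(a_\ell)},p_\ell^-)\le g(u_a,p_\ell^-)\le g(u_a,p_\ell)=u_i.
\]

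\emph{Step 3: the upper bound.} The same argument with $\UCBu^{(a_\ell)}\ge u_a$ and $p_\ell^+\ge p_\ell$ gives $u_i\le\UCBu^{(i_\ell)}$.

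\emph{Step 4: boundary values.} This is the only real obstacle. Because $p_\ell^\pm$ are clipped to $[0,1]$, and anchor bounds may equal $0$ or $1$, the quantity $\logit$ can be $\pm\infty$. I would handle this by extending $\logit$ to $[0,1]\to[-\infty,+\infty]$ and $\sigmoid$ to the extended reals, setting $\sigmoid(\pm\infty)=1$ or $0$ respectively. One must adopt a convention for the indeterminate sum $+\infty+(-\infty)$. For the lower endpoint, resolve it to $-\infty$, so that $\LCBu^{(i_\ell)}=0$. For the upper endpoint, resolve it to $+\infty$, so that $\UCBu^{(i_\ell)}=1$. Both choices are conservative, so the inequalities hold trivially in those cases. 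All remaining cases follow from monotonicity of the extended maps.

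Alternatively, if the code-level clipping to $[\epsclip,1-\epsclip]$ is used, every argument stays finite. In that case we need $p_\ell$ to lie in the clipped interval, which holds whenever the true win rate itself lies in $[\epsclip,1-\epsclip]$. I would state the result with the extended-real convention and note this practical variant as a remark.
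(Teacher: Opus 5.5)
Your proposal is correct and follows the paper's proof: the transport identity $\logit u^{(i_\ell)} = \logit u^{(a_\ell)} + \logit p_\ell$, followed by monotonicity of $\logit$ and $\sigmoid$ applied endpoint-wise, with the paper likewise allowing $\logit(0)=-\infty$ and $\logit(1)=+\infty$. Your convention for $+\infty+(-\infty)$ is harmless but is never needed on the event: under BT, $u^{(a_\ell)}(\cdot)$ and $p_\ell$ lie in $(0,1)$, which forces $\LCBu^{(a_\ell)},p_\ell^-<1$ and $\UCBu^{(a_\ell)},p_\ell^+>0$, so the lower endpoint can only involve $-\infty$ and the upper endpoint only $+\infty$.
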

\begin{proof}
By transport identity,
\[
\logit u^{(i_\ell)}(\overline{y}_{s_\ell}^{(i_\ell)})
=
\logit u^{(a_\ell)}(\overline{y}_{s_\ell}^{(a_\ell)})
+
\logit p_\ell.
\]
On the event that $u^{(a_\ell)}(\cdot)\in[\LCBu^{(a_\ell)},\UCBu^{(a_\ell)}]$ and $p_\ell\in[p_\ell^-,p_\ell^+]$,
monotonicity of $\logit(\cdot)$ implies
\[
\logit u^{(i_\ell)}(\cdot)
\in
\bigl[\logit(\LCBu^{(a_\ell)})+\logit(p_\ell^-),\ \logit(\UCBu^{(a_\ell)})+\logit(p_\ell^+)\bigr].
\]
Applying the monotone map $\sigmoid(\cdot)$ to endpoints gives \eqref{eq:transported_interval_app}.
\end{proof}

\paragraph{Uniform confidence across all utility calls (with a summable schedule).}
Let $\delta_u\in(0,1)$ and set $\delta_\ell:=\delta_u/(\pi^2\ell^2)$.
Let $\mathcal E_p$ be the event that $p_\ell\in[p_\ell^-,p_\ell^+]$ holds for all $\ell$ simultaneously.
By a union bound and Hoeffding, $\Pr(\mathcal E_p)\ge 1-\delta_u$.
On $\mathcal E_p$, validity of transported intervals follows by induction over utility calls using Lemma~\ref{lem:transported_interval_valid}
(starting from the reference $r$ which has a trivial valid interval).

\paragraph{Committee size (vote complexity).}
Our envelope analysis only requires that, at each checkpoint, the constructed utility interval width
$\UCBu^{(i)}-\LCBu^{(i)}$ is below a target tolerance (e.g., $2\eta^{(i)}(s)$).
With transported intervals, a practical rule is to start from a small $K$ and keep drawing fresh votes until the width target is met.

The next theorem gives a sufficient condition that ensures a target \emph{utility} half-width, under a mild non-degeneracy condition.
(For near-optimal tasks compared against the current best anchor, this condition is typically satisfied.)

\begin{theorem}[Sufficient committee size for a target transported-utility half-width]
\label{thm:committee_size_btl}
Fix a utility call $\ell$ and let $w_a$ be the \emph{anchor log-odds width}
\[
w_a := \logit(\UCBu^{(a_\ell)})-\logit(\LCBu^{(a_\ell)}).
\]
Assume the true comparison probability satisfies $p_\ell\in[\kappa,1-\kappa]$ for some $\kappa\in(0,\tfrac12]$,
and choose $K_\ell$ so that
\begin{equation}
\label{eq:K_btl_sufficient}
K_\ell
\ \ge\
\max\left\{
\frac{8\log(2/\delta_\ell)}{\kappa^2},\;
\frac{32\log(2/\delta_\ell)}{\kappa^2\,(8\eta - w_a)^2}
\right\},
\qquad\text{for some target }\eta>0\text{ with }w_a<8\eta.
\end{equation}
Then, on the event of Lemma~\ref{lem:transported_interval_valid}, the transported interval satisfies
\[
\UCBu^{(i_\ell)}-\LCBu^{(i_\ell)} \le 2\eta.
\]
\end{theorem}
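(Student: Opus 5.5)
The plan is to work in log-odds space, where the transport identity \eqref{eq:logodds_transport_main} makes widths add, and then map back through the sigmoid. By the definition \eqref{eq:transported_interval_app},
\[
\logit(\UCBu^{(i_\ell)})-\logit(\LCBu^{(i_\ell)}) \;=\; w_a + \bigl(\logit(p_\ell^+)-\logit(p_\ell^-)\bigr).
\]
Since $\sigmoid'(x)=\sigmoid(x)(1-\sigmoid(x))\le 1/4$, the sigmoid is $\tfrac14$-Lipschitz. The utility width is therefore at most $\tfrac14\bigl(w_a+\logit(p_\ell^+)-\logit(p_\ell^-)\bigr)$. It thus suffices to show
\[
\logit(p_\ell^+)-\logit(p_\ell^-)\le 8\eta-w_a,
\]
which is a positive quantity by the assumption $w_a<8\eta$.

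\emph{Localization step.} We work on the event of Lemma~\ref{lem:transported_interval_valid}, so $|\hat p_\ell-p_\ell|\le\theta_p$ with $\theta_p:=\theta_p(K_\ell,\delta_\ell)$. Then both $\hat p_\ell\pm\theta_p$ lie in $[p_\ell-2\theta_p,\,p_\ell+2\theta_p]$. The first lower bound $K_\ell\ge 8\log(2/\delta_\ell)/\kappa^2$ gives $\theta_p=\sqrt{\log(2/\delta_\ell)/(2K_\ell)}\le\kappa/4$, hence $2\theta_p\le\kappa/2$. Combined with $p_\ell\in[\kappa,1-\kappa]$, this yields $[p_\ell^-,p_\ell^+]\subset[\kappa/2,\,1-\kappa/2]\subset(0,1)$. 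In particular, clipping to $[0,1]$ is inactive (so the logits are finite), and $p_\ell^+-p_\ell^-=2\theta_p$.

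\emph{Lipschitz step for the logit.} On $[\kappa/2,1-\kappa/2]$ we have
\[
\logit'(q)=\frac{1}{q(1-q)}\le\frac{1}{(\kappa/2)(1-\kappa/2)}\le\frac{4}{\kappa},
\]
using $1-\kappa/2\ge 1/2$ for $\kappa\le 1/2$ (indeed $\ge 3/4$). By the mean value theorem, $\logit(p_\ell^+)-\logit(p_\ell^-)\le 8\theta_p/\kappa$. The second lower bound on $K_\ell$ rearranges exactly to $\theta_p\le\kappa(8\eta-w_a)/8$. Hence the logit width is at most $8\eta-w_a$, and the utility width is at most $\tfrac14\cdot 8\eta=2\eta$.

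The only delicate point is the localization step. One must ensure that the empirical interval stays bounded away from $0$ and $1$ so that $\logit$ has a controlled derivative; this is precisely the role of the first term in \eqref{eq:K_btl_sufficient}. The remaining steps are a direct calculation and should present no difficulty.
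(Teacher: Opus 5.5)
Your proposal is correct and follows the paper's proof essentially step for step: log-odds widths add under transport, the $\tfrac14$-Lipschitz sigmoid maps the width back to utility scale, the first term in \eqref{eq:K_btl_sufficient} forces $\theta_p\le\kappa/4$ and hence $p_\ell^\pm\in[\kappa/2,1-\kappa/2]$, and the $4/\kappa$ bound on $\logit'$ together with the mean value theorem reduces the claim to the second term. You also state explicitly that clipping is inactive on this event, so the logits are finite; the paper leaves this point implicit.
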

\begin{proof}
Let $\theta_a^-:=\logit(\LCBu^{(a_\ell)})$ and $\theta_a^+:=\logit(\UCBu^{(a_\ell)})$ so that $w_a=\theta_a^+-\theta_a^-$.
Let $d^-:=\logit(p_\ell^-)$ and $d^+:=\logit(p_\ell^+)$.
Then the transported log-odds interval for $\theta^{(i_\ell)}$ has width
\[
w_i := (\theta_a^+ + d^+) - (\theta_a^- + d^-)
= w_a + (d^+-d^-).
\]
Since $\sigmoid$ has derivative at most $1/4$, it is $1/4$-Lipschitz, hence
\[
\UCBu^{(i_\ell)}-\LCBu^{(i_\ell)}
\le \frac{1}{4}w_i
= \frac{1}{4}w_a + \frac{1}{4}(d^+-d^-).
\]
It remains to control $d^+-d^-$. 
The first term in \eqref{eq:K_btl_sufficient} gives $\theta_p\le \kappa/4$.
On the Hoeffding event, $\hat p_\ell\ge p_\ell-\theta_p$ and
$\hat p_\ell\le p_\ell+\theta_p$, hence
$p_\ell^- \ge p_\ell-2\theta_p\ge \kappa/2$ and
$p_\ell^+ \le p_\ell+2\theta_p\le 1-\kappa/2$.
Thus $p_\ell^\pm\in[\kappa/2,1-\kappa/2]$.
On this interval, $\logit'(p)=1/(p(1-p))\le 4/\kappa$ since $p(1-p)\ge (\kappa/2)(1/2)=\kappa/4$.
By the mean value theorem,
\[
d^+-d^-
=
\logit(p_\ell^+)-\logit(p_\ell^-)
\le \frac{4}{\kappa}(p_\ell^+-p_\ell^-)
\le \frac{4}{\kappa}\cdot 2\theta_p
= \frac{8\theta_p}{\kappa}.
\]
Therefore
\[
\UCBu^{(i_\ell)}-\LCBu^{(i_\ell)}
\le \frac{w_a}{4} + \frac{2\theta_p}{\kappa}.
\]
To make this at most $2\eta$, it suffices that
\[
\theta_p \le \frac{\kappa}{2}\Bigl(2\eta-\frac{w_a}{4}\Bigr)
= \frac{\kappa}{8}(8\eta-w_a).
\]
Since $\theta_p=\sqrt{\log(2/\delta_\ell)/(2K_\ell)}$, the inequality above is equivalent to
\[
K_\ell \ \ge\ \frac{32\log(2/\delta_\ell)}{\kappa^2\,(8\eta-w_a)^2},
\]
which is exactly \eqref{eq:K_btl_sufficient}.
\end{proof}

\begin{remark}[Practical committee sizing via doubling]
\label{rem:committee_doubling}
In practice, $\kappa$ is unknown and may vary across calls.
A robust choice is a doubling rule: start from a small $K$ and keep sampling fresh voters until
the computed width $\UCBu^{(i_\ell)}-\LCBu^{(i_\ell)}$ drops below the target (e.g., $2\eta^{(i)}(s)$).
Theorem~\ref{thm:committee_size_btl} implies that when the comparison is informative (i.e., $p_\ell$ is not extremely close to $0$ or $1$
for tasks that remain competitive), the required $K$ scales as $\tilde{\mathcal O}(\eta^{-2})$.
\end{remark}

\begin{remark}[Anchor-width limitation of transported intervals]
Transported logit intervals inherit the anchor log-odds width $w_a$.  Therefore increasing the
number of votes for the child comparison can shrink only the comparison part of the interval, not
the cached anchor uncertainty.  If $w_a\ge 8\eta$, the target child half-width $\eta$ is unattainable
from that anchor; the implementation must first refine the anchor interval, use a more certain
reference, or fall back to direct bounded utility feedback.  The regret analysis uses only the resulting
valid CI interface and its width, not the particular transport mechanism.
\end{remark}

\paragraph{Choosing $K_s$ explicitly (and why it is not circular).}
If we want the transported interval to have a \emph{target} utility half-width $\eta_s>0$ at step $s$,
we can choose $K_s$ using Theorem~\ref{thm:committee_size_btl} as follows.
Let the anchor \emph{log-odds width} be
\[
w_{a_t}\ :=\ \logit(\UCBu^{(a_t)})-\logit(\LCBu^{(a_t)}),
\]
which is known \emph{before} drawing the $K_s$ votes at step $s$ because it depends only on the already-maintained anchor interval.
Assume the comparison is not saturated: $p_s\in[\kappa,1-\kappa]$ for some fixed $\kappa\in(0,\tfrac12]$.
If $w_{a_t}<8\eta_s$, then it suffices to pick
\begin{equation}
\label{eq:Ks_explicit_non_circular}
K_s
\ :=\
\left\lceil
\max\left\{
\frac{8\log(2/\delta_u)}{\kappa^2},\;
\frac{32\log(2/\delta_u)}{\kappa^2\,(8\eta_s-w_{a_t})^2}
\right\}
\right\rceil.
\end{equation}
With this choice, Theorem~\ref{thm:committee_size_btl} guarantees
$\UCBu_s^{(i)}-\LCBu_s^{(i)}\le 2\eta_s$ on the same event as Lemma~\ref{lem:transported_interval_valid}.
There is no circular dependence: $w_{a_t}$ depends on the \emph{anchor}'s previously computed interval, not on the current $K_s$.

\paragraph{Practical note on $\epsclip$.}
The lemma/proof above does not require any $\epsclip$ (we allow $\logit(0),\logit(1)$ as $\pm\infty$).
In code, to avoid numerical overflow, a safe rule of thumb is to clip only for computation:
replace $p_s^\pm$ by $\clip{p_s^\pm}{\epsclip}{1-\epsclip}$ with, e.g.,
\[
\epsclip = 10^{-6}\ \ (\text{float32})\qquad\text{or}\qquad \epsclip = 10^{-12}\ \ (\text{float64}),
\]
and if $p_s^-\le \epsclip$ (resp.\ $p_s^+\ge 1-\epsclip$) simply set the transported bound to $0$ (resp.\ $1$).
There is no clean way to choose $\epsclip$ purely from $\delta_u$ because $\delta_u$ controls estimation error around $p_s$
while the true $p_s$ itself can be arbitrarily close to $0$ or $1$.

\paragraph{A simple vote budget in terms of $\beta_s^{(i)}$ and $\gamma_s^{(i)}$.}
The envelope theory only needs that, at each utility call, the produced interval
$\bigl[\LCBu,\UCBu\bigr]$ is valid and has width below a target tolerance.
A convenient sufficient condition (covering direct bounded scalar ratings or Bernoulli vote averages)
is obtained by combining Hoeffding concentration with the GP-UCB optimization gap $\epsf_s$.

\begin{lemma}[Sufficient committee size for width $\propto \epsf_s$]
\label{lem:committee_size_epsf_beta_gamma}
Fix a utility call $\ell$ that occurs when task $i_\ell$ has local counter $s_\ell$.
Assume the feedback $\tilde{u}_\ell\in[0,1]$ is formed as the average of $K_\ell\ge 1$
conditionally independent bounded samples in $[0,1]$ with conditional mean
$u^{(i_\ell)}\!\bigl(\overline{y}^{(i_\ell)}_{s_\ell}\bigr)$.
Let the per-call failure budget be $\delta_\ell:=\delta_u/(\pi^2\ell^2)$ and define the Hoeffding radius
\[
\phi_\ell := \sqrt{\frac{\log(2/\delta_\ell)}{2K_\ell}}.
\]
If $K_\ell$ satisfies
\begin{equation}
\label{eq:Kell_sufficient_epsf}
K_\ell
\ \ge\
\left\lceil
\frac{s_\ell}{2\,c_u^2\,C_\lambda\,\Lbar^2\,\beta_{s_\ell}^{(i_\ell)}\,\gamma_{s_\ell}^{(i_\ell)}}
\;\log\!\Big(\frac{2}{\delta_\ell}\Big)
\right\rceil,
\end{equation}
then on the event of the corresponding Hoeffding bound we have
\[
\UCBu_\ell^{(i_\ell)}-\LCBu_\ell^{(i_\ell)}
\;=\; 2\phi_\ell
\;\le\; c_u\,\Lbar\,\epsf_{s_\ell},
\qquad
\epsf_s:=2\sqrt{C_\lambda\,\beta_s^{(i)}\gamma_s^{(i)}/s}
\quad \text{(for the corresponding task $i$).}.
\]
\end{lemma}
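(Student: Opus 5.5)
The argument is a direct algebraic comparison between the Hoeffding radius $\phi_\ell$ and the target width $c_u\Lbar\epsf_{s_\ell}$, taking concentration as given.

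\textbf{Step 1: validity and the exact width.} On the Hoeffding event for call $\ell$ (conditional on the compared incumbent), $|\tilde{u}_\ell-u^{(i_\ell)}(\overline{y}^{(i_\ell)}_{s_\ell})|\le\phi_\ell$. This holds with conditional probability at least $1-\delta_\ell$, by Hoeffding's inequality for the average of $K_\ell$ conditionally independent $[0,1]$ samples, or equivalently by Lemma~\ref{lem:util_bounded_subg} with variance proxy $1/(4K_\ell)$. On this event the interval $[\tilde{u}_\ell-\phi_\ell,\tilde{u}_\ell+\phi_\ell]$ has width exactly $2\phi_\ell$. Clipping to $[0,1]$ can only shrink it, so $\UCBu_\ell^{(i_\ell)}-\LCBu_\ell^{(i_\ell)}\le 2\phi_\ell$, which is what the claimed inequality requires.

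\textbf{Step 2: reduce to a condition on $K_\ell$.} Write the target as
\[
c_u\Lbar\epsf_{s_\ell}=2c_u\Lbar\sqrt{C_\lambda\beta^{(i_\ell)}_{s_\ell}\gamma^{(i_\ell)}_{s_\ell}/s_\ell}.
\]
Then $2\phi_\ell\le c_u\Lbar\epsf_{s_\ell}$ is equivalent to
\[
\phi_\ell\le c_u\Lbar\sqrt{C_\lambda\beta\gamma/s_\ell}.
\]
Squaring both sides, which is legitimate since both are nonnegative, gives
\[
\frac{\log(2/\delta_\ell)}{2K_\ell}\le \frac{c_u^2\Lbar^2C_\lambda\beta\gamma}{s_\ell}.
\]
This rearranges to
\[
K_\ell\ge \frac{s_\ell\log(2/\delta_\ell)}{2c_u^2C_\lambda\Lbar^2\beta\gamma},
\]
which is implied by \eqref{eq:Kell_sufficient_epsf}, since taking the ceiling only increases the bound. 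Every step in the chain is monotone in $K_\ell$, so the sufficiency direction is immediate.

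\textbf{Step 3: the only delicate point.} The delicate point is bookkeeping rather than mathematics. The radius $\phi_\ell$ must use the same per-call budget $\delta_\ell=\delta_u/(\pi^2\ell^2)$ that appears in $K_\ell$, and $\beta,\gamma$ must be evaluated for task $i_\ell$ at local count $s_\ell$. Neither quantity depends on the votes themselves, so $K_\ell$ can be chosen before sampling without circularity. If uniformity over all calls is wanted, a union bound over $\ell$ with $\sum_\ell\delta_\ell\le\delta_u$ gives this simultaneously, although the statement as posed needs only the single-call event.
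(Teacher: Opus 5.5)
Your proposal is correct and follows the paper's proof essentially verbatim. Both arguments turn $2\phi_\ell\le c_u\Lbar\epsf_{s_\ell}$ into a lower bound on $K_\ell$ by squaring and substituting $(\epsf_s)^2=4C_\lambda\beta_s^{(i)}\gamma_s^{(i)}/s$; you substitute before squaring and the paper after, which makes no difference. Your remark that clipping makes the width at most $2\phi_\ell$, rather than exactly $2\phi_\ell$, is a correct small refinement of the stated equality. Note also that the width bound holds deterministically, so the Hoeffding event is needed only for the interval's validity.
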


\begin{proof}
We require $2\phi_\ell\le c_u\,\Lbar\,\epsf_{s_\ell}$.
Since $\phi_\ell=\sqrt{\log(2/\delta_\ell)/(2K_\ell)}$, this is equivalent to
\[
K_\ell \ \ge\ \frac{2\log(2/\delta_\ell)}{c_u^2\,\Lbar^2\,(\epsilon^{f, (i_\ell)}_{s_\ell})^2}.
\]
Using $(\epsf_s)^2 = 4C_\lambda\,\beta_s^{(i)}\gamma_s^{(i)}/s$ gives \eqref{eq:Kell_sufficient_epsf}.
\end{proof}

\subsubsection{Total number of votes under sparse utility calls.}\label{app:sparse_call_bound}
Assume the sparse schedule $\mathcal S_u=\{1,2,4,\dots\}$ and let
$N_i:=s_T^{(i)}$ be the number of objective evaluations allocated to task $i$ by time $T$.
Let $U_T$ denote the total number of utility calls made by time $T$ (across all tasks).
Then each task $i$ contributes at most $1+\lfloor\log_2 N_i\rfloor$ calls, hence
\[
U_T \ \le\ \sum_{i\in\mathcal I_T}\bigl(1+\lfloor\log_2 N_i\rfloor\bigr)
\ \le\ N_T\bigl(1+\log_2 T\bigr).
\]
Define the total number of votes
\[
\mathrm{Votes}(T)\ :=\ \sum_{\ell=1}^{U_T} K_\ell.
\]
If we choose $K_\ell$ according to \eqref{eq:Kell_sufficient_epsf} at every utility call,
then using $\log(2/\delta_\ell)\le \log\!\bigl(2\pi^2 U_T^2/\delta_u\bigr)$ for all $\ell\le U_T$ yields
\begin{equation}
\label{eq:vote_complexity_beta_gamma}
\mathrm{Votes}(T)
\ \le\
\frac{\log\!\bigl(2\pi^2 U_T^2/\delta_u\bigr)}{2\,c_u^2\,C_\lambda\,\Lbar^2}
\sum_{i\in\mathcal I_T}\ \sum_{r=0}^{\lfloor\log_2 N_i\rfloor}
\frac{2^r}{\beta_{2^r}^{(i)}\,\gamma_{2^r}^{(i)}}
\;+\;U_T,
\end{equation}
where the final $+U_T$ accounts for the ceilings in \eqref{eq:Kell_sufficient_epsf}.
\noindent
(When the ceilings are dropped, the bound holds without the $+U_T$ term.)

\begin{corollary}[Big-$\mathcal O$ vote complexity under sparse utility calls]
\label{cor:vote_complexity_T}
Assume the sparse utility schedule $\mathcal S_u=\{1,2,4,\dots\}$ and choose the committee sizes
$K_\ell$ according to \eqref{eq:Kell_sufficient_epsf} at every utility call.
Recall $N_i:=s_T^{(i)}$, $N_T:=|\mathcal I_T|$, and $U_T\le N_T(1+\log_2 T)$.
Then \eqref{eq:vote_complexity_beta_gamma} implies the deterministic bound
\begin{equation}
\label{eq:vote_complexity_bigO_general}
\mathrm{Votes}(T)
\ \le\
\frac{\log\!\Big(\frac{2\pi^2 N_T^2(1+\log_2 T)^2}{\delta_u}\Big)}{2\,c_u^2\,C_\lambda\,\Lbar^2}
\sum_{i\in\mathcal I_T}\ \sum_{r=0}^{\lfloor\log_2 N_i\rfloor}
\frac{2^r}{\beta_{2^r}^{(i)}\,\gamma_{2^r}^{(i)}}
\;+\;N_T(1+\log_2 T).
\end{equation}
In particular, under Algorithm~\ref{alg:task-ucb} with constant batch size $J$ we have
$N_T \le 1+J(\overline{m}_T+1)=\tilde{\mathcal O}(\log T)$ (hence $U_T=\tilde{\mathcal O}(\log^2 T)$),
so the logarithmic prefactor in \eqref{eq:vote_complexity_bigO_general} is $\tilde{\mathcal O}(\log\log T)$ and
the additive term is $\tilde{\mathcal O}(\log^2 T)$.

Moreover, suppose there exists a nondecreasing function $g:\mathbb N\to\mathbb R_+$ and a constant $c_{\mathrm{ig}}>0$ such that
for all instantiated tasks $i\in\mathcal I_T$ and all $s\ge 1$,
\begin{equation}
\label{eq:beta_gamma_lower_bound_g}
\beta_s^{(i)}\,\gamma_s^{(i)} \ \ge\ c_{\mathrm{ig}}\,g(s).
\end{equation}
Then, absorbing the prefactor in \eqref{eq:vote_complexity_bigO_general} into $\tilde{\mathcal O}(\cdot)$,
\begin{equation}
\label{eq:vote_complexity_g}
\mathrm{Votes}(T)
\ \le\
\tilde{\mathcal O}\!\left(
\sum_{i\in\mathcal I_T}\ \sum_{r=0}^{\lfloor\log_2 N_i\rfloor}\frac{2^r}{g(2^r)}
\right)
\;+\;\tilde{\mathcal O}\!\bigl(N_T\log T\bigr).
\end{equation}
Two useful special cases are:
\begin{compactenum}[(a)]
\item \textbf{Polynomial growth.} If $g(s)=s^{2\alpha}$ for some $\alpha>0$, then letting $(x)_+:=\max\{x,0\}$,
\begin{equation}
\label{eq:vote_complexity_poly_T}
\mathrm{Votes}(T)
\ \le\
\tilde{\mathcal O}\!\bigl(N_T^{2\alpha}\,T^{(1-2\alpha)_+}\bigr),
\end{equation}
and in particular if $N_T=\mathrm{polylog}(T)$ then $\mathrm{Votes}(T)=o(T)$.
\item \textbf{Polylog growth.} If $g(s)=(\log(e s))^{p}$ for some $p>0$, then
\begin{equation}
\label{eq:vote_complexity_polylog_T}
\mathrm{Votes}(T)
\ \le\
\tilde{\mathcal O}\!\left(\sum_{i\in\mathcal I_T}\frac{N_i}{(\log(e N_i))^{p}}\right)
\ \le\
\tilde{\mathcal O}\!\left(\frac{T}{(\log(eT))^{p}} + N_T\right)
\ =\ o(T)
\qquad(\text{if }N_T=\mathrm{polylog}(T)).
\end{equation}
For example, for squared-exponential (RBF) kernels on $[0,1]^d$ one may take $p=2(d+1)$, yielding
$\mathrm{Votes}(T)=\tilde{\mathcal O}\!\bigl(T/(\log T)^{2(d+1)}\bigr)$.
\end{compactenum}
\end{corollary}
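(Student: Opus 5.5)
The plan is to substitute the lower bound \eqref{eq:beta_gamma_lower_bound_g} into the deterministic bound \eqref{eq:vote_complexity_bigO_general}, and then evaluate the resulting dyadic sums case by case.

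\textbf{Step 1 (deterministic bound and prefactor).} First, \eqref{eq:vote_complexity_bigO_general} follows directly from \eqref{eq:vote_complexity_beta_gamma}. Replace $U_T$ inside the logarithm and in the additive term by its upper bound $N_T(1+\log_2 T)$ from \S\ref{app:sparse_call_bound}. This is valid because both expressions are monotone in $U_T$.

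Under Algorithm~\ref{alg:task-ucb} with constant $J$, we have $N_T\le 1+J(\overline m_T+1)$. Lemma~\ref{lem:max_depth_reachability} gives $\overline m_T\le \tfrac12\log_2 T$, so $N_T=\tilde{\mathcal O}(\log T)$ and $U_T=\tilde{\mathcal O}(\log^2T)$. The logarithmic prefactor is then $\log(\mathrm{poly}\log T/\delta_u)=\tilde{\mathcal O}(\log\log T)$, treating $\delta_u$ as a constant. The additive term is $\tilde{\mathcal O}(\log^2 T)$.

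\textbf{Step 2 (the $g$-bound).} Insert $1/(\beta_{2^r}\gamma_{2^r})\le 1/(c_{\mathrm{ig}}g(2^r))$, which holds uniformly over $i\in\mathcal I_T$. Absorbing $c_{\mathrm{ig}}$, $c_u$, $C_\lambda$, $\Lbar$ and the log prefactor into $\tilde{\mathcal O}$ gives \eqref{eq:vote_complexity_g}.

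\textbf{Step 3 (special cases).} These are geometric-sum estimates on $S_i:=\sum_{r=0}^{R_i}2^r/g(2^r)$, where $R_i=\lfloor\log_2N_i\rfloor$.

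For (a), $2^r/g(2^r)=2^{r(1-2\alpha)}$. There are three regimes:
\begin{itemize}
\item If $\alpha<1/2$, the sum is geometric and dominated by its last term, so $S_i=\mathcal O(N_i^{1-2\alpha})$.
\item If $\alpha=1/2$, $S_i=\mathcal O(\log N_i)$.
\item If $\alpha>1/2$, $S_i=\mathcal O(1)$.
\end{itemize}
Summing over tasks and using concavity of $x\mapsto x^{1-2\alpha}$ with $\sum_iN_i=T$, Jensen gives $\sum_iN_i^{1-2\alpha}\le N_T^{2\alpha}T^{1-2\alpha}$. In the case $\alpha\ge 1/2$ the total is $\tilde{\mathcal O}(N_T)\le\tilde{\mathcal O}(N_T^{2\alpha})$. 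Together these give $\tilde{\mathcal O}(N_T^{2\alpha}T^{(1-2\alpha)_+})$.

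For $N_T=\mathrm{polylog}(T)$, this is $o(T)$ whenever $\alpha>0$, and the additive $\tilde{\mathcal O}(N_T\log T)$ term is polylogarithmic.

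For (b), $2^r/(\log(e2^r))^p$ grows at least geometrically up to a constant factor. The ratio of consecutive terms is $2\bigl(\tfrac{1+r\log 2}{1+(r+1)\log2}\bigr)^p$, which tends to $2$. So after a constant number of initial terms (a constant depending on $p$), the ratio exceeds $3/2$. The sum is therefore $\mathcal O_p(N_i/(\log(eN_i))^p+1)$.

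Summing over $i$, split tasks according to whether $N_i\ge\sqrt T$:
\begin{itemize}
\item Tasks with $N_i\ge\sqrt T$ have $\log(eN_i)\ge\tfrac12\log(eT)$, so they contribute at most $2^pT/(\log eT)^p$.
\item The remaining tasks contribute at most $N_T\sqrt T$.
\end{itemize}
This last term is $o(T)$ but not obviously dominated by $T/(\log T)^p$. I would either accept the slightly weaker statement $o(T)$ with this extra term, or refine the split threshold to $T/N_T$. With the threshold $T/N_T$, small tasks total at most $T$ but each has $\log N_i$ small, so that alone does not help. The cleanest route is to argue via monotonicity that $x\mapsto x/(\log ex)^p$ is eventually increasing and essentially concave. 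Jensen then gives $\le N_T\cdot(T/N_T)/(\log(eT/N_T))^p$, and since $N_T$ is polylog, $\log(eT/N_T)=\Theta(\log T)$. I expect this Jensen/concavity step to be the main (mild) obstacle; it only holds up to constants on $[1,\infty)$, which I would handle by adding $\mathcal O(N_T)$ for the non-concave initial region.

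The RBF example then follows by plugging in $\gamma_s=\Theta((\log s)^{d+1})$. Since $\beta_s\gtrsim\gamma_s$ by Theorem~\ref{thm:gp-ucb}, $\beta_s\gamma_s\gtrsim(\log s)^{2(d+1)}$, i.e.\ $p=2(d+1)$.
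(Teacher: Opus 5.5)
Your proposal is correct and follows the same route as the paper's sketch. You bound $U_T$ inside \eqref{eq:vote_complexity_beta_gamma}, insert the lower bound \eqref{eq:beta_gamma_lower_bound_g} termwise, and then use last-term-dominated dyadic sums together with the concavity bound $\sum_i N_i^{1-2\alpha}\le N_T^{2\alpha}T^{1-2\alpha}$.

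Your extra care in case (b) is justified, since the paper's middle inequality in \eqref{eq:vote_complexity_polylog_T} tacitly needs $\log(eT/N_T)=\Theta(\log T)$. However, under $N_T=\mathrm{polylog}(T)$ your first $\sqrt T$ split already suffices, because $N_T\sqrt T=o\bigl(T/(\log(eT))^p\bigr)$, so the concavity detour is optional.
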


\begin{proof}[Proof sketch]
Plug $U_T\le N_T(1+\log_2 T)$ into \eqref{eq:vote_complexity_beta_gamma} to obtain \eqref{eq:vote_complexity_bigO_general}.
If \eqref{eq:beta_gamma_lower_bound_g} holds, then each summand satisfies
$\frac{2^r}{\beta_{2^r}^{(i)}\gamma_{2^r}^{(i)}}\le \frac{1}{c_{\mathrm{ig}}}\frac{2^r}{g(2^r)}$,
yielding \eqref{eq:vote_complexity_g}.
For $g(s)=s^{2\alpha}$, the inner sum is a geometric series in $2^{(1-2\alpha)r}$ and the across-task bound
$\sum_i N_i^{1-2\alpha}\le N_T^{2\alpha}T^{1-2\alpha}$ (for $\alpha\in(0,\tfrac12)$) gives \eqref{eq:vote_complexity_poly_T}
(up to polylog factors, and with the standard modifications at $\alpha\ge \tfrac12$).
For $g(s)=(\log(es))^p$, the sum over $r$ is dominated by the last term, giving the displayed bound in
\eqref{eq:vote_complexity_polylog_T}.
\end{proof}

\subsection{Generator}
\label{app:mutation_coverage}

\subsubsection{Proof of Lemma~\ref{lem:max_depth_reachability}}\label{proof:max_depth_reachability}

\begin{proof}
Let $m_T$ denote the current ladder level in Algorithm~\ref{alg:task-ucb}.
We show that with the stated choice of $\overline m_T$, every level $m\le \overline m_T$
is introduced by time $T$.

\paragraph{Step 1: an explicit upper bound on the per-level sample threshold $s_m$.}
Fix a level $m\ge 0$ and define
\[
s_m \;:=\; \min\Bigl\{s\in\mathbb N:\ C_v\Lbar\,\epsfT_s\ \le\ c_g\,\epsU_m\Bigr\}.
\]
Using the monotonicity $\beta_s^{(i)}\le \beta_T^{(i)}$ and $\gamma_s^{(i)}\le \gamma_T^{(i)}$ for $s\le T$ and the definitions
$\beta_T:=\max_{i\in\mathcal I_T}\beta_T^{(i)}$, $\gamma_T:=\max_{i\in\mathcal I_T}\gamma_T^{(i)}$, we have for all $s\le T$,
\[
\epsfT_s
=\max_{i\in\mathcal I_T}2\sqrt{\frac{C_\lambda\,\beta_s^{(i)}\,\gamma_s^{(i)}}{s}}
\ \le\ 
2\sqrt{\frac{C_\lambda\,\beta_T\,\gamma_T}{s}}
=:U(s).
\]

Here, if we have the inequality
\[
C_v\Lbar\,U(s)\le c_g\epsU_m,
\]
Then, we can say
\[
C_v \Lbar \epsfT \leq C_v\Lbar\,U(s)\le c_g\epsU_m.
\]
i.e.,
\[
C_v\Lbar\cdot 2\sqrt{\frac{C_\lambda\,\beta_T\,\gamma_T}{s}}
\le c_g\epsU_m
\quad\Longleftrightarrow\quad
s \ge \left(\frac{2C_v\Lbar\sqrt{C_\lambda\,\beta_T\,\gamma_T}}{c_g\,\epsU_m}\right)^2.
\]
Consequently, the following condition can enforce the first inequality:
\[
s_m\ \le\ \left\lceil \left(\frac{2C_v\Lbar\sqrt{C_\lambda\,\beta_T\,\gamma_T}}{c_g\,\epsU_m}\right)^2\right\rceil.
\]
Since $\epsU_m=\epsU_0 2^{-m}$, this yields
\[
s_m\ \le\ \left\lceil
\left(\frac{2C_v\Lbar\sqrt{C_\lambda\,\beta_T\,\gamma_T}}{c_g\,\epsU_0}\right)^2\,4^m
\right\rceil
\ =\ \lceil A_T\,4^m\rceil,
\]
where $A_T:=\left(\frac{2C_v\,\Lbar\sqrt{C_\lambda\,\beta_T\,\gamma_T}}{c_g\,\epsU_0}\right)^2$.

\paragraph{Step 2: once level $m$ is active, it must advance within $\tau_m$ rounds.}
Let $t_m$ be the first global round at which the algorithm's current level becomes $m$.
Up to and including level $m$, the algorithm has instantiated at most $K_m:=1+J(m{+}1)$ tasks deterministically
(one batch of size $J$ per level, including level $0$).

Consider the $\tau_m:=K_m s_m$ rounds $\{t_m,t_m+1,\dots,t_m+\tau_m-1\}$ during which the current level remains $m$.
Exactly one task is evaluated per round, so by the pigeonhole principle some instantiated task $\tilde i$
is evaluated at least $s_m$ times within these $\tau_m$ rounds.
At the moment $\tilde i$ reaches local count $s_m$, the envelope-width bound on the event of
Theorem~\ref{thm:value_envelopes_summary} implies
\[
w^{(\tilde i)}_{s_m} \ \le\ C_v\Lbar\,\epsfT_{s_m}\ \le\ c_g\,\epsU_m.
\]
Hence at some round $t\le t_m+\tau_m$ we have $\underline{w}_t:=\min_{j\in\mathcal I_t} w^{(j)}_t \le c_g\epsU_m$.
By the anchor definition \eqref{eq:anchor}, $\overline{\epsU}_t=\max\{c_g\epsU_m,\ \underline{w}_t\}=c_g\epsU_m$,
so the anchor $a_t$ is chosen from a nonempty set of tasks satisfying $w^{(j)}_{s_t^{(j)}} \le \overline{\epsU}_t$.
Therefore the anchor width satisfies
\[
w_t=w^{(a_t)}_{s_t^{(a_t)}} \ \le\ \overline{\epsU}_t\ =\ c_g\epsU_m,
\]
and the scheduler condition in Algorithm~\ref{alg:task-ucb} triggers, incrementing the level to $m{+}1$.
Thus, level $m{+}1$ is introduced by time at most $t_m+\tau_m$.

\paragraph{Step 3: summing $\tau_m$ up to $\overline m_T$ fits within the horizon.}
By iterating the previous argument, level $M$ is introduced by time at most
\[
\sum_{m=0}^{M-1}\tau_m.
\]
We now bound this sum for $M=\overline m_T$.

Using $s_m\le \lceil A_T4^m\rceil\le A_T4^m+1$ and $N_m=J(m{+}1)$,
\[
\sum_{m=0}^{M-1}\tau_m
\ \le\
J\sum_{m=0}^{M-1}(m{+}1)\,(A_T4^m+1)
=
J A_T\sum_{m=0}^{M-1}(m{+}1)4^m
\;+\;
J\sum_{m=0}^{M-1}(m{+}1).
\]
For the first sum, use $(m{+}1)\le M$ and $\sum_{m=0}^{M-1}4^m\le \frac{4^M}{3}$:
\[
\sum_{m=0}^{M-1}(m{+}1)4^m
\ \le\
M\sum_{m=0}^{M-1}4^m
\ \le\
\frac{M\,4^M}{3}.
\]
For the second sum, $\sum_{m=0}^{M-1}(m{+}1)=\frac{M(M{+}1)}{2}\le M^2$.
Thus
\[
\sum_{m=0}^{M-1}\tau_m
\ \le\
\frac{J A_T M\,4^M}{3}
\;+\;
J M^2.
\]

Now set $M=\overline m_T$.
By definition of $\overline m_T$ we have $4^M \le \frac{T}{A_T(\log(eT))^2}$, hence
\[
\frac{J A_T M\,4^M}{3}
\ \le\
\frac{J M}{3(\log(eT))^2}\,T.
\]
If $J$ is an absolute constant (or more generally $J\le \log(eT)$), then since $M=\mathcal O(\log T)$ we have
$\frac{J M}{(\log(eT))^2}=o(1)$, and for all $T$ large enough (or by absorbing finitely many small $T$ into constants)
\[
\sum_{m=0}^{M-1}\tau_m \ \le\ T.
\]
Therefore every level $m\le \overline m_T$ is introduced by time $T$.

\paragraph{Step 4: concluding $m^\star$ and the order of $\overline m_T$.}
If $m^\star\le \overline m_T$ (Condition~\ref{cond:bo_achievable}), then level $m^\star$ is introduced by time $T$.
Finally,
\[
\overline m_T
=
\left\lfloor \tfrac12 \log_2\!\Big(\tfrac{T}{A_T(\log(eT))^2}\Big)\right\rfloor_+
=
\mathcal O(\log T).
\]
\end{proof}

\subsection{Bayesian in-context learning interpretation of Condition~\ref{cond:stoch_coverage_adaptive}}\label{app:ICL}

\subsubsection{Known results}
\paragraph{Setup.}
\citep{wakayama2025context} analyze in-context learning in a \emph{finite mixture of task types} (their Def.~1) and prove (a) a Bayes-risk decomposition (Thm.~1), (b) a finite-sample Bayes-gap bound for a uniform-attention Transformer (Thm.~2), and (c) exponential \emph{task-type identification} in mixtures (Thm.~3).

\begin{theorem}[Bayes gap bound; Theorem~2 in \citep{wakayama2025context}]\label{thm:ws-bayes-gap}
(Informal restatement.) Under the prompt-generating process (Def.~1) and boundedness/independence assumptions (Assumptions~1--2), and assuming the Bayes predictor is H\"older smooth, the ERM Transformer \(M_{\hat\theta}\) satisfies an upper bound of the form
\[
\mathbb{E}\, R_{\mathrm{BG}}(M_{\hat\theta})
\ \lesssim\
d_\text{feature}^{-2\alpha/d_{\mathrm{eff}}}
\;+\;
\widetilde O\!\left(\frac{d_\text{feature}}{pN} + \frac{1}{N}\right),
\]
where \(p\) is the pretraining context length, \(N\) is the number of pretraining
prompts, and \(d_\text{feature}\) is the learned feature dimension.
\end{theorem}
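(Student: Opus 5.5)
Since this is a restatement of Theorem~2 of \citep{wakayama2025context}, I would reproduce the standard ERM argument rather than derive anything new from this paper. The plan is to split the Bayes gap $R_{\mathrm{BG}}(M_{\hat\theta}) := \mathbb E\,\|M_{\hat\theta}(\text{prompt}) - M^{\mathrm{Bayes}}(\text{prompt})\|^2$ into two terms via an oracle inequality for least-squares ERM with bounded responses (Assumption~1):
\begin{itemize}
\item an \emph{approximation} term, $\inf_{\theta}R_{\mathrm{BG}}(M_\theta)$;
\item an \emph{estimation} term, controlled by the metric entropy of the uniform-attention Transformer class over $N$ i.i.d.\ pretraining prompts (Assumption~2).
\end{itemize}
These two terms are intended to yield, respectively, the $d_\text{feature}^{-2\alpha/d_{\mathrm{eff}}}$ contribution and the $\widetilde O(d_\text{feature}/(pN)+1/N)$ contribution.

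\textbf{Approximation term.} I would first use the finite-mixture structure of Def.~1 to reduce the Bayes predictor to a function of a low-dimensional summary of the context. With finitely many task types, the posterior over types depends on the prompt only through averaged sufficient statistics of the $p$ in-context pairs. A uniform-attention layer computes exactly such averages of token embeddings. Hence $M^{\mathrm{Bayes}} = g\circ \bar\phi$, where:
\begin{itemize}
\item $\bar\phi$ is a mean embedding of the context;
\item $g$ is $\alpha$-H\"older on an effective domain of dimension $d_{\mathrm{eff}}$.
\end{itemize}
Standard H\"older approximation rates then apply, for example by expanding $g$ in $d_\text{feature}$ local polynomial or ReLU features realised by the token-wise MLP and readout. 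They give squared error of order $d_\text{feature}^{-2\alpha/d_{\mathrm{eff}}}$.

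\textbf{Estimation term.} Next I would bound the covering number of the Transformer class. Its log-covering number should scale as $d_\text{feature}$ times logarithmic factors, since the number of parameters is proportional to $d_\text{feature}$. I would then apply a localized (variance-sensitive) oracle inequality. The key observation is that the part of the prediction depending on $d_\text{feature}$ features enters only through the uniform average over $p$ tokens. The empirical fluctuation of these averages around their population counterparts therefore has variance of order $1/p$ per feature, and this is what converts the naive $d_\text{feature}/N$ rate into $d_\text{feature}/(pN)$. The remaining prompt-level randomness, namely the task type and the query, is only averaged over the $N$ prompts, so it contributes the parametric $1/N$ term.

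\textbf{Main obstacle.} I expect the hardest step to be obtaining the $1/(pN)$ factor rigorously. Tokens within a prompt share the latent task type, so they are only conditionally independent. My first attempt would be to condition on the task type and apply a Bernstein-type bound to the within-prompt averages. I would then combine the conditional variance bound with the prompt-level independence from Assumption~2 inside a Talagrand/local Rademacher argument. Summing the two terms and optimising nothing further, since $d_\text{feature}$ is left free, gives the stated bound.
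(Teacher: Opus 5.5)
The paper does not prove this statement. It is quoted informally from Theorem~2 of \citep{wakayama2025context} and serves only as informal motivation that $\varepsilon_{\mathrm{ICL}}$ is small. Theorem~\ref{thm:bayes-icl-refinement} uses only Lemma~\ref{lem:ws-posterior-mass} and \emph{assumes} the error $\varepsilon_{\mathrm{ICL}}(m,k)$ in Assumption~\ref{as:new-approx-posterior}. Your reconstruction therefore has to stand on its own, and it breaks at exactly the step you flag.

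In a least-squares ERM oracle inequality, localized or not, the estimation term is governed by the variance of the excess loss. Its leading part is the cross term $-2\,(y_{\mathrm{query}}-M^{\mathrm{Bayes}}(P))\,(M_\theta(P)-M^{\mathrm{Bayes}}(P))$. This has variance at least $4\sigma^2\|M_\theta-M^{\mathrm{Bayes}}\|_{L^2}^2$, where $\sigma^2$ lower-bounds the conditional variance of the query response given the prompt $P$. Under observation noise, $\sigma^2$ is a constant that does not decrease with $p$. Uniform attention over $p$ tokens reduces fluctuations of the \emph{inputs} to the predictor, not of this response noise. Nor does the $L^2$ metric entropy of a class with $\asymp d_{\text{feature}}$ parameters drop because its inputs concentrate.

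With one supervised query per prompt, as in your setup, you therefore get $d_{\text{feature}}\,\mathrm{polylog}/N$. This is not improvable in general: even at $p=\infty$, where the context reveals the task, one still fits a generic $d_{\text{feature}}$-parameter map from $N$ noisy responses. Conditioning on the task type and applying Bernstein or Talagrand does not help, because the query noise survives the conditioning. For the same reason, your sketch gives no account of why the $1/N$ term is free of $d_{\text{feature}}$, since a uniform deviation bound multiplies every variance contribution by the log-covering number.

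The missing ingredient is a source of order $pN$ conditionally independent noisy targets, for instance a pretraining loss averaged over all context positions of each prompt. You then need to split the excess-loss variance into two parts. The within-task part is token-level noise averaged over $pN$ tokens, and it carries the $d_{\text{feature}}$ factor. The between-task part is averaged over only $N$ prompts, so it must range over a class whose complexity does not grow with $d_{\text{feature}}$. The shared task within a prompt is then handled by conditioning, and only in the first part.

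There is also a smaller issue in your approximation step. The factorization $M^{\mathrm{Bayes}}=g\circ\bar\phi$ through a finite-dimensional mean embedding requires finite-dimensional sufficient statistics within each task type. In addition, the posterior weight of type $i$ is proportional to its prior weight times $e^{p\,\bar\ell_i}$, with $\bar\ell_i$ the averaged log-likelihood. Hence the H\"older norm of $g$ in these averages grows with $p$. The $p$-free term $d_{\text{feature}}^{-2\alpha/d_{\mathrm{eff}}}$ must therefore come from the smoothness assumption imposed directly on the Bayes predictor, as the statement does, not from the mixture structure.
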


\begin{assumption}[task-type identification conditions;  Theorem~3 in \citep{wakayama2025context}]
\label{as:ws-ident}
Fix a (true) task-type index \(i^\star\in\{1,\dots,T\}\).
For each wrong type \(j\neq i^\star\), define the predictive log-likelihood ratio
increment
\[
Z_{j,t}\ :=\ \log \frac{p_j(y_t\mid x_t, D_{t-1})}{p_{i^\star}(y_t\mid x_t, D_{t-1})}.
\]
Assume there exist constants \(D_j>0\) and \((\nu_j,b_j)\) such that, under the
true type \(i^\star\) and for all \(t\),
\[
\mathbb{E}\!\left[ Z_{j,t}\mid \mathcal G_{t-1}, I=i^\star \right]\ \le\ -D_j,
\qquad
\mathbb{E}\!\left[\exp\{\lambda(Z_{j,t}+D_j)\}\mid \mathcal G_{t-1}, I=i^\star\right]
\ \le\ \exp\!\left(\frac{\lambda^2\nu_j^2}{2}\right)
\]
for all \(|\lambda|\le 1/b_j\), where \(\mathcal G_{t-1}\) is the natural filtration.
Define
\[
D_{\min} := \min_{j\neq i^\star} D_j > 0,
\qquad
C := \min_{j\neq i^\star}\frac{D_j^2}{8(\nu_j^2 + b_j D_j/2)} > 0.
\]
\end{assumption}

\begin{lemma}[Posterior mass on the true type concentrates exponentially; Theorem~3 in \citep{wakayama2025context}]
\label{lem:ws-posterior-mass}
Let \(\pi_i(D_k):=\Pr(I=i\mid D_k)\) be the Bayes posterior over the type index.
Under Assumption~\ref{as:ws-ident}, for every \(k\ge 1\),
\begin{align}
\mathbb{E}\bigl[1-\pi_{i^\star}(D_k)\mid I=i^\star\bigr]
&\le
\frac{1-\alpha_{i^\star}}{\alpha_{i^\star}}e^{-D_{\min}k/2}
\;+\;
(T-1)e^{-Ck},
\label{eq:ws-exp-mean}
\\
\Pr\!\left(
\pi_{i^\star}(D_k)\ \ge\
\frac{1}{1+\frac{1-\alpha_{i^\star}}{\alpha_{i^\star}}e^{-D_{\min}k/2}}
\ \middle|\ I=i^\star
\right)
&\ge\ 1-(T-1)e^{-Ck}.
\label{eq:ws-exp-hp}
\end{align}
\end{lemma}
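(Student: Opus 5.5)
The plan is to work directly with the Bayes posterior written in likelihood-ratio form. With prior weights $\alpha_j$ and $S_{j,k}:=\sum_{t=1}^k Z_{j,t}$, Bayes' rule gives
\[
\frac{\pi_j(D_k)}{\pi_{i^\star}(D_k)}=\frac{\alpha_j}{\alpha_{i^\star}}\,e^{S_{j,k}},
\qquad
\pi_{i^\star}(D_k)=\frac{1}{1+\sum_{j\neq i^\star}\frac{\alpha_j}{\alpha_{i^\star}}e^{S_{j,k}}}.
\]
Both claims then reduce to showing that every $S_{j,k}$ is very negative with high probability. I would work on the good event
\[
\mathcal E_k:=\bigcap_{j\neq i^\star}\{S_{j,k}\le -D_jk/2\}.
\]
On $\mathcal E_k$, each $D_j\ge D_{\min}$ and $\sum_{j\neq i^\star}\alpha_j=1-\alpha_{i^\star}$. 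Hence
\[
\sum_{j\neq i^\star}\frac{\alpha_j}{\alpha_{i^\star}}e^{S_{j,k}}\le\frac{1-\alpha_{i^\star}}{\alpha_{i^\star}}e^{-D_{\min}k/2}.
\]
Plugging this into the formula for $\pi_{i^\star}$ gives the lower bound inside the probability in \eqref{eq:ws-exp-hp}. The same computation gives $1-\pi_{i^\star}\le \frac{1-\alpha_{i^\star}}{\alpha_{i^\star}}e^{-D_{\min}k/2}$ on $\mathcal E_k$.

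The core step is bounding $\Pr(\mathcal E_k^c\mid I=i^\star)$. I would fix a wrong type $j$ and center the increments as $M_{j,k}:=\sum_{t\le k}(Z_{j,t}+D_j)$. The drift part of Assumption~\ref{as:ws-ident} makes each increment have conditional mean at most $0$. Since $S_{j,k}=M_{j,k}-D_jk$, the bad event $\{S_{j,k}>-D_jk/2\}$ is exactly $\{M_{j,k}>D_jk/2\}$. For $0<\lambda\le 1/b_j$ I would apply Chernoff's inequality and peel the conditional MGF bound one step at a time by the tower property over $\mathcal G_{t-1}$. This gives
\[
\Pr(M_{j,k}>D_jk/2)\le \exp\!\big(-\lambda D_jk/2+k\lambda^2\nu_j^2/2\big).
\]
I would then choose $\lambda=\frac{D_j/2}{\nu_j^2+b_jD_j/2}$. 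This choice satisfies $\lambda\le 1/b_j$, so the MGF condition applies. Substituting it yields the Bernstein-type bound $\exp\!\big(-kD_j^2/(8(\nu_j^2+b_jD_j/2))\big)\le e^{-Ck}$. A union bound over the $T-1$ wrong types then gives $\Pr(\mathcal E_k^c)\le (T-1)e^{-Ck}$, which is \eqref{eq:ws-exp-hp}.

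For the expectation bound \eqref{eq:ws-exp-mean}, I would split on $\mathcal E_k$. On $\mathcal E_k$ I use the deterministic bound above. On $\mathcal E_k^c$ I use the trivial bound $1-\pi_{i^\star}\le 1$. Adding the two pieces gives exactly the two terms of \eqref{eq:ws-exp-mean}.

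The main obstacle is the concentration step. The MGF is controlled only for $|\lambda|\le 1/b_j$, and the exponent must match the stated constant $C$ exactly. The choice of $\lambda$ above is designed to handle both at once. I would also double-check that the one-sided Chernoff argument needs only the upper bound on the conditional drift, not exact centering. This holds because the drift enters only through $\mathbb E[e^{\lambda(Z+D_j)}]$ with $\lambda>0$, and that quantity is already bounded directly by the assumption.
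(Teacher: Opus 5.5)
Your proof is correct. The posterior-odds identity $\pi_j/\pi_{i^\star}=(\alpha_j/\alpha_{i^\star})e^{S_{j,k}}$ and the good event $\{S_{j,k}\le -D_jk/2 \text{ for all } j\neq i^\star\}$ are set up correctly. The Chernoff/tower argument with $\lambda=\frac{D_j/2}{\nu_j^2+b_jD_j/2}\le 1/b_j$ gives exactly the exponent $D_j^2/\bigl(8(\nu_j^2+b_jD_j/2)\bigr)$, and hence $C$. The union bound over the $T-1$ wrong types and the split of the expectation on the good event also check out.

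The paper gives no proof of its own here; it imports the statement verbatim as Theorem~3 of Wakayama and Suzuki, and your argument is the standard Bernstein-type route behind that result. One thing to state explicitly: the posterior-odds identity uses that the law of the inputs $x_t$ does not depend on the type, which holds in the cited prompt-generating process.
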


\subsubsection{Refinement model}
\paragraph{Task generation adjustment.}
At each refinement level \(m\), interpret the generator's behavior as selecting among finitely many \emph{refinement types} (or modes) \(I^{(m)}\in\{1,\dots,T_m\}\)
based on in-context evidence (utility-improvement history).

\begin{assumption}[Refinement types form a finite mixture; adapted from Def.~1 in \citep{wakayama2025context}]
\label{as:new-mixture}
Fix a time \(t\) and refinement level \(m\).
Conditioned on the event \(w_t \le c_g \epsU_{m-1}\), there is a latent
refinement type \(I^{(m)}\in\{1,\dots,T_m\}\) with prior weights
\(\alpha^{(m)}_i>0\) (\(\sum_i \alpha^{(m)}_i=1\)).
The prompt contains \(k\) in-context examples
\[
\mathcal D_{t,k}^{(m)}=\{s_1,\dots,s_k\},
\]
where each \(s_r\) encodes (anchor \(a_t\), level \(m\), and feedback such as
utility improvements / preferences).
The conditional predictive distributions \(p_i^{(m)}(\cdot\mid \mathcal D_{t,r-1}^{(m)})\)
satisfy the identifiability conditions (Assumption~\ref{as:ws-ident})
with constants \(D_{\min}^{(m)}\), \(C^{(m)}\) and mixture size \(T_m\).
\end{assumption}

\begin{assumption}[Correctness of the true refinement type implies \(\epsU_m\)-success]
\label{as:new-true-type-success}
Let \(\mathcal I_m^\star(a_t)\) denote the set of \(\epsU_m\)-successful refinements.
There exists a distinguished (true) refinement type \(i_m^\star=i_m^\star(a_t)\) such that,
conditioned on \(w_t\le c_g\epsU_{m-1}\), the data \(\mathcal D_{t,k}^{(m)}\) are
generated under \(I^{(m)}=i_m^\star\), and moreover a single refinement proposal drawn
\emph{given the true type} succeeds with probability at least \(1-\beta_m\):
\[
\Pr\!\left(\iota\in\mathcal I_m^\star(a_t)\ \middle|\ \widehat I=i_m^\star,\ a_t,m,\mathcal D_{t,k}^{(m)}\right)
\ \ge\ 1-\beta_m,
\qquad \beta_m\in[0,1).
\]
\end{assumption}

\begin{assumption}[LLM approximately samples the Bayes posterior over refinement types]\label{as:new-approx-posterior}
Let \(q_{m,k}(\cdot\mid a_t,\mathcal D_{t,k}^{(m)})\) be the LLM-induced distribution over
refinement types \(\widehat I\in\{1,\dots,T_m\}\) (e.g., induced by decoding / sampling).
Let \(\pi^{(m)}(\cdot\mid a_t,\mathcal D_{t,k}^{(m)})\) be the Bayes posterior over types.
Assume a (uniform) total-variation approximation error
\[
\mathrm{TV}\!\left(q_{m,k}(\cdot\mid a_t,\mathcal D_{t,k}^{(m)}),\ \pi^{(m)}(\cdot\mid a_t,\mathcal D_{t,k}^{(m)})\right)
\ \le\ \varepsilon_{\mathrm{ICL}}(m,k).
\]
\end{assumption}

\subsubsection{Main theorem: Condition 4.4 with explicit $\delta_+(m,k)$}

\begin{theorem}[Bayesian ICL \(\Rightarrow\) refinement probability]
\label{thm:bayes-icl-refinement}
Fix \(t,m\) and suppose \(w_t\le c_g\epsU_{m-1}\).
Under Assumptions~\ref{as:new-mixture}--\ref{as:new-approx-posterior},
a single call \(\iota\sim \Gen(a_t,m,J{=}1)\) satisfies
\[
\Pr\!\left(\iota\in\mathcal I_m^\star(a_t)\ \middle|\ w_t\le c_g\epsU_{m-1}\right)
\ \ge\ \delta_+(m,k),
\]
where the (level-dependent) lower bound can be chosen as
\begin{align}
\delta_+(m,k)
\;:=\;
\rho_m(\epsU_m)\Bigg[
1
-
\underbrace{\frac{1-\alpha^{(m)}_{i_m^\star}}{\alpha^{(m)}_{i_m^\star}}e^{-D_{\min}^{(m)}k/2}
-
(T_m-1)e^{-C^{(m)}k}}_{=:~\eta_{m}(k)}
\Bigg]_+
\;-\;
\rho_m(\epsU_m)\,\varepsilon_{\mathrm{ICL}}(m,k),
\label{eq:delta-plus-mk}
\end{align}
with \([u]_+:=\max\{u,0\}\).
In particular, for any fixed \(m\), \(\delta_+(m,k)\) improves \emph{exponentially fast} in \(k\)
up to the approximation error \(\varepsilon_{\mathrm{ICL}}(m,k)\).
\end{theorem}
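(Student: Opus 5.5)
The plan is to decompose the success event of a single draw according to the refinement type the LLM actually samples. By the law of total probability over $\widehat I\sim q_{m,k}(\cdot\mid a_t,\mathcal D_{t,k}^{(m)})$, conditional on $w_t\le c_g\epsU_{m-1}$ and on the in-context data,
\[
\Pr\bigl(\iota\in\mathcal I_m^\star(a_t)\mid \cdot\bigr)\ \ge\ q_{m,k}(i_m^\star\mid a_t,\mathcal D_{t,k}^{(m)})\cdot \Pr\bigl(\iota\in\mathcal I_m^\star(a_t)\mid \widehat I=i_m^\star,\cdot\bigr).
\]
All other types contribute nonnegatively and are dropped. For the second factor I will use Assumption~\ref{as:new-true-type-success}, and identify $\rho_m(\epsU_m)$ with the success probability given the true type, i.e.\ take $\rho_m(\epsU_m):=1-\beta_m$ or any lower bound on it. This is the problem-specific, LLM-independent quantity named in the main-text corollary.

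Next I lower-bound the first factor in two steps. First, Assumption~\ref{as:new-approx-posterior} applies total variation to the singleton $\{i_m^\star\}$, giving
\[
q_{m,k}(i_m^\star)\ \ge\ \pi^{(m)}(i_m^\star\mid a_t,\mathcal D_{t,k}^{(m)})-\varepsilon_{\mathrm{ICL}}(m,k).
\]
Second, I control the Bayes posterior mass using Lemma~\ref{lem:ws-posterior-mass}. Assumption~\ref{as:new-mixture} supplies the identifiability constants $D^{(m)}_{\min},C^{(m)},T_m$, and Assumption~\ref{as:new-true-type-success} says the data are generated under $I^{(m)}=i_m^\star$. The expectation bound \eqref{eq:ws-exp-mean} then gives
\[
\mathbb E\bigl[\pi^{(m)}(i_m^\star)\bigr]\ \ge\ 1-\eta_m(k).
\]
Since a probability is nonnegative, this can be sharpened to $[1-\eta_m(k)]_+$.

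To assemble the bound, I take expectations over $\mathcal D_{t,k}^{(m)}$ given the conditioning event, which is legitimate because the bound from Assumption~\ref{as:new-true-type-success} holds for every realization of the data. This yields
\[
\Pr(\iota\in\mathcal I_m^\star(a_t)\mid w_t\le c_g\epsU_{m-1})\ \ge\ \rho_m(\epsU_m)\bigl(\mathbb E[\pi^{(m)}(i_m^\star)]-\varepsilon_{\mathrm{ICL}}\bigr)\ \ge\ \rho_m(\epsU_m)[1-\eta_m(k)]_+-\rho_m(\epsU_m)\varepsilon_{\mathrm{ICL}},
\]
which is exactly \eqref{eq:delta-plus-mk}. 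The ``in particular'' clause then follows directly: $\eta_m(k)$ is a sum of two terms decaying exponentially in $k$, so $\delta_+(m,k)$ approaches $\rho_m(\epsU_m)(1-\varepsilon_{\mathrm{ICL}})$ at an exponential rate.

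The main obstacle is making sure the expectation in Lemma~\ref{lem:ws-posterior-mass} is taken under the right law. The lemma is stated conditional on $I=i^\star$ alone, while we condition additionally on the event $w_t\le c_g\epsU_{m-1}$ and on the anchor. I would handle this by reading Assumption~\ref{as:new-mixture} as positing the mixture model and the identifiability conditions under exactly this conditioning, so that the lemma applies verbatim under the conditional measure. A secondary point is the order of operations: the bound is linear in $\pi$, so the expectation passes through the multiplication by $\rho_m(\epsU_m)$. The clipping $[\cdot]_+$ must be justified by $\mathbb E[\pi]\ge\max\{0,1-\eta_m(k)\}$ before the $\varepsilon_{\mathrm{ICL}}$ term is subtracted, which is why that term appears outside the bracket.
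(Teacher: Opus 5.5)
Your proposal is correct and follows essentially the same route as the paper's proof. You use total probability over the sampled type $\widehat I$ together with Assumption~\ref{as:new-true-type-success} for the factor $1-\beta_m=\rho_m(\epsU_m)$, then the total-variation bound $q_{m,k}(i_m^\star)\ge\pi^{(m)}(i_m^\star)-\varepsilon_{\mathrm{ICL}}(m,k)$, then averaging over $\mathcal D_{t,k}^{(m)}$ under the true type, and finally \eqref{eq:ws-exp-mean} via $\mathbb E[\pi]=1-\mathbb E[1-\pi]$. Two points the paper leaves implicit are made explicit in your version: the clipping $[\cdot]_+$ follows from $\pi\ge 0$, and the conditioning on $w_t\le c_g\epsU_{m-1}$ is absorbed into Assumption~\ref{as:new-mixture}.
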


\begin{proof}
Write the success event as \(S := \{\iota\in\mathcal I_m^\star(a_t)\}\).
Let \(\widehat I\in\{1,\dots,T_m\}\) denote the sampled refinement type underlying the proposal.
By the law of total probability and Assumption~\ref{as:new-true-type-success},
\[
\Pr(S\mid a_t,m,\mathcal D_{t,k}^{(m)})
\ \ge\
\Pr(S\mid \widehat I=i_m^\star,a_t,m,\mathcal D_{t,k}^{(m)})\,
\Pr(\widehat I=i_m^\star\mid a_t,m,\mathcal D_{t,k}^{(m)})
\ \ge\
(1-\beta_m)\, q_{m,k}(i_m^\star\mid a_t,\mathcal D_{t,k}^{(m)}).
\]
By total-variation control (Assumption~\ref{as:new-approx-posterior}),
\[
q_{m,k}(i_m^\star\mid a_t,\mathcal D_{t,k}^{(m)})
\ \ge\
\pi^{(m)}(i_m^\star\mid a_t,\mathcal D_{t,k}^{(m)})
\;-\;
\varepsilon_{\mathrm{ICL}}(m,k).
\]
Taking expectations over \(\mathcal D_{t,k}^{(m)}\) under the true type
\(I^{(m)}=i_m^\star\), we obtain
\[
\Pr(S\mid w_t\le c_g\epsU_{m-1})
\ \ge\
(1-\beta_m)\Big(\mathbb{E}[\pi^{(m)}(i_m^\star\mid a_t,\mathcal D_{t,k}^{(m)})] - \varepsilon_{\mathrm{ICL}}(m,k)\Big).
\]
Finally, Lemma~\ref{lem:ws-posterior-mass} (applied at level \(m\) with \(T=T_m\),
\(\alpha=\alpha^{(m)}\), and constants \(D_{\min}^{(m)},C^{(m)}\))
yields the exponential lower bound on
\(\mathbb{E}[\pi^{(m)}(i_m^\star\mid a_t,\mathcal D_{t,k}^{(m)})]\) via
\(\mathbb{E}[\pi]=1-\mathbb{E}[1-\pi]\) and~\eqref{eq:ws-exp-mean}.
Combining these inequalities gives~\eqref{eq:delta-plus-mk}.
\end{proof}

\paragraph{Shorthand used in the main text.}
Assumption~\ref{as:new-true-type-success} lower bounds success \emph{conditional on the true refinement type}.
To match the main-text discussion, define the within-type success probability
\begin{equation}
\label{eq:def_rho_m}
\rho_m(\epsU_m)\ :=\ 1-\beta_m \ \in (0,1].
\end{equation}
Moreover, we will use a simplified (large-$k$) Bayes-approximation error proxy
\begin{equation}
\label{eq:def_epsICL}
\epsilon_{\mathrm{ICL}}
\ :=\
\sup_{m\le \overline m_T}\ \sup_{k\ge k_0}\ \varepsilon_{\mathrm{ICL}}(m,k),
\end{equation}
for some reference in-context length $k_0$ at which the exponential type-identification term becomes negligible
(cf.\ Lemma~\ref{lem:ws-posterior-mass}).

\begin{corollary}[Large-$k$ simplification (main-text form)]
\label{cor:delta_large_k_main_text_form}
Work under the assumptions of Theorem~\ref{thm:bayes-icl-refinement}.
Fix any rung $m$ and suppose the in-context length $k$ is large enough that the WS type-identification term
satisfies $\eta_m(k)\le 1/2$ (e.g., by Lemma~\ref{lem:ws-posterior-mass}).
Then the single-draw refinement success probability satisfies
\begin{equation}
\label{eq:delta_main_text_form}
\delta_+(m,k)
\ \ge\
\frac{1}{2}\rho_m(\epsU_m)\;-\;\rho_m(\epsU_m)\,\varepsilon_{\mathrm{ICL}}(m,k)
\ \ge\
\frac{1}{2}\rho_m(\epsU_m)\;-\;\epsilon_{\mathrm{ICL}}.
\end{equation}
In particular, up to constant factors (absorbed by $\gtrsim$), this matches the main-text summary
$\delta_+\gtrsim \rho_m(\epsU_m)-\epsilon_{\mathrm{ICL}}$.
\end{corollary}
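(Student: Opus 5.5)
The statement to prove is Corollary~\ref{cor:delta_large_k_main_text_form}. I would derive it directly from the explicit lower bound \eqref{eq:delta-plus-mk} in Theorem~\ref{thm:bayes-icl-refinement}. No new probabilistic argument is needed; the work is monotonicity bookkeeping on the bracketed term and on the approximation error.

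\textbf{Step 1: the bracketed term.} Set $\rho:=\rho_m(\epsU_m)\in(0,1]$, as defined in \eqref{eq:def_rho_m}, and let $\eta_m(k)$ be the identification term in \eqref{eq:delta-plus-mk}. The hypothesis $\eta_m(k)\le 1/2$ gives $1-\eta_m(k)\ge 1/2>0$. The positive part is therefore inactive, and
\[
\bigl[1-\eta_m(k)\bigr]_+ = 1-\eta_m(k) \ge \tfrac12 .
\]
Multiplying by $\rho\ge 0$ shows that the first summand of $\delta_+(m,k)$ is at least $\tfrac12\rho$. This gives the first inequality of \eqref{eq:delta_main_text_form}:
\[
\delta_+(m,k) \ge \tfrac12\rho-\rho\,\varepsilon_{\mathrm{ICL}}(m,k).
\]
Lemma~\ref{lem:ws-posterior-mass} is used only to remark that such a $k$ exists. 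Both exponentials in $\eta_m(k)$ decay in $k$, so any $k$ with $\tfrac{1-\alpha}{\alpha}e^{-D_{\min}^{(m)}k/2}\le 1/4$ and $(T_m-1)e^{-C^{(m)}k}\le 1/4$ suffices.

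\textbf{Step 2: the error term.} From $\rho\le 1$ and $\varepsilon_{\mathrm{ICL}}(m,k)\ge 0$ we get $\rho\,\varepsilon_{\mathrm{ICL}}(m,k)\le \varepsilon_{\mathrm{ICL}}(m,k)$. Next, \eqref{eq:def_epsICL} defines $\epsilon_{\mathrm{ICL}}$ as a supremum over $m\le\overline m_T$ and $k\ge k_0$. Hence $\varepsilon_{\mathrm{ICL}}(m,k)\le\epsilon_{\mathrm{ICL}}$ whenever $m\le \overline m_T$ and $k\ge k_0$. Combining these yields the second inequality of \eqref{eq:delta_main_text_form}.

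\textbf{Step 3: the main-text form.} The final claim $\delta_+\gtrsim\rho_m(\epsU_m)-\epsilon_{\mathrm{ICL}}$ follows because $\tfrac12\rho-\epsilon_{\mathrm{ICL}}\ge\tfrac12(\rho-2\epsilon_{\mathrm{ICL}})$, with the constants absorbed into $\gtrsim$.

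\textbf{Main obstacle.} The only delicate point is the range in the definition of $\epsilon_{\mathrm{ICL}}$. The second inequality needs $k\ge k_0$ and $m\le\overline m_T$, so I would state explicitly that $k_0$ is chosen so that $\eta_m(k)\le 1/2$ for all $k\ge k_0$. With that choice the two hypotheses of the corollary are consistent. Otherwise one keeps only the first inequality.
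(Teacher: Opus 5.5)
Your proposal is correct and follows the paper's proof. Both use $\eta_m(k)\le 1/2$ to drop the positive part, obtaining $\delta_+(m,k)\ge \tfrac12\rho_m(\epsU_m)-\rho_m(\epsU_m)\varepsilon_{\mathrm{ICL}}(m,k)$, and then bound $\varepsilon_{\mathrm{ICL}}(m,k)$ by the supremum $\epsilon_{\mathrm{ICL}}$ from \eqref{eq:def_epsICL}. You are, if anything, more careful than the paper: its proof silently uses $\rho_m(\epsU_m)\le 1$ in the second step and mentions only $k\ge k_0$, not $m\le\overline m_T$; your Step~3 reading of $\gtrsim$ is exactly as informal as the paper's own remark.
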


\begin{proof}
By Theorem~\ref{thm:bayes-icl-refinement} and the definition of $\eta_m(k)$,
\[
\delta_+(m,k)\ \ge\ \rho_m(\epsU_m)\bigl(1-\eta_m(k)\bigr)\;-\;\rho_m(\epsU_m)\varepsilon_{\mathrm{ICL}}(m,k).
\]
If $\eta_m(k)\le 1/2$, then $\rho_m(\epsU_m)(1-\eta_m(k))\ge \rho_m(\epsU_m)/2$, giving the first inequality.
The second uses $\varepsilon_{\mathrm{ICL}}(m,k)\le \epsilon_{\mathrm{ICL}}$ for $k\ge k_0$ by \eqref{eq:def_epsICL}.
\end{proof}

\begin{corollary}[Batching \(J\) samples at level \(m\)]
\label{cor:batching}
Under the conditions of Theorem~\ref{thm:bayes-icl-refinement}, if you draw
\(J\) i.i.d.\ refinements \(\iota^{(1)},\dots,\iota^{(J)}\) from \(\Gen(a_t,m,J{=}1)\),
then
\[
\Pr\!\left(\exists j\in\{1,\dots,J\}: \iota^{(j)}\in\mathcal I_m^\star(a_t)\ \middle|\ w_t\le c_g\epsU_{m-1}\right)
\ \ge\
1-(1-\delta_+(m,k))^J
\ \ge\
1-\exp\{-J\,\delta_+(m,k)\}.
\]
\end{corollary}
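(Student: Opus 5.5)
The plan is to treat the $J$ draws at level $m$ as independent Bernoulli trials, each succeeding with probability at least $\delta_+(m,k)$ by Theorem~\ref{thm:bayes-icl-refinement}. We then bound the probability that all of them fail.

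First, we fix the conditioning event $E:=\{w_t\le c_g\epsU_{m-1}\}$ together with the anchor $a_t$, the level $m$ and the in-context data $\mathcal D_{t,k}^{(m)}$. Since the refinements $\iota^{(1)},\dots,\iota^{(J)}$ are drawn i.i.d.\ from $\Gen(a_t,m,J{=}1)$ given this information, the failure events $F_j:=\{\iota^{(j)}\notin\mathcal I_m^\star(a_t)\}$ are conditionally independent. Hence
\[
\Pr\Bigl(\textstyle\bigcap_{j} F_j \,\Big|\, E\Bigr)=\mathbb E\Bigl[\textstyle\prod_j \Pr(F_j\mid a_t,m,\mathcal D_{t,k}^{(m)})\,\Big|\, E\Bigr].
\]

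Second, we bound each factor. The proof of Theorem~\ref{thm:bayes-icl-refinement} does not only give the averaged bound; it yields the per-context bound $\Pr(S\mid a_t,m,\mathcal D)\ge(1-\beta_m)\bigl(\pi^{(m)}(i_m^\star\mid\mathcal D)-\varepsilon_{\mathrm{ICL}}\bigr)$. This bound is the same for every $j$. Write $p(\mathcal D):=\Pr(F_1\mid\cdot)$. The product then equals $p(\mathcal D)^J$.

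The main obstacle is that $\delta_+$ lower-bounds $1-p$ only in expectation over $\mathcal D$. Jensen's inequality for the convex map $x\mapsto x^J$ goes the wrong way: it gives $\mathbb E[p^J]\ge(\mathbb E p)^J$, not the upper bound we need. I see two ways around this. The first, and simplest, is to read the corollary as holding conditionally on a fixed context. I would state the per-context version of Theorem~\ref{thm:bayes-icl-refinement} with $\pi^{(m)}$ replaced by its high-probability lower bound \eqref{eq:ws-exp-hp}. If that is too lossy, the second route is to note that the draws are i.i.d.\ from the marginal $\Gen(a_t,m,1)$ given $E$, meaning fresh context per draw, exactly as the corollary's wording ``i.i.d.\ refinements'' suggests. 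Under that reading, unconditional independence gives $\Pr(\bigcap F_j\mid E)=\prod_j\Pr(F_j\mid E)\le(1-\delta_+)^J$ directly.

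Finally, taking complements gives the success probability $1-(1-\delta_+(m,k))^J$. The elementary inequality $1-x\le e^{-x}$ with $x=\delta_+\in[0,1]$ yields $(1-\delta_+)^J\le e^{-J\delta_+}$, which completes the chain. If $\delta_+(m,k)\le 0$, the bound is trivial.
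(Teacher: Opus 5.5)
Your second route is exactly the paper's argument (compare the proof of Lemma~\ref{lem:batch_success_unknown_delta}): the $J$ draws are independent given the gating event, each fails with probability at most $1-\delta_+(m,k)$ by Theorem~\ref{thm:bayes-icl-refinement}, and $1-x\le e^{-x}$ finishes. Your Jensen remark correctly flags something the paper leaves implicit, namely that the draws must be independent unconditionally on $\mathcal D_{t,k}^{(m)}$ (fresh context per draw); however, you should drop the first route, because conditioning on a fixed context and invoking \eqref{eq:ws-exp-hp} controls $\pi^{(m)}$ only on a high-probability event, leaving an additive failure term $(T_m-1)e^{-C^{(m)}k}$ that does not shrink with $J$ and so cannot give the stated inequality.
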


\begin{remark}[How to get a clean level-dependent \(\delta_+(m)\)]
If your Appendix~B uses a per-level constant \(\delta_+(m)\) (no explicit \(k\)),
you can fix a per-level prompt budget \(k=k_m\) and define
\[
\delta_+(m)\ :=\ \delta_+(m,k_m)
\]
using~\eqref{eq:delta-plus-mk}. The exponential terms show that
\(k_m = O(\log(1/\eta))\) is sufficient to make the mixture-identification loss
\(\le \eta\) (up to \(\varepsilon_{\mathrm{ICL}}(m,k_m)\)).
\end{remark}

\subsubsection{Near-optimal task generation: batch-size selection}
\label{app:unknown_delta_plus}

This subsection collects the batch-success guarantees used when Algorithm~\ref{alg:task-ucb} introduces a new rung and calls the generator.
We start with a one-shot i.i.d.\ batching bound that holds for any (possibly unknown) generator success probability~$\delta_+$, and then
specialize it to (i) a $\delta_+$-calibrated choice that yields a uniform success guarantee across all introduced rungs, and (ii) a
$\delta_+$-free choice whose success probability depends explicitly on~$\delta_+$.

\paragraph{Single-batch success (arbitrary $J$, possibly unknown $\delta_+$).}
\begin{lemma}[Batch success with arbitrary size]
\label{lem:batch_success_unknown_delta}
Assume Condition~\ref{cond:stoch_coverage_adaptive} holds with some (unknown) $\delta_+\in(0,1]$.
Consider any generator call at rung $m\ge 1$ for which the anchor task $a_t$ satisfies the precondition of
Condition~\ref{cond:stoch_coverage_adaptive} at the moment of generation (e.g., $a_t$ is $\epsU_m$-solved;
Definition~\ref{def:gamma_oracle}).
Let $J\ge 1$ and draw $i_1,\dots,i_J\stackrel{\mathrm{i.i.d.}}{\sim}\Gen(a_t,m)$.
Then
\[
\Pr\Big(\forall k\in[J]:\ i_k\notin \mathcal I^\star(\epsU_m)\Big)
\le
(1-\delta_+)^{J}
\le
\exp(-\delta_+ J).
\]
Equivalently, the batch contains an $\epsU_m$-optimal task with probability at least $1-\exp(-\delta_+ J)$.
\end{lemma}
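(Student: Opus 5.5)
The plan is to reduce the batch failure event to a product of per-draw failure probabilities and then apply the elementary bound $1-x\le e^{-x}$. Fix a generator call at rung $m\ge 1$, and condition on the history $\mathcal F_{t}$ up to the moment of generation. This history determines the anchor $a_t$, the level $m$, and the fact that the scheduler gate $w_t\le c_g\epsU_{m-1}$ fired; that gate is exactly the precondition of Condition~\ref{cond:stoch_coverage_adaptive}. Given this conditioning, the $J$ draws $i_1,\dots,i_J$ from $\Gen(a_t,m)$ are i.i.d. by assumption. Each draw is distributed as a single call $\Gen(a_t,m,J{=}1)$.

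First, I apply Condition~\ref{cond:stoch_coverage_adaptive} to each individual draw. This gives
\[
\Pr\bigl(i_k\notin\mathcal I^\star(\epsU_m)\mid \mathcal F_t\bigr)\le 1-\delta_+ \quad\text{for every } k\in[J].
\]
Second, conditional independence of the draws lets the probability of the intersection of the $J$ failure events factor into a product. Hence
\[
\Pr\bigl(\forall k:\ i_k\notin\mathcal I^\star(\epsU_m)\mid\mathcal F_t\bigr)\le (1-\delta_+)^J.
\]
Third, I use $1-x\le e^{-x}$ with $x=\delta_+\in(0,1]$ and raise both sides to the $J$-th power, which gives $\exp(-\delta_+J)$. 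Taking expectations over $\mathcal F_t$, restricted to the event that the gate fired, then removes the conditioning. The complement yields the stated success probability $1-\exp(-\delta_+J)$.

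The main obstacle is justifying that the conditioning is legitimate. Condition~\ref{cond:stoch_coverage_adaptive} is phrased conditionally on $w_t\le c_g\epsU_{m-1}$, and the anchor is itself a random, history-dependent object. I will handle this by stating that the $\delta_+$ lower bound holds uniformly over every anchor and history satisfying the gate, as the condition's quantifier ``$\forall m,s,a_t$'' asserts. With that uniformity, the per-draw bound holds pointwise in $\mathcal F_t$ and survives integration. A second point to note is that $\delta_+$ never needs to be known, because it appears only in the bound and not in the algorithm.
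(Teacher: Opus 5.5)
Your proposal is correct and follows the same route as the paper: a per-draw failure bound of $1-\delta_+$ from Condition~\ref{cond:stoch_coverage_adaptive}, independence of the draws to obtain $(1-\delta_+)^J$, and then $1-x\le e^{-x}$. You also note that the per-draw bound must hold uniformly over every history satisfying the gate for the product step to be valid; the paper's one-line proof leaves this implicit.
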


\begin{proof}
Under the stated precondition, Condition~\ref{cond:stoch_coverage_adaptive} implies that a single draw fails with probability at most
$1-\delta_+$.
Independence of the $J$ draws yields
\[
\Pr\Big(\forall k\in[J]:\ i_k\notin \mathcal I^\star(\epsU_m)\Big)\le (1-\delta_+)^J.
\]
Finally, $(1-\delta_+)^J\le \exp(-\delta_+J)$ is standard.
\end{proof}

\paragraph{Uniform success across introduced rungs (known $\delta_+$).}
\begin{lemma}[Near-optimal task generation]
\label{lem:batch_success_adaptive}
Assume Condition~\ref{cond:stoch_coverage_adaptive}.
Fix $\delta_-\in(0,1)$.
Suppose that whenever Algorithm~\ref{alg:task-ucb} introduces a new rung $m$,
it draws an i.i.d.\ batch of size $J$ from $\Gen(a,m)$ where $a$ is the anchor parent at that call.

Choose
\[
J
:=\left\lceil
\frac{\log\!\bigl((\overline m_T+1)/\delta_-\bigr)}{\log\!\bigl(1/(1-\delta_+)\bigr)}
\right\rceil
\ \le\
\left\lceil
\frac{1}{\delta_+}\log\frac{\overline m_T+1}{\delta_-}
\right\rceil.
\]
Then with probability at least $1-\delta_-$, simultaneously for every rung $m\in\{1,\dots,\overline m_T\}$
that is introduced, the corresponding batch contains at least one $\epsU_m$-optimal task:
there exists $i_m$ in that batch such that $U^\star - U^{(i_m)} \le \epsU_m$.
\end{lemma}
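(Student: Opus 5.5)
The plan is to reduce the statement to Lemma~\ref{lem:batch_success_unknown_delta} applied rung by rung, and then take a union bound over the at most $\overline m_T$ rungs that can be introduced. The main care is conditioning, because the anchor, the rung index, and the time of each generator call are all random and depend on the past. For each $m\in\{1,\dots,\overline m_T\}$ I define the bad event $E_m$: rung $m$ is introduced and its batch $\mathcal B_m$ contains no $\epsU_m$-optimal task. The goal is then to show $\Pr(\bigcup_m E_m)\le\delta_-$.

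First, I check that the precondition of Condition~\ref{cond:stoch_coverage_adaptive} holds at every generation call. In Algorithm~\ref{alg:task-ucb}, rung $m$ is introduced exactly when the scheduler test $w_t\le c_g\epsU_{m-1}$ passes at the previous level. The anchor $a_t$ passed to $\Gen(a_t,m,J)$ therefore satisfies the conditioning event of Condition~\ref{cond:stoch_coverage_adaptive}.

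Next, let $\tau_m$ be the (stopping) time at which rung $m$ is introduced, and condition on the history $\mathcal F_{\tau_m}$ up to that call. This history fixes $a_{\tau_m}$ and certifies $w_{\tau_m}\le c_g\epsU_{m-1}$. The $J$ draws are i.i.d.\ from $\Gen(a_{\tau_m},m)$ given this history, so Lemma~\ref{lem:batch_success_unknown_delta} gives
\[
\Pr\bigl(E_m\mid\mathcal F_{\tau_m}\bigr)\le(1-\delta_+)^J
\]
on $\{\tau_m\le T\}$, and $E_m$ is empty otherwise. Taking expectations yields $\Pr(E_m)\le(1-\delta_+)^J$ unconditionally. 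The point to state explicitly is that Condition~\ref{cond:stoch_coverage_adaptive} is a lower bound uniform over all admissible anchors and histories. That uniformity is exactly what lets the bound survive the adaptive choice of $a_t$; this is the only delicate step.

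By the choice of $J$, we have $J\log\bigl(1/(1-\delta_+)\bigr)\ge\log\bigl((\overline m_T+1)/\delta_-\bigr)$, hence
\[
(1-\delta_+)^J\le\frac{\delta_-}{\overline m_T+1}.
\]
A union bound over $m=1,\dots,\overline m_T$ then gives $\Pr(\bigcup_m E_m)\le\overline m_T\,\delta_-/(\overline m_T+1)\le\delta_-$. On the complement, every introduced rung has some $i_m\in\mathcal B_m$ with $U^\star-U^{(i_m)}\le\epsU_m$.

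The stated upper bound on $J$ follows from $\log\bigl(1/(1-\delta_+)\bigr)\ge\delta_+$. The case $\delta_+=1$ is trivial with $J=1$: every single draw then succeeds, so $\Pr(E_m)=0$.
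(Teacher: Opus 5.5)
Your proof is correct and follows the paper's own argument. You apply Lemma~\ref{lem:batch_success_unknown_delta} at each introduced rung, whose gating precondition comes from the scheduler test in Line~\ref{alg_line:scheduler}; you then use the choice of $J$ to get $(1-\delta_+)^J\le\delta_-/(\overline m_T+1)$ and take a union bound over rungs. Your explicit conditioning on the history at each generator call, which relies on Condition~\ref{cond:stoch_coverage_adaptive} holding uniformly over admissible anchors, makes rigorous a step the paper leaves implicit.
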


\begin{proof}
Fix a rung $m\in\{1,\dots,\overline m_T\}$ and consider the moment it is introduced.
By Algorithm~\ref{alg:task-ucb}, $\Gen(a,m)$ is called only after the scheduler condition holds at the previous rung, so the
anchor $a$ satisfies the generation-gating condition required by Condition~\ref{cond:stoch_coverage_adaptive} at the moment of generation.
Applying Lemma~\ref{lem:batch_success_unknown_delta} at rung $m$ gives that the failure probability for this batch is at most
$(1-\delta_+)^J$.

By the choice of $J$, $(1-\delta_+)^J\le \delta_- /(\overline m_T+1)$.
A union bound over the at most $(\overline m_T+1)$ rung-introductions gives total failure probability at most $\delta_-$.
\end{proof}

\begin{lemma}[Oracle-gap bound for the best instantiated value]
\label{lem:Vtmax_gap}
Work on the event of Lemma~\ref{lem:batch_success_adaptive}.
Define, at each global round $t$,
\[
U_t^{\max} \;:=\; \max_{i\in\mathcal I_t} U^{(i)}.
\]
Let $m_t$ denote the current ladder level in Algorithm~\ref{alg:task-ucb} at round $t$.
Then for every $t$,
\[
U^\star - U_t^{\max} \ \le\ \epsU_{\min\{m_t,\,m^\star\}}.
\]
In particular, before reaching level $m^\star$ we have $U^\star - U_t^{\max}\le \epsU_{m_t}$,
and after level $m^\star$ is introduced we have $U^\star - U_t^{\max}\le \epsU_{m^\star}$ for all subsequent rounds.
\end{lemma}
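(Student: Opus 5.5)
The plan is to exploit two facts: the set of instantiated tasks $\mathcal I_t$ only grows over time, so $U_t^{\max}$ is nondecreasing in $t$; and the level $m_t$ only advances, with each advance immediately adding a batch $\mathcal B_m$ to $\mathcal I_{t+1}$. On the event of Lemma~\ref{lem:batch_success_adaptive}, every introduced rung $m\ge 1$ has a batch containing some $i_m$ with $U^\star-U^{(i_m)}\le \epsU_m$. Once $i_m$ belongs to $\mathcal I_t$, we get $U_t^{\max}\ge U^{(i_m)}\ge U^\star-\epsU_m$. The rest is bookkeeping about which rung's batch is available at round $t$.

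\textbf{Base case $m_t=0$.} Here the bound reads $U^\star-U_t^{\max}\le \epsU_0=1$. This holds trivially because all utilities take values in $[0,1]$, so $U^\star\le 1$ and $U_t^{\max}\ge 0$.

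\textbf{Case $m_t\ge 1$.} Since levels increase by one at a time, every rung $m\le m_t$ has been introduced by round $t$. In particular, rung $\min\{m_t,m^\star\}$ has been introduced. Its batch lies in $\mathcal I_t$ because batches are added to $\mathcal I_{t+1}$ at the step of generation and are never removed. The lemma then gives
\[
U^\star-U_t^{\max}\le U^\star-U^{(i_{m_t})}\le \epsU_{m_t}.
\]
Since $\epsU_m$ is decreasing in $m$, we have $\epsU_{m_t}\le\epsU_{\min\{m_t,m^\star\}}$, so the claim follows. The sharper bound $\epsU_{m_t}$ also holds beyond $m^\star$, but the statement only asks for the weaker one.

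\textbf{Indexing detail.} I need to check the convention for $m_t$. If $m_t$ is read at the start of round $t$, then the batch for level $m_t$ was added at some earlier round $t'<t$ into $\mathcal I_{t'+1}\subseteq\mathcal I_t$. If instead $m_t$ may be incremented within round $t$, then I would state the bound with the pre-increment level, which only weakens it since $\epsU$ decreases in $m$.

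\textbf{Main obstacle.} The hard part is justifying that Lemma~\ref{lem:batch_success_adaptive} applies at every introduced rung. This requires the precondition of Condition~\ref{cond:stoch_coverage_adaptive}, namely that the anchor has width $w_t\le c_g\epsU_{m-1}$. That is exactly the scheduler test in Line~\ref{alg_line:scheduler}, evaluated with the old level $m-1$ before the increment. I would spell this out carefully, noting that rung $m^\star\le\overline m_T$ is within the range covered by the union bound. The after-$m^\star$ claim then follows from monotonicity of $U_t^{\max}$.
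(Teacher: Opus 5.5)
Your proposal is correct and follows essentially the paper's argument: the task set $\mathcal I_t$ only grows, the batch-success event is applied at a rung already introduced by round $t$, and the monotonicity of $\epsU_m$ finishes the claim. Your treatment of $m_t=0$ via $\epsU_0=1$ and $U^{(i)}\in[0,1]$, and your remark on indexing, tidy up two points the paper's short proof glosses over (its batch-success event only covers rungs $m\ge 1$).

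The one difference is that for $m_t>m^\star$ the paper invokes the batch at rung $m^\star$ rather than at rung $m_t$. That choice keeps the lemma valid on a weaker event covering only rungs up to $m^\star$, as used in Corollary~\ref{cor:unknown_delta_plus_logT}. Your sharper $\epsU_{m_t}$ bound needs the full event of Lemma~\ref{lem:batch_success_adaptive}, which the statement does grant. For the same reason, re-justifying that lemma's gating precondition, your ``main obstacle'', is unnecessary: you are told to work on its event.
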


\begin{proof}
Fix $t$ and let $m=m_t$.
If $m\le m^\star$, then level $m$ has already been introduced by time $t$.
On the event of Lemma~\ref{lem:batch_success_adaptive}, at the moment level $m$ was introduced,
its generation batch contained some task $i_m$ with $U^\star - U^{(i_m)}\le \epsU_m$.
Since tasks are only added (never removed), we have $i_m\in\mathcal I_t$, hence
$U_t^{\max}\ge U^{(i_m)}\ge U^\star-\epsU_m$, i.e.\ $U^\star - U_t^{\max}\le \epsU_m$.

If $m>m^\star$, then level $m^\star$ has also been introduced by time $t$, so the same argument gives
$U^\star - U_t^{\max}\le \epsU_{m^\star}$.
Combining the two cases yields the claim with $\epsU_{\min\{m_t,m^\star\}}$.
\end{proof}

\paragraph{Choosing $J$ without knowing $\delta_+$.}
\begin{theorem}[Choosing $J$ without knowing $\delta_+$]
\label{thm:unknown_delta_plus_choice}
Fix any target failure probability $\delta_-\in(0,1)$ and choose a batch size schedule that does not depend on $\delta_+$,
for example
\[
J_T \;:=\; \left\lceil \log^2\!\left(\frac{2T}{\delta_-}\right)\right\rceil.
\]
Then, whenever a rung $m$ is introduced and the parent used for generation satisfies the precondition of
Condition~\ref{cond:stoch_coverage_adaptive} at that call (e.g., it is $\epsU_m$-solved; Definition~\ref{def:gamma_oracle}),
Lemma~\ref{lem:batch_success_unknown_delta} implies the probability that the batch contains an $\epsU_m$-optimal task is at least
\[
1-\exp(-\delta_+ J_T).
\]
In particular, at the moment level $m^\star$ is introduced, the probability of generating an $\epsU_{m^\star}$-optimal task is at least
$1-\exp(-\delta_+ J_T)$, and whenever $\delta_+ \ge \log(1/\delta_-)/J_T$, the failure probability is at most $\delta_-$.
\end{theorem}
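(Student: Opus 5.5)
The argument is a direct specialization of Lemma~\ref{lem:batch_success_unknown_delta} to the $\delta_+$-free batch size $J_T$. Fix any rung $m\ge 1$ that Algorithm~\ref{alg:task-ucb} introduces. By Line~\ref{alg_line:scheduler}, the generator is only called after the scheduler condition $w_t\le c_g\epsU_{m-1}$ has fired at the previous level. So the anchor $a_t$ used as parent meets the gating precondition of Condition~\ref{cond:stoch_coverage_adaptive} at the moment of generation; the statement's phrase about the parent being sufficiently resolved is exactly this gating event. Conditioned on it, each of the $J_T$ i.i.d.\ draws from $\Gen(a_t,m,1)$ lands in $\mathcal I^\star(\epsU_m)$ with probability at least $\delta_+$, independently of the others given the anchor and history. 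Lemma~\ref{lem:batch_success_unknown_delta} then bounds the probability that no draw succeeds by $(1-\delta_+)^{J_T}\le\exp(-\delta_+J_T)$. This gives the first claim, and the bound holds for every introduced rung.

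Specializing to $m=m^\star$, Lemma~\ref{lem:max_depth_reachability} (together with Condition~\ref{cond:bo_achievable}) guarantees that level $m^\star$ is introduced by time $T$. So there is a generation call at rung $m^\star$, and the same bound gives success probability at least $1-\exp(-\delta_+J_T)$ for producing an $\epsU_{m^\star}$-optimal task. Finally, if $\delta_+\ge\log(1/\delta_-)/J_T$, then $\delta_+J_T\ge\log(1/\delta_-)$, so $\exp(-\delta_+J_T)\le\delta_-$. Note that $J_T=\lceil\log^2(2T/\delta_-)\rceil$ is never used beyond being a positive integer not depending on $\delta_+$; its specific form only matters for showing that the threshold $\log(1/\delta_-)/J_T$ vanishes as $T$ grows, i.e.\ roughly $1/\log^2 T$.

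The only delicate point is the conditioning. Condition~\ref{cond:stoch_coverage_adaptive} is stated conditionally on the gating event, while the algorithm's history, and hence which anchor is chosen and when, is random. I would handle this by arguing on the filtration at the generation time. Conditioned on $\mathcal F_{t-1}$ on the event that generation occurs at $t$, the draws are i.i.d.\ with per-draw success probability at least $\delta_+$, uniformly over anchors satisfying the gate. Taking expectations over $\mathcal F_{t-1}$ then preserves the bound.

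I would also flag that, unlike Lemma~\ref{lem:batch_success_adaptive}, no union bound over the $\overline m_T+1$ rungs is claimed here. The statement is per-call, and in particular for $m^\star$. A uniform version would need failure probability $(\overline m_T+1)\exp(-\delta_+J_T)$, which is at most $\delta_-$ once $\delta_+\ge\log((\overline m_T+1)/\delta_-)/J_T$. This is still vanishing in $T$, since $\overline m_T=\mathcal O(\log T)$.
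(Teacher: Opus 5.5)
Your proposal is correct and matches the paper's argument, which is just Lemma~\ref{lem:batch_success_unknown_delta} applied with $J=J_T$, followed by $(1-\delta_+)^{J_T}\le\exp(-\delta_+J_T)\le\delta_-$ once $\delta_+\ge\log(1/\delta_-)/J_T$. Invoking Lemma~\ref{lem:max_depth_reachability} for $m^\star$ is harmless but unnecessary, since the claim is already stated conditionally on level $m^\star$ being introduced.
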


\begin{remark}[Effect on regret via $N_T$]
Using a $\delta_+$-free upper bound such as $J_T$ changes only the number of instantiated tasks:
$N_T\le J_T(\lfloor\log_2 g_{m_T}\rfloor+1)$ deterministically.
The regret bound of Theorem~\ref{thm:main_regret_adaptive} continues to hold on the event that an $\epsU_{m^\star}$-optimal task is instantiated,
and the probability of this event is controlled by $1-\exp(-\delta_+ J_T)$.
\end{remark}

\begin{corollary}[Regret overhead with $\delta_+$-free batching]
\label{cor:unknown_delta_plus_logT}
Assume Condition~\ref{cond:stoch_coverage_adaptive} holds for some unknown $\delta_+\in(0,1]$,
and use
\[
J_T \;:=\; \left\lceil \left(\log\!\frac{2T}{\delta_-}\right)^2\right\rceil
\]
whenever a new rung is introduced.  If $\overline m(T)$ is fixed, then deterministically
$N_T \le (\overline m(T)+1)J_T$.
Moreover, on the joint event that (i) the envelope confidence event of Theorem~\ref{thm:value_envelopes_summary} holds and
(ii) every introduced batch up to level $m^\star$ contains an $\epsU_m$-optimal task, Algorithm~\ref{alg:task-ucb} satisfies
\[
\Regval(T)\ \le\ \tilde{\mathcal O}(\log T)\,R^\star_T .
\]
The uniform batch-success event in (ii) has probability at least
\[
1-(m^\star+1)\exp(-\delta_+J_T),
\]
by Lemma~\ref{lem:batch_success_unknown_delta} and a union bound over all introduced levels up to $m^\star$.
\end{corollary}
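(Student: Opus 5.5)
The plan is to re-run the proof of Theorem~\ref{thm:main_regret_adaptive} with the constant batch size $J$ replaced by $J_T$, and to check which quantities change. There are three parts: (a) the deterministic count of instantiated tasks; (b) the probability of the batch-success event (ii); (c) the regret bound on the joint event (i)$\cap$(ii).

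\textbf{(a) Counting tasks.} Algorithm~\ref{alg:task-ucb} calls $\Gen$ once at initialization (level $0$) and at most once per increment of $m$. Since $m\le\overline m(T)$, there are at most $\overline m(T)+1$ calls, each adding exactly $J_T$ tasks. This gives $N_T\le 1+(\overline m(T)+1)J_T$, and the seed is absorbed into the stated bound because $J_T\ge1$. With $J_T=\mathcal O(\log^2(T/\delta_-))$ and $\overline m(T)=\mathcal O(\log T)$, we get $N_T=\tilde{\mathcal O}(\log^3 T)$ and hence $\sqrt{N_T}=\tilde{\mathcal O}(\log^{3/2}T)$. I will state explicitly that $\tilde{\mathcal O}(\log T)$ in the corollary hides this extra $\sqrt{\log T}$ and other polylogarithmic factors.

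\textbf{(b) Batch success.} A rung $m$ is introduced only when the scheduler in Line~\ref{alg_line:scheduler} fires, so the anchor satisfies $w_t\le c_g\epsU_{m-1}$, which is the precondition of Condition~\ref{cond:stoch_coverage_adaptive}. Conditioning on the history $\mathcal F_{t-1}$ at that call, the $J_T$ draws are fresh. Lemma~\ref{lem:batch_success_unknown_delta} then bounds the conditional failure probability by $\exp(-\delta_+J_T)$. Only the introductions of rungs $0,\dots,m^\star$ matter, because Lemma~\ref{lem:Vtmax_gap} uses $\epsU_{\min\{m_t,m^\star\}}$. A union bound over these at most $m^\star+1$ introductions therefore gives probability at least $1-(m^\star+1)\exp(-\delta_+J_T)$. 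Intersecting with the envelope events of Lemma~\ref{lem:utility_ci_hoeffding_main} and Lemma~\ref{lem:uniform_gp_confidence_tasks} costs a further $\delta_f+\delta_u$ in the union bound.

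\textbf{(c) Regret.} On (i)$\cap$(ii), I decompose the instantaneous regret into generation, selection and within-task terms, exactly as in the sketch of Theorem~\ref{thm:main_regret_adaptive}.
\begin{itemize}
\item The generation term uses Lemma~\ref{lem:Vtmax_gap}, whose only stochastic input is event (ii). Together with Condition~\ref{cond:bo_achievable}, this gives $\mathcal O(\Lambda_T R^\star_T)$.
\item The selection term uses task-UCB optimality and Theorem~\ref{thm:value_envelopes_summary}, giving $U_t^{\max}-U^{(i_t)}\le w^{(i_t)}_{t-1}\lesssim\Lbar\,\epsf_{s_{t-1}^{(i_t)}}$.
\item The within-task term uses Lipschitzness and the GP-UCB gap bound, giving $L^\star\epsf_s$ per round.
\end{itemize}
Summing the last two terms over tasks uses only that $s\mapsto\epsfT_s$ is nonincreasing in the cumulative sense. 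Splitting the $T$ rounds into $N_T$ local clocks and applying Cauchy--Schwarz yields $(1+\sqrt{N_T})(\Lbar/L^\star)R^\star_T$. Substituting the bound from (a) then gives $\Regval(T)\le\tilde{\mathcal O}(\log T)R^\star_T$. None of these steps uses constancy of $J$ except through $N_T$.

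\textbf{Main obstacle.} The difficulty is reachability. The proof of Lemma~\ref{lem:max_depth_reachability} needs $\sum_m\tau_m\le T$, and it used $JM/(\log eT)^2=o(1)$, which fails when $J=J_T\asymp\log^2 T$. I would first redefine the max depth as
\[
\overline m(T):=\Bigl\lfloor\tfrac12\log_2\bigl(T/(A_TJ_T\,\Lambda_T^{\,2})\bigr)\Bigr\rfloor_+ .
\]
Under this definition, Step~3 of that proof goes through with $\sum_m\tau_m\le T/(3\Lambda_T)\le T$. The new depth differs from the old one by $\mathcal O(\log\log T)$ levels. This coarsens the finest reachable resolution $\epsU_{\overline m(T)}$ by only a $\mathrm{polylog}(T)$ factor, which I would absorb into $c_\epsilon\Lambda_T$ in Condition~\ref{cond:bo_achievable}, at the price of another hidden polylog in $\tilde{\mathcal O}$. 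With $m^\star\le\overline m(T)$ reinstated in this way, (b) and (c) combine to give the corollary.
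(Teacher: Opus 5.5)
Your core argument matches the paper's. The proof of Theorem~\ref{thm:main_regret_adaptive} depends on the batch size only through $N_T$, so you plug $\sqrt{N_T}\le\sqrt{\overline m(T)+1}\,\bigl(\log(2T/\delta_-)+1\bigr)$ into the factor $C(1+\sqrt{N_T})$ and hide the extra $\sqrt{\overline m(T)+1}$ in $\tilde{\mathcal O}$. The paper's proof does only this. Your part (b) additionally proves the probability line, which the paper merely asserts.

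The real difference is your reachability repair. You are right that Step~3 of Lemma~\ref{lem:max_depth_reachability} fails for $J_T\asymp\log^2T$: the ratio $J_T\overline m(T)/\Lambda_T^2$ grows like $\log T$, and the paper's ``goes through'' skips this. But shrinking $\overline m(T)$ is unnecessary, and it costs you.
\begin{itemize}
\item The regret proof uses Step~3 only to bound the first Cauchy--Schwarz factor $\sum_{m<m^\star}\tau_m\le T$ in the generation term.
\item If you apply Cauchy--Schwarz to the realized durations $d_m:=t_{m+1}-t_m$ instead, you need only $d_m\le\tau_m$ and the trivial $\sum_m d_m\le T$. The bound $d_m\le\tau_m$ is Step~2 of that lemma, which holds for any batch size with $K_m=1+J_T(m+1)$.
\item This covers every round spent below level $m^\star$, whether or not $m^\star$ is ever reached. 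It leaves $\overline m_T$ and Condition~\ref{cond:bo_achievable} untouched.
\item Your version instead re-assumes that condition with $c_\epsilon$ inflated by roughly $\sqrt{J_T}\asymp\log(T/\delta_-)$. That turns the $T\epsU_{m^\star}$ term into $\mathcal O(\Lambda_T\sqrt{J_T}\,R^\star_T)$.
\end{itemize}

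There are also two small slips.
\begin{itemize}
\item $1+(\overline m(T)+1)J_T\le(\overline m(T)+1)J_T$ is false. Using $J_T\ge1$ only gives $(\overline m(T)+2)J_T$. The stated bound, like the paper's own count, drops the seed task; this is harmless for the rate.
\item Monotonicity of $\epsfT_s$ alone would not give the $\sqrt{N_T}$ factor. For a sequence decaying like $1/s$, splitting into $N_T$ local clocks can lose a factor of order $N_T$. The main proof actually uses three ingredients: the explicit $1/\sqrt{s}$ envelope on $\epsf_s$, the bound $\sum_i\sqrt{N_i}\le\sqrt{N_TT}$, and $R^\star_T\ge L^\star T\epsfT_T$.
\end{itemize}
Finally, the level-$0$ batch is not width-gated. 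It succeeds trivially, though, because $\epsU_0=1$ and all values lie in $[0,1]$.
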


\begin{proof}
Since GSR introduces each level at most once and generates at most $J_T$ tasks per introduced level,
we have $N_T \le (\overline{m}(T)+1)J_T$ deterministically.
On the envelope event and the oracle-instantiation event, the proof of Theorem~\ref{thm:main_regret_adaptive}
goes through with the same regret-vs.-$N_T$ dependence, yielding
\[
\Regval(T)\ \le\ C\bigl(1+\sqrt{N_T}\bigr)\,R^\star_T
\ \le\ C\Bigl(1+\sqrt{\overline{m}(T)+1}\,\sqrt{J_T}\Bigr)\,R^\star_T.
\]
Let $a:=\log(2T/\delta_-)$ so that $J_T=\lceil a^2\rceil$.
Then $\sqrt{J_T}\le \sqrt{a^2+1}\le a+1$, hence
$1+\sqrt{\overline{m}(T)+1}\sqrt{J_T}=\tilde{\mathcal O}(\log(2T/\delta_-))=\tilde{\mathcal O}(\log T)$ for fixed $\delta_-$.
\end{proof}

\subsubsection{Decaying generator success across refinement levels: a logit-linear model}\label{app:decaying_delta_plus}
In Condition~\ref{cond:stoch_coverage_adaptive} we lower bounded the success probability of generating an $\epsU_m$-optimal task by a constant $\delta_+$.
Empirically (Fig.\ref{fig:controllability}(b)), the success probability may decrease as the refinement level $m$ increases.
This subsection analyzes a simple parametric model for this decay and its implication on the overhead factor via $N_T$.

\paragraph{Level-dependent generation condition.}
We replace Condition~\ref{cond:stoch_coverage_adaptive} by the following level-dependent version.

\begin{condition}[{$\epsU_m$-optimal task generation with level-dependent probability}]
\label{cond:delta_m}
Under Condition~\ref{cond:bo_achievable}, assume there exists a function $\delta_+(\cdot):( \mathbb{Z}_{\ge 1}\to (0,1])$ such that for every
$rung~m\ge 1$, every global time $t$, and every anchor task $a_t$ satisfying the generation-gating condition
$w_t \le c_g\epsU_{m-1}$ at the moment of generation,
a single i.i.d.\ draw $i\sim\mathrm{Gen}(a_t,m,J{=}1)$ hits an $\epsU_m$-optimal task with probability at least $\delta_+(m)$:
\[
\Pr\!\Big(i \in \mathcal I^\star(\epsU_m)\,\Big|\, w_t \le c_g\epsU_{m-1}\Big)\ \ge\ \delta_+(m).
\]
\end{condition}

\paragraph{Logit-linear decay model.}
Motivated by boundedness and the approximately smooth monotone decay in Fig.\ref{fig:controllability}(b), we consider the log-odds to be linear:
\begin{assumption}[Logit-linear decay]
\label{ass:logit_linear}
There exist $\alpha\in\mathbb{R}$ and $\beta\ge 0$ such that for all $m\ge 1$,
\[
\mathrm{logit}(\delta_+(m)) = \alpha - \beta m,
\qquad\text{i.e.}\qquad
\delta_+(m)=\sigma(\alpha-\beta m),
\]
where $\sigma(x)=1/(1+e^{-x})$ and $\mathrm{logit}(p)=\log(p/(1-p))$.
\end{assumption}

\paragraph{Main result.}
We now show how to pick per-rung batch sizes to maintain a high-probability generation-success event, and how the resulting
$N_T$ scales under Assumption~\ref{ass:logit_linear}.

\begin{proposition}[Regret overhead under logit-linear $\delta_+(m)$]
\label{prop:logit_delta_overhead}
Assume the conditions of Theorem~\ref{thm:main_regret_adaptive}, except that Condition~\ref{cond:stoch_coverage_adaptive} is replaced by Condition~\ref{cond:delta_m}.
Fix a target failure probability $\delta_- \in (0,1)$ and, whenever rung $m$ is introduced, draw an i.i.d.\ batch of size
\[
J_m \ :=\ \left\lceil \frac{\log\!\big((m_T+1)/\delta_-\big)}{\log\!\big(1/(1-\delta_+(m))\big)} \right\rceil
\ \le\
\left\lceil \frac{1}{\delta_+(m)}\log\!\Big(\frac{m_T+1}{\delta_-}\Big)\right\rceil ,
\]
where the inequality uses $-\log(1-x)\ge x$ for $x\in(0,1)$.
Let $N_T$ be the total number of instantiated tasks by time $T$.

Then:

\begin{enumerate}
\item \textbf{(Generation success across rungs).}
With probability at least $1-\delta_-$, simultaneously for every rung $m\in\{1,\dots,m_T\}$ that is introduced,
the corresponding batch contains at least one $\epsU_m$-optimal task.

\item \textbf{(Regret bound with variable batch sizes).}
On the joint event of Theorem~\ref{thm:value_envelopes_summary}’s envelope confidence and the generation-success event in (1),
the regret bound of Theorem~\ref{thm:main_regret_adaptive} continues to hold with the (random) $N_T$ induced by $\{J_m\}$:
\[
R_{\mathrm{val}}(T)\ \le\ C\bigl(1+\sqrt{N_T}\bigr)\frac{L}{L^\star}\,R^\star_T,
\]
for the same oracle scale $R^\star_T=L^\star\sum_{s=1}^T\epsfT_s$.

\item \textbf{(Scaling under logit-linear decay).}
Under Assumption~\ref{ass:logit_linear} with $\beta>0$, $\delta_+(m)$ is nonincreasing in $m$ and hence $J_m$ is
nondecreasing. Therefore,
\[
N_T \ \le\ 1 + \sum_{m=0}^{m_T} J_m \ \le\ 1+(m_T+1)J_{m_T}
\ =\ \tilde O\!\left(\frac{m_T}{\delta_+(m_T)}\right).
\]
Moreover, since $\delta_+(m)^{-1}=1+e^{\beta m-\alpha}$, we have
$\delta_+(m_T)^{-1}=\tilde O\!\big(e^{\beta m_T}\big)$ and thus
\[
\sqrt{N_T} \ =\ \tilde O\!\big(e^{\beta m_T/2}\big).
\]
If $m_T$ is chosen as in Lemma~\ref{lem:max_depth_reachability} (so $m_T=\Theta(\log T)$), then
$e^{\beta m_T}=\tilde O\!\big(T^{\beta/(2\ln 2)}\big)$ and hence
\[
\sqrt{N_T} \ =\ \tilde O\!\big(T^{\beta/(4\ln 2)}\big),
\qquad\text{so}\qquad
\frac{R_{\mathrm{val}}(T)}{R^\star_T}\ =\ \tilde O\!\big(T^{\beta/(4\ln 2)}\big)
\quad (\text{when }L=\Theta(L^\star)).
\]
In particular, if $R^\star_T=\tilde O(\sqrt{T})$ (as for standard GP-UCB regimes),
then $R_{\mathrm{val}}(T)=\tilde O\!\big(T^{1/2+\beta/(4\ln 2)}\big)$ is sublinear whenever $\beta<2\ln 2$.
\end{enumerate}
\end{proposition}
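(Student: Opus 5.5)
The proof follows the three parts of the statement in order, reusing earlier results wherever possible. For part (1) I would copy the proof of Lemma~\ref{lem:batch_success_adaptive} with a rung-dependent batch size. Fix a rung $m\in\{1,\dots,m_T\}$ that gets introduced. By the scheduler gate (Line~\ref{alg_line:scheduler} of Alg.~\ref{alg:task-ucb}), $\Gen(a_t,m)$ is called only after $w_t\le c_g\epsU_{m-1}$, so Condition~\ref{cond:delta_m} applies. Lemma~\ref{lem:batch_success_unknown_delta}, with $\delta_+$ replaced by $\delta_+(m)$, then bounds the batch failure probability by $(1-\delta_+(m))^{J_m}$. The choice of $J_m$ makes this at most $\delta_-/(m_T+1)$, since $J_m\log(1/(1-\delta_+(m)))\ge\log((m_T+1)/\delta_-)$. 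A union bound over at most $m_T+1$ introductions gives total failure probability at most $\delta_-$. The displayed upper bound on $J_m$ follows from $-\log(1-x)\ge x$.

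For part (2) I would argue that the proof of Theorem~\ref{thm:main_regret_adaptive} never uses that the batch size is constant. It uses only two facts. The first is Lemma~\ref{lem:Vtmax_gap}, which needs only that each introduced batch contains an $\epsU_m$-optimal task; this is exactly the event in (1). The second is that the number of instantiated tasks $N_T$ enters only through the selection and within-task terms. Those terms are bounded by Cauchy--Schwarz over tasks, as $\sum_{i\in\mathcal I_T}\sum_{s\le s_T^{(i)}}\epsfT_s\lesssim\sqrt{N_T}\,(\cdot)$, which holds for any realized $\mathcal I_T$. Reachability (Lemma~\ref{lem:max_depth_reachability}) used $J$ only through the pigeonhole count $K_m=1+J(m+1)$. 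I would replace that count by $1+\sum_{m'\le m}J_{m'}$ and check that the horizon budget still suffices. This check is the main obstacle: with growing $J_m$, the sum $\sum_m\tau_m$ picks up an extra factor of order $J_{m_T}$. I would handle it by taking part (2) as stated, i.e.\ under the reachability assumption carried over from Theorem~\ref{thm:main_regret_adaptive}. If needed, I would note that a lower depth $m_T$, reduced by a $\log$ of $J_{m_T}$, restores $\sum\tau_m\le T$ without changing the $\Theta(\log T)$ order.

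For part (3) the work is direct computation. Since $\beta>0$, $\delta_+(m)=\sigma(\alpha-\beta m)$ is nonincreasing, so each $J_m$ is nondecreasing. Hence $N_T\le 1+\sum_{m\le m_T}J_m\le 1+(m_T+1)J_{m_T}=\tilde O(m_T/\delta_+(m_T))$, with the $\log((m_T+1)/\delta_-)$ factor absorbed into the tilde. Next, $\delta_+(m)^{-1}=1+e^{\beta m-\alpha}=O(e^{\beta m})$ for fixed $\alpha$, which gives $\sqrt{N_T}=\tilde O(e^{\beta m_T/2})$. With $m_T\le\tfrac12\log_2 T$ from Lemma~\ref{lem:max_depth_reachability}, we get $e^{\beta m_T}\le T^{\beta/(2\ln 2)}$, so $\sqrt{N_T}=\tilde O(T^{\beta/(4\ln 2)})$. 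Substituting into part (2) with $\overline L=\Theta(L^\star)$ makes the $\Lambda_T$ term lower order, so $\Regval_T/R^\star_T=\tilde O(T^{\beta/(4\ln2)})$. Finally, with $R^\star_T=\tilde O(\sqrt T)$ the exponent is $1/2+\beta/(4\ln2)$, which is below $1$ exactly when $\beta<2\ln2$.
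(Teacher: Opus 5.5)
Your proposal is correct and follows the paper's proof essentially step for step. Part (1) is a per-rung failure bound plus a union bound, part (2) reuses Theorem~\ref{thm:main_regret_adaptive} through the batch-success event and the realized $N_T$, and part (3) is the same direct computation; your direct estimate $(1-\delta_+(m))^{J_m}\le\delta_-/(m_T+1)$ in (1) is in fact the right one, since the paper's route through $\exp(-\delta_+(m)J_m)$ needs the larger second expression for $J_m$.

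The paper's proof silently skips the reachability issue you flag, but you need neither to assume it nor to shrink the depth. In the generation step of the main proof, apply Cauchy--Schwarz to the realized level durations $d_m\le\tau_m$, which automatically satisfy $\sum_m d_m\le T$, and use $K_m=1+\sum_{m'\le m}J_{m'}\le N_T$ for every reached level; the same $\tilde{\mathcal O}(\overline L\sqrt{C_\lambda\Psi_T}\sqrt{N_T T})$ bound then follows without Lemma~\ref{lem:max_depth_reachability}. Your fallback of reducing $\overline m_T$ by a constant factor would actually clash with Condition~\ref{cond:bo_achievable}, which for polylogarithmic $\Psi_T$ forces $\overline m_T\ge\tfrac12\log_2 T-O(\log\log T)$.
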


\begin{proof}
(1) Fix a rung $m\in\{1,\dots,m_T\}$ and consider the moment it is introduced.
Condition~\ref{cond:delta_m} implies that each i.i.d.\ draw fails to be $\epsU_m$-optimal with probability at most
$1-\delta_+(m)$. Hence the probability that all $J_m$ draws fail is at most
\[
(1-\delta_+(m))^{J_m} \ \le\ \exp\big(-\delta_+(m)J_m\big).
\]
By the choice of $J_m$, we have $\exp(-\delta_+(m)J_m)\le \delta_-/(m_T+1)$.
Applying a union bound over at most $m_T$ rung introductions gives total failure probability at most $\delta_-$.

(2) The proof of Theorem~\ref{thm:main_regret_adaptive} uses the generator only through the event that each introduced rung $m$ contains a
$\epsU_m$-optimal task (cf.\ Lemmas~\ref{lem:batch_success_adaptive} and~\ref{lem:Vtmax_gap}) and through the induced $N_T$.
On the event established in (1), the analogue of Lemma~B.11 holds verbatim:
for all $t$, $U^\star - U^{\max}_t \le \epsU_{\min\{m_t,m^\star\}}$.
The remainder of the regret decomposition in Appendix~\ref{app:proof_main_regret_adaptive} (selection and within-task terms) is unchanged, except that
the bound is expressed in terms of the realized $N_T$.
This yields the displayed bound.

(3) Under Assumption~\ref{ass:logit_linear} with $\beta>0$, $\delta_+(m)$ is nonincreasing, so $J_m$ is nondecreasing and
$\sum_{m\le m_T}J_m \le (m_T+1)J_{m_T}$.
Using $\delta_+(m)^{-1}=1+e^{\beta m-\alpha}$ gives the stated $\tilde O(\cdot)$ scaling.
Finally, Lemma~\ref{lem:max_depth_reachability} implies $m_T=\Theta(\log T)$, so
$e^{\beta m_T}=\tilde O(T^{\beta/(2\ln 2)})$ and the conclusion follows.
\end{proof}

\section{Confidence envelopes of long-run values}
\label{app:sparse_utility}
\label{app:envelopes_full}
This appendix provides the construction and proof of Theorem~\ref{thm:value_envelopes_summary}. 

\subsection{Uniform sub-Gaussian confidence over utility calls}
\label{app:proof_hoeffding}
\begin{lemma}[Uniform confidence over BTL-calibrated utility calls]
\label{lem:hoeffding}
Fix $\delta_u\in(0,1)$ and let $\delta_\ell:=\delta_u/(\pi^2\ell^2)$.
Assume utility calls are implemented via the pairwise-comparison + transport construction, with $K_\ell$ fresh independent pairwise votes at call $\ell$.
Then with probability at least $1-\delta_u$, simultaneously for all utility calls $\ell\ge 1$,
\[
u^{(i_\ell)}(\overline{y}^{(i_\ell)}_{s_\ell})
\in
\bigl[\LCBu_\ell^{(i_\ell)},\UCBu_\ell^{(i_\ell)}\bigr],
\]
where $(\LCBu_\ell^{(i_\ell)},\UCBu_\ell^{(i_\ell)})$ is the transported interval computed at call $\ell$
(e.g., Eq.~\eqref{eq:transported_interval_app}).
\end{lemma}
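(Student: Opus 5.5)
The plan is to prove the lemma by strong induction over utility calls $\ell=1,2,\dots$, working on a single high-probability event on which every Hoeffding interval for the pairwise win rates holds. Let $\mathcal E_p$ be the event that $p_\ell\in[p_\ell^-,p_\ell^+]$ simultaneously for all $\ell\ge1$, where $p_\ell^\pm$ are the clipped Hoeffding endpoints with radius $\theta_p(K_\ell,\delta_\ell)$. All of the coverage claims will then be deterministic consequences of $\mathcal E_p$. The Bradley--Terry model enters only through Lemma~\ref{lem:transported_interval_valid}.

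\textbf{Step 1 (per-call concentration).} Condition on $\mathcal F_{t_\ell-1}$ and on the two compared incumbents. At call $\ell$ the $K_\ell$ votes $b_{\ell,k}\in\{0,1\}$ are fresh and conditionally i.i.d. with mean $p_\ell$. Hoeffding's inequality, or equivalently Lemma~\ref{lem:util_bounded_subg} with variance proxy $1/(4K_\ell)$ followed by a Chernoff bound, gives
\[
\Pr\bigl(|\hat p_\ell-p_\ell|>\theta_p(K_\ell,\delta_\ell)\mid\mathcal F_{t_\ell-1}\bigr)\le\delta_\ell .
\]
Clipping to $[0,1]$ preserves coverage because $p_\ell\in[0,1]$. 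The conditional bound holds for every history, so it also holds unconditionally. The subtle point is that $K_\ell$, the anchor, and even the total number of calls are chosen adaptively. Conditioning on the past before drawing fresh votes handles $K_\ell$ and the anchor, since both are $\mathcal F_{t_\ell-1}$-measurable.

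\textbf{Step 2 (union bound).} Summing over all calls gives
\[
\Pr(\mathcal E_p^c)\le\sum_{\ell\ge1}\delta_\ell=\frac{\delta_u}{\pi^2}\sum_{\ell}\ell^{-2}=\frac{\delta_u}{6}\le\delta_u .
\]
The summable schedule makes this valid regardless of the random number of calls: we index by call number rather than by task, and calls that never happen contribute nothing.

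\textbf{Step 3 (induction via transport).} On $\mathcal E_p$, argue by induction on $\ell$.

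The base case is the fixed reference $r$, whose interval $[\tfrac12,\tfrac12]$ is trivially valid since $u(r)=\Pr(r\succ r)=\tfrac12$.

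For the inductive step, suppose every interval produced at earlier calls is valid. The anchor $a_\ell$ used at call $\ell$ has a cached interval produced at some earlier call, or it is $r$ itself. This interval is valid for the anchor's utility at the incumbent it was computed for.

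Lemma~\ref{lem:transported_interval_valid} then applies, using both the anchor validity and $p_\ell\in[p_\ell^-,p_\ell^+]$, and yields $u^{(i_\ell)}(\overline y^{(i_\ell)}_{s_\ell})\in[\LCBu_\ell^{(i_\ell)},\UCBu_\ell^{(i_\ell)}]$. The monotone maps $\logit$ and $\sigmoid$, extended to $\pm\infty$ at $0$ and $1$, handle the boundary cases.

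\textbf{Main obstacle.} The delicate point is ensuring that the anchor's cached interval refers to the same quantity that appears in the transport identity \eqref{eq:logodds_transport_main}, namely the anchor's utility at the incumbent actually shown to the committee. If the anchor's incumbent has changed since its interval was cached, the comparison must use the cached incumbent. I would make this explicit by defining the compared anchor pair as the cached $(a_\ell,\overline y)$. Because the induction must be well-founded, I would also require that each anchor interval come from a strictly earlier call, which the algorithm guarantees by construction.
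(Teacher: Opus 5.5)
Your proof is correct and follows the paper's argument: per-call Hoeffding for $p_\ell$, a union bound using the summable schedule $\delta_\ell=\delta_u/(\pi^2\ell^2)$, and induction over calls starting from the reference $r$ with $u(r)=\tfrac12$, each step closed by Lemma~\ref{lem:transported_interval_valid}. Your extra care on two points makes explicit what the paper's proof leaves implicit: adaptivity (conditioning on $\mathcal F_{t_\ell-1}$ so that $K_\ell$ and the anchor are fixed before the fresh votes are drawn), and matching the anchor's cached interval to the incumbent actually shown to the committee.
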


\begin{proof}
Let $\mathcal E_\ell$ be the event that the Hoeffding CI for $p_\ell$ holds at call $\ell$.
By Hoeffding and the choice of $\delta_\ell$, we have $\Pr(\mathcal E_\ell)\ge 1-\delta_\ell$ and thus
\[
\Pr\Big(\bigcap_{\ell\ge 1}\mathcal E_\ell\Big)
\ \ge\
1-\sum_{\ell=1}^\infty \delta_\ell
\ \ge\
1-\delta_u.
\]
Work on $\cap_{\ell\ge 1}\mathcal E_\ell$.
The reference artifact $r$ has a trivial valid interval ($u(r)=1/2$).
Assume inductively that the anchor interval used at call $\ell$ is valid.
Then Lemma~\ref{lem:transported_interval_valid} implies the transported interval computed at call $\ell$
contains the true utility $u^{(i_\ell)}(\overline{y}_{s_\ell}^{(i_\ell)})$.
This closes the induction and proves simultaneous validity over all calls.
\end{proof}

\subsection{GP-UCB optimization gap: best-so-far envelope}
\label{app:proof_simple_regret}

For a task $i$, define the within-task optimization gap bound
\[
\epsf_s:=2\sqrt{\frac{C_\lambda\,\beta^{(i)}_s\,\gamma^{(i)}_s}{s}},
\qquad
C_\lambda:=\frac{2}{\log(1+\lambda^{-1})}.
\]

\begin{lemma}[Best-so-far optimization gap under GP-UCB]
\label{lem:simple_regret}
On the GP-UCB confidence event, for each task $i$ and all $s\ge 1$,
\[
f^{\star(i)} - \max_{1\le r\le s} f^{(i)}(x_r^{(i)}) \le \epsf_s.
\]
\end{lemma}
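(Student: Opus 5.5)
The plan is to derive the best-so-far bound from the cumulative regret bound of Theorem~\ref{thm:gpucb-regret}, via a simple averaging argument. Fix task $i$ and work on the GP-UCB confidence event. For the correct per-task confidence level $\delta_f^{(i)}$, this event is supplied by Lemma~\ref{lem:uniform_gp_confidence_tasks}. On that event, Theorem~\ref{thm:gpucb-regret} applied to the local sequence $x_1^{(i)},\dots,x_s^{(i)}$ gives
\[
\sum_{r=1}^{s}\bigl(f^{\star(i)}-f^{(i)}(x_r^{(i)})\bigr)\ \le\ 2\sqrt{C_\lambda\, s\,\beta_s^{(i)}\gamma_s^{(i)}}\ =\ s\,\epsf_s .
\]

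The key observation is that the best-so-far gap is at most the minimum instantaneous regret, and the minimum is at most the average:
\[
f^{\star(i)}-\max_{1\le r\le s} f^{(i)}(x_r^{(i)})\ =\ \min_{1\le r\le s}\bigl(f^{\star(i)}-f^{(i)}(x_r^{(i)})\bigr)\ \le\ \frac{1}{s}\sum_{r=1}^{s}\bigl(f^{\star(i)}-f^{(i)}(x_r^{(i)})\bigr).
\]
Dividing the cumulative bound by $s$ then yields $\epsf_s$ directly. The $\max$ can be pulled through because $f^{\star(i)}$ does not depend on $r$.

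The only point needing care is the bookkeeping between local and global time. Task $i$ is selected at arbitrary global rounds, and other tasks are interleaved between those rounds. Still, its GP posterior and GP-UCB decisions depend only on its own local history. So the within-task run is exactly a GP-UCB run indexed by the local counter $s$, and Theorem~\ref{thm:gpucb-regret} applies verbatim in local time.

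Uniformity over all $s\ge 1$ is inherited from the confidence event of Theorem~\ref{thm:gp-ucb}, since that event holds simultaneously for all $s$. Uniformity over tasks follows from the union bound already carried out in Lemma~\ref{lem:uniform_gp_confidence_tasks}. I expect no real obstacle beyond this local-time identification and the monotonicity conventions on $\beta_s^{(i)}$ and $\gamma_s^{(i)}$ used inside Theorem~\ref{thm:gpucb-regret}.
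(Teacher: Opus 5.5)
Your proposal is correct and follows essentially the same route as the paper's proof: bound the best-so-far gap by the average instantaneous regret $R_s/s$, then apply the cumulative regret bound of Theorem~\ref{thm:gpucb-regret} at horizon $s$. Your added remarks on local-time indexing and on uniformity over $s$ and over tasks make explicit what the paper's short proof leaves implicit.
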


\begin{proof}
Fix a task $i$ and suppress superscripts.
Let $x_1,\dots,x_s$ be the queried points and let $R_s:=\sum_{t=1}^s (f^\star-f(x_t))$ be the cumulative regret.
Then
\[
\max_{1\le t\le s} f(x_t)\ \ge\ \frac{1}{s}\sum_{t=1}^s f(x_t)
= f^\star - \frac{R_s}{s},
\]
so $f^\star-\max_{t\le s}f(x_t)\le R_s/s$.
Apply Theorem~\ref{thm:gpucb-regret} at horizon $s$ to get
$R_s\le 2\sqrt{C_\lambda s \beta_s\gamma_s}$, hence
$f^\star-\max_{t\le s}f(x_t)\le 2\sqrt{C_\lambda\beta_s\gamma_s/s}=\epsf_s$.
\end{proof}

\subsection{Envelope construction}
Utility is queried at local times $s\in\mathcal S_u\subseteq\mathbb N$.
The dense choice in the main text corresponds to $\mathcal S_u=\mathbb N$ (so the most recent checkpoint is always $\nu_s^{(i)}=s$),
while the optional sparse schedule uses $\mathcal S_u=\{1,2,4,\dots\}$ (Appendix~\ref{app:sparse_utility}).

For each task $i$ and local time $s$, let
\[
\nu_s^{(i)} := \max\{r\le s:\ r\in\mathcal S_u\}.
\]
be the most recent checkpoint at or before $s$ (if no checkpoint has occurred yet, treat $\nu_s^{(i)}$ as undefined).

At a checkpoint $\nu=\nu_s^{(i)}$, the evaluator returns a valid interval
$u^{(i)}(\overline{y}_\nu^{(i)})\in[\LCBu_\nu^{(i)},\UCBu_\nu^{(i)}]$ on the event of Lemma~\ref{lem:hoeffding}.
For our main instantiation (pairwise BTL with moving anchors), $(\LCBu_\nu^{(i)},\UCBu_\nu^{(i)})$ is computed by the transported interval
construction.

Propagate them to the current incumbent using monotonicity + Lipschitzness (Assumption~\ref{ass:mono_lip}).
Since incumbents are nondecreasing ($\overline{y}_s^{(i)}\ge \overline{y}_\nu^{(i)}$), monotonicity implies
$u^{(i)}(\overline{y}_s^{(i)})\ge u^{(i)}(\overline{y}_\nu^{(i)})$, so the lower bound should never decrease between checkpoints:
\begin{align}
\UCBu_s^{(i)}(L)
&:= \min\{1,\ \UCBu_\nu^{(i)} + L\bigl(\overline{y}_s^{(i)}-\overline{y}_\nu^{(i)}\bigr)\}, \label{eq:lipschitz_ucb}\\
\LCBu_s^{(i)}(L)
&:= \LCBu_\nu^{(i)}. \label{eq:lipschitz_lcb}
\end{align}
(If no checkpoint exists yet, set $\UCBu_s^{(i)}(L)\equiv 1$ and $\LCBu_s^{(i)}(L)\equiv 0$.)

Now define the value envelopes:
\begin{align}
\LCBV^{(i)}_s(L)
&:= \LCBu^{(i)}_s(L), \label{eq:lcbv}\\
\UCBV^{(i)}_s(L)
&:= \min\{1,\ \UCBu^{(i)}_s(L) + L\epsf_s\}, \label{eq:ucbv}\\
w^{(i)}_s(L)
&:= \UCBV^{(i)}_s(L)-\LCBV^{(i)}_s(L). \label{eq:width}
\end{align}

\begin{lemma}[Validity of the envelopes]
\label{lem:envelope_valid}
Work on the joint event of Lemma~\ref{lem:hoeffding} and Theorem~\ref{thm:gp-ucb}.
Fix a task $i$ and $L\ge L^{(i)}$. Then for all $s\ge 1$,
\[
\LCBV^{(i)}_s(L)\ \le\ U^{(i)}\ \le\ \UCBV^{(i)}_s(L).
\]
\end{lemma}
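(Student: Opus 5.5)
We prove the two inequalities separately, working throughout on the joint event of Lemma~\ref{lem:hoeffding} and Theorem~\ref{thm:gp-ucb}. We use Lemma~\ref{lem:uniform_gp_confidence_tasks} so that the GP-UCB band holds for task $i$ at its confidence level $\delta_f^{(i)}$. By Lemma~\ref{lem:simple_regret}, on this event $f^{\star(i)}-\overline{y}_s^{(i)}\le\epsf_s$ for all $s\ge1$. We also record two basic facts. First, incumbents are nondecreasing in $s$, since they are running maxima of noiseless values. Second, $u^{(i)}$ maps into $[0,1]$, so $U^{(i)}\in[0,1]$ and clipping at $0$ or $1$ can never invalidate a bound.

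\textbf{Lower bound.} If no checkpoint has occurred, then $\LCBV^{(i)}_s=0\le U^{(i)}$ trivially. Otherwise let $\nu=\nu_s^{(i)}$. By Lemma~\ref{lem:hoeffding}, $\LCBu^{(i)}_\nu\le u^{(i)}(\overline{y}^{(i)}_\nu)$. Since $\overline{y}^{(i)}_\nu\le f^{\star(i)}$ (an incumbent cannot exceed the maximum) and $u^{(i)}$ is monotone by Definition~\ref{ass:mono_lip}, we get $u^{(i)}(\overline{y}^{(i)}_\nu)\le u^{(i)}(f^{\star(i)})=U^{(i)}$. Hence $\LCBV^{(i)}_s=\LCBu^{(i)}_\nu\le U^{(i)}$.

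\textbf{Upper bound.} Again the case with no checkpoint is trivial, since then $\UCBu\equiv1$ and so $\UCBV^{(i)}_s=1\ge U^{(i)}$. Otherwise we chain three Lipschitz steps using $L\ge L^{(i)}$. The first step propagates the checkpoint bound to the current incumbent. Because $\overline{y}^{(i)}_s\ge\overline{y}^{(i)}_\nu$,
\[
u^{(i)}(\overline{y}^{(i)}_s)\le u^{(i)}(\overline{y}^{(i)}_\nu)+L\bigl(\overline{y}^{(i)}_s-\overline{y}^{(i)}_\nu\bigr)\le \UCBu^{(i)}_\nu+L\bigl(\overline{y}^{(i)}_s-\overline{y}^{(i)}_\nu\bigr).
\]
Combined with $u^{(i)}\le1$, this gives $u^{(i)}(\overline{y}^{(i)}_s)\le\UCBu^{(i)}_s(L)$ as defined in \eqref{eq:lipschitz_ucb}. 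The second step passes from the incumbent to the optimum. Since $f^{\star(i)}\ge\overline{y}^{(i)}_s$,
\[
U^{(i)}-u^{(i)}(\overline{y}^{(i)}_s)\le L\bigl(f^{\star(i)}-\overline{y}^{(i)}_s\bigr)\le L\epsf_s,
\]
where the last inequality is Lemma~\ref{lem:simple_regret}. Adding the two bounds gives $U^{(i)}\le\UCBu^{(i)}_s(L)+L\epsf_s$. Combined with $U^{(i)}\le1$, this yields $U^{(i)}\le\UCBV^{(i)}_s(L)$ as defined in \eqref{eq:ucbv}.

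The only delicate point is consistency between the two conventions for the incumbent. The footnote defines incumbents through noiseless values, and Lemma~\ref{lem:simple_regret} controls exactly that quantity. We therefore need the utility oracle in Lemma~\ref{lem:hoeffding} to be evaluated at the same noiseless incumbent $\overline{y}_\nu^{(i)}$. We adopt this analysis convention throughout. Given it, the argument is a direct chain of monotonicity and Lipschitz inequalities, with no further probabilistic input beyond the two confidence events.
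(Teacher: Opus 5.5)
Your proof is correct and follows the paper's argument: it brackets the checkpoint utility via Lemma~\ref{lem:hoeffding}, propagates it to the current incumbent by monotonicity and Lipschitzness, and lifts to $U^{(i)}$ using the GP-UCB gap from Lemma~\ref{lem:simple_regret}. The differences are minor refinements. You bound the lower side directly via $\overline{y}^{(i)}_\nu\le f^{\star(i)}$ rather than passing through $u^{(i)}(\overline{y}^{(i)}_s)$, and you treat the clipping at $1$ and the no-checkpoint case explicitly, which the paper leaves implicit.
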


\begin{proof}
Fix $i,s$ and let $\nu=\nu_s^{(i)}$.
On Lemma~\ref{lem:hoeffding}, we have $u^{(i)}(\overline{y}_\nu^{(i)})\in[\LCBu_\nu^{(i)},\UCBu_\nu^{(i)}]$.
Since incumbents are nondecreasing ($\overline{y}_s^{(i)}\ge \overline{y}_\nu^{(i)}$), monotonicity yields
\[
u^{(i)}(\overline{y}_s^{(i)})\ge u^{(i)}(\overline{y}_\nu^{(i)})\ge \LCBu_\nu^{(i)}=\LCBu_s^{(i)}(L).
\]
Moreover, by Lipschitzness,
\[
u^{(i)}(\overline{y}_s^{(i)})\le u^{(i)}(\overline{y}_\nu^{(i)}) + L\bigl(\overline{y}_s^{(i)}-\overline{y}_\nu^{(i)}\bigr)
\le \UCBu_\nu^{(i)} + L\bigl(\overline{y}_s^{(i)}-\overline{y}_\nu^{(i)}\bigr)
= \UCBu_s^{(i)}(L).
\]
Thus $\LCBV_s^{(i)}(L)=\LCBu_s^{(i)}(L)\le u^{(i)}(\overline{y}_s^{(i)})$.
Also $u^{(i)}(\overline{y}_s^{(i)})\le U^{(i)}$ by monotonicity and $\overline{y}_s^{(i)}\le f^{\star(i)}$, giving the lower bound on $U^{(i)}$.

For the upper bound, Lipschitzness gives
\[
U^{(i)}=u^{(i)}(f^{\star(i)})\le u^{(i)}(\overline{y}_s^{(i)}) + L\bigl(f^{\star(i)}-\overline{y}_s^{(i)}\bigr).
\]
On Lemma~\ref{lem:simple_regret}, $f^{\star(i)}-\overline{y}_s^{(i)}\le \epsf_s$, and on Lemma~\ref{lem:hoeffding},
$u^{(i)}(\overline{y}_s^{(i)})\le \UCBu_s^{(i)}(L)$, hence
$U^{(i)}\le \UCBu_s^{(i)}(L)+L\epsf_s=\UCBV_s^{(i)}(L)$.
\end{proof}

\subsection{Proof of Theorem~\ref{thm:value_envelopes_summary}}\label{proof:thm:value_envelopes_summary}
\begin{proof}[Proof of Theorem~\ref{thm:value_envelopes_summary}]
The envelope validity is exactly Lemma~\ref{lem:envelope_valid}.

For the width bound, fix task $i$ and time $s$ and let $\nu=\nu_s^{(i)}$ be the most recent checkpoint.
By construction (using \eqref{eq:lipschitz_ucb}--\eqref{eq:lipschitz_lcb} and $\overline{y}_s^{(i)}\ge \overline{y}_\nu^{(i)}$),
\[
w^{(i)}_s(L)
\le
\bigl(\UCBu_\nu^{(i)}-\LCBu_\nu^{(i)}\bigr)
+ L\bigl(\overline{y}_s^{(i)}-\overline{y}_\nu^{(i)}\bigr)
+ L\epsf_s.
\]
At the checkpoint, we ensure the utility interval width satisfies
$\UCBu_\nu^{(i)}-\LCBu_\nu^{(i)}\le c_u\,L\,\epsf_\nu$
(e.g., via Lemma~\ref{lem:committee_size_epsf_beta_gamma}).
Moreover, since $\overline{y}_s^{(i)}\le f^{\star(i)}$,
\[
\overline{y}_s^{(i)}-\overline{y}_\nu^{(i)} \le f^{\star(i)}-\overline{y}_\nu^{(i)} \le \epsf_\nu
\]
by Lemma~\ref{lem:simple_regret}.
Thus
\[
w^{(i)}_s(L)\le (c_u+1)L\epsf_\nu + L\epsf_s.
\]
Finally, for the geometric sparse schedule $\mathcal S_u=\{1,2,4,\dots\}$ we have $\nu\ge s/2$,
and for the dense schedule $\mathcal S_u=\mathbb N$ we have $\nu=s$.
In either case, $\nu\ge s/2$, so using $\epsf_s=2\sqrt{C_\lambda \beta_s^{(i)}\gamma_s^{(i)}/s}$ and monotonicity of $\beta_s^{(i)}\gamma_s^{(i)}$,
\[
\epsf_\nu \le 2\sqrt{\frac{C_\lambda \beta_s^{(i)}\gamma_s^{(i)}}{\nu}}
\le 2\sqrt{\frac{C_\lambda \beta_s^{(i)}\gamma_s^{(i)}}{s/2}}
= \sqrt{2}\,\epsf_s.
\]
Therefore
\[
w^{(i)}_s(L)\le \Bigl(\sqrt2(c_u+1)+1\Bigr)L\,\epsf_s.
\]
This is the claimed bound with $C_v:=\sqrt2(c_u+1)+1$.
\end{proof}

\section{Regret bounds}\label{app:lb_task_proofs}
\subsection{Auxiliary inequalities}\label{app:aux_ineq_main_regret}

\begin{lemma}[Square-root harmonic sum]\label{lem:sqrt_harmonic_sum}
For any integer $n\ge 1$,
\[
\sum_{s=1}^{n}\frac{1}{\sqrt{s}}
\ \le\
2\sqrt{n}.
\]
\end{lemma}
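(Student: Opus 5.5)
The plan is the standard integral comparison: since $x\mapsto x^{-1/2}$ is positive and decreasing on $(0,\infty)$, each summand is dominated by the integral over the preceding unit interval,
\[
\frac{1}{\sqrt{s}}\ \le\ \int_{s-1}^{s}\frac{dx}{\sqrt{x}}\qquad (s\ge 1).
\]
For $s=1$ the integral is improper but convergent and equals $2$, so the bound $1\le 2$ still holds. Summing over $s=1,\dots,n$ telescopes the intervals into $\int_0^n x^{-1/2}\,dx=2\sqrt n$, which is the claim.

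If I want to avoid the improper integral at $0$, I can use an algebraic version of the same idea. The identity
\[
\sqrt{s}-\sqrt{s-1}=\frac{1}{\sqrt{s}+\sqrt{s-1}}
\]
together with $\sqrt{s-1}\le\sqrt s$ gives $\frac{1}{\sqrt s}\le 2(\sqrt s-\sqrt{s-1})$ for every $s\ge 1$. Summing over $s=1,\dots,n$ telescopes to $2(\sqrt n-\sqrt 0)=2\sqrt n$.

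I expect no real obstacle; the only point needing care is the $s=1$ term, and the telescoping form handles it uniformly without a separate case. An induction on $n$ would also work, since the step reduces to the same inequality $\frac{1}{\sqrt{n+1}}\le 2(\sqrt{n+1}-\sqrt n)$.
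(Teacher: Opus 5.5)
Your proposal is correct and uses essentially the same integral comparison as the paper. The only difference is that the paper handles $s=1$ separately: it compares only the terms $s\ge 2$ against $\int_1^n x^{-1/2}\,dx$, which yields the slightly sharper bound $2\sqrt{n}-1$, whereas you absorb $s=1$ through the convergent improper integral or your telescoping identity $\frac{1}{\sqrt s}\le 2(\sqrt s-\sqrt{s-1})$.
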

\begin{proof}
For $s\ge 2$, since $x\mapsto x^{-1/2}$ is decreasing we have
$\int_{s-1}^{s} x^{-1/2}\,dx \ge s^{-1/2}$.
Summing from $s=2$ to $n$ yields
\[
\sum_{s=2}^{n}\frac{1}{\sqrt{s}}
\ \le\
\int_{1}^{n} x^{-1/2}\,dx
\ =\
2(\sqrt{n}-1).
\]
Adding the $s=1$ term gives
$\sum_{s=1}^{n} s^{-1/2} \le 1 + 2(\sqrt{n}-1)=2\sqrt{n}-1\le 2\sqrt{n}$.
\end{proof}

\subsubsection{task-UCB cross-task regret bound (UCB-style)}
\label{app:meta_ucb_playcount}

\begin{lemma}[Standard UCB step: gap is controlled by the selected task's envelope width]
\label{lem:ucb_playcount}
Work on the envelope confidence event of Theorem~\ref{thm:value_envelopes_summary}, so that for all instantiated tasks $j$
and all local times $s$,
\[
\LCBV^{(j)}_{s}(\overline L)\ \le\ U^{(j)}\ \le\ \UCBV^{(j)}_{s}(\overline L).
\]
Fix a global round $t$ and define the best instantiated value at that time
\[
U_t^{\max} \;:=\; \max_{j\in\mathcal I_t} U^{(j)}.
\]
Let $i_t$ be the task selected by task-UCB at round $t$ and let $s_{t-1}^{(i_t)}$ be its local play-count
\emph{before} selection.
Then
\[
U_t^{\max}-U^{(i_t)}
\ \le\
w^{(i_t)}_{s_{t-1}^{(i_t)}}(\overline L)
:=\UCBV^{(i_t)}_{s_{t-1}^{(i_t)}}(\overline L)-\LCBV^{(i_t)}_{s_{t-1}^{(i_t)}}(\overline L).
\]
Consequently, writing $N_i:=s_T^{(i)}$,
\[
\sum_{t=1}^{T}\bigl(U_t^{\max}-U^{(i_t)}\bigr)
\ \le\
\sum_{i\in\mathcal I_T}\ \sum_{s=1}^{N_i} w^{(i)}_{s-1}(\overline L).
\]
\end{lemma}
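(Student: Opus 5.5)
The plan is the standard optimism argument, applied one round at a time and then regrouped by task. Fix $t$ and let $j^\star_t\in\argmax_{j\in\mathcal I_t}U^{(j)}$, so that $U_t^{\max}=U^{(j^\star_t)}$. The upper envelope of $j^\star_t$ is valid on the confidence event of Theorem~\ref{thm:value_envelopes_summary}. Hence
\[
U_t^{\max}\le \UCBV^{(j^\star_t)}_{s_{t-1}^{(j^\star_t)}}(\overline L).
\]
The task-UCB rule (Line~\ref{alg_line:meta_ucb_select}) picks $i_t$ to maximize $\UCBV$ over $\mathcal I_t$, using the envelopes available before round $t$. This gives
\[
\UCBV^{(j^\star_t)}_{s_{t-1}^{(j^\star_t)}}(\overline L)\le \UCBV^{(i_t)}_{s_{t-1}^{(i_t)}}(\overline L).
\]
Validity of the lower envelope for $i_t$ gives $U^{(i_t)}\ge \LCBV^{(i_t)}_{s_{t-1}^{(i_t)}}(\overline L)$. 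Subtracting, we get $U_t^{\max}-U^{(i_t)}\le \UCBV^{(i_t)}_{s_{t-1}^{(i_t)}}-\LCBV^{(i_t)}_{s_{t-1}^{(i_t)}}=w^{(i_t)}_{s_{t-1}^{(i_t)}}(\overline L)$, which is the first claim.

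A task may be freshly instantiated, with $s_{t-1}^{(i_t)}=0$. In that case the initialization $[\LCBV_0,\UCBV_0]=[0,1]$ is trivially valid, since $U^{(i)}\in[0,1]$, and the same chain goes through with $w_0=1$. I will state explicitly that the envelope event is taken to include $s=0$ in this trivial way.

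For the summation, I reindex rounds by the task selected. Each round $t$ with $i_t=i$ corresponds to exactly one local index $s=s_t^{(i)}\in\{1,\dots,N_i\}$, and at that round $s_{t-1}^{(i)}=s-1$. This map is a bijection between $\{t\le T: i_t=i\}$ and $\{1,\dots,N_i\}$, because the local counter of $i$ increases by one exactly when $i$ is selected. Summing the per-round bound over $t$ and grouping by $i\in\mathcal I_T$ then yields $\sum_{i\in\mathcal I_T}\sum_{s=1}^{N_i} w^{(i)}_{s-1}(\overline L)$. Tasks never selected contribute zero, so the sum can be taken over all of $\mathcal I_T$.

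There is no substantial obstacle. The one point requiring care is a timing convention: the envelopes used in the argmax at round $t$ must be the pre-update ones indexed by $s_{t-1}$. These same envelopes must also be the ones certified valid on the event. That holds because Lemma~\ref{lem:envelope_valid} gives validity for every local time $s$ simultaneously, and the quantity $U^{(j)}$ does not depend on $t$.
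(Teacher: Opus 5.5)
Your proof is correct and follows the paper's argument: the upper envelope dominates $U^{(j)}$ for every instantiated $j$, so the task-UCB maximizer's $\UCBV$ is at least $U_t^{\max}$; subtracting the lower envelope of $i_t$ gives the width bound, and the rounds are then regrouped by task. Your explicit handling of the $s=0$ initialization $[0,1]$ and of the bijection between rounds and local indices fills in details the paper leaves implicit.
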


\begin{proof}
Fix $t$.
By the task-UCB selection rule,
\[
\UCBV^{(i_t)}_{s_{t-1}^{(i_t)}}(\overline L)
\ =\
\max_{j\in\mathcal I_t}\UCBV^{(j)}_{s_{t-1}^{(j)}}(\overline L).
\]
On the confidence event, $\UCBV^{(j)}_{s_{t-1}^{(j)}}(\overline L)\ge U^{(j)}$ for all instantiated $j\in\mathcal I_t$,
hence the maximum is at least $U_t^{\max}$:
\[
\UCBV^{(i_t)}_{s_{t-1}^{(i_t)}}(\overline L)\ \ge\ U_t^{\max}.
\]
Also on the same event, $U^{(i_t)}\ge \LCBV^{(i_t)}_{s_{t-1}^{(i_t)}}(\overline L)$.
Subtracting yields
\[
U_t^{\max}-U^{(i_t)}
\ \le\
\UCBV^{(i_t)}_{s_{t-1}^{(i_t)}}(\overline L)-\LCBV^{(i_t)}_{s_{t-1}^{(i_t)}}(\overline L)
\ =\ w^{(i_t)}_{s_{t-1}^{(i_t)}}(\overline L).
\]
Summing over $t=1,\dots,T$ and regrouping by tasks gives the final inequality.
\end{proof}

\subsubsection{Generator}
\subsection{Proof of Theorem~\ref{thm:main_regret_adaptive}}
\label{app:proof_main_regret_adaptive}

\begin{proof}
Fix horizon $T$.
Let $\mathcal I_T$ be the set of tasks instantiated by time $T$, with $N_T:=|\mathcal I_T|$.
For each $i\in\mathcal I_T$, define the number of objective evaluations allocated to task $i$ up to time $T$ as
\[
N_i\ :=\ s_T^{(i)}\ =\ \sum_{t=1}^{T}\mathbf 1\{i_t=i\},
\qquad\text{so that}\qquad
\sum_{i\in\mathcal I_T} N_i = T.
\]
For each global round $t$, define the best instantiated long-run value
\[
U_t^{\max} \;:=\; \max_{i\in\mathcal I_t} U^{(i)}.
\]

\paragraph{Events.}
Let $\mathcal E_{\mathrm{env}}$ denote the confidence event of Theorem~\ref{thm:value_envelopes_summary};
it holds with probability at least $1-(\delta_f+\delta_u)$ and implies:
(i) the GP-UCB optimization gap bound $f^{\star(i)}-\overline{y}_s^{(i)}\le \epsf_s$ for all instantiated tasks and all $s\le T$, and
(ii) validity and the width bound for the value envelopes.
Let $\mathcal E_{\mathrm{gen}}$ denote the batch-success event of Lemma~\ref{lem:batch_success_adaptive};
it holds with probability at least $1-\delta_-$.
Work on $\mathcal E_{\mathrm{env}}\cap \mathcal E_{\mathrm{gen}}$.

\paragraph{Step 1: regret decomposition.}
For each round $t$, add and subtract $U_t^{\max}$ and $U^{(i_t)}$:
\begin{align}
\label{eq:regret_decomp_appendix_new}
U^\star - u^{(i_t)}\!\bigl(\overline{y}^{(i_t)}_t\bigr)
&=
\underbrace{(U^\star - U_t^{\max})}_{\text{generation}}
+
\underbrace{(U_t^{\max} - U^{(i_t)})}_{\text{selection}}
+
\underbrace{\Bigl(U^{(i_t)} - u^{(i_t)}(\overline{y}^{(i_t)}_t)\Bigr)}_{\text{within-task}}.
\end{align}
Summing over $t\in[T]$ yields
\begin{equation}
\label{eq:regret_decomp_appendix_sum_new}
\Regval(T)
\ \le\
\sum_{t=1}^{T}(U^\star - U_t^{\max})
+
\sum_{t=1}^{T}\bigl(U_t^{\max} - U^{(i_t)}\bigr)
+
\sum_{t=1}^{T}\Bigl(U^{(i_t)} - u^{(i_t)}(\overline{y}^{(i_t)}_t)\Bigr).
\end{equation}

\paragraph{Step 2: bound the generation term $\sum_{t}(U^\star-U_t^{\max})$.}
On $\mathcal E_{\mathrm{gen}}$, Lemma~\ref{lem:Vtmax_gap} gives
\[
U^\star - U_t^{\max} \ \le\ \epsU_{m_t\wedge m^\star}\qquad\forall t.
\]
Let $t_m$ be the first global round at which the current ladder level becomes $m$
(with the convention $t_0=1$), and note $m_t$ is nondecreasing.
Then
\[
\sum_{t=1}^{T}(U^\star - U_t^{\max})
\ \le\
\sum_{m=0}^{m^\star-1} (t_{m+1}-t_m)\,\epsU_m
\;+\;
(T-t_{m^\star})\,\epsU_{m^\star}
\ \le\
\sum_{m=0}^{m^\star-1} (t_{m+1}-t_m)\,\epsU_m
\;+\;
T\,\epsU_{m^\star}.
\]
By Lemma~\ref{lem:max_depth_reachability}, once level $m$ is active it advances within at most
$\tau_m:=K_m s_m$ rounds, where $K_m:=J(m+1)$ and
\[
s_m := \min\Bigl\{s\in\mathbb N:\ C_v\overline L\,\epsfT_s\le c_g\epsU_m\Bigr\}.
\]
Hence $t_{m+1}-t_m\le \tau_m$ for all $m<m^\star$, so
\begin{equation}
\label{eq:generation_term_bound_new}
\sum_{t=1}^{T}(U^\star - U_t^{\max})
\ \le\
\sum_{m=0}^{m^\star-1} \tau_m\,\epsU_m
\;+\;
T\,\epsU_{m^\star}.
\end{equation}
Moreover, using $\epsfT_s\le 2\sqrt{C_\lambda\beta_T\gamma_T/s}$ and $\epsU_m=\epsU_0 2^{-m}$ gives
$s_m \le \lceil A_T4^m\rceil$ with
$A_T=\left(\frac{2C_v\,\overline L\sqrt{C_\lambda\beta_T\gamma_T}}{c_g\epsU_0}\right)^2$,
as in the proof of Lemma~\ref{lem:max_depth_reachability}.
With $m^\star\le \overline m_T$ and $K_m\le N_T$, 
For the pre-$m^\star$ sum, use Cauchy--Schwarz:
\[
\sum_{m=0}^{m^\star-1}\tau_m\epsU_m
\le
\left(\sum_{m=0}^{m^\star-1}\tau_m\right)^{1/2}
\left(\sum_{m=0}^{m^\star-1}\tau_m(\epsU_m)^2\right)^{1/2}.
\]
The first factor is at most $\sqrt{T}$ by Lemma~\ref{lem:max_depth_reachability}.  For the second,
$\tau_m=K_m s_m$, $s_m\le A_T 4^m +1$, and ${\epsU_m}^2={\epsU_0}^2 4^{-m}$ imply
\[
\sum_{m=0}^{m^\star-1}\tau_m(\epsU_m)^2
\le
\sum_{m=0}^{m^\star-1}K_m(A_T4^m+1){\epsU_0}^2 4^{-m}
\le
\tilde{\mathcal O}\!\left(N_T A_T{\epsU_0}^2\right),
\]
because $K_m\le N_T$ and $m^\star\le \overline m_T=\tilde{\mathcal O}(\log T)$.
Therefore
\[
\sum_{m=0}^{m^\star-1}\tau_m\epsU_m
\le
\tilde{\mathcal O}\!\Bigl(\overline L\sqrt{C_\lambda\Psi_T}\sqrt{N_TT}\Bigr).
\]

\paragraph{Step 3: bound the selection term $\sum_t(U_t^{\max}-U^{(i_t)})$.}
By Lemma~\ref{lem:ucb_playcount},
\[
\sum_{t=1}^{T}\bigl(U_t^{\max}-U^{(i_t)}\bigr)
\ \le\
\sum_{i\in\mathcal I_T}\ \sum_{s=1}^{N_i} w^{(i)}_{s-1}(\overline L).
\]
Using the envelope width bound from Theorem~\ref{thm:value_envelopes_summary}
(for $s\ge 1$, and absorbing $w^{(i)}_0\le 1$ into constants), we have
$w^{(i)}_{s-1}(\overline L)\le C_v\,\overline L\,\epsf_{s-1}$, so
\[
\sum_{t=1}^{T}\bigl(U_t^{\max}-U^{(i_t)}\bigr)
\ \le\
\tilde{\mathcal O}\!\Big(\overline L\sum_{i\in\mathcal I_T}\sum_{s=1}^{N_i}\epsf_s\Big).
\]
Since $\epsf_s\le 2\sqrt{C_\lambda\beta_T\gamma_T/s}$ for all $s\le T$ and $i\in\mathcal I_T$,
Lemma~\ref{lem:sqrt_harmonic_sum} yields
$\sum_{s=1}^{N_i}\epsf_s\le 4\sqrt{C_\lambda\beta_T\gamma_T}\,\sqrt{N_i}$, hence
\begin{equation}
\label{eq:selection_term_bound_new}
\sum_{t=1}^{T}\bigl(U_t^{\max}-U^{(i_t)}\bigr)
\ \le\
\tilde{\mathcal O}\!\Bigl(\overline L\sqrt{C_\lambda\beta_T\gamma_T}\,\sqrt{N_T T}\Bigr).
\end{equation}

\paragraph{Step 4: bound the within-task term.}
For any task $i$ and local time $s$,
Lipschitzness and monotonicity of $u^{(i)}$ give
\[
U^{(i)} - u^{(i)}(\overline{y}_s^{(i)})
\le
L^\star\bigl(f^{\star(i)}-\overline{y}_s^{(i)}\bigr)
\le
L^\star\,\epsf_s.
\]
Therefore
\[
\sum_{t=1}^{T}\Bigl(U^{(i_t)} - u^{(i_t)}(\overline{y}^{(i_t)}_t)\Bigr)
=
\sum_{i\in\mathcal I_T}\sum_{s=1}^{N_i}\Bigl(U^{(i)}-u^{(i)}(\overline{y}^{(i)}_{s})\Bigr)
\le
L^\star\sum_{i\in\mathcal I_T}\sum_{s=1}^{N_i}\epsf_s.
\]
Using the same $\epsf_s\le 2\sqrt{C_\lambda\beta_T\gamma_T/s}$ and Lemma~\ref{lem:sqrt_harmonic_sum} argument,
\begin{equation}
\label{eq:within_task_bound_appendix_new}
\sum_{t=1}^{T}\Bigl(U^{(i_t)} - u^{(i_t)}(\overline{y}^{(i_t)}_t)\Bigr)
\ \le\
\tilde{\mathcal O}\!\Bigl(L^\star\sqrt{C_\lambda\beta_T\gamma_T}\,\sqrt{N_T T}\Bigr).
\end{equation}

\paragraph{Step 5: combine.}
Plugging \eqref{eq:generation_term_bound_new}, \eqref{eq:selection_term_bound_new},
and \eqref{eq:within_task_bound_appendix_new} into \eqref{eq:regret_decomp_appendix_sum_new} yields
\[
\Regval(T)
\ \le\
T\,\epsU_{m^\star}
+
\tilde{\mathcal O}\!\Bigl((\overline L+L^\star)\sqrt{C_\lambda\beta_T\gamma_T}\,\sqrt{N_T T}\Bigr).
\]
Since $\overline L\ge L^\star$, this simplifies to
\begin{equation}
\label{eq:pre_oracle_scale_bound_new}
\Regval(T)
\ \le\
T\,\epsU_{m^\star}
+
\tilde{\mathcal O}\!\Bigl(\overline L\sqrt{C_\lambda\beta_T\gamma_T}\,\sqrt{N_T T}\Bigr).
\end{equation}

\paragraph{Step 6: relate to $R^\star_T$.}
Recall $R^\star_T:=L^\star\sum_{s=1}^{T}\epsfT_s$ and $\epsfT_s$ is nonincreasing, so
$\sum_{s=1}^{T}\epsfT_s\ge T\epsfT_T$.
Condition~\ref{cond:bo_achievable} implies
\[
T\epsU_{m^\star}
\le
c_\epsilon\Lambda_T\,L^\star T\epsfT_T
\le
c_\epsilon\Lambda_T\,R^\star_T .
\]
Moreover, with $\Psi_T:=\max_{i\in\mathcal I_T}\beta_T^{(i)}\gamma_T^{(i)}$,
\[
\epsfT_T = 2\sqrt{C_\lambda\Psi_T/T},
\qquad
R^\star_T\ge L^\star T\epsfT_T
=2L^\star\sqrt{C_\lambda\Psi_TT}.
\]
Hence
\[
\overline L\sqrt{C_\lambda\Psi_T}\sqrt{N_TT}
\le
\frac{\overline L}{2L^\star}\sqrt{N_T}\,R^\star_T .
\]
Applying these displays to \eqref{eq:pre_oracle_scale_bound_new} gives
\[
\Regval(T)
\le
C\left[
\Lambda_T+
\Bigl(1+\sqrt{N_T}\Bigr)\frac{\overline L}{L^\star}
\right]R^\star_T .
\]
\end{proof}

\subsection{Oracle scale comparison}
\begin{lemma}[Bounding $\kappa_T$ for RBF and Mat\'ern kernels]
\label{lem:kappa_rbf_matern}
Fix a horizon $T$ and let $\mathcal I_T$ denote the set of instantiated tasks by time $T$.
For each task $i\in\mathcal I_T$, let $d^{(i)}$ be its input dimension, and let
$\epsf_s=2\sqrt{C_\lambda \beta_s^{(i)}\gamma_s^{(i)}/s}$ be the within-task
gap bound after $s$ evaluations, with $\gamma_s^{(i)}$ the maximum information gain and
$\beta_s^{(i)}$ the corresponding confidence parameter (e.g., IGP-UCB).
Define $\epsfT_s:=\max_{i\in\mathcal I_T}\epsf_s$ and
\[
\kappa_T
:=
\frac{\sum_{s=1}^T \epsfT_s}{\sum_{s=1}^T \epsilon_s^{f, (i^\star_{\epsU})}}.
\]

\smallskip\noindent
(a) (Squared-exponential / RBF) If each task uses a squared-exponential kernel, then
\[
\kappa_T=\tilde{\mathcal O}\big((\log T)^{d_{\max}-d_\star}\big),
\quad
d_{\max}:=\max_{i\in\mathcal I_T}d^{(i)},\;\; d_\star:=d^{(i^\star_{\epsU})}.
\]

\smallskip\noindent
(b) (Mat\'ern-$\nu$) If each task uses a Mat\'ern kernel with smoothness $\nu_i>0$, then
\[
\kappa_T=\tilde{\mathcal O}\big(T^{\alpha_{\max}-\alpha_\star}\big),
\quad
\alpha_{\max}:=\max_{i\in\mathcal I_T}\alpha(d^{(i)},\nu_i),\;\;
\alpha_\star:=\alpha(d_\star,\nu_\star),
\]
where $\alpha(d,\nu):=\frac{d(d+1)}{2\nu+d(d+1)}$ and $\nu_\star:=\nu_{i^\star_{\epsU}}$.
\end{lemma}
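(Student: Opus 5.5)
We plan to bound numerator and denominator of $\kappa_T$ separately, using the known information-gain rates for each kernel family. Since $\epsilon^{f,(i)}_s = 2\sqrt{C_\lambda\beta^{(i)}_s\gamma^{(i)}_s/s}$ and (for IGP-UCB) $\beta^{(i)}_s = \Theta(\gamma^{(i)}_s + \log(1/\delta^{(i)}_f))$, we have $\epsilon^{f,(i)}_s \asymp \gamma^{(i)}_s/\sqrt{s}$ up to constants and logarithmic factors. The argument therefore reduces to comparing $\sum_{s\le T}\max_i \gamma^{(i)}_s/\sqrt{s}$ against $\sum_{s\le T}\gamma^{(i^\star_{\epsU})}_s/\sqrt{s}$. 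The per-task confidence level $\delta_f/(\pi^2 i^2)$ from Lemma~\ref{lem:uniform_gp_confidence_tasks} adds only $\log i \le \log N_T = \tilde{\mathcal O}(\log\log T)$, which is absorbed into $\tilde{\mathcal O}$.

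\textbf{Numerator.} We use the standard upper bounds on information gain: $\gamma_s = \mathcal O((\log s)^{d+1})$ for the RBF kernel on a $d$-dimensional compact domain, and $\gamma_s = \tilde{\mathcal O}(s^{\alpha(d,\nu)})$ for Matérn-$\nu$ with the $\alpha$ stated. These constants depend on the task only through $d^{(i)},\nu_i$ and the domain. Since $\mathcal I_T$ is finite, $\max_i \gamma^{(i)}_s \lesssim (\log s)^{d_{\max}+1}$ in the RBF case and $\lesssim s^{\alpha_{\max}}$ in the Matérn case. Lemma~\ref{lem:sqrt_harmonic_sum}, or a direct integral comparison for $s^{\alpha-1/2}$, then gives
\[
\sum_{s\le T}\overline{\epsilon}^f_s = \tilde{\mathcal O}\bigl(\sqrt{T}(\log T)^{d_{\max}+1}\bigr)
\quad\text{(RBF)},\qquad
\tilde{\mathcal O}\bigl(T^{1/2+\alpha_{\max}}\bigr)
\quad\text{(Matérn)}.
\]

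\textbf{Denominator.} This is the main obstacle. The ratio only yields the exponent difference if we have a matching \emph{lower} bound of the same order for the oracle task. Since $\gamma$ is defined as an upper bound on the maximal information gain, we will interpret $\epsilon^{f,(i)}_s$ as computed from the chosen rate function $\gamma^{(i)}_s$ itself. That function is nondecreasing and, by convention, of exact order $(\log s)^{d_\star+1}$ or $s^{\alpha_\star}$. With that choice, the denominator is bounded below by
\[
\sum_{s\le T}\frac{\gamma^{(i^\star_{\epsU})}_s}{\sqrt{s}} \;\ge\; \sum_{s=T/2}^{T}\frac{\gamma^{(i^\star_{\epsU})}_{T/2}}{\sqrt{T}} \;\gtrsim\; \sqrt{T}\,\gamma^{(i^\star_{\epsU})}_{T/2},
\]
which is $\Omega(\sqrt T(\log T)^{d_\star+1})$ or $\Omega(T^{1/2+\alpha_\star})$. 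Here $\beta_s \ge \gamma_{s-1}$ supplies the lower bound on $\beta$. If a genuine lower bound on the true information gain were wanted, we would instead cite known lower bounds for these kernels. We would state explicitly that the lemma is about the computable bounds $\epsilon^f_s$.

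\textbf{Conclusion.} Dividing the two estimates cancels $\sqrt T$ and the matching log or polynomial factors. This leaves $(\log T)^{d_{\max}-d_\star}$ in case (a) and $T^{\alpha_{\max}-\alpha_\star}$ in case (b), with the remaining polylogarithmic factors from $\beta$ and $\log N_T$ hidden in $\tilde{\mathcal O}$.
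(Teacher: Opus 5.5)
Your argument is correct and follows the same route as the paper: IGP-UCB gives $\beta_s^{(i)}\asymp\gamma_s^{(i)}$ up to $\log(1/\delta_f^{(i)})$ terms, you insert the RBF/Mat\'ern information-gain rates, and you sum against $s^{-1/2}$; the one improvement is that you explicitly lower-bound the denominator via $\beta_s\gtrsim\gamma_{s-1}$ and monotonicity, whereas the paper divides one $\tilde{\mathcal O}$ by another and so tacitly treats the rates as exact orders. Keep your rate-function reading of $\gamma_s^{(i)}$ and drop the fallback you mention: in (b) the exponent $\alpha(d,\nu)=\frac{d(d+1)}{2\nu+d(d+1)}$ strictly exceeds $d/(2\nu+d)$, and the true maximum information gain of a Mat\'ern kernel is $\tilde{\mathcal O}(s^{d/(2\nu+d)})$, so no lower bound of order $s^{\alpha_\star}$ on the actual information gain can hold.
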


\begin{proof}
We use the standard IGP-UCB choice of confidence parameter, for which (up to notation)
$\beta_s^{(i)}$ scales as a constant plus $\gamma_{s-1}^{(i)}$ and $\log(1/\delta_i)$.
Therefore $\beta_s^{(i)}=\tilde{\mathcal O}(\gamma_s^{(i)})$ under the normalization
assumptions, and hence
\[
\epsf_s
=
2\sqrt{\frac{C_\lambda\,\beta_s^{(i)}\,\gamma_s^{(i)}}{s}}
=
\tilde{\mathcal O}\!\left(\sqrt{\frac{(\gamma_s^{(i)})^2}{s}}\right)
=
\tilde{\mathcal O}\!\left(\frac{\gamma_s^{(i)}}{\sqrt{s}}\right).
\]

\paragraph{(a) RBF.}
For a squared-exponential kernel on a compact subset of $\mathbb R^{d^{(i)}}$ we have
$\gamma_s^{(i)}=\mathcal O((\log s)^{d^{(i)}+1})$.
Thus $\epsf_s=\tilde{\mathcal O}((\log s)^{d^{(i)}+1}/\sqrt{s})$ and hence
$\epsfT_s=\tilde{\mathcal O}((\log s)^{d_{\max}+1}/\sqrt{s})$.
Summing and comparing to the same bound for $i^\star_{\epsU}$ yields
\[
\kappa_T
\;=\;
\frac{\tilde{\mathcal O}\!\left(\sum_{s=1}^T (\log s)^{d_{\max}+1}s^{-1/2}\right)}
{\tilde{\mathcal O}\!\left(\sum_{s=1}^T (\log s)^{d_{\star}+1}s^{-1/2}\right)}
\;=\;
\tilde{\mathcal O}\big((\log T)^{d_{\max}-d_\star}\big),
\]
where we used $\sum_{s=1}^T s^{-1/2}(\log s)^p=\Theta(\sqrt{T}(\log T)^p)$.

\paragraph{(b) Mat\'ern.}
For a Mat\'ern-$\nu_i$ kernel on a compact subset of $\mathbb R^{d^{(i)}}$ we have
$\gamma_s^{(i)}=\mathcal O(s^{\alpha(d^{(i)},\nu_i)}\log s)$ with
$\alpha(d,\nu)=\frac{d(d+1)}{2\nu+d(d+1)}$.
Hence $\epsf_s=\tilde{\mathcal O}(s^{\alpha(d^{(i)},\nu_i)-1/2})$ and
$\epsfT_s=\tilde{\mathcal O}(s^{\alpha_{\max}-1/2})$.
Summing gives $\sum_{s=1}^T s^{\alpha-1/2}=\Theta(T^{\alpha+1/2})$ for $\alpha>-1/2$,
and therefore
\[
\kappa_T
=
\frac{\tilde{\mathcal O}(T^{\alpha_{\max}+1/2})}{\tilde{\mathcal O}(T^{\alpha_\star+1/2})}
=
\tilde{\mathcal O}\big(T^{\alpha_{\max}-\alpha_\star}\big).
\]
This proves both claims.
\end{proof}

\subsection{Sparse vote case}
\begin{corollary}[Explicit $\beta_T,\gamma_T$ form]
\label{cor:main_regret_beta_gamma}
Under the conditions of Theorem~\ref{thm:main_regret_adaptive}, recall
$\beta_T := \max_{i\in\mathcal I_T}\beta_T^{(i)}$ and
$\gamma_T := \max_{i\in\mathcal I_T}\gamma_T^{(i)}$.
Then for all $s\le T$,
\[
\epsfT_s
=\max_{i\in\mathcal I_T} 2\sqrt{C_\lambda\,\beta_s^{(i)}\gamma_s^{(i)}/s}
\ \le\
2\sqrt{C_\lambda\,\beta_T\,\gamma_T/s}.
\]
Consequently,
\[
R^\star_T
=L^\star\sum_{s=1}^T \epsfT_s
\ \le\
4\,L^\star\sqrt{C_\lambda\,\beta_T\,\gamma_T\,T},
\]
and the regret bound \eqref{eq:main_regret_adaptive} implies
\begin{equation}
\label{eq:main_regret_beta_gamma}
\Regval(T)
\ \le\
4C\,\Bigl(1+\sqrt{N_T}\Bigr)\,\overline{L}\,
\sqrt{C_\lambda\,\beta_T\,\gamma_T\,T}.
\end{equation}
\end{corollary}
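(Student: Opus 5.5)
The corollary has three parts: a pointwise bound on $\epsfT_s$, a summed bound on $R^\star_T$, and a substitution into \eqref{eq:main_regret_adaptive}. I will prove them in that order, as direct consequences of monotonicity and Lemma~\ref{lem:sqrt_harmonic_sum}.

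\emph{Step 1 (pointwise bound).} Fix $s\le T$ and $i\in\mathcal I_T$. The GP-UCB exploration parameter $\beta^{(i)}_s$ is nondecreasing in $s$, since it is built from $\gamma_{s-1}$ in Theorem~\ref{thm:gp-ucb}. The information-gain bound $\gamma^{(i)}_s$ is also nondecreasing in $s$. Hence $\beta_s^{(i)}\gamma_s^{(i)}\le \beta_T^{(i)}\gamma_T^{(i)}\le \beta_T\gamma_T$, where the last step uses the definition of $\beta_T,\gamma_T$ as maxima over $\mathcal I_T$. Taking the square root and then the maximum over $i$ gives $\epsfT_s\le 2\sqrt{C_\lambda\beta_T\gamma_T/s}$. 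This is the same inequality already used in Step~1 of the proof of Lemma~\ref{lem:max_depth_reachability}.

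\emph{Step 2 (bound on $R^\star_T$).} Sum the bound from Step~1 over $s=1,\dots,T$ and apply Lemma~\ref{lem:sqrt_harmonic_sum}, which gives $\sum_{s\le T}s^{-1/2}\le 2\sqrt T$. Multiplying by $L^\star$ yields $R^\star_T\le 4L^\star\sqrt{C_\lambda\beta_T\gamma_T T}$.

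\emph{Step 3 (substitution).} Plug this into \eqref{eq:main_regret_adaptive}. The term $(1+\sqrt{N_T})(\Lbar/L^\star)R^\star_T$ becomes $4(1+\sqrt{N_T})\Lbar\sqrt{C_\lambda\beta_T\gamma_T T}$, because the $L^\star$ factors cancel.

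The main obstacle is the leftover $\Lambda_T R^\star_T$ term, which the displayed bound \eqref{eq:main_regret_beta_gamma} does not show. I would handle it in one of two ways.

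\begin{itemize}
\item Absorb it using $\Lbar\ge L^\star$ and $1+\sqrt{N_T}\ge 1$. This gives $\Lambda_T R^\star_T\le \Lambda_T(1+\sqrt{N_T})(\Lbar/L^\star)R^\star_T$. The cost is a factor $\Lambda_T$, which must be absorbed into the constant $C$; that is only honest as a $\tilde{\mathcal O}$ statement.
\item Return to the proof of Theorem~\ref{thm:main_regret_adaptive}, Step~5, and bound the generation term $T\epsU_{m^\star}$ directly. Condition~\ref{cond:bo_achievable} gives $T\epsU_{m^\star}\le c_\epsilon\Lambda_T L^\star T\epsfT_T$, and by Step~1, $T\epsfT_T\le 2\sqrt{C_\lambda\beta_T\gamma_T T}$.
\end{itemize}

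With either route the logarithmic factor survives. I would therefore state that $C$ in \eqref{eq:main_regret_beta_gamma} is understood to absorb $\Lambda_T$, i.e.\ that the bound holds up to logarithmic factors, consistent with the $\tilde{\mathcal O}$ bookkeeping in the proof of Theorem~\ref{thm:main_regret_adaptive}. I would flag this explicitly rather than claim a $T$-independent constant.
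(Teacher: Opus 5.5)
Your argument is correct and follows the same route the paper implicitly relies on (the corollary has no separate proof in the paper). The steps are: the monotonicity bound $\epsfT_s\le 2\sqrt{C_\lambda\beta_T\gamma_T/s}$, already used in the proof of Lemma~\ref{lem:max_depth_reachability}; then Lemma~\ref{lem:sqrt_harmonic_sum} to get $R^\star_T\le 4L^\star\sqrt{C_\lambda\beta_T\gamma_T T}$; then substitution into \eqref{eq:main_regret_adaptive}. Your flag is also right: \eqref{eq:main_regret_beta_gamma} silently drops the generation term $C\Lambda_T R^\star_T$, so it holds only with $C$ replaced by something like $C(1+\Lambda_T)$, i.e.\ up to the logarithmic factors the paper already hides in its $\tilde{\mathcal O}$ bookkeeping.
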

If $K_s$ is controlled so that $\UCBu^{(i)}_{\nu}-\LCBu^{(i)}_{\nu}\le c_u\,\Lbar\,\epsf_\nu$
at every checkpoint $\nu\in\mathcal S_u$, then for all $s$,
\[
w_s^{(i)}(\Lbar)\le C_v\,\Lbar\,\epsf_s,
\]
where $C_v=c_u+1$ for the dense schedule $\mathcal S_u=\mathbb N$, and
$C_v:=\sqrt2(c_u+1)+1$ for $\mathcal S_u=\{1,2,4,\dots\}$ (or any valid constant from Appendix~\ref{app:sparse_utility}).

\section{Hyperparameters}\label{app:hyperparameters}
\subsection{Hyperparameter Adaptation by Balance-Eliminate}
\begin{figure}
    \centering
    \includegraphics[width=0.25
    \linewidth]{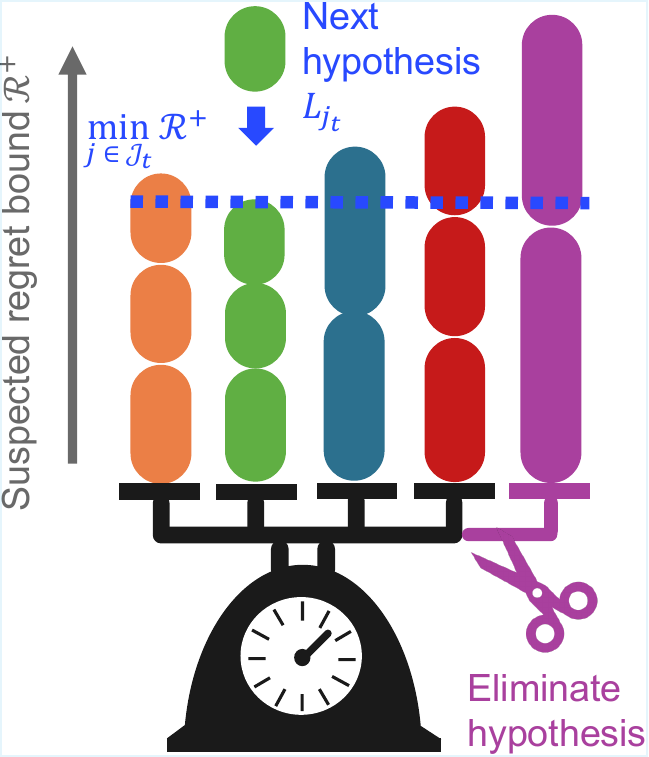}
    \caption{
    Alg.~\ref{alg:GSR}:
    We hold multiple candidate Lipschitz bounds $L_j$ as hypotheses. {\color{blue}Balance} decides which hypothesis to test by balancing suspected regret bound amongst hypotheses, and {\color{violet}eliminates} any hypothesis whose bound is falsified by observation.
    }
    \label{fig:GSR}
    \vspace{-0.5em}
\end{figure}
\begin{algorithm}[t!]
\caption{Evidence {\color{blue}Balancing}--Hypothesis {\color{violet}Elimination}}
\label{alg:GSR}
\begin{algorithmic}[1]
\STATE \textbf{Input:} base $L_0$, growth function $g_L(\cdot)$, failure budget $\delta_{\mathrm{be}}$.
\STATE \textbf{Hypotheses:} $L_j := q_L(j)=2^jL_0$.
\STATE \textbf{Init:} $j_{\max}\gets 0$; $\mathcal A_1\gets\{0\}$; and $\forall j\ge 0, S_0^j\gets\emptyset$.
\FOR{$t\in[T]$}
    \STATE {\color{blue}Choose $j_t\in \arg\min_{j\in\mathcal A_t}\Rsusp(n^j_{t-1}+1, L_j)$.}\label{alg_line:balance}
    \STATE $(i_t, \overline{y}_t, \tilde{u}_t) \gets$ RunOneStepAlg.\ref{alg:task-ucb}($\overline{L}=L_{j_t}$, state $\mathsf S^{j_t}$).
    \STATE For $j_t$, $S_t^{j_t}\gets S_{t-1}^{j_t}\cup\{t\}$ and $\forall j\neq j_t$, $S_t^j\gets S_{t-1}^j$.
    \STATE $\mathcal A_{t+1}\gets\mathcal A_t$.
    \IF{$\min_{j\in\mathcal A_t} n_t^j>0$}
        \STATE Set $M_t$, $\chi_t$, $\underline{U}_t(j)$, and
        $\overline{U}_t(j)$ for all $j\in\mathcal A_t$.
        \STATE {\color{violet}
        $\mathcal A_{t+1}\gets\Bigl\{j\in\mathcal A_t: $ Condition~\eqref{eq:eliminate} $\Bigr\}$.}
        \label{alg_line:eliminate}
    \ENDIF
    \WHILE{$L_{j_{\max}+1}\le L_0 g_L(t)$}
        \STATE $j_{\max}\gets j_{\max}+1$; $\mathcal A_{t+1}\gets \mathcal A_{t+1}\cup\{j_{\max}\}$.
    \ENDWHILE
\ENDFOR
\end{algorithmic}
\end{algorithm}
Theorem~\ref{thm:main_regret_adaptive} assumes we can supply a valid Lipschitz bound $\overline L \ge L^\star$.
In practice, however, $L^\star$ is unknown and tightly coupled to the (unknown) $\epsU$-optimal task.
We therefore perform \emph{hyperparameter adaptation} (not point estimation) via a regret balancing wrapper
(Alg.~\ref{alg:GSR}, Fig.\ref{fig:GSR}): we maintain multiple candidate bounds $L_j$ (hypotheses),
balance evidence across them using regret certificates, and eliminate candidates that are \emph{falsified} by observed utility $\tilde{u}_s$.

\paragraph{State used in the balancing wrapper.}
For the theoretical guarantee, each hypothesis $j$ maintains its own state $\mathsf S^j$ of
Algorithm~\ref{alg:task-ucb}, including its task set, envelopes, generation level, and GP states.
At round $t$, only the selected hypothesis $j_t$ is advanced by one physical evaluation.
This independent-state interpretation prevents an invalid hypothesis from triggering task-generation
events that contaminate the state of a valid hypothesis.  A shared-state implementation is a practical
heuristic and should be reported separately from the formal guarantee.

\textbf{Hypotheses.}
Fix $L_0>0$ and a growth function $g_L(T)\ge 1$ such that $L^\star\le L_0 g_L(T)$.
We consider the hypotheses generator $L_j:=q_L(j)=2^j L_0$ and maintain an active set $\mathcal A_t$ of hypotheses.
For each $j$, let
\[
S_t^j := \{\tau\le t : j_\tau=j\},
\qquad
n_t^j := |S_t^j|
\]
be the set on which hypothesis $j$ was selected so far.

\textbf{Evidence Balancer.}
When a hypothesis is \emph{valid} ($L_j\ge L^\star$), Theorem~\ref{thm:main_regret_adaptive} implies that running
Alg.~\ref{alg:task-ucb} for $n$ rounds with $\overline L=L_j$ incurs value-gap regret at most (Theorem~\ref{thm:main_regret_adaptive})
\begin{equation}
\label{eq:Rsusp_def_main}
\Rsusp(n,L_j)
:= C\,(1+\sqrt{N_T})\,L_j \sum_{s=1}^{n} \epsfT_s,
\end{equation}
For online computability, the certificate used by the master is evaluated with deterministic
upper bounds available at round $t$:
\[
N_t^+ := 1+J(\overline m_t+1),\qquad
{\epsf_s}^+(t) := 2\sqrt{C_\lambda\Psi_t^+/s},
\]
where $\Psi_t^+$ is the current monotone upper bound on
$\max_{i\in\mathcal I_t}\beta_s^{(i)}\gamma_s^{(i)}$ for $s\le t$.
Thus the causal certificate is
\[
\Rsusp_t(n,L_j)
:= C(1+\sqrt{N_t^+})L_j\sum_{s=1}^{n}{\epsf}_s^+(t).
\]
In practice, constants common across hypotheses are dropped in the argmin, but the elimination
test uses the same causal certificate.

Balancer selects which hypothesis to test next by Line~\ref{alg_line:balance}:
\[
j_t \in \arg\min_{j\in \mathcal A_t}\Rsusp(n_{t-1}^j+1, L_j).
\]
In practice, constant factors are irrelevant in minimization; we set $\Rsusp(n, L_j) = L_j \sum_{s=1}^n \epsfT_s$.
Intuitively, a hypothesis that has been played little has a small regret, so it gets selected until its regret
``catches up'' to the others. This prevents over-committing to any single hypothesis before the alternatives have accrued comparable evidence.

\textbf{Hypotheses Elimination.}
For each hypothesis $j$ with $n_t^j>0$, define the empirical mean utility
$
\widehat U_t(j)\ :=\ \frac{1}{n_t^j}\sum_{\tau\in S_t^j}\tilde{u}_\tau.
$
Similarly to Lemma~\ref{lem:utility_ci_hoeffding_main}, we form CIs
\begin{equation}
\label{eq:be_ci_defs_main}
\underline U_t(j)
:= \widehat U_t(j)-\sqrt{\sfrac{\chi_t}{n_t^j}},
\quad
\overline U_t(j)
:= \widehat U_t(j)+\sqrt{\sfrac{\chi_t}{n_t^j}},
\end{equation}
with
$
\chi_t \;:=\; 2 \sigma_u^2 \log\!\Bigl(\frac{4\pi^2 t^2 M_t}{\delta_{\mathrm{be}}}\Bigr),
$ and $M_t := 1+\max\mathcal A_t$\footnote{$\max\mathcal A_t$ denotes the largest \emph{index} in the active rung set (not its cardinality $|\mathcal A_t|$).
}.
We eliminate hypothesis $j$ if it does not satisfy:
\begin{equation}
\label{eq:eliminate}
\overline U_t(j) + \sfrac{\Rsusp(n_t^j,L_j)}{n_t^j}
\;\geq\;
\max_{k\in\mathcal A_t}\underline U_t(k).
\end{equation}
Intuitively, this rule compares the most optimistic bound on hypothesis $j$ (LHS) with the best pessimistic bound among all hypotheses (RHS). If even the weakest consistency condition—the upper bound exceeds the lower bound—is violated, then hypothesis $j$ cannot be valid and is eliminated. Valid hypotheses are never removed (Lemma~\ref{lem:no_false_elim_L}).

\begin{lemma}[Uniform concentration for per-rung utility averages]
\label{lem:be_concentration}
Fix $\delta_{\mathrm{be}}\in(0,1)$ and define $\chi_t$ as above.
Then with probability at least $1-\delta_{\mathrm{be}}$, simultaneously for all $t\le T$ and all rungs $j$ with $n_t^j>0$,
\[
\left|\widehat U_t(j)-\frac{1}{n_t^j}\sum_{\tau\in S_t^j}u_\tau\right|
\ \le\ 
\sqrt{\frac{\chi_t}{n_t^j}}.
\]
\end{lemma}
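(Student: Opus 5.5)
The plan is to prove Lemma~\ref{lem:be_concentration} with a self-normalized sub-Gaussian bound for each fixed pair $(t,j)$, and then take a union bound over times and rungs using the weights $\delta_{\mathrm{be}}/(4\pi^2 t^2 M_t)$ built into $\chi_t$.

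\textbf{Noise as a martingale difference sequence.} For every round $\tau$, the noise is $\xi^u_\tau=\tilde u_\tau-u_\tau$, where $u_\tau:=u^{(i_\tau)}(\overline y^{(i_\tau)}_\tau)$. This is conditionally $\sigma_u^2$-sub-Gaussian given the history $\mathcal F_{\tau-1}$ and the current incumbent. The selection $j_\tau$ is determined by the past, since the balancer in Line~\ref{alg_line:balance} uses only past certificates and counts. Hence $\mathbf 1\{j_\tau=j\}\xi^u_\tau$ is a martingale difference sequence with conditional sub-Gaussian parameter $\mathbf 1\{j_\tau=j\}\sigma_u^2$.

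\textbf{Fixed $(t,j)$ and count $n$.} Fix $t$, $j$, and a realized count $n$. Index the times at which rung $j$ is played, up to and including its $n$-th play, as $\tau_1<\dots<\tau_n$; these are stopping times. The sum $\sum_{k\le n}\xi^u_{\tau_k}$ is a sum of $n$ conditionally $\sigma_u^2$-sub-Gaussian martingale increments. The exponential supermartingale (Azuma--Hoeffding for sub-Gaussian increments) then gives
\[
\Pr\Bigl(\bigl|\tfrac1n\textstyle\sum_{k\le n}\xi^u_{\tau_k}\bigr|>\sqrt{\chi/n}\Bigr)\le 2\exp\bigl(-\chi/(2\sigma_u^2)\bigr).
\]
On the event $n_t^j=n$, the left-hand side is exactly the deviation $\bigl|\widehat U_t(j)-\frac{1}{n_t^j}\sum_{\tau\in S_t^j}u_\tau\bigr|$. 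The cleanest way to handle the random count is to take a union bound over $n\in\{1,\dots,t\}$ rather than rely on optional stopping.

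\textbf{Union bound.} Plugging in $\chi_t=2\sigma_u^2\log(4\pi^2t^2M_t/\delta_{\mathrm{be}})$, each triple $(t,j,n)$ fails with probability at most $\delta_{\mathrm{be}}/(2\pi^2t^2M_t)$. The active indices satisfy $j\le\max\mathcal A_t=M_t-1$, so there are at most $M_t$ values of $j$. There are also at most $t$ values of $n$, which leaves a leftover factor of $t$.

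\textbf{Main obstacle.} That extra factor of $t$ from the union over $n$ is the difficulty. A naive union over $n$ leaves a total of order $\sum_t t\,\delta/t^2$, which diverges. I would avoid this by not unioning over $n$ separately. Instead I would note that, for fixed $(t,j)$, the event involves only the single random count $n_t^j$. A time-uniform bound over $n$ is available with the same $\log(1/\delta)$ scaling only after an extra $\log$ correction. The cleaner route is to observe that, for each fixed $j$, the per-play deviation sequence is indexed by $n$ and not by $t$. So I would union over $(j,n)$ with weights $\delta/(2\pi^2 n^2 M)$ and use $n\le t$ to dominate by $\chi_t$.

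\textbf{Random $M_t$.} $M_t$ is itself random but nondecreasing and deterministic given $t$ through $g_L$; I would use $M_t\le 1+\max\{j:L_j\le L_0g_L(t)\}$, a deterministic bound, to make the weights legitimate. Summing gives total failure $\sum_j\sum_n \delta_{\mathrm{be}}/(2\pi^2 n^2\cdot\text{(count of }j))\le\delta_{\mathrm{be}}$, which proves the lemma on the complement event simultaneously for all $t\le T$ and active $j$.
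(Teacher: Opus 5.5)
\textbf{How this differs from the paper.} The paper's proof takes a direct union bound over pairs $(t,j)$. For fixed $(t,j)$ it treats $\sum_{\tau\in S_t^j}\xi_\tau$ as sub-Gaussian with proxy $n_t^j\sigma_u^2$, so each pair fails with probability at most $\delta_{\mathrm{be}}/(2\pi^2t^2M_t)$, and it then sums over $t$ and rungs using $\sum_t t^{-2}=\pi^2/6$. It does not address the fact that $n_t^j$ is random and noise-dependent. You are right that handling the random count by a union over $n\le t$ leaves a divergent factor of $t$. Your repair, a union over the $t$-free per-rung skeleton $(j,n)$, is the right idea and a genuinely different route from the paper's.

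\textbf{The gap: the $M$-bookkeeping goes the wrong way.} With weights $\delta_{\mathrm{be}}/(2\pi^2n^2M)$, the per-$(j,n)$ threshold is $\chi'_n:=2\sigma_u^2\log(4\pi^2n^2M/\delta_{\mathrm{be}})$. Transferring it to the lemma requires $\chi'_{n_t^j}\le\chi_t$, i.e.\ $(n_t^j)^2M\le t^2M_t$. Since $n_t^j$ can equal or be close to $t$, this needs $M\le M_t$, which is a \emph{lower} bound on $M_t$. The deterministic \emph{upper} bound $M_t\le 1+\max\{j:L_j\le L_0g_L(t)\}$ that you invoke goes the wrong way. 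Meanwhile, making $\sum_j\sum_n$ at most $\delta_{\mathrm{be}}$ forces $M$ to be of the order of the total number of rungs.

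A concrete failure occurs at $t=1$ with your choice of $M$ as the number of rungs. There $\mathcal A_1=\{0\}$, $M_1=1$ and $n_1^0=1$. The lemma therefore demands $|\xi_1|\le\sqrt{2\sigma_u^2\log(4\pi^2/\delta_{\mathrm{be}})}$, but your event only gives $|\xi_1|\le\sqrt{2\sigma_u^2\log(4\pi^2M/\delta_{\mathrm{be}})}$.

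Separately, $M_t$ is not nondecreasing: eliminating the top active rung lowers $\max\mathcal A_t$.

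\textbf{A possible repair, and its cost.} Use rung-dependent weights $\delta_{\mathrm{be}}/(2\pi^2n^2(j+1))$. Every active $j$ satisfies $j+1\le M_t$ and $n_t^j\le t$, so the domination step holds for active rungs, which is all the elimination test needs. The price is that the total failure probability becomes $\frac{\delta_{\mathrm{be}}}{12}\sum_{k=1}^{\overline M}\frac1k$, where $\overline M$ is the number of rungs ever introduced. This is at most $\delta_{\mathrm{be}}$ only when the harmonic sum is at most $12$. Otherwise you must either inflate $\chi_t$ by a $\log\overline M$ term or replace $M_t$ by a deterministic $\overline M$. As written, your argument does not establish the bound with the stated $\chi_t$.
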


\begin{proof}
For fixed $(t,j)$, $\sum_{\tau\in S_t^j}\xi_\tau$ is sub-Gaussian with variance proxy $n_t^j\sigma_u^2$.
Thus,
\[
\Pr\!\left(\left|\sum_{\tau\in S_t^j}\xi_\tau\right|>\sqrt{n_t^j\chi_t}\right)
\le 2\exp\!\Big(-\frac{\chi_t}{2\sigma_u^2}\Big)
= \frac{\delta_{\mathrm{be}}}{2\pi^2 t^2 M_t}.
\]
Union bound over $t\in[T]$ and $j\in[M_T]$ and use $\sum_{t=1}^\infty t^{-2}=\pi^2/6$.
\end{proof}

\begin{lemma}[No false elimination of valid Lipschitz rungs]
\label{lem:no_false_elim_L}
Work on the intersection of Lemma~\ref{lem:be_concentration} and the confidence event of Theorem~\ref{thm:main_regret_adaptive}.
Then Algorithm~\ref{alg:GSR} never eliminates any rung $j$ with $L_j\ge L^\star$.
In particular, the smallest valid rung $j^\star$ remains in $\mathcal A_t$ for all $t$.
\end{lemma}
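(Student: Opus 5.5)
Fix a valid rung $j$ with $L_j\ge L^\star$, work on the intersection event, and show that the elimination test \eqref{eq:eliminate} is always satisfied for $j$, at every round $t$ at which it is evaluated (so $n_t^k>0$ for all $k\in\mathcal A_t$). The argument follows the standard regret-balancing template of \citep{ziomek2024bayesian}. We sandwich the true average realized utility of each hypothesis. Writing $u_\tau:=u^{(i_\tau)}(\overline y_\tau^{(i_\tau)})$, Lemma~\ref{lem:be_concentration} gives, for every active $k$,
\[
\underline U_t(k)\ \le\ \frac{1}{n_t^k}\sum_{\tau\in S_t^k}u_\tau\ \le\ U^\star,
\]
where the last inequality holds because every $u_\tau\le U^{(i_\tau)}\le U^\star$ (monotonicity of $u$ together with $\overline y\le f^\star$). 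Hence $\max_{k\in\mathcal A_t}\underline U_t(k)\le U^\star$. This upper bound on the right-hand side requires no validity of $k$, which is exactly why invalid hypotheses cannot push a valid one out.

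For the left-hand side, we use the independent-state interpretation. The rounds $S_t^j$ constitute a run of Algorithm~\ref{alg:task-ucb} with $\overline L=L_j\ge L^\star$ for $n_t^j$ local rounds, and this run is uncontaminated by other hypotheses. On the confidence event, Theorem~\ref{thm:main_regret_adaptive} (applied with horizon $n_t^j$) gives
\[
\sum_{\tau\in S_t^j}(U^\star-u_\tau)\le \Rsusp(n_t^j,L_j).
\]
Dividing by $n_t^j$ and combining with the lower half of Lemma~\ref{lem:be_concentration} yields
\[
\overline U_t(j)+\frac{\Rsusp(n_t^j,L_j)}{n_t^j}\ \ge\ \frac{1}{n_t^j}\sum_{\tau\in S_t^j}u_\tau+\frac{1}{n_t^j}\sum_{\tau\in S_t^j}(U^\star-u_\tau)\ =\ U^\star\ \ge\ \max_k\underline U_t(k).
\]
So \eqref{eq:eliminate} holds and $j$ survives. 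Induction over $t$ then shows that $j^\star$, once added (which it is, since $L^\star\le L_0g_L(T)$), stays in $\mathcal A_t$.

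The main obstacle is justifying that the certificate used in the test dominates the actual regret bound of the sub-run at every intermediate time, not just at $T$. Two issues arise. First, Theorem~\ref{thm:main_regret_adaptive} is stated with the horizon-$T$ quantities $N_T,\epsfT_s$ and under Lemma~\ref{lem:max_depth_reachability}'s depth choice. Second, the causal certificate $\Rsusp_t$ uses $N_t^+$ and ${\epsf_s}^+(t)$. To address both, I would argue that the theorem's proof goes through verbatim at any horizon $n$ with the realized $N$ and gaps, and then use monotonicity: $N\le N_t^+$ and $\epsf_s\le{\epsf_s}^+(t)$, since $\Psi_t^+$ is a monotone upper bound. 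The constant $C$ must also be taken as the same $C$ for which the horizon-$n$ bound holds, including the $\Lambda$ term (absorbed since $\Lambda_n\le\Lambda_T$ and into $C$). I would state this uniform-in-horizon version as the key step and note that its failure probability is already included in the confidence event of Theorem~\ref{thm:main_regret_adaptive}.
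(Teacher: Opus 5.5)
Your proposal is correct and follows the paper's own argument. You bound $\max_{k\in\mathcal A_t}\underline U_t(k)\le U^\star$ via $u_\tau\le U^\star$ and Lemma~\ref{lem:be_concentration}, applying it to every $k$ as needed (the paper writes this only for the fixed $j$). You then lower-bound the retention side by $U^\star$ via the valid rung's regret certificate plus concentration, where the paper just cites Theorem~\ref{thm:main_regret_adaptive} ``up to the constant absorbed in $\Rsusp$'' without your uniform-in-horizon and causal-certificate discussion. One small correction to your closing remark: $L^\star\le L_0 g_L(T)$ alone does not ensure $j^\star$ is ever added, because the while-loop only admits rungs with $L_j\le L_0 g_L(t)$ while $L_{j^\star}$ can be almost $2L^\star$ and so exceed $L_0 g_L(T)$; the ``in particular'' clause should therefore be read as holding from the time $j^\star$ enters $\mathcal A_t$.
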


\begin{proof}
Fix $t$ and a valid rung $j$ with $n_t^j>0$.
Let $\hat u_t(j):=\frac{1}{n_t^j}\sum_{\tau\in S_t^j}u_\tau$ denote the average noise-free utility on plays of rung $j$.
Since $u_\tau\le U^\star$ for all $\tau$, we have $\hat u_t(j)\le U^\star$, hence
$\underline{U}_t(j)\le \hat u_t(j)\le U^\star$ on Lemma~\ref{lem:be_concentration}, and therefore
$\max_{k\in\mathcal A_t}\underline{\hat u}_t(k)\le U^\star$.

For a valid rung $j$, Theorem~\ref{thm:main_regret_adaptive} implies (up to the constant absorbed in $\Rsusp$) that
\[
\sum_{\tau\in S_t^j}\bigl(U^\star-u_\tau\bigr)\ \le\ \Rsusp(n_t^j,L_j).
\]
Rearranging gives
\[
U^\star\ \le\ \hat u_t(j)+\frac{\Rsusp(n_t^j,L_j)}{n_t^j}.
\]
On Lemma~\ref{lem:be_concentration}, $\hat u_t(j)\le \overline{U}_t(j)$, hence
\[
U^\star\ \le\ \overline{U}_t(j)+\frac{\Rsusp(n_t^j,L_j)}{n_t^j}.
\]
Combining the two displays yields
\[
\overline{U}_t(j)+\frac{\Rsusp(n_t^j,L_j)}{n_t^j}\ \ge\ U^\star\ \ge\ \max_{k\in\mathcal A_t}\underline{U}_t(k),
\]
so the falsification condition \eqref{eq:eliminate} cannot hold for any valid rung.
\end{proof}

\begin{theorem}[Lipschitz-balancing overhead]
\label{thm:lb_gse_formal}
Assume the conditions of Theorem~\ref{thm:main_regret_adaptive} and $L^\star\le L_0 g_L(T)$.
Let $M_T:=1+\lceil\log_2 g_L(T)\rceil$.
Then, with probability at least $1-(\delta_f+\delta_u+\delta_-+\delta_{\mathrm{be}})$,
\[
\mathcal{R}_\text{val, GSR}(T)
\ \le\
\tilde{\mathcal O}\!\bigl(M_T\bigr)\;
\mathcal{R}_{\text{val, GSR}^{(j^\star)}}(T)\ \le\
\tilde{\mathcal O}\!\bigl(\log g_L(T)\bigr)\;
\Bigl(1+\sqrt{N_T}\Bigr)\;
R^\star_T,
\]
where $\AlgoName^{(j^\star)}$ denotes Alg.~\ref{alg:task-ucb} run with $\overline L=L_{j^\star}$ and
$R^\star_T=L^\star\sum_{s=1}^T \epsfT_s$ is the single-task oracle scale from Theorem~\ref{thm:main_regret_adaptive}.
\end{theorem}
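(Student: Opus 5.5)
We follow the standard regret-balancing template (as in \citep{ziomek2024bayesian}), working on the intersection of the envelope event of Theorem~\ref{thm:main_regret_adaptive} (applied separately to each hypothesis' independent state $\mathsf S^j$), the generation-success event, and the concentration event of Lemma~\ref{lem:be_concentration}. A union bound gives probability at least $1-(\delta_f+\delta_u+\delta_-+\delta_{\mathrm{be}})$. By Lemma~\ref{lem:no_false_elim_L}, the smallest valid rung $j^\star$ (with $L_{j^\star}\le 2L^\star$, since $L_j=2^jL_0$, $L^\star\le L_0g_L(T)$, and rungs are added once $L_j\le L_0 g_L(t)$) is never eliminated once introduced. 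Only $M_T$ rungs ever become active, so we decompose $\Regval(T)=\sum_{j}\sum_{\tau\in S_T^j}(U^\star-u_\tau)$ and bound each rung's contribution.

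\emph{Step 1 (balancing).} Since $j_t$ minimizes $\Rsusp(n^j_{t-1}+1,L_j)$ over $\mathcal A_t\ni j^\star$, every active rung satisfies $\Rsusp(n_T^j,L_j)\le \Rsusp(n_T^{j^\star}+1,L_{j^\star})$ at the last time it was played. The bracket $n^{j^\star}+1$ costs only an additive $\mathcal O(1)$ certificate increment. Consequently, every rung's certificate is at most that of $j^\star$, which is itself bounded by $\mathcal R_{\mathrm{val},\mathrm{GSR}^{(j^\star)}}$-scale quantities with $n_T^{j^\star}\le T$.

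\emph{Step 2 (each rung's regret controlled by its certificate).} For valid rungs, Theorem~\ref{thm:main_regret_adaptive} applies directly to their own states. For invalid rungs $j$ that survive to their last play $t$, the non-elimination condition \eqref{eq:eliminate} gives
\[
\overline U_t(j)+\Rsusp(n_t^j,L_j)/n_t^j\ge \underline U_t(j^\star).
\]
Multiplying by $n_t^j$ and using concentration, we obtain
\[
\sum_{\tau\in S_t^j}u_\tau\ge n_t^j\,\hat u_t(j^\star)-\Rsusp(n_t^j,L_j)-2\sqrt{\chi_tn_t^j}-n_t^j\sqrt{\chi_t/n_t^{j^\star}}.
\]
Using the valid-rung bound $n\,\hat u(j^\star)\ge nU^\star-\tfrac{n}{n^{j^\star}}\Rsusp(n^{j^\star},L_{j^\star})$ together with the balancing inequality, we expect that per-rung regret is $\mathcal O(\Rsusp(n^{j^\star}_T+1,L_{j^\star})+\sqrt{\chi_TT})$.

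\emph{Main obstacle.} The hard step is the ratio term $\tfrac{n^j}{n^{j^\star}}\Rsusp(n^{j^\star},L_{j^\star})$. We handle it by noting that $\Rsusp(n,L)/n$ is nonincreasing in $n$ (because $\epsfT_s$ is decreasing). Combined with balancing, this gives $n^j\ge n^{j^\star}$ only when $L_j\le L_{j^\star}$, and in that case concavity keeps the ratio at most $\Rsusp(n^j,L_{j^\star})\le 2\Rsusp(n^j,L_j)$ up to constants. The concentration term $\sqrt{\chi_T T}$ is dominated by $R^\star_T$ up to logs, since $R^\star_T\gtrsim L^\star\sqrt{T}$.

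\emph{Step 3 (summation).} Summing over the $M_T$ rungs gives $\tilde{\mathcal O}(M_T)\,\mathcal R_{\mathrm{val},\mathrm{GSR}^{(j^\star)}}(T)$. Substituting Theorem~\ref{thm:main_regret_adaptive} with $\overline L=L_{j^\star}\le 2L^\star$ (so $\overline L/L^\star=\mathcal O(1)$) and $M_T=\mathcal O(\log g_L(T))$ yields the second inequality.
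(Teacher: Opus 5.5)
\textbf{Gap in the ratio step.} Your template is the same as the paper's: Lemma~\ref{lem:no_false_elim_L} keeps $j^\star$ alive, balancing plus elimination gives the $\tilde{\mathcal O}(M_T)$ factor, and Theorem~\ref{thm:main_regret_adaptive} with $L_{j^\star}\le 2L^\star$ gives the second inequality. The difference is that the paper gets the middle step by citing the RBLE guarantee of \citep{ziomek2024bayesian} as a black box, while you try to derive it by hand. That derivation breaks at exactly the step you call the main obstacle. Since $\sum_{s\le n}\epsfT_s$ has nonincreasing increments, $\Rsusp(n,L)/n$ is nonincreasing in $n$. For $n^j\ge n^{j^\star}$ this gives
\[
\frac{n^j}{n^{j^\star}}\,\Rsusp(n^{j^\star},L_{j^\star})\ \ge\ \Rsusp(n^j,L_{j^\star}),
\]
which is the reverse of what you need. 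Your second step, $\Rsusp(n^j,L_{j^\star})\le 2\Rsusp(n^j,L_j)$, is equivalent to $L_{j^\star}\le 2L_j$, so it holds only for $j\ge j^\star-1$. The troublesome rungs are exactly the misspecified ones with $L_j\ll L_{j^\star}$, and balancing plays these most often.

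\textbf{Size of the resulting overhead.} Take $\sum_{s\le n}\epsfT_s\asymp\sqrt n$ up to logarithms. Balancing only enforces $L_j\sqrt{n^j}\approx L_{j^\star}\sqrt{n^{j^\star}}$, so $n^j/n^{j^\star}$ can be of order $(L_{j^\star}/L_j)^2$. Your ratio term is then of order $\frac{L_{j^\star}}{L_j}\Rsusp(n^j,L_{j^\star})$. The concentration term $2n^j\sqrt{\chi_t/n^{j^\star}}$, which you bounded by $\sqrt{\chi_T T}$, is inflated by the same factor $L_{j^\star}/L_j$. Summed over the geometric ladder, the $j=0$ term dominates, and the overhead is of order $L_{j^\star}/L_0$. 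This can be as large as $g_L(T)$: it is the familiar ratio between the valid and the smallest candidate bound in regret-bound balancing, not $\tilde{\mathcal O}(\log g_L(T))$. So your argument as written does not prove the first inequality. You need either to import the RBLE guarantee exactly as the paper does, checking its hypotheses for this independent-state ladder, or an additional idea that controls $n^j/n^{j^\star}$ for rungs far below $j^\star$.

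\textbf{A smaller point.} Your Step~1 comparison with $j^\star$ assumes $j^\star\in\mathcal A_t$ at every play. Rungs enter only when $L_j\le L_0 g_L(t)$, so the rounds before $j^\star$ appears need separate accounting. Moreover, $L^\star\le L_0 g_L(T)$ alone does not guarantee that $j^\star$ ever appears: $g_L(T)=3$ and $L^\star=3L_0$ give $L_{j^\star}=4L_0>L_0g_L(T)$.
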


\begin{proof}[Proof sketch]
Algorithm~\ref{alg:GSR} is an instance of regret-bound balancing with elimination \citep{ziomek2024bayesian}
applied to the ladder $\{L_j\}_{j=0}^{M_T-1}$, using the candidate bounds $\Rsusp(\cdot,L_j)$.
Lemma~\ref{lem:no_false_elim_L} ensures that a well-specified rung (in particular $j^\star$) is never eliminated.
Standard RBLE analysis then yields a master regret bounded by $\tilde{\mathcal O}(M_T)$ times the regret
of the best well-specified rung at horizon $T$.
Finally, $L_{j^\star}\le 2L^\star$ and Theorem~\ref{thm:main_regret_adaptive} imply
$\mathcal{R}_{\text{val, GSR}^{(j^\star)}}(T)=\tilde{\mathcal O}((1+\sqrt{N_T})R^\star_T)$, giving the claim.
\end{proof}

\subsection{Other hyperparameters}
Other hyperparameters can also be estimated from regret balancing wrapper.
\begin{theorem}[Overhead factors under unknown hyperparameters]
\label{thm:table1_overheads}
Under Theorem~\ref{thm:main_regret_adaptive} with the parameter adaptation with Algorithm~\ref{alg:GSR}, we obtain:
\vspace{-0.5em}
\begin{table}[H]
\centering
\begin{tabular}{lc}
\toprule
Unknown(s) & $R(T) / R^\star_T$\\
\midrule
Lipschitz bound $\overline{L}$ & $\tilde{\mathcal O}(\log g_L(T))$ \\
Regret scale $C$ & $\tilde{\mathcal O}(\log g_C(T))$ \\
GP hyperparameters & $\tilde{\mathcal O}(\log g_k(T))$ \\
\bottomrule
\end{tabular}
\label{tab:lb_style}
\end{table}
\vspace{-1.5em}
where $g(T)$ are geometric generators for each parameters. 
\end{theorem}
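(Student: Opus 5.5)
The plan is to reuse the Lipschitz-balancing argument of Theorem~\ref{thm:lb_gse_formal} once per unknown hyperparameter. Each row of the table is treated as a separate instance of Algorithm~\ref{alg:GSR}. In each instance the hypothesis ladder is over the relevant quantity rather than over $\overline L$. For the regret scale we take $C_j:=2^jC_0$ with growth function $g_C$. For the GP hyperparameters (e.g.\ an RKHS-norm bound $B$, or lengthscale/outputscale pairs ordered so that larger rungs give wider confidence bands) we take a geometric ladder $\theta_j$ with growth function $g_k$. In every case the number of active rungs by time $T$ is $M_T=1+\lceil\log_2 g(T)\rceil$.

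\textbf{Validity of the certificate.} We first check that a well-specified rung admits a certificate $\Rsusp(n,\cdot)$ that upper bounds its realized value-gap regret, as required by \eqref{eq:Rsusp_def_main}.
\begin{itemize}
\item For $C$ this is immediate from Theorem~\ref{thm:main_regret_adaptive}: any $C_j\ge C$ gives a valid bound.
\item For GP hyperparameters, a rung is well-specified when $f^{(i)}$ lies in the corresponding RKHS ball for every instantiated task. Then Theorem~\ref{thm:gp-ucb} and Lemma~\ref{lem:uniform_gp_confidence_tasks} hold with that rung's $\beta^{(i)}_s,\gamma^{(i)}_s$. Theorem~\ref{thm:value_envelopes_summary} and Theorem~\ref{thm:main_regret_adaptive} then go through verbatim, with $\epsfT_s$ computed from that rung's kernel. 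So the rung's certificate is the regret bound of Theorem~\ref{thm:main_regret_adaptive} evaluated with its own $\epsfT_s$.
\end{itemize}
With this in place, Lemma~\ref{lem:be_concentration} applies unchanged, since it only uses the utility noise. Lemma~\ref{lem:no_false_elim_L} also transfers word-for-word: its proof used only $u_\tau\le U^\star$ and the certificate inequality for valid rungs. Hence the smallest valid rung $j^\star$ is never eliminated.

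\textbf{Master regret.} We then invoke the regret-bound-balancing-with-elimination analysis of \citep{ziomek2024bayesian}, exactly as in the proof sketch of Theorem~\ref{thm:lb_gse_formal}. Balancing keeps every active rung's certificate within a constant factor of $\Rsusp(n^{j^\star},\cdot)$. Elimination bounds the regret of misspecified rungs by that certificate plus confidence terms of order $\sqrt{\chi_T n}$. Summing over the at most $M_T$ rungs gives the master regret $\tilde{\mathcal O}(M_T)\cdot\Regval^{(j^\star)}(T)$.

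\textbf{Comparing $j^\star$ to the truth.} It remains to show $\Regval^{(j^\star)}(T)=\tilde{\mathcal O}(R^\star_T)$ up to the factors already in Theorem~\ref{thm:main_regret_adaptive}.
\begin{itemize}
\item For $C$: $C_{j^\star}\le 2C$, which is a constant factor.
\item For GP hyperparameters: $\theta_{j^\star}$ is within a factor $2$ of the true value, so $\beta_T$ and $\gamma_T$ change by at most constant or polylogarithmic factors, which are absorbed into $\tilde{\mathcal O}$.
\end{itemize}
Combining with $M_T=\tilde{\mathcal O}(\log g(T))$ gives each table entry. The overall probability is obtained by a union bound adding $\delta_{\mathrm{be}}$.

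\textbf{Main obstacle.} I expect the GP-hyperparameter row to be the hardest. Hyperparameters are not one-dimensional, and misspecification is not monotone in general: a wrong lengthscale can be neither uniformly optimistic nor uniformly pessimistic. I would handle this by following \citep{ziomek2024bayesian}. Restrict to a nested family, for example decreasing lengthscales with correspondingly inflated $B$, so that RKHS balls are nested and validity is monotone along the ladder. Then argue that the information gain at the first valid rung exceeds the truth's by at most a factor depending polynomially on the factor-$2$ lengthscale mismatch. That extra factor enters only through $\Psi_T$ and is absorbed into $\tilde{\mathcal O}$.
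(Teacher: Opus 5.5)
Your proposal follows the paper's route: each row is an instance of the Balance--Eliminate wrapper (Algorithm~\ref{alg:GSR}) over a geometric ladder, and the pieces line up with the paper's. The no-false-elimination argument of Lemma~\ref{lem:no_false_elim_L} together with the regret-balancing-with-elimination guarantee of \citep{ziomek2024bayesian} gives the $\tilde{\mathcal O}(\log g(T))$ factor, the factor-$2$ closeness of the smallest valid rung handles the comparison to $R^\star_T$, and the GP row is, as in the paper, an appeal to the lengthscale-balancing analysis of that reference. The only difference is that the paper adapts $C$ jointly with $\overline L$ through a product ladder of pairs $(L_j,c_r)$, getting $\tilde{\mathcal O}(M_LM_C)$ overall and reading the table entry as the extra factor $\tilde{\mathcal O}(M_C)$, whereas you adapt $C$ on its own; both yield the stated row.
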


\subsubsection{Unknown within-task GP hyperparameters: regret balancing over a kernel ladder}
\label{app:gp_hyper_ladder}

This is straightforward application of \citep{ziomek2024bayesian}.

\subsubsection{Unknown regret scale}
\label{app:cert_scale_ladder}

The elimination rule uses the regret certificate $\Rsusp(\cdot,L)$, which hides an absolute multiplicative constant.
In the main text we denote this constant by $C$ and treat it as a user-specified slack.
Here we show that $C$ can itself be adapted by the same regret-balancing + falsification mechanism.

\paragraph{Certificate-scale ladder.}
Assume there exists an (unknown) constant $C^\star<\infty$ such that,
whenever $\overline L\ge L^\star$, Algorithm~\ref{alg:task-ucb} satisfies the value-gap regret bound
\[
\sum_{t=1}^{n}\Bigl(U^\star-u^{(i_t)}(\overline{y}^{(i_t)}_t)\Bigr)
\ \le\
C^\star\,(1+\sqrt{\Ktmax})\,\overline L\sum_{s=1}^{n}\epsfT_s,
\qquad \forall n\le T,
\]
on the same confidence event as Theorem~\ref{thm:main_regret_adaptive}.
We introduce a geometric ladder
\[
c_r:=2^r c_0,\qquad r\in\{0,1,\dots,R_C\},\qquad R_C:=\left\lceil \log_2 g_C(T)\right\rceil,
\]
where $c_0>0$ and $g_C(T)\ge 1$ satisfy $C^\star\le c_0 g_C(T)$.

\paragraph{Two-dimensional hypotheses.}
To adapt both the unknown Lipschitz bound and the unknown certificate scale, treat each hypothesis as a pair
$h=(j,r)$ with $(L_j,c_r)$.
Define the hypothesis-specific certificate
\[
\Rsusp(n;h)
\ :=\
c_r\,(1+\sqrt{\Ktmax})\,L_j\sum_{s=1}^{n}\epsfT_s.
\]
Run the same balance--eliminate wrapper as Algorithm~\ref{alg:GSR} over hypotheses $h$
(where selecting $h=(j,r)$ executes Algorithm~\ref{alg:task-ucb} with $\overline L=L_j$ and uses $\Rsusp(\cdot;h)$ in the master).

\begin{lemma}[No false elimination of well-specified $(L_j,c_r)$]
\label{lem:no_false_elim_LC}
Let $h=(j,r)$ be \emph{well-specified}, i.e.\ $L_j\ge L^\star$ and $c_r\ge C^\star$.
Work on the intersection of:
(i) the envelope confidence event required by Theorem~\ref{thm:main_regret_adaptive}, and
(ii) the uniform concentration event for per-hypothesis utility averages (analogous to Lemma~\ref{lem:be_concentration}).
Then the balance--eliminate master never eliminates hypothesis $h$.
\end{lemma}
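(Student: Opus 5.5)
The plan is to follow the argument of Lemma~\ref{lem:no_false_elim_L} almost verbatim, with the certificate $\Rsusp(\cdot;h)$ now playing the role of the single-parameter certificate. Fix a well-specified $h=(j,r)$ and a round $t$ with $n_t^h>0$. Let $S_t^h$ be the rounds on which $h$ was advanced, and write $\hat u_t(h):=\frac{1}{n_t^h}\sum_{\tau\in S_t^h}u_\tau$ for the average noise-free utility on those rounds. Throughout I work on the intersection of events (i) and (ii). I will assume event (ii) is built with $\chi_t$ inflated by the number of active pairs $(j,r)$ rather than $M_t$, so that its union bound covers the two-dimensional ladder. Under that assumption, $|\widehat U_t(h)-\hat u_t(h)|\le\sqrt{\chi_t/n_t^h}$ holds for all active hypotheses simultaneously.

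\textbf{Right-hand side of \eqref{eq:eliminate}.} Every realized utility satisfies $u_\tau\le U^{(i_\tau)}\le U^\star$, because $u^{(i)}$ is monotone and $\overline y\le f^{\star(i)}$. Hence $\hat u_t(k)\le U^\star$ for every active $k$. Combined with (ii), this gives $\underline U_t(k)\le\hat u_t(k)\le U^\star$, so $\max_{k}\underline U_t(k)\le U^\star$.

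\textbf{Left-hand side.} Because the independent-state interpretation is used, the rounds in $S_t^h$ are exactly $n_t^h$ consecutive local steps of Algorithm~\ref{alg:task-ucb} run with $\overline L=L_j\ge L^\star$. On (i), the assumed certified bound with constant $C^\star$ then applies at horizon $n=n_t^h$:
\[
\sum_{\tau\in S_t^h}(U^\star-u_\tau)\le C^\star(1+\sqrt{\Ktmax})L_j\sum_{s=1}^{n_t^h}\epsfT_s\le \Rsusp(n_t^h;h),
\]
where the last inequality uses $c_r\ge C^\star$. Dividing by $n_t^h$ gives $U^\star\le\hat u_t(h)+\Rsusp(n_t^h;h)/n_t^h$. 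Applying (ii) once more, $\hat u_t(h)\le\overline U_t(h)$, and so $\overline U_t(h)+\Rsusp(n_t^h;h)/n_t^h\ge U^\star\ge\max_k\underline U_t(k)$. This means \eqref{eq:eliminate} is satisfied, so $h$ survives round $t$. An induction over $t$ then shows that $h$ is never removed once it becomes active.

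\textbf{Main obstacle.} The delicate step is justifying that the certificate bound holds at every intermediate horizon $n=n_t^h$, and that it holds for the sub-run of hypothesis $h$ rather than for a globally indexed run. I will handle this using the per-hypothesis state $\mathsf S^h$: hypothesis $h$'s sub-run is a genuine execution of Algorithm~\ref{alg:task-ucb}, and the stated assumption gives the bound for all $n\le T$. A second subtlety arises if the master uses the causal certificate with $N_t^+$ and ${\epsf_s}^+(t)$ in place of $\Ktmax$ and $\epsfT_s$. In that case I would first note that these are monotone upper bounds, namely $N_t^+\ge|\mathcal I|$ for $h$'s task set and ${\epsf_s}^+(t)\ge\epsf_s$ for tasks instantiated so far, so the causal certificate dominates the quantity in the assumption. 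The inequality chain above is then preserved.
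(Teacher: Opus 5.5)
Your proposal is correct and follows the paper's proof. The paper simply reruns the argument of Lemma~\ref{lem:no_false_elim_L} with $\Rsusp(\cdot;h)$ in place of $\Rsusp(\cdot,L_j)$: it gets $\max_k\underline U_t(k)\le U^\star$ from $u_\tau\le U^\star$ plus concentration, and $U^\star\le\overline U_t(h)+\Rsusp(n_t^h;h)/n_t^h$ from the certified regret bound with $c_r\ge C^\star$, exactly as you do. Your enlarged $\chi_t$ for the pair ladder, the use of $h$'s own sub-run, and the explicit maximum over all active $k$ spell out what the paper leaves implicit. The one loose point is your causal-certificate aside: $N_t^+$ and ${\epsf_s}^+(t)$ only control tasks in $\mathcal I_t$, so they need not dominate $\Ktmax$ and $\epsfT_s$, which are defined over $\mathcal I_T$. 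That aside is not needed, however, for the certificate $\Rsusp(n;h)$ as defined in the statement.
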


\begin{proof}
The proof is identical to Lemma~\ref{lem:no_false_elim_L}, replacing $\Rsusp(\cdot,L_j)$ by $\Rsusp(\cdot;h)$.
Well-specified hypotheses satisfy the regret guarantee with the corresponding certificate,
so their certificate-relaxed upper bound always dominates the best lower bound and cannot be falsified.
\end{proof}

\begin{theorem}[Overhead from adapting the certificate scale]
\label{thm:cert_scale_overhead}
Assume $L^\star\le L_0 g_L(T)$ and $C^\star\le c_0 g_C(T)$ and the conditions of
Theorem~\ref{thm:main_regret_adaptive}. Let
$M_L:=1+\lceil\log_2 g_L(T)\rceil$ and
$M_C:=1+\lceil\log_2 g_C(T)\rceil$.
Running the balance--eliminate master on the combined ladder incurs an overhead
$\tilde{\mathcal O}(M_LM_C)$ relative to a single well-specified hypothesis.
Equivalently, relative to the Lipschitz-only ladder of Theorem~\ref{thm:lb_gse_formal}, adapting
the certificate scale adds a multiplicative factor $\tilde{\mathcal O}(M_C)
=\tilde{\mathcal O}(\log g_C(T))$.
\end{theorem}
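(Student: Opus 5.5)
The plan is to treat the certificate-scale ladder exactly as the Lipschitz ladder was treated in Theorem~\ref{thm:lb_gse_formal}, but over the product hypothesis set $\mathcal H:=\{(j,r): 0\le j<M_L,\ 0\le r<M_C\}$. This set has $|\mathcal H|=M_LM_C$ elements. Each hypothesis $h=(j,r)$ keeps its own independent state $\mathsf S^h$ of Algorithm~\ref{alg:task-ucb}. Its regret certificate is $\Rsusp(n;h)=c_r(1+\sqrt{\Ktmax})L_j\sum_{s\le n}\epsfT_s$. This certificate is monotone in both coordinates and satisfies the usual regret-balancing requirements of \citep{ziomek2024bayesian}: it is nondecreasing in $n$, and $\Rsusp(n;h)/n$ is nonincreasing because $\epsfT_s$ is nonincreasing.

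The first step is to fix a canonical well-specified hypothesis $h^\star=(j^\star,r^\star)$. Here $j^\star$ is the smallest index with $L_{j^\star}\ge L^\star$, and $r^\star$ is the smallest index with $c_{r^\star}\ge C^\star$. Such indices exist inside the ladders because $L^\star\le L_0g_L(T)$ and $C^\star\le c_0g_C(T)$. Minimality gives $L_{j^\star}\le 2L^\star$ and $c_{r^\star}\le 2C^\star$. The probability budget is handled by the concentration Lemma~\ref{lem:be_concentration}, with $M_t$ replaced by the number of active pairs. Since $\chi_t$ depends only logarithmically on this count, the confidence radii grow by only a $\log(M_LM_C)$ factor, which is absorbed into $\tilde{\mathcal O}$. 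On the intersection of this event with the envelope event, Lemma~\ref{lem:no_false_elim_LC} guarantees that $h^\star$ is never eliminated.

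The main step is the standard balancing argument. At every round the balancer plays $h_t\in\arg\min_{h\in\mathcal A_t}\Rsusp(n^h_{t-1}+1;h)$. Since $h^\star$ stays active, every played $h$ satisfies $\Rsusp(n^h_{t-1}+1;h)\le\Rsusp(n^{h^\star}_{t-1}+1;h^\star)$. So at time $T$ every hypothesis's certificate is bounded by $\Rsusp(n^{h^\star}_T+1;h^\star)\le \Rsusp(T;h^\star)$ up to one increment. For each $h$, the regret incurred on $S_T^h$ is then bounded as follows:
\begin{itemize}
\item if $h$ is well-specified, by its own certificate via Theorem~\ref{thm:main_regret_adaptive};
\item if $h$ is misspecified but survived, by the elimination rule \eqref{eq:eliminate}, which yields $n^h(U^\star-\widehat U(h))\lesssim \Rsusp(n^h;h)+n^h\sqrt{\chi/n^h}$ plus the analogous slack for the comparison with $h^\star$'s lower bound;
\item if $h$ was eliminated, by the same bound taken at its last active round plus one step.
\end{itemize}
Summing over $|\mathcal H|=M_LM_C$ hypotheses gives total regret $\tilde{\mathcal O}(M_LM_C)\,\Rsusp(T;h^\star)+\tilde{\mathcal O}(\sqrt{M_LM_C\,T\chi_T})$. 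The concentration term is dominated by the first, because $\Rsusp(T;h^\star)\gtrsim\sqrt T$ by the computation in Step 6 of the proof of Theorem~\ref{thm:main_regret_adaptive}.

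Finally, substituting $L_{j^\star}\le2L^\star$ and $c_{r^\star}\le2C^\star$ gives $\Rsusp(T;h^\star)\le 4C^\star(1+\sqrt{\Ktmax})L^\star\sum_s\epsfT_s$. This is the single-well-specified-hypothesis rate, which proves the $\tilde{\mathcal O}(M_LM_C)$ overhead. Dividing by the Lipschitz-only overhead $\tilde{\mathcal O}(M_L)$ of Theorem~\ref{thm:lb_gse_formal} gives the extra factor $\tilde{\mathcal O}(M_C)=\tilde{\mathcal O}(\log g_C(T))$.

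The obstacle I expect is the misspecified-but-surviving hypotheses. Their certificate does not hold, so their regret has to be controlled purely through the elimination test and the balancing invariant. The delicate point is making the comparison $\max_k\underline U_t(k)$ against $h^\star$ yield a per-hypothesis bound of order $\Rsusp(T;h^\star)$. I would import this step verbatim from the RBLE analysis of \citep{ziomek2024bayesian}, checking only that the independent-state convention keeps the per-hypothesis regret sums decoupled.
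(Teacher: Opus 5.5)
Your route is the paper's: a product ladder of $M_LM_C$ pairs, Lemma~\ref{lem:no_false_elim_LC} to keep a well-specified pair active, and the regret-balancing-and-elimination master guarantee used as a black box. The paper's sketch quotes the resulting factor as $\tilde{\mathcal O}(\sqrt{M_LM_C})$, which implies the stated linear factor that your per-hypothesis summation produces; the step you flag for surviving misspecified pairs closes via the balancing invariant itself, which forces $n^h\lesssim 4^{\,j^\star+r^\star-j-r}(n^{h^\star}+1)$, so the cross term $n^h\,\Rsusp(n^{h^\star};h^\star)/n^{h^\star}$ costs only a $T$-independent factor of order $4^{\,j^\star+r^\star}$, which the paper's black-box statement also leaves implicit in the constant.
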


\begin{proof}[Proof sketch]
There are at most $M_L M_C$ hypotheses $(j,r)$.
By Lemma~\ref{lem:no_false_elim_LC}, at least one well-specified hypothesis survives forever.
Applying the standard regret-balancing + elimination master guarantee (the same black-box result used for the Lipschitz ladder)
bounds the master regret by a factor $\tilde{\mathcal O}(\sqrt{M_L M_C})$ times the regret of the best well-specified hypothesis.
Relative to the Lipschitz-only ladder (which contributes $\tilde{\mathcal O}(\sqrt{M_L})$),
this introduces an extra factor $\tilde{\mathcal O}(\sqrt{M_C})$.
\end{proof}

\section{Experiments}\label{app:experiments}
We performed experiments on MacBook Pro 2019, 2.4 GHz 8-Core Intel Core i9, 64 GB 2667 MHz DDR4.

\subsection{LLM interfaces: task mutation generator and evaluator}
\label{app:llm_interfaces}

This subsection specifies the \textbf{exact LLM interfaces} used in our experiments for
(i) task mutation generation $\Gen(\cdot)$ and (ii) cross-task evaluation (pairwise ``quick tasting'').
We implement both via chat-completions in \textbf{strict JSON mode} to ensure parseable outputs.

\paragraph{LLM API and decoding.}
We use the OpenAI chat-completions interface with:
\begin{itemize}
  \item \textbf{Model:} \texttt{gpt-4o-mini}.
  \item \textbf{Decoding:} \texttt{temperature=0.0} (deterministic given a fixed seed),
        \texttt{max\_tokens=800}.
  \item \textbf{Structured output:} \texttt{response\_format=\{type: "json\_object"\}} (JSON-only responses).
  \item \textbf{Robustness:} if parsing fails or the JSON schema is violated, we retry up to
        \texttt{MAX\_RETRIES=2} times; if still invalid, we fall back to a conservative default (e.g., random vote).
\end{itemize}
In the committee evaluator, we also randomize the A/B ordering per vote and map the returned winner back to the
original candidate/anchor to reduce positional bias.

\paragraph{Task specification as JSON.}
Across all domains, each task $i$ is represented as a structured JSON object with a fixed schema.
The schema is \emph{domain-dependent} (e.g., wine planning vs.\ acquisition-function analysis), but the LLM
interface is shared: the generator outputs a list of valid task-spec JSONs; the evaluator compares two
task-spec JSONs plus their current incumbents / confidence summaries.

\subsubsection{Task mutation generator: actor--critic prompting}
\label{app:llm_generator_prompts}

We implement $\Gen(a,m,J)$ as a two-stage LLM pipeline:
\textbf{Actor} proposes $J$ mutated children from an anchor task $a$ (at rung/level $m$),
and \textbf{Critic} reviews and repairs the list to satisfy schema constraints and remove duplicates.

\paragraph{Generator I/O format and verification pipeline.}
We implement task mutation as an Actor--Critic generator with an additional deterministic verifier.
The Actor proposes candidate child task specifications by editing an anchor task JSON under a rung-dependent edit budget $\rho_m$.
A Critic then audits and repairs the proposals (schema, edit-budget adherence, feasibility, and duplicates), after which a rule-based verifier enforces hard constraints before admission.
This layered design is intended to prevent pathological cases where a small number of edits (small $\rho_m$) causes a large semantic shift.

\textbf{Input to Actor/Critic:}
\begin{itemize}
  \item the anchor task specification (JSON), injected into the prompt as \texttt{\{\{TASK\_SPEC\}\}},
  \item the current rung/level $m$ and target mutation ratio $\rho_m$ (or an equivalent edit budget),
        injected as \texttt{\{\{LEVEL\_M\}\}} and \texttt{\{\{RHO\_M\}\}},
  \item the full mutation/evaluation history up to the current generation call—i.e., all previously generated
        tasks and, for each past global round $\tau$, the tuple $(i_\tau,x_\tau,y_\tau,\tilde u_\tau)$—
        injected as \texttt{\{\{MUTATION\_HISTORY\_JSON\}\}},
  \item the requested batch size $J$,
  \item domain constraints (allowed categorical choices; bounds format; dimensionality consistency), plus any rung-dependent
        safeguards (e.g., tighter bound-change allowances as $\rho_m$ shrinks).
\end{itemize}

\textbf{Output:}
a JSON object with key \texttt{"task\_list"} containing exactly $J$ mutated child task specifications that share the
same top-level schema as the anchor.

\paragraph{Mutation history JSON (conditioning signal).}
We provide the generator a machine-readable log that contains prior information needed for evidence-based mutations and for
duplicate avoidance. Concretely, \texttt{\{\{MUTATION\_HISTORY\_JSON\}\}} is a JSON object that includes:
(i) a per-round evaluation trace (at each global round $t$: selected task, evaluated design, observed outcome, and utility),
and (ii) a registry of all generated tasks (including parent/anchor lineage and best-so-far summaries).
A minimal schema is:
\begin{verbatim}
{
  "t_now": int,
  "eval_trace": [
    {
      "t": int,
      "task_spec": { ... },      // the task evaluated at round t
      "x": [ ... ],              // design used at round t (local-to-task design)
      "y": float,                // observed objective
      "u": float                 // utility (or proxy) at round t
    }
    // one entry per past round
  ],
  "task_registry": [
    {
      "task_spec": { ... },
      "parent_spec": { ... } or null,
      "m": int,                  // rung/level at which this task was generated
      "best_x": [ ... ] or null, // best-so-far design for this task
      "best_y": float or null,
      "best_u": float or null
    }
    // one entry per instantiated/generated task
  ]
}
\end{verbatim}
We pass the full history when it fits; otherwise we pass a compact summary that preserves the information needed for
conditioning and filtering (e.g., lineage and per-task best-so-far statistics), so the generator can still ground mutations
in prior evidence.

\paragraph{Exact prompt templates (generator).}
Tables~\ref{tab:generator_actor}, \ref{tab:generator_critic} are the templates used in code.
The placeholders
\texttt{\{\{TASK\_SPEC\}\}}, \texttt{\{\{LEVEL\_M\}\}}, \texttt{\{\{RHO\_M\}\}},
\texttt{\{\{MUTATION\_HISTORY\_JSON\}\}}, and \texttt{\{J\}} are filled programmatically.
In addition to enforcing $\rho_m$, the prompts explicitly instruct the models to avoid near-duplicates and to use the
history to steer mutations away from previously low-utility edit patterns.

\small
\begin{table}[t!]
    \centering
    \caption{Generator prompts (Actor)}
    \label{tab:generator_actor}
    \setlength{\tabcolsep}{4pt}
    \begin{tabular}{ll}
    \toprule
    Actor: system &
    \makecell[l]{
    You are an assistant that proposes new BO tasks (problem settings) by mutating an existing task JSON.\\
    You will be given the full mutation/evaluation history (generated tasks and past evaluations with x, y, and u).\\
    Use this history as in-context evidence: learn which mutations improved utility and avoid repeating failures.\\
    Prefer \emph{local}, budget-consistent edits when \texttt{rho\_m} is small; avoid semantic drift from the anchor.\\
    Follow the schema strictly and output only JSON.}\\
    \midrule
    Actor: user &
    \makecell[l]{
    You are given:\\
    (1) Current rung level and target mutation ratio: \texttt{m=\{\{LEVEL\_M\}\}}, \texttt{rho\_m=\{\{RHO\_M\}\}}.\\
    (2) Anchor task specification as JSON:\\
    \\
    \{\{TASK\_SPEC\}\}\\
    \\
    (3) Mutation/evaluation history as JSON (all past rounds + all generated tasks):\\
    \\
    \{\{MUTATION\_HISTORY\_JSON\}\}\\
    \\
    Generate \{J\} mutated child task specifications.\\
    Requirements:\\
    - Keep the same top-level schema keys (do not add/remove keys).\\
    - Respect the target mutation ratio \texttt{rho\_m}: change approximately a \texttt{rho\_m} fraction of editable fields\\
    \quad (about \texttt{round(rho\_m * \#editable\_fields)} fields); keep the rest identical to the anchor.\\
    - Use the history: prefer edits correlated with higher observed utility \texttt{u}; avoid repeating edit patterns\\
    \quad that correlate with lower \texttt{u} or clearly worse outcomes.\\
    - Do not introduce large semantic shifts for small \texttt{rho\_m} (e.g., drastic scale changes); keep mutations local.\\
    - Do not duplicate the anchor, any previously instantiated task in \texttt{task\_registry}, or each other\\
    \quad (treat near-duplicates as duplicates).\\
    - Ensure bounds are valid, consistent with dim, and not degenerate.\\
    - Output ONLY JSON with the schema:\\
    \{\\
    \qquad"task\_list": [\\
    \qquad\qquad\{\\
    \qquad\qquad\qquad"acquisition": "...",\\
    \qquad\qquad\qquad"objective": "...",\\
    \qquad\qquad\qquad"dim": int,\\
    \qquad\qquad\qquad"bounds": [[float, float], ...],\\
    \qquad\qquad\qquad"noise\_sigma": float,\\
    \qquad\qquad\qquad"notes": "..."\\
    \qquad\qquad\},\\
    \qquad...\\
    \qquad]\\
    \}\\
    }\\
    \bottomrule
    \end{tabular}
\end{table}

\begin{table}[t!]
    \centering
    \caption{Generator prompts (critic)}
    \label{tab:generator_critic}
    \setlength{\tabcolsep}{4pt}
    \begin{tabular}{ll}
    \toprule
    Critic: system &
    \makecell[l]{
    Review the proposed tasks using the anchor, rung (m, rho\_m), and the full mutation history.\\
    Fix schema violations, enforce the mutation-ratio constraint, remove duplicates (including with history),\\
    and ensure feasibility.\\
    When \texttt{rho\_m} is small, repair proposals that cause disproportionate semantic drift from the anchor.\\
    Return exactly the same JSON schema with "task\_list" of length \{J\}.}\\
    \midrule
    Critic: user &
    \makecell[l]{
    Review the following proposed tasks.\\
    Fix schema violations, enforce the target mutation ratio, remove duplicates (including with history),\\
    and ensure feasibility.\\
    Current rung and ratio: \texttt{m=\{\{LEVEL\_M\}\}}, \texttt{rho\_m=\{\{RHO\_M\}\}}.\\
    Anchor task JSON:\\
    \\
    \{\{TASK\_SPEC\}\}\\
    \\
    Mutation/evaluation history JSON:\\
    \\
    \{\{MUTATION\_HISTORY\_JSON\}\}\\
    \\
    Proposed tasks JSON:\\
    \\
    \{\{PROPOSED\_TASKS\_JSON\}\}\\
    \\
    Return ONLY:\\
    \{\\
    \qquad"task\_list": [ ... exactly \{J\} tasks ... ]\\
    \}\\
    }\\
    \bottomrule
    \end{tabular}
\end{table}
\normalsize

\paragraph{Deterministic (rule-based) verification and regeneration.}
After the Critic returns \texttt{"task\_list"}, we apply a rule-based verifier before any task is admitted.
This verifier enforces hard constraints and filters out near-duplicates against both the anchor and the full history.
If any task fails verification, it is rejected and replaced via critic-driven regeneration until a valid batch of size $J$
is obtained.

\paragraph{Practical validation checks (hard constraints).}
After parsing the JSON, we enforce:
(i) all required keys exist and types are valid; (ii) \texttt{dim} matches \texttt{len(bounds)};
(iii) each bound satisfies \texttt{lower < upper}; (iv) \texttt{noise\_sigma > 0};
(v) the mutation-ratio constraint is satisfied within tolerance by counting edited (editable) fields relative to the anchor;
(vi) near-duplicate tasks (within a small tolerance on numeric fields) are rejected if they duplicate the anchor,
each other, \emph{or any task already present in the mutation history} (e.g., \texttt{task\_registry}).
Only tasks that pass both the Critic audit and these deterministic checks are evaluated downstream.

\subsubsection{Evaluator: pairwise judge prompt (committee votes)}
\label{app:llm_evaluator_prompts}

We implement the cross-task utility oracle as a \textbf{pairwise judge} that compares
two task--incumbent summaries and returns a binary preference (A vs.\ B).
To reduce variance, we use a committee of $K$ independent votes and average them.

\paragraph{Evaluator I/O format.}
\textbf{Input:} two JSON blobs (A and B), each containing:
\begin{itemize}
  \item \texttt{task\_spec}: the task JSON (domain-specific schema),
  \item \texttt{g}: the current estimated utility / score (higher is better),
  \item \texttt{gbar}: an optimistic score (e.g., an upper confidence summary),
  \item \texttt{metrics}: optional diagnostic fields (noise, dim, etc.).
\end{itemize}

\textbf{Output:} a single JSON object
\[
\texttt{\{ "winner": "A" \text{ or } "B",  "notes": "..." \}}
\]
We parse \texttt{winner} and map it back to the original candidate/anchor ordering if A/B were swapped.

\paragraph{Exact prompt templates (evaluator).}
Table~\ref{tab:evaluator} is the exact templates used in code.
\begin{table}[t!]
    \centering
    \caption{Evaluator prompts (random personas)}
    \label{tab:evaluator}
    \begin{tabular}{ll}
    \toprule
    Judge: system & \makecell[l]{
    You are a careful judge that decides which BO task should get the next evaluation.\\
    You must output only JSON.
    }\\
    \midrule
    Judge: user & \makecell[l]{
    You will be given two candidate tasks A and B.\\
    Each is a JSON describing a BO task setting with:\\
    - task\_spec: the task specification JSON.\\
    - g: the current best utility/score achieved so far (higher is better).\\
    - gbar: an optimistic estimate / upper confidence summary for the long-run value.\\
    - metrics: optional extra diagnostics.\\
    \\
    Your job: choose which task should receive the next expensive BO evaluation.\\
    \\
    Decision guidelines:\\
    - Prefer larger gbar (more promising under uncertainty), but consider g (exploitation).\\
    - If similar, prefer settings that are more learnable / reliable (e.g., lower noise, simpler search space),\\
    \quad and prefer tasks that are not obviously dominated.\\
    - If both are weak or unclear, choose the one with larger gbar.\\
    \\
    Return ONLY JSON with schema:
    \{
    \qquad"winner": "A" or "B",
    \qquad"notes": "one sentence"
    \}\\
    \\
    Task A:\\
    \{\{TASK\_A\_JSON\}\}\\
    \\
    Task B:\\
    \{\{TASK\_B\_JSON\}\}\\
    }\\
    \bottomrule
    \end{tabular}
\end{table}

\subsubsection{Persona-conditioned evaluator prompt, loading, and vote aggregation}
\label{app:llm_persona_committee}

\paragraph{Why personas.}
To reduce “single-judge idiosyncrasy” and to better emulate heterogeneous stakeholder preferences,
we evaluate each pairwise comparison using a \emph{committee} of persona-conditioned judges. Each
committee member is the same base model but with a different persona injected into the system prompt.

\paragraph{Persona bank format and loading.}
We represent each persona as a single JSON object and store a persona bank as JSONL (one persona per
line). At runtime, we load the bank once and then sample personas for each query:
\[
\texttt{personas} \leftarrow \texttt{[json.loads(line) for line in open("personas.jsonl")]}.
\]
Each vote draws either (i) $K$ personas i.i.d.\ with replacement, or (ii) $K$ personas without
replacement when $|\texttt{personas}|\ge K$.

\paragraph{Persona schema (JSON).}
\begin{verbatim}
{
  "persona_id": "string",
  "role": "string",               % e.g., "risk-averse engineer",
  "goals": ["...","..."],         % short bullet goals
  "preferences": { ... },         % domain-specific knobs (weights, constraints, etc.)
  "avoid": ["...","..."]          % optional hard dislikes / constraints
}
\end{verbatim}

\paragraph{Evaluator input.}
Each comparison query provides two candidates (e.g., task $i$ vs.\ anchor task $a$), including the
task spec and the current incumbent performance summary (the “best-so-far” outcome for each task).
We pass these inputs as JSON blocks.

\paragraph{Evaluator output schema (JSON).}
The judge returns a single JSON object:
\begin{verbatim}
{
  "winner": "A" | "B",
  "confidence": 0.0-1.0,
  "notes": "short rationale"
}
\end{verbatim}
We discard any malformed outputs (rare under JSON mode), and in case of ties or ambiguity the judge
is instructed to pick a winner with \texttt{confidence} near $0.5$.

\paragraph{Prompt template (persona-conditioned judge).}
The system message injects the persona; the user message supplies the two candidates and reiterates
the output schema. (Our reference implementation uses JSON mode, i.e., the model is required to
emit a valid JSON object.)
Table~\ref{tab:persona} shows the exact prompt for persona-conditioned evaluator.

\begin{table}[t!]
    \centering
    \caption{Evaluator prompt (pooled personas)}
    \label{tab:persona}
    \begin{tabular}{ll}
    \toprule
    SYSTEM: &  \makecell[l]{
    You are simulating the following persona (JSON):\\
    \\
    \{persona\_json\}\\
    \\
    You will compare two candidates for THIS persona.\\
    Focus on the persona's goals and preferences.\\
    Return ONLY a valid JSON object with keys:\\
    winner, confidence, notes.\\
    }\\
    \midrule
    USER & \makecell[l]{
    Candidate A (JSON):\\
    \\
    \{candidate\_A\_json\}\\
    \\
    Candidate B (JSON):\\
    \\
    \{candidate\_B\_json\}\\
    \\
    Question:\\
    Which candidate is better for the persona?\\ Pick A or B.\\
    Output JSON only:\\
    \{"winner": "...", "confidence": ..., "notes": "..."\}\\
    }\\
    \bottomrule
    \end{tabular}
\end{table}

\subsection{New product development: wine planning benchmark}
\label{app:settingA_wine}

This subsection fully specifies the wine benchmark used in \S\ref{sec:experiments} (planning).
The goal is to model a realistic ``product design'' loop in which (i) within-task BO proposes wine recipes/chemistries,
and (ii) a cross-task evaluator (simulated or LLM committee) judges which \emph{target product brief} is more promising.

\subsubsection{Dataset and feature normalization}
We use the Wine Quality dataset~\citep{cortez2009modeling}, separately for red and white wines.
Each datapoint contains 11 physicochemical features and a quality label.
Let $x_{\rm raw}\in\mathbb R^{11}$ denote the raw features and $y_{\rm raw}$ the integer quality label.

\paragraph{Robust bounds and unit-cube mapping.}
To avoid extrapolation to extreme outliers, we compute per-feature lower/upper bounds using empirical quantiles:
for each feature dimension $k$, let $\ell_k$ be the 2\% quantile and $u_k$ the 98\% quantile over the dataset.
We map raw features to the unit cube via
\begin{equation}
x_k \;=\; \frac{x_{{\rm raw},k}-\ell_k}{u_k-\ell_k}
\quad\in[0,1],
\qquad
x=(x_1,\dots,x_{11})\in[0,1]^{11}.
\end{equation}
All task domains $\mathcal X^{(i)}$ are axis-aligned boxes in the unit cube.

\subsubsection{Quality surrogate and bounded tasting score}
We define a reusable, fast environment by fitting a regression surrogate
$\widehat g(x_{\rm raw})$ that predicts the quality label from physicochemical measurements.
In our released implementation we use histogram gradient boosting regression and normalize to $[0,1]$:
\begin{equation}
q(x)
\;=\;
\mathrm{clip}_{[0,1]}\!\left(
\frac{\widehat g(x_{\rm raw})-y_{\min}}{y_{\max}-y_{\min}}
\right)\in[0,1],
\end{equation}
where $y_{\min},y_{\max}$ are the minimum/maximum labels in the dataset split used to fit $\widehat g$.
This quantity $q(x)$ plays the role of a normalized ``intrinsic wine quality.''

\subsubsection{Task specification: product briefs as structured constraints}
A task corresponds to a \emph{product brief} (target customer/taste intent) and is represented as a structured specification.
Concretely, each task $i$ is a tuple
\begin{equation}
\mathcal T_i
=
\Big(
\mathcal X^{(i)}\subseteq[0,1]^{11},\;
w_i,\;
t_i,\;
\sigma_i
\Big),
\end{equation}
where:
\begin{itemize}
    \item $\mathcal X^{(i)}=\prod_{k=1}^{11}[L_{i,k},U_{i,k}]$ is a box constraint in the normalized space;
    \item $w_i=(w_{q,i},w_{m,i})$ with $w_{q,i}\in[0,1]$ and $w_{m,i}=1-w_{q,i}$ controls the trade-off between quality and style-match;
    \item $t_i$ specifies a target \emph{style profile} on a small subset of features (e.g., residual sugar, alcohol, acidity, sulphates),
    expressed in unit coordinates;
    \item $\sigma_i$ specifies per-feature tolerances (style ``bandwidths'') for the same subset.
\end{itemize}

\paragraph{Style-match score.}
Let $\mathcal F$ denote the subset of style-relevant feature indices.
For each $k\in\mathcal F$, the task defines a target $t_{i,k}\in[0,1]$ and tolerance $\sigma_{i,k}>0$.
We define the style match as a Gaussian-shaped preference:
\begin{equation}
m_i(x)
\;=\;
\exp\!\left(
-\frac{1}{2}\sum_{k\in\mathcal F}\left(\frac{x_k-t_{i,k}}{\sigma_{i,k}}\right)^2
\right)\in(0,1].
\label{eq:wine_style_match}
\end{equation}

\paragraph{Within-task objective (bounded).}
Given a task $\mathcal T_i$ and a design $x\in\mathcal X^{(i)}$, the environment returns a bounded score
\begin{equation}
f^{(i)}(x)
\;=\;
w_{q,i}\,q(x) + w_{m,i}\,m_i(x) + \epsilon,
\qquad
\epsilon\sim\mathcal N(0,\sigma^2_{\rm noise}),
\label{eq:wine_task_obj}
\end{equation}
clipped to $[0,1]$ for numerical stability. In our code, $\sigma_{\rm noise}$ is a small constant that emulates
measurement/estimation noise in tasting outcomes.

\subsubsection{Within-task BO details (wine)}
Within each task $i$, we run GP-based BO to maximize $f^{(i)}$ over $\mathcal X^{(i)}$.
We use a single-task GP surrogate and UCB-style acquisition:
\begin{itemize}
    \item \textbf{Initialization:} the first $n_{\rm init}$ evaluations for a task are uniform random points in $\mathcal X^{(i)}$.
    \item \textbf{Model:} a single-task GP (Mat\'ern-$5/2$ kernel in our implementation).
    \item \textbf{Acquisition:} $q$-UCB (with $q=1$), optimized by multi-start local search.
    \item \textbf{Incumbent:} $\overline{y}^{(i)}_s=\max_{r\le s} y^{(i)}_r$.
\end{itemize}
All tasks use the same BO hyperparameters so that differences in performance come from task-level task selection/generation.

\subsubsection{Cross-task utility: committee comparisons and transported CIs}
To compare tasks with potentially different scales and constraints, we score task progress through a bounded utility
$u^{(i)}(\overline{y}_s^{(i)})\in[0,1]$.
In the wine benchmark we use pairwise comparisons against an anchor task $a_t$ and recover cardinal utilities using
Bradley--Terry transport (Appendix~\ref{app:evaluator}).

\paragraph{Pairwise vote model.}
At global round $t$, the evaluator compares the pairs
$(i_t,\overline{y}_t^{(i_t)})$ vs.\ $(a_t,\overline{y}_t^{(a_t)})$ and returns $K_t$ binary votes.
The empirical win-rate $\hat p_t$ estimates
$p_t=\Pr(i_t\succ a_t)$, and a Hoeffding CI yields $p_t\in[p_t^-,p_t^+]$ with high probability.

\paragraph{Transport to cardinal utility.}
Given an anchor utility interval $u^{(a_t)}\in[\LCBu^{(a_t)},\UCBu^{(a_t)}]$,
we update the selected task's utility interval by the logit-add identity:
\begin{equation}
\logit u^{(i_t)} \;=\; \logit u^{(a_t)} + \logit p_t,
\end{equation}
implemented by transporting endpoints.
This provides $\LCBu^{(i_t)}$ and $\UCBu^{(i_t)}$, which are then lifted to value envelopes
$\LCBV,\UCBV$ using the within-task optimization-gap headroom as in Theorem~\ref{thm:value_envelopes_summary}.

\paragraph{Simulated vs.\ LLM committees.}
For reproducibility and ablations, we provide a simulated committee that follows a Bradley--Terry/logistic preference model.
In LLM experiments, the same interface is used but each vote is produced by an LLM judge (committee members differ by prompts/personas),
and $K_t$ is fixed or increased until the target CI width is reached (Appendix~\ref{app:sparse_call_bound}).

\subsubsection{Task generation: coarse-to-fine mutations of product briefs}
We represent tasks as structured specs (e.g., JSON) and generate new tasks by mutating editable fields.
Generation is coupled to the resolution ladder $\epsU_m$ via a decreasing mutation schedule.

\paragraph{Seed tasks (level $m=0$).}
We start from a small set of human-written product briefs (archetypes), which play the role of the initial seed pool.
Each archetype specifies: (i) a name (e.g., ``Dry \& Crisp''), (ii) a weight $w_{q}$ balancing quality vs.\ style,
and (iii) style targets on $\mathcal F$ (Eq.~\ref{eq:wine_style_match}). All begin with the full domain $\mathcal X=[0,1]^{11}$.

\paragraph{Mutation schedule and edited fields.}
At ladder level $m$, we target a mutation ratio $\rho_m=\rho_0 2^{-m}$:
higher $m$ corresponds to smaller edits.
A child task is generated by applying a small number of edits drawn from:
\begin{itemize}
    \item \textbf{Weight mutation:} $w_q \leftarrow \mathrm{clip}(w_q+\Delta_w(m)\xi,0,1)$ with $\xi\sim{\rm Unif}[-1,1]$ and $\Delta_w(m)=\Delta_w^{(0)}2^{-m}$.
    \item \textbf{Style-target mutation:} for $k\in\mathcal F$,
    $t_{k}\leftarrow \mathrm{clip}(t_k+\Delta_t(m)\xi_k,0,1)$ with $\Delta_t(m)=\Delta_t^{(0)}2^{-m}$.
    \item \textbf{Domain refinement (optional):} shrink a subset of bounds $[L_k,U_k]$ around the current anchor best design
    to focus on a more specific style region, while enforcing a minimum per-dimension width.
\end{itemize}
We discard duplicates (up to tolerance) and invalid/degenerate domains.

\subsubsection{Wine benchmark hyperparameters}
Table~\ref{tab:wine_hparams} summarizes the default hyperparameters used in the released wine benchmark implementation.

\begin{table}[t]
\centering
\caption{Default hyperparameters for the wine planning benchmark.}
\label{tab:wine_hparams}
\begin{tabular}{@{}ll@{}}
\toprule
\textbf{Component} & \textbf{Value (default)} \\
\midrule
Wine type & red and white (separate runs) \\
Global horizon & $T=200$ \\
Within-task BO init & $n_{\rm init}$ random points per task (e.g., $6$) \\
Within-task GP & single-task GP, Mat\'ern-$5/2$\\
Acquisition & $q$-UCB with $q=1$ \\
Acqf optimization & multi-start, fixed restarts/raw samples \\
Obs.\ noise (simulated tasting) & small Gaussian noise, clipped to $[0,1]$ \\
Utility oracle & simulated or LLM committee (pairwise votes) \\
Votes per utility query & $K = 64$ (chosen by Eq.~\ref{eq:Ks_explicit_non_circular})\\
Generator batch size & $J=3$ (chosen by Lemma~\ref{lem:batch_success_adaptive}) \\
Mutation schedule & $\rho_m=\rho_0 2^{-m}$, step sizes $\propto 2^{-m}$ \\
\bottomrule
\end{tabular}
\end{table}

\subsection{Synthesis Scaling Experiments}
\label{app:synthesis_scaling_summit}

\subsubsection{Benchmark environment and decision variables}
\label{app:summit_env}

We use the Summit SnAr benchmark. Each
evaluation corresponds to selecting a 4D reaction condition vector
$x=(\tau,\mathrm{equiv},\mathrm{conc},\mathrm{temp})$, with global bounds:
\[
\tau\in[0.5,2.0],\quad \mathrm{equiv}\in[1.0,5.0],\quad \mathrm{conc}\in[0.1,0.5],\quad
\mathrm{temp}\in[30,120].
\]
The environment returns two primary outcomes: space-time yield (STY; higher is better) and
E-factor (lower is better). We normalize them using fixed global bounds
(\texttt{STY\_BOUNDS} $=(0,8000)$ and \texttt{E\_BOUNDS} $=(0,20)$ to
obtain $\phi_{\mathrm{sty}}\in[0,1]$ and $\phi_{\mathrm{ef}}\in[0,1]$. We also define the
``good'' E-factor utility as $\phi_{\mathrm{ef,good}}=1-\phi_{\mathrm{ef}}$.

\subsubsection{Task specification (reaction plans)}
\label{app:summit_task_spec}

A task $i$ corresponds to a \emph{reaction plan} parameterized by:
(i) a local search region (box constraints) expressed in normalized coordinates
$\texttt{bounds\_unit}\subseteq[0,1]^4$, and
(ii) a scalarization weight vector over normalized objectives.
\[
\texttt{spec}_i = \big(\texttt{task\_id}_i,\ \texttt{bounds\_unit}_i,\ \texttt{weights}_i\big),
\quad
\texttt{weights}_i = (w_{\mathrm{sty}}, w_{\mathrm{ef}}),\ \ w_{\mathrm{sty}}+w_{\mathrm{ef}}=1.
\]
Given a design $x$, the task-level scalarized score is
\[
g_i(x)= w_{\mathrm{sty}}\phi_{\mathrm{sty}}(x) + w_{\mathrm{ef}}\phi_{\mathrm{ef,good}}(x),
\qquad g_i(x)\in[0,1].
\]
Each task instance maintains its incumbent best score
$\bar g_i = \max_{s\le t} g_i(x_{i,s})$ and incumbent best design $x_i^\star$.

\subsubsection{Within-task optimizer}
\label{app:summit_within_task_bo}

For each task, we run BO:
\begin{itemize}
  \item \textbf{Initialization.} We draw $n_{\mathrm{init}}=4$ i.i.d.\ designs uniformly from the
  task box constraints, evaluate the simulator, and seed a dataset $(X,Y)$.
  \item \textbf{Surrogate model.} We fit a single-task GP with an Mat\'ern-$5/2$ kernel and
  standard output normalization. Inputs are normalized to $[0,1]^d$.
  \item \textbf{Acquisition.} We use an upper-confidence bound (UCB) acquisition
  $\mu(x)+\beta\sigma(x)$.
  \item \textbf{Acquisition optimization.} We optimize UCB using multi-start gradient-based search
  with \texttt{n\_restarts}=10 and \texttt{raw\_samples}=512, returning the best candidate.
\end{itemize}
Each global meta step allocates exactly \emph{one} new simulator call to exactly one task.

\subsubsection{Task mutation generator}
\label{app:summit_task_generator}

We generate new reaction plans. At a generation event, we pick an \emph{anchor} plan and propose a
batch of mutated tasks by:
\begin{itemize}
  \item \textbf{Weight mutation.} Add Gaussian noise to $w_{\mathrm{sty}}$ with a
  resolution-dependent step size, then clip to $[0.05,0.95]$ and set
  $w_{\mathrm{ef}}=1-w_{\mathrm{sty}}$.
  \item \textbf{Bound mutation.} Shrink and re-center each dimension’s interval around the anchor’s
  incumbent $x^\star$ (with a small random jitter), while enforcing a minimum span of
  \texttt{min\_span\_frac}=0.15 times the global span. The shrink factor decreases with the
  resolution level $m$ via \texttt{shrink\_decay}.
  \item \textbf{Deduplication.} Reject candidate tasks whose (weights,bounds) are within a small
  $\ell_2$ tolerance \texttt{duplication\_eps} of any existing task.
\end{itemize}
Unless the global task budget is reached, accepted tasks are instantiated with fresh within-task BO
state and added to the active set.

\subsection{Algorithm Analysis Experiments: Acquisition-Function Inverse Optimization}
\label{app:algo_analysis}

This appendix provides implementation-level details for the \emph{algorithm analysis} experiments, where we use inverse optimization to discover black-box optimization settings (``tasks'') under which a \emph{target} Bayesian optimization acquisition function outperforms strong alternatives. Concretely, each task is a parameterized synthetic benchmark derived from standard BoTorch test functions, and the task utility is computed by running a \emph{BO race} between acquisition functions and converting their performance gap into a bounded utility in $[0,1]$.

\subsubsection{Task Parameterization and Search Space}
\label{app:algo_analysis:task_space}

\paragraph{Task schema.}
Each task is represented as a JSON-serializable structure:

\begin{verbatim}
TaskSpec {
  base_function: str,
  dim: int,
  bounds: List[[low_i, high_i]] for i=1..dim,
  noise_std: float,
  input_transform: {shift: List[float], scale: List[float]} or null,
  output_transform: {scale: float, offset: float} or null,
  negate: bool,
  metadata: {target_acq: str, rationale: str, m: int}
}
\end{verbatim}

\paragraph{Base function catalog.}
We draw and mutate tasks over the following BoTorch synthetic objectives:
\texttt{Ackley}, \texttt{Rosenbrock}, \texttt{Griewank}, \texttt{Levy},
\texttt{StyblinskiTang}, \texttt{Branin}, \texttt{Beale},
\texttt{SixHumpCamel}, and \texttt{Hartmann}.

\paragraph{Domain and transforms.}
Each task is evaluated over the unit hypercube $\mathbf{x}\in[0,1]^d$ and then mapped to the task bounds. Let $\mathbf{b}_i=[\ell_i,u_i]$ denote the task bounds for dimension $i$.
The wrapper used in the BO race applies (i) an optional affine transform in unit space,
(ii) a bounds map to $\mathbf{b}_i$, and (iii) an optional output affine transform:
\[
\tilde{\mathbf{x}} = \mathrm{clip}\big(\mathbf{x}\odot\mathrm{scale}+\mathrm{shift},\,0,1\big), \qquad
\mathbf{x}_{\mathrm{task},i} = \ell_i + \tilde{x}_i (u_i-\ell_i),
\]
\[
y = f_{\text{base}}(\mathbf{x}_{\mathrm{task}}), \quad
y \leftarrow (-y)\ \text{if negate}, \quad
y \leftarrow y\cdot s_{\text{out}} + o_{\text{out}} \ \text{if output\_transform enabled}.
\]
We add i.i.d.\ Gaussian observation noise only to the value fed into the surrogate model:
$y_{\mathrm{obs}} = y + \epsilon,\ \epsilon\sim\mathcal{N}(0,\sigma^2)$, where \texttt{noise\_std}$=\sigma$.

\paragraph{Validity checks.}
Generated tasks are validated before use: the base function must be in the catalog; \texttt{dim} must be within a configured range; \texttt{bounds} must have length \texttt{dim} and satisfy $\ell_i<u_i$; \texttt{noise\_std}\,$\ge 0$; and input/output transform shapes must be consistent.

\subsubsection{Within-Task Evaluator: BO Race and Ground-Truth Utility}
\label{app:algo_analysis:bo_race}

\paragraph{Normalized regret.}
To normalize across heterogeneous objectives/scales, we use a known global optimum $f^\star$ in BoTorch test functions.
We define a robust scale
\[
\mathrm{robust\_scale} = \mathrm{Quantile}_{q}\!\left(f^\star - f(\mathbf{x}_j)\right), \qquad q=0.10,
\]
and set the denominator
\[
\mathrm{denom}=\max\left(f^\star - y_{\mathrm{init}}^{\max},\ \mathrm{robust\_scale}\right),
\]
where $y_{\mathrm{init}}^{\max}$ is the best noiseless value amongst the initial points.
At each step $s$, an agent’s (simple) normalized regret is
\[
r_a(s)=\frac{f^\star - \overline{y}_t}{\mathrm{denom}},
\]
where $\overline{y}_t$ is the incumbent.
We also track cumulative regret $R_a(s)=\sum_{t=1}^s r_a(t)$.

\paragraph{Gap and utility.}
Let $t$ denote the target acquisition and let $\mathcal{A}\setminus\{t\}$ be its competitors.
We convert cumulative regret into a higher-is-better ``score'' by using negative average cumulative regret:
\[
\mathrm{score}_a(s) \equiv -\frac{R_a(s)}{s}.
\]
We then compute the target advantage (gap) against the strongest competitor:
\[
\bar{g}(s)=\gamma\left(\mathrm{score}_t(s) - \max_{o\ne t}\mathrm{score}_o(s)\right),
\]
where \texttt{gap\_scale}$=\gamma$ (default $\gamma=1$).
Thus $\bar{g}(s)>0$ indicates the target acquisition has lower average cumulative regret.

Finally, we map this gap to a bounded task utility using a sigmoid:
\[
u(s)=\sigma\!\left(\frac{\bar{g}(s)-\kappa}{\tau}\right), \qquad
\sigma(z)=\frac{1}{1+e^{-z}},
\]
with \texttt{sigmoid\_kappa}$=\kappa$ (default $\kappa=0$) and \texttt{sigmoid\_tau\_slope}$=\tau$ (default $\tau=0.25$).
To maintain a non-decreasing ``incumbent'' task utility over time, we store $\bar{g}_{\max}(s)=\max_{s'\le s}\bar{g}(s')$ and set $u(s)=\sigma\big((\bar{g}_{\max}(s)-\kappa)/\tau\big)$.

\subsubsection{LLM Task Mutation Generator for Algorithm Analysis}
\label{app:algo_analysis:llm_generator}

We generate new benchmark tasks by mutating existing tasks. For algorithm analysis, the goal of mutation is directional: produce children that make the \emph{target} acquisition function look better than its strongest alternative.

\paragraph{Actor--critic generation loop.}
Given a parent task, we:
(i) sample $J$ candidate children from an LLM ``actor'',
(ii) optionally score each child with an LLM ``critic'',
(iii) retain the top-$K$ candidates (and apply a diversity filter if enabled),
and (iv) insert the selected children into the task pool.

\paragraph{LLM I/O format.}
The actor is required to output strict JSON with a single key \texttt{child\_task} containing the fields of \texttt{TaskSpec} (excluding \texttt{metadata}, which is filled/updated by the runner). A minimal example is:
\begin{verbatim}
{
  "child_task": {
    "base_function": "Ackley",
    "dim": 6,
    "bounds": [[-5, 5], [-5, 5], [-5, 5], [-5, 5], [-5, 5], [-5, 5]],
    "noise_std": 0.05,
    "input_transform": {"shift": [0,0,0,0,0,0], "scale": [1,1,1,1,1,1]},
    "output_transform": {"scale": 1.0, "offset": 0.0},
    "negate": true,
    "rationale": "Slightly higher noise + mild input rescaling to stress exploration."
  }
}
\end{verbatim}
The critic outputs:
\begin{verbatim}
{"score": 0.82, "short_reason": "Adds noise and mild rescaling; likely helps UCB vs EI/MES."}
\end{verbatim}

\subsubsection{Hyperparameters and Reproducibility Checklist}
\label{app:algo_analysis:hyperparams}

Table~\ref{tab:algo_analysis_hparams} lists the default hyperparameters used in the evaluator and generator implementations.

\begin{table}[t]
\centering
\caption{Default hyperparameters for the Algorithm Analysis.}
\small
\begin{tabular}{l|l}
\hline
\textbf{Component} & \textbf{Key hyperparameters (defaults)} \\
\hline
BO race init & $n_0=4$ shared random initial points \\
GP surrogate & \texttt{SingleTaskGP} + \texttt{Standardize(1)}; exact MLL fit \\
Acq opt. & \texttt{raw\_samples}=256,\ \texttt{num\_restarts}=8 \\
UCB & $\beta_t$ is computed by \citep{hong2023optimization} \\
MES & candidate set $=1024$;\ fantasies $=16$;\ MV samples $=10$;\ Y samples $=128$ \\
Calibration & $N_{\mathrm{cal}}=1024$;\ quantile $q=0.10$ \\
Gap/utility & \texttt{gap\_scale} $=1.0$;\ $\kappa=0$;\ $\tau=0.25$;\ strongest competitor via \texttt{max} \\
\hline
LLM generator & $J=8$ proposals;\ keep $K=10$;\ diversity filter on \\
Actor & temp $=1.0$;\ max tokens $=500$;\ model \texttt{gpt-4o-mini} \\
Critic & temp $=0.0$;\ max tokens $=300$;\ model \texttt{gpt-4o-mini} \\
\hline
\end{tabular}
\label{tab:algo_analysis_hparams}
\end{table}

\subsection{Patent repurposing}
\label{app:patent_repurposing}

\paragraph{Goal.}
We instantiate our framework on a \emph{patent repurposing} proxy task based on the Materials Project (MP):
starting from a \emph{seed} material (e.g., MgO or Bi$_2$O$_3$), we search for candidate materials that are
(i) \emph{structurally similar} to the seed (capturing the intuition of “repurposing” with minimal design changes),
yet (ii) \emph{unusual} along some target combination of materials properties (capturing novelty / potential for new use-cases).
In our implementation, the candidate set is a finite subset of MP entries, and both within-task optimization and task discovery operate
on this fixed set.

\paragraph{Dataset construction (MP subset).}
We query MP via \texttt{mp-api} (requires an \texttt{MP\_API\_KEY}) and build a cached subset of $N$ candidate materials
(typical runs use $N\approx 2{,}000$, configurable).
We retain entries with valid values for a fixed list of scalar properties (examples used in our configs include
\texttt{formation\_energy\_per\_atom}, \texttt{energy\_per\_atom}, \texttt{band\_gap}, \texttt{efermi},
\texttt{poisson\_ratio}, and elastic surrogates such as \texttt{k\_vrh} / \texttt{g\_vrh} (or \texttt{bulk\_modulus} / \texttt{shear\_modulus})),
dropping entries with missing values.
We cache: MP material IDs, reduced formulas, the chosen property matrix, and structural embeddings (below)
(e.g., \texttt{mp\_subset.npz} / \texttt{mp\_subset.csv}; see \texttt{prepare\_mp\_subset.py}).

\paragraph{Structural representation (CGCNN $\rightarrow$ PCA).}
Each material is mapped to a learned structure embedding using a pretrained crystal-graph convolutional network (CGCNN).
We extract a $128$-dimensional penultimate-layer embedding $\phi(m)\in\mathbb{R}^{128}$ for each material $m$ and
reduce it to $d=6$ dimensions using PCA, followed by standardization:
\[
x(m)\;=\;\mathrm{Std}\!\left(\mathrm{PCA}_6(\phi(m))\right)\in\mathbb{R}^6.
\]
The within-task domain is the \emph{discrete} set $\mathcal{X}=\{x(m)\}_{m=1}^N$.

\paragraph{Property surprisal features (per-property KDE).}
Let $p_j(m)$ denote property $j\in\{1,\dots,P\}$ for material $m$.
We $z$-score each property across the dataset and fit an independent 1D Gaussian KDE $\hat{p}_j(\cdot)$ on the standardized values.
We then define the \emph{surprisal} feature
\[
s_j(m)\;=\;-\log \hat{p}_j\!\big(\overline{y}_j(m)\big),
\]
clipping the log-density from below for numerical stability (implementation: \texttt{mp\_dataset.compute\_surprisal}).
This yields a surprisal vector $s(m)\in\mathbb{R}^P$ per material.

\paragraph{Task parameterization and objective.}
A task corresponds to a particular “repurposing direction” specified by:
(i) a convex weight vector $w\in\Delta^{P-1}$ selecting which properties should be \emph{surprising}, and
(ii) a nonnegative regularization strength $\lambda\ge 0$ encouraging proximity to the seed in embedding space.
Fixing a seed material index $m_{\mathrm{ref}}$ (and $x_{\mathrm{ref}}=x(m_{\mathrm{ref}})$), the task objective is
\begin{equation}
f_{w,\lambda}(m)\;=\;w^\top s(m)\;-\;\lambda \,\|x(m)-x_{\mathrm{ref}}\|_2.
\label{eq:mp_task_obj}
\end{equation}
Thus, maximizing $f_{w,\lambda}$ favors candidates that are \emph{rare} in the selected properties while remaining
\emph{close} to the seed in embedding space.
(Implementation: \texttt{mp\_task.PatentTaskSpec.objective}.)

\paragraph{Within-task optimization: discrete GP-UCB over the MP subset.}
For each task $(w,\lambda)$, we run Bayesian optimization on the discrete set $\mathcal{X}$.
We fit an exact GP with an RBF kernel (Gaussian likelihood with small noise), update it on the evaluated points,
and choose the next candidate by maximizing a UCB acquisition on the full discrete pool:
\[
m_{t+1}\in\arg\max_{m\in\mathcal{X}\setminus\mathcal{D}_t}\;\mu_t\!\big(x(m)\big)\;+\;\beta_t^{1/2}\,\sigma_t\!\big(x(m)\big).
\]
Each task is initialized with a small random design (e.g., $4$ points) before UCB selection begins.
(Implementation: \texttt{mp\_bo.MPTaskBO} and \texttt{mp\_bo.DiscreteGPUCB}.)

\paragraph{Task mutation generator.}
Task generation is \emph{heuristic} (no LLM) and operates directly in parameter space.
Given an ``anchor'' task $(w,\lambda)$ at resolution level $m$, we sample $J$ children by Gaussian perturbations with
level-dependent step sizes:
\[
w'\leftarrow \Pi_{\Delta}\!\big(w + \sigma_w(m)\,\xi\big),\quad \xi\sim\mathcal{N}(0,I),
\qquad
\lambda'\leftarrow \mathrm{clip}\!\big(\lambda + \sigma_\lambda(m)\,\eta,\;[\lambda_{\min},\lambda_{\max}]\big),
\]
where $\Pi_\Delta$ denotes projection to the simplex and $\sigma_w(m),\sigma_\lambda(m)$ decrease geometrically with $m$
(e.g., $\sigma(m)=\sigma_0\cdot 2^{-m}$), implementing a coarse-to-fine search over task space.

\paragraph{task-level selection and generation.}
We run our main meta-algorithms (task-UCB / GSBE) on the evolving task set.
Each global step advances exactly one task by one within-task BO evaluation, producing an updated incumbent utility estimate.
Generation events add $J$ mutated tasks around the current anchor when the anchor’s uncertainty width falls below a rung-dependent threshold,
progressively refining the search.
For this benchmark, we use a simple decreasing optimization-gap schedule $\varepsilon_f(s)=\varepsilon_0/\sqrt{s}$
as a conservative surrogate for the within-task optimization error after $s$ evaluations.

\paragraph{LLM-based “repurposability” evaluator.}
An LLM-as-a-judge evaluator compares two candidate task outcomes and returns a binary preference vote.
Concretely, each vote is a call to a chat model with:
(i) a \emph{persona} system prompt (e.g., senior materials scientist / product engineer / patent strategist) and
(ii) a JSON user message containing option A and B (each includes a brief task description and the current best candidate material).
The model is constrained to return JSON of the form \texttt{\{"winner":"A"\}} or \texttt{\{"winner":"B"\}}.
A committee repeats this with multiple personas / samples and aggregates by majority vote.

\paragraph{Default hyperparameters.}
The following values are typical defaults used in our provided scripts (all configurable):
\begin{center}
\begin{tabular}{ll}
\hline
Quantity & Typical value \\
\hline
Candidate set size & $N\approx 2{,}000$ (MP subset; configurable) \\
Embedding dimension & $d=6$ (PCA-reduced CGCNN embedding) \\
\# properties & $P=6$--$7$ (configurable list) \\
Initial BO design per task & $n_0=4$ \\
Global budget & $T=100$ total BO evaluations \\
Children per generation & $J\in[3,8]$ \\
Mutation scales & $\sigma_{w,0}\approx 0.25$--$0.30$, $\sigma_{\lambda,0}\approx 0.20$--$0.25$, scaled by $2^{-m}$ \\
$\lambda$ range & $[0,2]$ (clipped) \\
Within-task kernel & RBF (exact GP), $\sigma_\text{noise}=0.01$\\
\hline
\end{tabular}
\end{center}

\subsubsection{Experimental analysis and ablation study details}
\label{app:ablation_details}

This appendix provides implementation-level details for the three diagnostic studies reported in
§\ref{sec:ablation}: (i) \emph{resolution controllability}, (ii) the \emph{mutation scheduler} ablation,
and (iii) \emph{Lipschitz constant adaptation}.
Unless otherwise stated, we use the same planning benchmark, within-task BO implementation,
and LLM interfaces as in Appendix~\ref{app:settingA_wine} and Appendix~\ref{app:llm_interfaces}.
All plots report the mean and $\pm$ one standard error over random seeds (same aggregation protocol as the main experiments).

\paragraph{What is held fixed across all ablations.}
Across all ablation variants we keep fixed:
(i) the seed task(s) $i_0$ (and the initial seed pool if multiple seeds are used),
(ii) the total global budget $T$ (objective evaluations),
(iii) the within-task BO stack (GP family, acquisition rule, optimization settings; Appendix~\ref{app:settingA_wine}),
(iv) the utility elicitation protocol (committee voting / simulated voting and its CI target; Appendix~\ref{app:evaluator}),
and (v) the generator interface and JSON schema (Appendix~\ref{app:llm_generator_prompts}).
Each ablation modifies only the component stated in its title.

\paragraph{Notation (used in the plots).}
At rung/level $m$, the target mutation ratio is $\rho_m$ and the resolution threshold is $\epsU_m=\epsU_0 2^{-m}$.
A generation event proposes a batch of $J$ child tasks
$\mathcal{B}_m \sim \Gen(a_t,m,J)$ from the current anchor $a_t$.
For each task $i$, we evaluate it by its best-so-far (incumbent) utility $u^{(i)}(\overline{y}_s^{(i)})\in[0,1]$,
and we use the long-run-value notation $U^{(i)}=u^{(i)}(f^{\star(i)})$ from §\ref{sec:bo_resolution_oracle}.

\subsubsection{Resolution controllability (Fig.\ref{fig:controllability})}
\label{app:resolution_controllability_details}

The goal of this experiment is to verify that the \emph{effective task step size} induced by our JSON mutation operator
is controlled by the target mutation ratio $\rho_m$, and to empirically characterize how often mutations produce
near-optimal children as refinement proceeds.

\paragraph{Anchor tasks and sampling protocol.}
We select a small set of representative \emph{anchor tasks} $\{a^{(1)},\dots,a^{(A)}\}$ (we use $A=4$ in the paper)
from independent runs of the LLM-powered wine planning setting.
Anchors are chosen to span multiple refinement levels and utility ranges (early/mid/late in the run).
For each anchor $a$ and each rung $m$ considered, we generate a batch of $J$ children
(using the same generator $\Gen$ as in the main experiment; Appendix~\ref{app:llm_generator_prompts}).
In the plots we use $J=40$ children per condition, and repeat the procedure over multiple random seeds
(for both the generator sampling and the within-task BO randomness) to estimate variability.

\paragraph{Task distance (Fig.\ref{fig:controllability}a).}
Each task is represented as a structured specification (JSON), but the controllability plot requires a numeric distance.
Let $\theta^{(i)}\in\mathbb{R}^{p}$ denote the vector obtained by concatenating the \emph{numeric} fields of the task
(e.g., continuous weights, target style parameters, tolerances, and/or numeric domain parameters; we exclude purely textual fields).
We rescale each coordinate to $[0,1]$ using its natural bounds from the task schema (or global min/max if appropriate),
yielding $\tilde\theta^{(i)}\in[0,1]^p$.
We define the task distance between a child $i$ and its parent anchor $a$ as
\begin{equation}
\label{eq:task_distance_def}
d(i,a)\ :=\ \left\lVert \tilde\theta^{(i)} - \tilde\theta^{(a)} \right\rVert_2.
\end{equation}
We then correlate $d(i,a)$ with the requested mutation ratio $\rho_m$ used when generating $i$.
The plot in Fig.\ref{fig:controllability}(a) shows that larger $\rho_m$ leads to larger distances on average,
confirming that the mutation ratio provides a practical knob for controlling resolution.

\paragraph{Empirical near-optimal generation probability $\widehat\delta_+(m)$ (Fig.\ref{fig:controllability}b--c).}
Condition~\ref{cond:stoch_coverage_adaptive} is stated in terms of the (unknown) optimal value $U^\star$.
To estimate the probability of producing a near-optimal child at rung $m$ in a reproducible way, we use
a proxy \emph{within the sampled pool} at that rung:
for each anchor $a$ and rung $m$, we evaluate each generated child $i\in\mathcal{B}_m$ with a fixed probe budget
of within-task BO steps (identical across children), producing an estimated value $\widehat U^{(i)}$
(e.g., the final value-envelope midpoint or the final best-so-far utility; the choice is fixed across all children).
We then define the best observed value within that batch as
\[
\widehat U^{\star}_{m}(a) \ :=\ \max_{j\in\{a\}\cup \mathcal{B}_m}\widehat U^{(j)}.
\]
A child $i$ is counted as a success at rung $m$ if it is within $\epsU_m$ of the batch best:
\begin{equation}
\label{eq:delta_plus_hat_def}
\mathbf{1}\{i\ \text{successful at rung }m\}
\ :=\
\mathbf{1}\Big\{\,\widehat U^{\star}_{m}(a)-\widehat U^{(i)} \le \epsU_m\,\Big\}.
\end{equation}
Finally, we estimate the generation success probability by averaging over anchors, random seeds, and children:
\begin{equation}
\label{eq:delta_plus_hat}
\widehat\delta_+(m)
\ :=\
\mathbb{E}\Big[\,\mathbf{1}\{i\ \text{successful at rung }m\}\,\Big].
\end{equation}
Fig.\ref{fig:controllability}(b) plots $\widehat\delta_+(m)$ across refinement levels and shows that it remains
non-zero across all measured rungs, while typically decreasing as $m$ increases.
Fig.\ref{fig:controllability}(c) further stratifies success by the task distance $d(i,a)$ in \eqref{eq:task_distance_def},
illustrating how coarse mutations (larger distance) can increase the chance of finding a stronger child when improvement is still possible.

\paragraph{Improvement magnitude (Fig.\ref{fig:controllability}d).}
To quantify how much a child improves over its anchor, we compute
\begin{equation}
\label{eq:improvement_def}
\Delta(i;a) \ :=\ \widehat U^{(i)} - \widehat U^{(a)},
\end{equation}
with $\widehat U^{(\cdot)}$ obtained under the same probe budget and evaluation protocol as above.
Fig.\ref{fig:controllability}(d) reports the distribution (or mean) of $\Delta(i;a)$ across rungs and confirms the expected
exploration-to-exploitation pattern: coarse edits tend to produce larger improvements when improvements are available,
while finer edits are better suited once the anchor is already strong.

\subsubsection{Mutation scheduler ablation (Fig.\ref{fig:scheduler})}
\label{app:mutation_scheduler_details}

This ablation isolates the effect of coupling the mutation ratio schedule $\rho_m$ with the
\emph{width-gated} generation rule in Algorithm~\ref{alg:task-ucb} (Line~\ref{alg_line:scheduler}).

\paragraph{Default scheduler (\AlgoName).}
Our default generator uses the rung-dependent schedule
\[
\rho_m \ =\ \rho_0 2^{-m},
\]
and we \emph{only} advance the rung $m$ and generate a new batch when the current anchor is sufficiently resolved:
\[
w_t \ \le\ c_g \epsU_m,
\]
where $w_t$ is the anchor width from Eq.~\eqref{eq:anchor}.
Intuitively, this ensures that coarse edits are used when uncertainty is still high, while finer edits are used only after
the anchor task has been evaluated enough for BO feedback to become informative.

\paragraph{Ablation variants.}
We compare the default scheduler against three replacements, keeping all other components unchanged
(task selector, generator prompt format/schema, within-task BO, and the evaluator):
\begin{enumerate}
    \item \textbf{Constant mutation ratio.} Use $\rho_m\equiv\rho_0$ for all rungs, while keeping the same width-gated
    generation trigger $w_t\le c_g\epsU_m$.
    \item \textbf{Time-based rung updates (with gating).} Increase the rung deterministically every fixed number of global rounds
    (e.g., every 20 iterations in our implementation), but still only \emph{generate} children when $w_t\le c_g\epsU_m$.
    \item \textbf{No gating (time-only).} Increase the rung deterministically every fixed number of global rounds and generate
    children at those times regardless of the width condition (i.e., ignore $w_t$).
\end{enumerate}
Fig.\ref{fig:scheduler} shows that removing either the rung-dependent $\rho_m$ schedule or the width gating
degrades performance, supporting the design choice that the refinement schedule should be \emph{evidence-driven}
(i.e., coupled to the task-level uncertainty summarized by $w_t$).

\paragraph{What is plotted.}
We plot the same headline metric as in the main wine experiments: the best-so-far achieved utility over global rounds,
aggregated over seeds (mean $\pm$ SEM). All variants receive the same global budget $T$ of objective evaluations.

\subsubsection{Lipschitz constant adaptation ablation (Fig.\ref{fig:scheduler})}
\label{app:lipschitz_adaptation_details}

The envelope construction and the task-UCB rule use an upper bound $\overline L \ge L^\star$
(Theorem~\ref{thm:value_envelopes_summary}), but $L^\star$ is unknown in practice.
We therefore adapt $\overline L$ online using the Balance--Eliminate mechanism in
Algorithm~\ref{alg:GSR} (Appendix~\ref{app:hyperparameters}).

\paragraph{Candidate ladder and update rule.}
We maintain a geometric ladder of candidate bounds
\[
L_j \ =\ L_0 2^j,\qquad j=0,1,\dots,J_L,
\]
and treat each candidate $L_j$ as a hypothesis about a valid global Lipschitz constant.
At each global round, Algorithm~\ref{alg:GSR}:
(i) chooses which hypothesis to test next by balancing the suspected regret certificate (Line~\ref{alg_line:balance}),
and (ii) eliminates hypotheses whose certificate is falsified by observed utility (Line~\ref{alg_line:eliminate}).
The active set $\mathcal A_t$ thus shrinks over time while (with high probability) retaining at least one valid hypothesis.

\paragraph{Ablation variants.}
We compare the adaptive procedure to three simple alternatives:
\begin{enumerate}
    \item \textbf{Fixed-$L$ (underestimated).} Always use the smallest candidate $\overline L=L_0$ (no adaptation).
    \item \textbf{Max-candidate.} Always use $\overline L=\max_{j\in\mathcal A_t} L_j$ among the remaining candidates.
    This is conservative and typically maintains $\overline L\ge L^\star$ once a sufficiently large rung is introduced.
    \item \textbf{Min-candidate.} Always use $\overline L=\min_{j\in\mathcal A_t} L_j$ among the remaining candidates.
    This is aggressive and can become overconfident if the smallest surviving rung still underestimates $L^\star$.
\end{enumerate}
All variants share the same ladder, the same elimination test, and the same within-rung task-UCB logic; they differ only in how
$\overline L$ is chosen from the active set at each time.
Fig.\ref{fig:scheduler} shows that the adaptive procedure performs best overall, while the max-candidate heuristic is competitive,
consistent with the theory that validity ($\overline L\ge L^\star$) is crucial for maintaining correct envelopes.

\paragraph{Reproducibility note.}
For all Lipschitz ablations, we use identical random seeds and the same LLM committee / simulated committee protocol as the main run.
This ensures that changes in performance are attributable to the choice of $\overline L$ (and not to different generator/evaluator randomness).

\subsection{Non-LLM Task Generator Experiment: Unknown Search Spaces}
\label{app:nonllm_taskgen}

This experiment isolates the \emph{multi-task search and selection} components of our method by
replacing the LLM-based task generator with a simple, fully algorithmic (non-LLM) generator.
We instantiate tasks as \emph{candidate search-space bounds} and study optimization when the true
search bounds are unknown and the initial bounds do not contain the global optimum.

\subsubsection{Problem setup and metrics}
\label{app:nonllm_taskgen:setup}

We consider black-box maximization of a scalar objective $f:\mathbb{R}^d \rightarrow \mathbb{R}$.
The optimizer is given an \emph{initial} axis-aligned box
$\mathcal{X}_0 = [\ell_0, u_0] \subset \mathbb{R}^d$, while the true domain
$\mathcal{X}_{\text{true}} = [\ell_{\text{true}}, u_{\text{true}}]$ (and thus the optimizer $x^\star$)
may lie outside $\mathcal{X}_0$.
The algorithm has access to point evaluations $y = f(x)$ (noise-free in our synthetic study).

We report performance using \textbf{best-so-far objective} and the equivalent \textbf{simple regret}
\[
r_t \;=\; f(x^\star) \;-\; \max_{s \le t} f(x_s),
\]
where $x^\star$ is the known global maximizer of the benchmark (used only for evaluation).

\subsubsection{Benchmarks}
\label{app:nonllm_taskgen:benchmarks}

We use standard continuous test functions (implemented as maximization by negating the canonical
minimization form): $f(x) = -f_{\text{std}}(x)$.
The initial box $\mathcal{X}_0$ is a strict subset of $\mathcal{X}_{\text{true}}$ and excludes $x^\star$.

\begin{table}[t]
\centering
\caption{Default hyperparameters for the unknown search space.}
\small
\begin{tabular}{lcccc}
\hline
Benchmark & $d$ & $\mathcal{X}_{\text{true}}$ & $\mathcal{X}_0$ & Budget $T$ \\
\hline
Beale      & 2 & $[-4.5,\,4.5]^2$ & $[-1.0,\,0.0]^2$   & 75  \\
Hartmann6  & 6 & $[0.0,\,1.0]^6$   & $[0.0,\,0.5]^6$    & 150 \\
\hline
\end{tabular}
\label{tab:nonllm_unknownspace_benchmarks}
\end{table}
Figure~\ref{fig:non-llm} displays the first 60 common objective evaluations for visual comparability.
The full runs use the budgets listed in Table~\ref{tab:nonllm_unknownspace_benchmarks};
the truncation is for plotting only.

\subsubsection{Non-LLM task generator: domain expansion operator}
\label{app:nonllm_taskgen:generator}

Concretely, given current bounds $[\ell,u]$, expansion factor $\rho>1$, and an anchor point
$a\in\mathbb R^d$, we define
\[
w=u-\ell,\qquad
\ell'=a-\frac{\rho}{2}w,\qquad
u'=a+\frac{\rho}{2}w.
\]
In the experiments, $a$ is the current incumbent best design of the parent task.  If the parent has
not yet been evaluated, we use the center $(\ell+u)/2$.  The expanded box is clipped to
$\mathcal X_{\rm true}$ only for benchmark feasibility, not for selecting $a$.

\paragraph{Warm-starting new tasks.}
When a new task/domain is created, we immediately collect a small initial design of
$n_{\text{init}}=4$ points sampled uniformly in that box to fit an initial surrogate.
(The same $n_{\text{init}}$ is used for the initial task on $\mathcal{X}_0$.)

\subsubsection{Surrogate model and acquisition optimization}
\label{app:nonllm_taskgen:bo}

Each task $m$ maintains its own Gaussian process surrogate trained on evaluations collected
while querying within $\mathcal{X}^{(m)}$.
We use an exact single-task GP (BoTorch/GPyTorch) trained by maximizing the exact marginal
log-likelihood, and select points using an upper confidence bound (UCB) acquisition.

\subsubsection{Our method without an LLM: GSBE with deterministic expansion}
\label{app:nonllm_taskgen:ours}

We evaluate a non-LLM instantiation of our method by setting tasks to be candidate boxes and
using the deterministic generator in §\ref{app:nonllm_taskgen:generator}.
At a high level, the method alternates between:
(i) selecting \emph{which} task/box to query next using \emph{global} confidence bounds over each box,
(ii) performing a standard within-box BO step, and
(iii) occasionally expanding the search space by generating a new box.

\subsubsection{Baselines for unknown search spaces}
\label{app:nonllm_taskgen:baselines}

We compare against representative specialized methods for unknown search bounds:

\paragraph{UBO (Unknown Bounds Optimization).}
We implement a GP-UCB-based UBO-style expansion rule in which the algorithm expands a domain when
a confidence-based boundary proximity statistic exceeds a threshold $d_{\max}=1.0$.
Expansion uses the same doubling operator ($\rho=2$) and the same cap $m_{\max}=10$.
The UBO confidence parameter is computed via the standard $\beta$-schedule used in the code
(\texttt{tau}=1.0, \texttt{delta}=0.1) and per-step updates based on the model and current domain.

\paragraph{HuBO and HD-HuBO.}
We also include HuBO-style and HD-HuBO-style schedules that expand the domain according to a
predefined growth rule (volume-based growth for HuBO, radius-based growth with an exponential
waiting schedule for HD-HuBO), while selecting points via GP-UCB within the currently active domain(s).
All methods share the same initial design size ($n_{\text{init}}=4$), within-task confidence ($\beta_{\text{bo}}=5.0$),
and the same maximum number of expansions ($m_{\max}=10$).

\subsection{Offline Experiment: Fixed Task Selection}
\label{app:offline_fts}

\paragraph{Setting and goal.}
This experiment removes the LLM task generator entirely and studies \emph{fixed task selection}: a finite set of $K$ tasks is given upfront and the learner must decide \emph{which task to spend the next evaluation on} under a global evaluation budget.
At each global step $t=1,\dots,T$, the algorithm selects a task index $i_t \in \{1,\dots,K\}$ and performs \emph{one} black-box function evaluation on that task using a within-task Bayesian optimizer (described below).
The objective is to maximize the best achieved task utility (and equivalently minimize regret w.r.t.\ the best achievable utility) under the same total budget $T$.

\paragraph{Benchmark suite (fixed tasks).}
We use $K=6$ standard continuous optimization benchmarks implemented via BoTorch test functions.
Following the code, we treat each benchmark as a \emph{maximization} problem by negating the standard minimization form, i.e., for each task $i$ we optimize
$f_i(x)=-y_i(x)$ where $y_i$ is the usual BoTorch test function.
All tasks use the standard box constraints listed in Table~\ref{tab:offline_fts_tasks}.
We also store each task's known optimum value $f_i^\star$ (Table~\ref{tab:offline_fts_tasks}), which is used only for regret-based utility variants (not the default).

\begin{table}[t]
  \centering
    \caption{Fixed-task benchmark suite.}
  \small
  \setlength{\tabcolsep}{6pt}
  \begin{tabular}{lccc}
    \hline
    Task $i$ & Dim.\ $d$ & Domain $\mathcal{X}_i$ & $f_i^\star$ \\
    \hline
    Ackley-2D      & $2$ & $[-5,5]^2$                    & $0$ \\
    Beale-2D       & $2$ & $[-4.5,4.5]^2$                & $0$ \\
    Branin-2D      & $2$ & $[-5,10]\times[0,15]$         & $-0.397887$ \\
    Hartmann-6D    & $6$ & $[0,1]^6$                     & $3.322368$ \\
    Levy-2D        & $2$ & $[-10,10]^2$                  & $0$ \\
    Rosenbrock-4D  & $4$ & $[-2,2]^4$                    & $0$ \\
    \hline
  \end{tabular}
  \label{tab:offline_fts_tasks}
\end{table}

\paragraph{Noisy function evaluations.}
When querying task $i$, we observe a noisy outcome
$
y = f_i(x) + \epsilon,\ \ \epsilon\sim\mathcal{N}(0,\sigma_{\text{noise}}^2)
$
with $\sigma_{\text{noise}}=0.01$ (default).
The within-task GP model is fit to these noisy observations, while the experiment state also tracks an incumbent $\overline{y}_i$ as the best achieved value so far on task $i$.

\paragraph{Utility normalization (comparable utilities across tasks).}
Because raw objective scales differ across tasks, we map incumbents to a comparable $[0,1]$ utility.
The default utility model in the code is
\begin{equation}
u_i(z)\;=\;\Phi\!\left(\frac{z-\mu_i}{\sigma_i}\right),
\label{eq:offline_fts_cdf_objective}
\end{equation}
where $\Phi$ is the standard normal CDF and $(\mu_i,\sigma_i)$ are task-specific location/scale parameters.
By default, $(\mu_i,\sigma_i)$ are \emph{estimated} per task using Monte Carlo sampling:
we draw $S=20000$ points uniformly from $\mathcal{X}_i$, evaluate $f_i(\cdot)$, and set $\mu_i$ and $\sigma_i$ to the sample mean and standard deviation (robust estimation is disabled by default).

\paragraph{Task-selection methods compared.}
All methods operate under the same global budget $T=200$ evaluations.
\begin{itemize}
  \item \textbf{GSBE-MetaUCB (ours, no generator).}
  We use the fixed-task variant implemented in \texttt{gsbe\_meta\_ucb.py}.
  At each step, select the task maximizing a task-UCB score based on the oracle upper confidence bound and an exploration ``headroom'' term that decays with the number of pulls $s_i$ of task $i$:
  $
  \mathrm{score}_i \;=\; \mathrm{UCB}_i \;+\; \bar L \cdot \varepsilon(s_i),
  $
  where $\varepsilon(s)=c/\sqrt{s}$ with default \texttt{headroom\_coef} $c=0.5$ and $\bar L$ is the running estimate used by the implementation.
  \item \textbf{Round-robin.} Cycle through tasks deterministically.
  \item \textbf{Uniform random.} Sample a task uniformly at random each step.
  \item \textbf{Successive halving.}
  Use $\eta=3$ and repeatedly allocate equal per-task budgets, then prune the bottom $1-1/\eta$ fraction of tasks based on the observed utility estimate $\hat u_i$.
  \item \textbf{Hyperband.}
  Use Hyperband with $\eta=3$ and maximum per-bracket budget $R=T$, running successive-halving brackets with different initial task counts/budgets.
\end{itemize}
The global budget cap is enforced for all baselines.  For Hyperband, $R=T$ is the maximum resource
parameter used to construct brackets, but execution is stopped as soon as the cumulative number of
black-box evaluations reaches $T$; initialization evaluations are included in this count.

\subsection{Offline Experiment: objective selection}
\label{app:offline_objective_selection}

This subsection provides implementation-level details for the \textbf{offline} experiment in
Fig.\ref{fig:synthetic}(b), where the task pool is fixed \emph{a priori} and the task-learner must decide
\emph{which objective (task) to allocate each evaluation to}. This experiment isolates the \emph{task-selection}
component of GSR, i.e., Algorithm~\ref{alg:task-ucb} without task generation.

\paragraph{Task pool and shared domain.}
We fix a set of $K$ objectives (tasks) $\{f^{(i)}\}_{i=1}^{K}$ drawn from the BoTorch synthetic benchmark suite.
In the \textbf{objective-selection} variant, all tasks share the \emph{same} design domain
\[
\mathcal{X}^{(i)} \equiv \mathcal{X} = [0,1]^d,
\qquad
d=6,
\]
so the only difference across tasks is the objective function.
At each global round $t\in\{1,\dots,T\}$, a method chooses an objective index $i_t\in[K]$ and a design
$x_t\in\mathcal{X}$, observes one scalar outcome
\[
y_t^{(i_t)} = f^{(i_t)}(x_t) + \xi^f_t,
\]
and updates the incumbent $\overline{y}_{s}^{(i)}=\max_{r\le s} y_r^{(i)}$ for the selected objective only.
Importantly, \emph{only one objective is observed per round}, which matches a ``decoupled evaluation'' scenario
where measuring different objectives has separate costs.

The six objectives used in Fig.~\ref{fig:synthetic}(b) are:
Ackley-6D, Griewank-6D, Levy-6D, Rosenbrock-6D, Styblinski--Tang-6D, and Hartmann-6D.
All are evaluated on the shared unit cube after the standard BoTorch input transform.

\paragraph{Utility normalization (cross-task comparability).}
Because each $f^{(i)}$ can have a different scale and offset, we map incumbents to a common $[0,1]$ utility scale
using a task-wise Gaussian CDF:
\begin{equation}
\label{eq:offline_ucdf}
u^{(i)}(z)\ :=\ \Phi\!\Big(\frac{z-\mu^{(i)}}{\sigma^{(i)}}\Big)\in[0,1],
\end{equation}
where $\Phi(\cdot)$ is the standard normal CDF and $(\mu^{(i)},\sigma^{(i)})$ are \emph{fixed per task}.
In our implementation, $(\mu^{(i)},\sigma^{(i)})$ are computed once from a calibration set
$\mathcal{C}=\{x_j\}_{j=1}^{N_{\mathrm{cal}}}\subset[0,1]^d$ (Sobol points):
\[
\mu^{(i)} := \frac{1}{N_{\mathrm{cal}}}\sum_{x\in\mathcal{C}} f^{(i)}(x),
\qquad
(\sigma^{(i)})^2 := \frac{1}{N_{\mathrm{cal}}-1}\sum_{x\in\mathcal{C}}\bigl(f^{(i)}(x)-\mu^{(i)}\bigr)^2,
\]
with a small floor $\sigma^{(i)}\leftarrow\max\{\sigma^{(i)},\sigma_{\min}\}$ for numerical stability.
This monotone normalization allows task selection to be evaluated in a scale-invariant manner.

\paragraph{Ground-truth values (for evaluation only).}
For plotting, we use the best utility found up to round $t$:
\[
\text{simple-regret}(t)
:=
U^\star -
\max_{\tau\le t}
u^{(i_\tau)}\!\left(\overline y^{(i_\tau)}_{s_\tau^{(i_\tau)}}\right).
\]
This is why the curves in Fig.~\ref{fig:synthetic}(b) are monotone nonincreasing.

\paragraph{Within-task Bayesian optimization (shared across all methods).}
Each objective $i$ maintains its own GP surrogate fit to its observed data
$\{(x_r,y_r^{(i)})\}_{r=1}^{s}$.
Unless otherwise noted, we use the same BO stack for all objectives and all methods:
\begin{itemize}
    \item \textbf{Surrogate:} \texttt{SingleTaskGP} (BoTorch/GPyTorch) with an Mat\'ern-$5/2$ kernel,
    input normalization to $[0,1]^d$, and \texttt{Standardize} outcome transform.
    \item \textbf{Noise:} synthetic objectives are treated as noiseless; we fit with a small jitter/noise floor.
    \item \textbf{Acquisition:} UCB-style acquisition (e.g., \texttt{qUpperConfidenceBound} with $q{=}1$).
    \item \textbf{Acqf optimization:} multi-start gradient-based optimization (fixed \texttt{raw\_samples} and
    \texttt{num\_restarts}) over $\mathcal{X}$.
\end{itemize}
When an objective $i$ is selected for the first time, it is initialized with a small number of random points;
these initialization evaluations are counted toward the global budget $T$ (i.e., there is no ``free'' warm start).

\paragraph{Baselines.}
We compare against three families of baselines, each using the \emph{same} within-task BO module above:
Preference-based MOBO baselines (MOBO-PC / BOPE): we adapt these methods to the \emph{decoupled-observation} setting where only one objective is observed per round. Concretely, we maintain separate GP surrogates for each objective, compute the MOBO acquisition using the current posterior over the \emph{full} objective vector (integrating over unobserved objectives via GP posteriors), propose a candidate design $x_t$, and then choose which objective to evaluate at $x_t$ using an uncertainty-driven rule (evaluate the objective with the largest posterior uncertainty at $x_t$). The resulting single observation is then used to update only that objective's GP.

\begin{table}[t]
\centering
\small
\caption{Default hyperparameters for the objective selection benchmark.}
\label{tab:offline_obj_select_hparams}
\begin{tabular}{@{}ll@{}}
\toprule
\textbf{Item} & \textbf{Value / description} \\
\midrule
\# objectives (tasks) & $K=6$ BoTorch test functions \\
Domain & $\mathcal X=[0,1]^6$ \\
Global budget & $T=200$ \\
Utility map & $u^{(i)}(z)=\Phi((z-\mu^{(i)})/\sigma^{(i)})$ \\
Within-task GP & Mat\'ern-$5/2$, \texttt{SingleTaskGP} + \texttt{Standardize} \\
\bottomrule
\end{tabular}
\end{table} 

\section{Additional Experiments and Discussion}
\vspace{-0.5em}
\subsection{Additional experiments}\label{app:add_exp}
\begin{figure}
    \centering
    \includegraphics[width=0.6\textwidth]{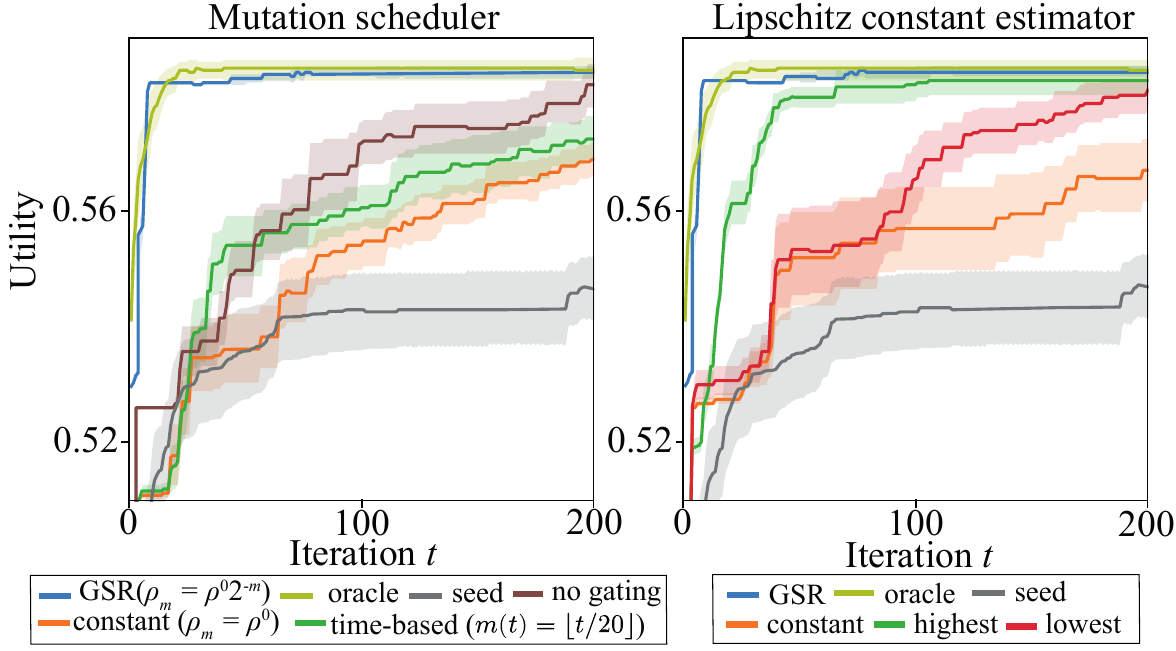}
    \caption{Ablation study. Replacing the mutation scheduler $\rho_m$ or the Lipschitz constant adaptation causes a performance drop.
    }
    \label{fig:scheduler}
\end{figure}

\textbf{Lipschitz constant adaptation.}
We evaluate adaptation of the Lipschitz constant $\overline{L}$ via regret balancing (Appendix~\ref{app:hyperparameters}). The method maintains candidates $L_j = L_0 2^j$ and discards them when falsified. We compare against three variants: (i) constant $L_j = L_0$; (ii) highest $\max_j L_j$, selecting the largest remaining candidate; (iii) lowest $\min_j L_j$, selecting the smallest. Adaptive selection performs best overall. Notably, the largest-candidate variant also performs well, as it guarantees $\overline{L} \geq L^\star$, supporting our theoretical analysis.

\textbf{Mutation scheduler $\rho_m$.}
We further study the mutation scheduler $\rho_m = \rho_0 2^{-m}$ (Fig.\ref{fig:scheduler}) by replacing it with three variants: (i) constant $\rho_m = \rho_0$; (ii) time-based scheduling, where $m$ increases every 20 iterations and child tasks are generated when $w_t \leq c_g \epsU_m$; and (iii) no gating, which ignores the $w_t$ condition and updates $m$ every 20 iterations. The results show that coupling $\rho_m$ with the $w_t$-based generation rule is crucial for efficiency.

\subsection{LLM-human evaluation alignment}\label{app:alignment}
To test whether the committee is merely exploiting idiosyncratic LLM preferences, we ran a blinded human–LLM alignment study on two representative scientific settings: synthesis scaling (SnAr) and patent repurposing (MgO). Two PhD-level materials scientists compared 100 randomly sampled task pairs using exactly the same task descriptions and partial optimization histories shown to the committee; the annotators were blinded to the committee votes. We measured agreement using Cohen’s kappa, since it measures agreement beyond chance and avoids comparing raw vote counts across groups of different size (64 committee votes vs. 2 human votes).

As Table~\ref{tab:human} shows, agreement between each human annotator and the committee majority vote is high. We interpret this as evidence that the committee is not obviously pathological under the shared-information protocol, while not claiming full validation of scientific judgment. GSR can also be instantiated with human feedback directly, at higher annotation cost; Appendix C.1.2 studies a feedback-efficient variant. For transparency, the anonymous repository now includes raw evaluator traces for SnAr, including the compared tasks, rationale, and vote.

\begin{table}
    \centering
    \caption{Human-LLM alignment using Cohen’s kappa.}
    \begin{tabular}{ccc}
    \toprule
      Voter   & synthesis  & repurposing\\
      \midrule
      human A   & 0.898 & 0.926 \\
      human B   & 0.912 & 0.947\\
      \bottomrule
    \end{tabular}
    \label{tab:human}
\end{table}

\paragraph{Annotator instructions, interface, and compensation.}
The annotators were shown a spreadsheet-style interface with two anonymized task--incumbent
summaries, labeled A and B, plus the same partial optimization history available to the LLM committee.
They were given the following written instruction:

\begin{quote}\small
For each row, compare Candidate A and Candidate B as scientific or product repurposing hypotheses.
Use only the task description, current best candidate, and partial optimization history shown in the row.
Select the candidate that you would prioritize for the next expensive evaluation.  If both are close, choose the one that appears more scientifically plausible, safer, or more actionable.  Record only A or B; optional comments may be added for quality control.
\end{quote}

No personally identifying information was collected.  The study used unpaid internal collaborator annotators; compensation was not paid. We follow the existing protocols in our institutions.

\subsection{Computational cost}\label{app:computation}
\paragraph{Model choice}
We chose GPT-4o-mini because this instantiation makes many structured generation and committee-evaluation calls, so both latency and cost matter. To test model sensitivity, we varied only the task-generation model on the red-wine benchmark while keeping the evaluator fixed at GPT-4o-mini (Table~\ref{tab:model_cost}). We isolate generator sensitivity because changing the evaluator would change the utility definition itself, making runs no longer directly comparable. The committee uses 64 votes per step run in parallel; in this benchmark, that adds about 15 s wall-clock latency and about \$3.81 total committee-evaluation cost over 100 steps. Task generation is triggered only intermittently (6.2 times per 100 steps on average); with GPT-4o-mini this contributes about 21 s latency per trigger and about \$2.00 total generation cost over 100 steps.

\begin{table}
    \centering
    \caption{LLM variants for task generation}
    \begin{tabular}{ccccc}
    \toprule
        alg. & utility & latency & API cost & average raw proposals per trigger \\
    \midrule
        random	&0.5602	&N/A	&N/A	&N/A \\
        SH	&0.5646	&N/A	&N/A	&N/A\\
        hyperband	&0.5524	&N/A	&N/A	&N/A\\
        gpt-4o-mini	&0.5804	&21 s	&\$2.00	&6.6\\
        gpt-5.4	&0.6184	&60 s	&\$40.00	&4.8\\
        gpt-5.4-mini	&0.6079	&28 s	&\$12.00	&5.2\\
        gpt-5.4-nano	&0.5980	&12 s	&\$3.27	&5.5\\
        gpt-5.4-nano-reasoning-low	&0.6002	&15 s	&\$3.60	&5.3\\
        gpt-5.4-nano-reasoning-high	&0.6047	&24 s	&\$4.90	&5.2\\
        \bottomrule
    \end{tabular}
    \label{tab:model_cost}
\end{table}

In our current implementation, the task-generation prompt grows with the mutation history, so generation latency does increase over time. Empirically, we observe an approximately linear increase in generation latency (about +2.93 s per additional mutation history in the red-wine benchmark). The main context-growth cost is therefore in the generation call; the hard validation layer itself is relatively lightweight, since it mainly consists of structured parsing and rule-based feasibility checks on the current proposal. Also, because task generation is triggered only intermittently (6.2 times per 100 steps on average in our experiments), the overall wall-clock overhead remained manageable in our setting. For substantially longer runs, a natural extension would be to summarize or truncate early mutation history to cap prompt length.

\paragraph{Prompt effects.}
To test prompt sensitivity, we evaluated four semantically equivalent task-generation prompts on red-wine while keeping the evaluator model and evaluator prompt fixed. Mean utility changed only modestly (0.5798–0.5973 versus 0.5804 for the original prompt), and all variants remained above Random/SH/HB. For transparency, the anonymous repository \url{https://anonymous.4open.science/r/Generate-Select-Refine-2D48/} includes the task-generation prompts for all four variants.

\subsection{Discussion}\label{app:discussion}
\textbf{Why not preferential BO?}
Preferential BO (PBO; \citep{gonzalez2017preferential}) optimizes a preferred \emph{input} $x$; GSR optimizes a preferred \emph{task} $i$ using LLMs. Our framework also supports non-preference utilities (see §\ref{sec:io}-\ref{sec:ablation}).\\
\textbf{Can kernels vary across tasks?}
Yes, within kernel families with tractable information gain (see \citep{lee2025consequences}).\\
\textbf{Why not use a single GP over $(x_\tau, \tilde{u}_\tau)_{\tau=1}^t$?}
Utility observations are nonlinear expectations, so standard GP assumptions do not apply (see \citep{astudillo2019bayesian}).\\
\textbf{Aren't LLM evaluations unreliable?}
Both the generator and the utility feedback can be replaced by humans.\\
\textbf{LLMs/human feedback is costly.}
Utility queries can be sparse (e.g., at $\{1,2,4,8,\dots\}$) while keeping $\mathcal{O}(\sqrt{\log T})$ regret with a sublinear number of votes (Appendix~\ref{app:sparse_call_bound}).\\
\textbf{What if small edits ($\rho_m$) cause large semantic shifts?}
We mitigate this with two safeguards: (i) an actor–critic generator, where a critic LLM verifies desiderata, and (ii) a rule-based verifier enforcing hard feasibility constraints (Appendix~\ref{app:llm_interfaces}); Fig.\ref{fig:controllability} confirms their effectiveness.

\subsection{Societal impact}
This work could be especially useful for self-driving laboratories, where planning is increasingly becoming the bottleneck, and faster application discovery could accelerate the industrial adoption of new materials. On the other hand, greater autonomy also increases the risk of misalignment with human values. Our framework is explicitly controlled by the utility function, so users must carefully consider the consequences of how that utility is specified. This is a general risk shared by highly automated decision-making systems.

\subsection{Asset licenses}
\label{app:asset_licenses}
Table~\ref{tab:asset_licenses} lists the existing assets used in the experiments and their licenses.

\begin{table}[t]
\centering
\small
\caption{Existing assets and licenses.}
\label{tab:asset_licenses}
\begin{tabular}{lll}
\toprule
Asset & Use in this paper & License / terms \\
\midrule
Wine Quality dataset~\citep{cortez2009modeling} & Wine planning benchmark & CC BY 4.0 \\
\textsc{SUMMIT} simulator~\citep{felton2021summit} & Synthesis scaling benchmark & MIT License \\
Materials Project data~\citep{jain2013commentary} & Materials repurposing benchmark & CC BY 4.0 \\
CGCNN~\citep{xie2018crystal} & Crystal-structure embeddings & MIT License \\
BoTorch~\citep{balandat2020botorch} & BO implementation & MIT License \\
GPyTorch~\citep{gardner2018gpytorch} & GP implementation & MIT License \\
\bottomrule
\end{tabular}
\end{table}



\end{document}